\documentclass[11pt]{article} 

\usepackage[letterpaper,margin=1in]{geometry}

\usepackage[letterpaper,margin=1in]{geometry}

\newif\ifred
\redtrue 

\usepackage[T1]{fontenc}
\usepackage{microtype}
\usepackage{verbatim}
\usepackage{titlesec}
\titleformat*{\paragraph}{\bfseries}

\usepackage{booktabs}
\usepackage{multirow}

\usepackage{amsthm,amsmath,amssymb,amsfonts,amssymb,mathtools}
\usepackage{amsmath,amssymb,amsfonts,amssymb,mathtools}
\usepackage{xcolor}
\usepackage{empheq}
\usepackage{dsfont}
\usepackage{mathrsfs}
\usepackage{graphicx}
\usepackage{enumitem}
\usepackage{aliascnt} 
\usepackage{xspace}
\usepackage{bbm}

\usepackage{bm}

\usepackage[most]{tcolorbox}

\newtcolorbox{openquestionbox}{
  colback=gray!10,
  colframe=black,
  boxrule=0.8pt,
  arc=3mm,
  left=6pt,
  right=6pt,
  top=6pt,
  bottom=6pt
}

\newcommand{\inote}[1]{\footnote{{\bf [[Ilias: {#1}\bf ]] }}}
\newcommand{\dnote}[1]{\footnote{{\bf [[Daniel: {#1}\bf ]] }}}

\usepackage{caption}
\usepackage{tikz}
\usetikzlibrary{patterns}
\usetikzlibrary{patterns}
\usetikzlibrary{arrows,shapes,automata,backgrounds,petri,positioning}
\usetikzlibrary{shadows}
\usetikzlibrary{calc}
\usetikzlibrary{spy}
\usetikzlibrary{angles, quotes}
\usetikzlibrary{arrows.meta}

\usetikzlibrary{matrix}
\usepackage{pgf,pgfplots}
\usepackage{pgfmath,pgffor}
\pgfplotsset{compat=1.17}

\usepackage{framed}
 \usepackage{algorithm}
\usepackage[noend]{algpseudocode}

\definecolor[named]{ACMBlue}{cmyk}{1,0.1,0,0.1}
\definecolor[named]{ACMYellow}{cmyk}{0,0.16,1,0}
\definecolor[named]{ACMOrange}{cmyk}{0,0.42,1,0.01}
\definecolor[named]{ACMRed}{cmyk}{0,0.90,0.86,0}
\definecolor[named]{ACMLightBlue}{cmyk}{0.49,0.01,0,0}
\definecolor[named]{ACMGreen}{cmyk}{0.20,0,1,0.19}
\definecolor[named]{ACMPurple}{cmyk}{0.55,1,0,0.15}
\definecolor[named]{ACMDarkBlue}{cmyk}{1,0.58,0,0.21}

\usepackage[colorlinks,citecolor=blue,linkcolor=magenta,bookmarks=true]{hyperref}
 \usepackage[nameinlink]{cleveref}
\creflabelformat{ineq}{#2{\upshape(#1)}#3}
\crefname{sub}{Subsection}{Subsection}
\creflabelformat{Subsection}{#2{\upshape(#1)}#3}
\crefname{sdp}{SDP}{SDP}
\creflabelformat{sdp}{#2{\upshape(#1)}#3}
\crefname{lp}{LP}{LP}
\creflabelformat{lp}{#2{\upshape(#1)}#3}
\usepackage{aliascnt}
\crefname{ineq}{Inequality}{Inequality}
\creflabelformat{ineq}{#2{\upshape(#1)}#3}
\crefname{sub}{Subsection}{Subsection}
\creflabelformat{Subsection}{#2{\upshape(#1)}#3}
\crefname{sdp}{SDP}{SDP}
\creflabelformat{sdp}{#2{\upshape(#1)}#3}
\crefname{lp}{LP}{LP}
\creflabelformat{lp}{#2{\upshape(#1)}#3}



\newtheorem{theorem}{Theorem}[section]
\newtheorem{lemma}[theorem]{Lemma}

\newtheorem{definition}[theorem]{Definition}

\newtheorem{fact}[theorem]{Fact}
\newtheorem{question}{Open Question}
\theoremstyle{definition}

\newtheorem{remark}[theorem]{Remark}

\crefname{claim}{claim}{claims}

\newcommand{\eqdef}{\coloneqq}

\newcommand{\vol}{\mathrm{Vol}}

\newcommand{\lp}{\left}
\newcommand{\rp}{\right}
\newcommand\norm[1]{\left\| #1 \right\|}

\renewcommand\vec[1]{\mathbf{#1}}
\DeclareMathOperator*{\pr}{\mathbf{Pr}}
\DeclareMathOperator*{\E}{\mathbf{E}}
\newcommand{\proj}{\mathrm{proj}}

\DeclareMathOperator*{\argmin}{argmin}
\DeclareMathOperator*{\argmax}{argmax}

\newcommand{\bx}{\mathbf{x}}

\renewcommand{\u}{\mathbf{u}}
\newcommand{\bz}{\mathbf{z}}

\newcommand{\err}{\mathrm{err}}

\newcommand{\B}{\mathbb{B}} 
\renewcommand{\H}{\mathcal H }

\newcommand{\R}{\mathbb{R}}
\newcommand{\s}{\mathbb{S}}
\newcommand{\Z}{\mathbb{Z}}
\newcommand{\N}{\mathbb{N}}
\newcommand{\eps}{\epsilon}
\newcommand{\dtv}{d_{\mathrm TV}}
\newcommand{\poly}{\mathrm{poly}}

\newcommand{\polylog}{\mathrm{polylog}}
\newcommand{\var}{\mathbf{Var}}

\newcommand{\sign}{\mathrm{sign}}

\newcommand{\D}{D}

\newcommand{\Ind}{\mathds{1}}

\newcommand{\x}{\vec x}
\newcommand{\w}{\vec w}
\newcommand{\g}{\vec g}
\renewcommand{\a}{\vec a}
\renewcommand{\b}{\vec b}
\renewcommand{\v}{\vec v}

\newcommand{\opt}{\mathrm{opt}}

\newcommand{\iid}{{i.i.d.}\ }
\newcommand{\abs}[1]{\lp| #1 \rp|}
\newcommand{\A}{\mathcal{A}}

\newcommand{\card}[1]{|#1|}

\def\colorful{1}

\ifnum
\newcommand{\new}[1]{{\color{red} #1}}

\else
\newcommand{\new}[1]{{#1}}

\fi

\renewcommand\Pr{\pr}

\newcommand{\twolines}[2]{%
\begin{tabular}{@{}l@{}}
#1 \\ #2
\end{tabular}
}

\begin{document}
\title{Robust Learning of Multi-Index Models under Logconcave Distributions 
via Weak Partial Learners}

%


\author{
Ilias Diakonikolas\thanks{Supported by NSF Medium Award CCF-2107079, 
ONR Award number N00014-25-1-2268, and a Guggenheim Fellowship.}\\
University of Wisconsin-Madison\\
\texttt{ilias@cs.wisc.edu}
\and
Daniel M. Kane\thanks{Supported by NSF Medium Award CCF-2107547.}\\
University of California, San Diego\\
\texttt{dakane@cs.ucsd.edu}
\and
Mingchen Ma\thanks{Supported by NSF Award  CCF-2144298 (CAREER).}\\
University of Wisconsin-Madison\\
\texttt{mingchen@cs.wisc.edu}
}

\maketitle

\begin{abstract}%

We develop a new algorithmic framework for learning Boolean concept classes
under arbitrary logconcave distributions. Our framework applies
to broad families of multi-index models and yields significantly faster robust learners
than were previously known in these settings.
It also extends naturally to the settings of partial classification and testable learning.
As consequences, we obtain substantially improved learners
for several fundamental concept classes, including functions of halfspaces
and multiclass linear classifiers, under logconcave marginals.

Our starting point is a conceptually new boosting framework based on \emph{weak partial learners}.
Classical boosting-based approaches either require distribution-free weak learners,
which are often computationally intractable, or yield only weak approximation guarantees.
We show instead that it suffices to design weak learners for \emph{partial} classification problems,
where labels may be undefined on a nontrivial fraction of the domain.
This provides a robust and computationally tractable route to strong agnostic learning guarantees.
The main technical challenge is that boosting requires a weak learner that succeeds
under conditionings of the initial distribution that are not guaranteed to be logconcave.
To address this, we develop a recursive moment-based partitioning procedure:
given a logconcave distribution \(X\) and a conditional version \(X'\),
we partition the space into polynomially many convex regions so that,
on most regions, the low-degree central moments of \(X\) and \(X'\) approximately agree.
We show that in any good region, a well-behaved multi-index model 
will correlate with a low-degree polynomial, which provides a weak hypothesis on that region.
Combining these weak hypotheses over the regions yields the required weak learner.

A key ingredient of our approach is an efficient method for detecting moment discrepancy.
Specifically, we show how to find a low-dimensional subspace
that captures any non-trivial discrepancy between \(X\) and \(X'\), if one exists.
Our approach combines spectral analysis of moment tensors with recent progress on isoperimetry
for logconcave measures.
This discrepancy-finding procedure may be of independent interest.
\end{abstract}

\thispagestyle{empty}

\newpage

\setcounter{page}{1}

\section{Introduction}

The main motivation of this work was to develop efficient learners 
for fundamental classes of Boolean and multiclass functions---including multiclass linear classifiers and intersections of halfspaces---under mild distributional assumptions
and potentially 
in the presence of adversarial label noise. In the process, 
we developed a new general learning algorithm 
that is both label noise-tolerant and significantly more efficient 
than the state of the art. 
Before we describe our contributions in detail, we start with some background on specific settings that motivated this work. 

\paragraph{Multi-Index Models and Efficient Learnability} 
A function is called a Multi-Index Model (MIM) if it depends only on the projection 
of its inputs onto a ``low-dimensional'' subspace. 
Formally, we have: 


\begin{definition}[Multi-Index Model] \label{def:mim}
A function $f: \R^d \to \cal \mathcal{Y}$ is called a $K$-MIM 
if there exists a subspace $U \subseteq \R^d$ of dimension at most $K$ 
such that $f$ depends only on {the projection onto} $U$, 
i.e., $f(\x) = f ( \proj_U \x)$,  
where $\proj_U \x$ is the projection of $\x$ on $U$. Equivalently, 
$f$ is of the form $f(\x) = g(\mathbf{W} \x)$ for a link function $g: \R^K \to \cal{Y}$ 
and a $K \times d$ weight matrix $\mathbf{W}$.
\end{definition}

In several settings of interest, 
the dimension $K$ of the hidden subspace is 
significantly smaller than the ambient dimension $d$  
and is often thought of as bounded above by a constant.
Throughout this work, we focus on the case that the range of our functions is discrete, in most cases Boolean.

Multi-index models~\cite{Friedman:1980tu, Huber85-pp, Li91, 
HL93,xia2002adaptive, Xia08} are motivated by the typical assumption
that real high-dimensional data are inherently structured, specifically 
that the relationship between the features is of a lower-dimensional nature.
MIMs capture several well-studied concept classes, both discrete and real-valued, 
including halfspaces, functions of halfspaces, multiclass linear classifiers, 
low-dimensional polynomial threshold functions, and natural classes of neural 
networks. The efficient learnability of each of these concept classes has been
a topic of intense investigation in the theoretical computer science and machine 
learning communities over several decades. 
More recently, there has been a flurry of research activity aiming at developing
general algorithmic techniques that are applicable to broad families 
of MIMs---albeit under strong distributional assumptions.
This progress notwithstanding, our understanding of the complexity of learning MIMs 
(both for specific special cases and in more general settings) remains limited. 
This motivates the general research direction explored in this work:

\vspace{-0.1cm} 

\begin{openquestionbox}
\begin{center}
{\em  Can we develop efficient and noise-tolerant learners for broad classes of MIMs\\ 
under mild distributional assumptions?
}
\end{center}
\end{openquestionbox}

\vspace{-0.1cm}


Our main result answers this question in the affirmative, 
when the underlying distribution is logconcave and the labels are adversarially corrupted. 
Additionally, our algorithmic results extend to {\em partial} classification and 
to the {\em testable learning} framework.

For concreteness, we define agnostic PAC learning under logconcave distributions.

\begin{definition}[Agnostic  Classification under Logconcave Marginals] \label{def:agnostic-lc} 
Let $\mathcal Y$ be finite and $\cal{H}$ be a class of functions 
from $\R^d$ to $\mathcal Y$.
Given i.i.d.\ samples $(\x,y)$ 
from a distribution $D$ on $\R^d\times\mathcal Y$ 
such that (a) the $\x$-marginal, $D_{\x}$, of $D$ is an unknown logconcave distribution,  
and (b) no assumptions are made on the labels, 
the goal is to output a hypothesis 
$h: \R^d \to \mathcal Y$ that, with high probability, 
has small 0-1 error 
$\mathrm{err}_D(h)\eqdef \pr_{(\x,y) \sim D}[h(\x) \neq y]$, 
as compared to the optimal 0-1 error of the class ${\cal H}$, i.e., 
$\opt_D(\mathcal{H}) \eqdef \inf_{f \in \mathcal{H}}\mathrm{err}_D(f)$.
\end{definition}

To demonstrate the limitations of current techniques, we next discuss the state-of-the-art for three basic learning tasks.

\paragraph{Example 1: Robustly Learning Halfspaces}
A halfspace is any function $h: \R^d \to \{ \pm 1\}$ of the form 
$h(\x) = \sign(\vec w \cdot \x+t)$, where $\vec w \in \R^d$ and $t \in \R$.
The task of learning halfspaces is one of the most extensively studied 
problems in machine learning~\cite{Rosenblatt:58, Novikoff:62,
MinskyPapert:68,  FreundSchapire:97, Vapnik:98, CristianiniShaweTaylor:00}. 
While halfspaces are efficiently learnable in the distribution-free (realizable) 
PAC model~\cite{Valiant:84} (see, 
e.g.,~\cite{MT:94}), the algorithmic problem becomes computationally intractable in the 
presence of adversarial label noise~\cite{Daniely16,Tiegel23}.
This has motivated a long line of work on developing efficient robust learners 
under natural distributional assumptions. For the special 
case of {\em homogeneous} halfspaces (corresponding to zero threshold), 
\cite{ABL17} gave a $\poly(d/\eps)$ time learner with 
error $O(\opt)+\eps$ under logconcave marginals, 
where $\opt$ is the optimal error in the class\footnote{A constant-factor approximation is best possible 
for fully-polynomial time algorithms, due to known intractability results even for Gaussian marginals~\cite{DKPZ21,diakonikolas2023near}.}. 
The efficient learnability of {\em general} (i.e., not necessarily homogeneous) 
halfspaces under logconcave marginals has remained open since. 
For the special case of Gaussian marginals,~\cite{DKS18a,diakonikolas2022learning,diakonikolas2024active} gave an efficient constant-factor approximation. However, for the general logconcave case, the best known 
efficient algorithm achieves error $\tilde{\Omega}(\opt^{1/2})+\eps$; 
see, e.g.,~\cite{DKS18a}. 
On the other hand, the $L_1$-polynomial regression algorithm~\cite{KKMS:08} achieves optimal error of $\opt+\eps$ with sample and computational 
complexity $d^{2^{\poly(1/\eps)}}$. Concretely, this leads to the following open question.

\begin{openquestionbox}

\begin{question} \label{q:LTFs}
Is there a robust learner for {\em general} halfspaces under logconcave marginals 
with error $O(\opt)+\eps$ beating the complexity of polynomial regression? 
\end{question}
\end{openquestionbox}



As a corollary of our general algorithmic approach, 
we answer this question in the affirmative (see Theorem~\ref{thm:LTFs-intro}). 

\vspace{-0.25cm}

\paragraph{Example 2: Learning Intersections of Halfspaces}
An intersection of $K$ halfspaces is any  
function $f:\R^d\to\{\pm 1\}$ for which there exist $K$ halfspaces 
$h_i(\x) = \sign(\vec w^{(i)}\cdot\x+t_i)$ such that 
\( f(\x) = 1 \) if and only if \( h_i(\x) = 1 \) for all \( i \in [K]\). 
In sharp contrast to the special case of a single halfspace, our understanding
of the efficient learnability of this class remains limited---even for $K=2$ 
and in the presence of clean labels. Specifically, even for this special case, 
it remains a major open question whether efficient learning is possible 
in the distribution-free setting. The best known distribution-free algorithm 
requires sub-exponential time~\cite{alman2026learningfunctionshalfspaces} 
and prior work has given plausible evidence of hardness; 
see, e.g.,~\cite{KS:08, Sherstov:09}. 
 On the other hand, for $K = O(1)$, efficient learners have been developed 
with respect to natural distributions, including the Gaussian and the 
uniform distribution on the hypercube; see, 
e.g.,~\cite{blum1997learning,KOS:04,arriaga2006algorithmic, KlivansServedio:04coltmargin, KlivansLT09, Vempala10,Vempala10a,DiakonikolasMRT25}. 
One of the strongest known results~\cite{Vempala10} 
(see also~\cite{KlivansLT09}) gives an efficient (realizable) algorithm 
for any constant number of halfspaces under logconcave 
distributions \emph{assuming all component halfspaces pass 
through the mean of the distribution}
\footnote{
This is an important assumption without which their algorithmic approach fails.}. 
The only known upper bound for the general case---even for clean labels---is 
the $L_1$-polynomial regression method, which incurs sample and computational 
complexity $d^{2^{\poly(K/\eps)}}$. Concretely, this leads to the following open question.

\begin{openquestionbox}
\begin{question} \label{q:inter}
Is there a learner for intersections of {\em two} halfspaces under logconcave marginals 
beating the complexity of polynomial regression? 
\end{question}
\end{openquestionbox}

As a corollary of our general approach, 
we answer this question in the affirmative (see Theorem~\ref{thm:functions-LTFs-intro}). 
As a bonus, our algorithm succeeds for any function of $K = O(1)$ many halfspaces, 
and is robust to adversarial label noise (achieving {\em any} desired constant factor approximation).  

\paragraph{Example 3: Robust Multiclass Linear Classification}
A Multiclass Linear Classifier (MLC) is any function 
$f: \R^d \to [K]$ of the form
$f(\x) = \argmax_{i\in [K]}( \w^{(i)} \cdot \x  + t_i)\;,$
where $\w^{(i)}  \in \R^d$ and $t_i \in \R$.
Multiclass linear classification (MLC) is a textbook machine learning problem~\cite{SB-book} generalizing the task of learning halfspaces (corresponding to $K=2$). 
In the realizable PAC model, this task is
efficiently solvable to any desired error $\eps$ 
with sample size $O(dK/\eps)$ via linear programming. 
On the other hand, despite extensive interest, 
see, e.g.,~\cite{NDRT13,WLT18, Lipton18a, Song-etal-survey20, GWBL20,diakonikolas2025statistical,diakonikolas2026polynomial}, 
our understanding of the computational complexity of learning 
remains limited in the presence of adversarial label noise. 
The recent result~\cite{DiakonikolasIKZ25} gave an algorithm 
that succeeds under the Gaussian distribution, has complexity 
$\poly(d) 2^{\poly(K/\eps)}$, and guarantees error of $C \opt+\eps$, 
for some universal constant $C>1$. Note that the Gaussianity assumption 
is crucial. Moreover, the underlying complexity scales exponentially with 
both $K$ and $1/\eps$. As in previous examples, for logconcave distributions, 
the only known algorithm is $L_1$-regression which incurs complexity 
$d^{\poly(K/\eps)}$ even for the special case of the Gaussian distribution. 
 Concretely, this leads to the following open 
question.


\begin{openquestionbox}
\begin{question} \label{q:mlc}
Is there a robust learner for $K$-MLC under logconcave marginals 
beating the complexity of polynomial regression? 
\end{question}
\end{openquestionbox}

We answer this question in the affirmative (see Theorem~\ref{thm:MLC-intro}). 
Specifically, 
we give two algorithms for this task: the first algorithm achieves error $\widetilde{O}(K \opt)+\eps$ 
and runs in $\poly(d, K, 1/\eps)$ time, and the second achieves error $O(K) \, \opt+\eps$ with
sample complexity $\poly(d, K, 1/\eps)$ and running time $\poly(d) \, 2^{\polylog(K/\eps)}$.


\subsection{Our Results} \label{ssec:results}


Before we state our algorithmic contributions in detail, we emphasize that all our upper bounds in this work are obtained via a \emph{single unified algorithm}. In Section~\ref{sssec:gen-result}, we describe our general algorithmic result that is applicable to a broad family of MIMs, and its testable learning extension. In Section~\ref{sssec:app-results}, we describe some concrete algorithmic implications for fundamental concept classes.


\subsubsection{General Algorithmic Result} \label{sssec:gen-result}

Our algorithm can be used to learn essentially any non-pathological $K$-MIM, although its 
sample and computational complexities will depend on 
some complexity measure of the underlying concept class. 
In particular, we can learn any function that admits $L_1$-sandwiching polynomials 
(see Definition~\ref{def:class-sandwich})
with respect to any logconcave distribution. 
Results of~\cite{kane2013learning} imply that this applies 
to any function with bounded effective surface area.




\begin{theorem}[Robust Learner for General MIMs, Informal] \label{thm:gen-intro}
Let $\H$ be a family of $K$-MIMs over $\R^d$,  
admitting $L_1$-sandwiching polynomials achieving error $c>0$ with degree $\ell = \ell(c)$. 
There is an algorithm such that for any distribution $D$ over 
$\R^d \times \{\pm 1\}$ with $D_\x$ logconcave, and any 
$\eps\in (0,1)$, it has the following behavior: the algorithm draws 
$\poly_{\ell}(d) \poly_{K,c}(1/\eps)$ samples from $D$, 
it runs in time $\poly_{\ell}(d) 2^{\polylog_{K, c}(1/\eps)}$, 
and with high probability outputs a hypothesis $\hat{h}$ such that 
$\err_{D}(\hat{h})\le (1+O(c)) \, \opt_D(\H) + \eps$. 
\end{theorem}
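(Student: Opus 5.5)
The plan follows the framework sketched in the abstract: boosting from weak partial learners, with the weak learner built by recursive moment-based partitioning. The first step is to set up a boosting reduction in the style of smooth/agnostic boosting, but relative to \emph{partial} classifiers. Such a hypothesis outputs a label in $\{\pm1\}$ on part of the domain and abstains elsewhere. The weak learning requirement is the following. Take any reweighting $X'$ of $D_\x$ with density bounded by $1/\eps$ relative to $D_\x$ (the smooth distributions arising in boosting). The weak learner must output a partial hypothesis whose advantage on $X'$ is comparable to that of the best $f\in\H$, up to an additive loss of order $c\cdot\opt+\eps$. Standard potential-based smooth boosting (e.g.\ in the style of Kalai--Kanade or Feldman) then aggregates $\poly(1/\eps)$ rounds of such weak hypotheses into $\hat h$ with $\err_D(\hat h)\le(1+O(c))\opt+\eps$. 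The factor $(1+O(c))$ comes from the sandwiching error $c$ being paid once per unit of potential drop.

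The core is the weak learner under the non-logconcave conditioning $X'$. I would run a recursive partitioning procedure that maintains a collection of convex regions $R$, using the discrepancy-finding subroutine at each step. Given $X|_R$ (which is logconcave, being a convex restriction) and $X'|_R$, the subroutine estimates their degree-$\ell$ central moment tensors. It then uses a spectral analysis of the difference tensor, flattened to matrices, to find a subspace $V$ of dimension $\poly_\ell(1/\eps)$ capturing any moment discrepancy larger than a threshold $\tau$. If $V$ is nontrivial, we split $R$ by a grid of slabs along $V$ into convex pieces. Logconcave isoperimetry (KLS-type bounds) guarantees that these cuts make progress: a potential, say the $X'$-weighted variance or $\chi^2$-type divergence of $X'$ relative to $X$ along $V$, decreases by $\poly(\tau)$. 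Since the initial divergence is bounded by the smoothness parameter $1/\eps$, the recursion terminates after boundedly many levels. This yields polynomially many regions, and on all but an $\eps$-fraction of $X'$-mass the degree-$\ell$ moments of $X|_R$ and $X'|_R$ agree to within $\tau$.

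On a good region $R$, I use sandwiching. The restriction of $f$ to the convex set $R$, under the logconcave $X|_R$, still admits $L_1$-sandwiching polynomials $p_{\mathrm{lo}}\le f\le p_{\mathrm{up}}$ of degree $\ell$. One needs either a class assumption closed under convex restriction, or an argument via a thresholding/MIM structure, to keep the regions' effect low-dimensional. Matching moments up to $\tau$ then transfers the guarantee $\E[p_{\mathrm{up}}-p_{\mathrm{lo}}]\le c$ from $X|_R$ to $X'|_R$. Hence $L_1$ polynomial regression of degree $\ell$ on $X'|_R$ finds $p$ with $\E_{X'|_R}|p-y|\le 2\,\opt_R+O(c)$. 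Its sign, or abstention where the fit is uninformative, is a weak partial hypothesis on $R$. Gluing over regions gives the weak learner with sample cost $d^{O(\ell)}$.

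The running time $2^{\polylog(1/\eps)}$ needs a separate argument. Regression is done after projecting onto the accumulated low-dimensional subspaces together with a $\polylog$-dimensional relevant subspace, rather than in all $d$ dimensions. The main obstacle is the discrepancy/partition step. One must show that a tensor-spectral method finds a \emph{low-dimensional} $V$ whenever moments disagree, and that cutting along $V$ strictly decreases a bounded potential despite $X'$ not being logconcave. Here I would first try the spectral characterization combined with the recent near-optimal logconcave isoperimetry to bound how much mass a moment discrepancy can hide outside $V$.
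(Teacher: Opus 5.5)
Your outline matches the paper's architecture: smooth boosting whose reweightings are rejection-sampled from $D_\x$, recursive convex partitioning driven by a discrepancy finder, and a sandwiching-based weak learner on cells where moments match. But the step you flag as the main obstacle is the heart of the proof, and the method you propose for it fails.

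\textbf{The discrepancy finder.} Thresholding singular values of the flattened moment-difference tensor does not produce a low-dimensional subspace. Take $X=N(0,I_d)$ and let $X'$ be its conditioning on $\{\norm{\x}_2^2\ge d\}$ (mass $\approx 1/2$). Both laws are centered, so central and raw moments coincide. By rotational invariance the fourth-moment difference is $\Theta(d^{-1/2})$ times the symmetrization of $I\otimes I$. Every direction therefore has discrepancy only $O(d^{-1/2})$, yet the $d\times d^3$ flattening has all $d$ singular values of order $1$. Lower-order contributions must be peeled off first, and the paper does this in four steps:
\begin{enumerate}
\item It writes the conditional moments as $\E_X[Qp]$ with $\norm{Q}_{D_\x,2}=O(\log^k(1/\alpha))$ (Lemma~\ref{lm Q function learning}).
\item It expands $\E[Qp]=\sum_i\mathbf{T}_i\cdot\E[\nabla^i p]$ and bounds $\norm{\mathbf{T}_i}_F$ via the thin-shell Poincar\'e inequality (Lemmas~\ref{lm gradient} and~\ref{lm gradient norm}). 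This, not progress of the cuts, is where KLS enters.
\item It iterates on $Q\prod_j(\g_j\cdot\x)$ to control cross terms between $V$ and $V^\perp$, which do not vanish for non-product logconcave laws (Lemma~\ref{lm subspace v3}).
\item It brute-forces over $V$.
\end{enumerate}
Your runtime accounting is also inconsistent. With $\dim V=\poly(1/\eps)$ that search costs $2^{\poly(1/\eps)}$. The quasi-polynomial bound holds because $\dim V$ is polylogarithmic in $1/(\alpha\eps)$, inherited from $\norm{Q}_2$, with the $\log^k d$ Poincar\'e loss removed by a second pass inside $V$. The regression itself runs in all of $\R^d$ at cost $d^{O(\ell)}$.

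\textbf{The partition analysis.} A divergence of the partition-induced distributions grows under a split of $R$ by an amount proportional to $\Pr[X'\in R]^2/\Pr[X\in R]$, and is capped by $1/\eps$. So it bounds the number of splits. But the weak learner needs the cells left unsplit (because $\Pr[X'\in R]<\xi$) to carry small \emph{total} $X'$-mass, and (number of cells)$\cdot\xi$ is not small.

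The paper instead uses the sublinear potential $\Phi_t=\sum_R(\Pr_D[R]\Pr_{D_f}[R])^{1/2-s}$ with $s=\gamma^2/10$. It shows $\Phi_t$ is non-increasing under every good split, then combines $\Phi_t\le 1$ with $\Pr_D[R]\ge W(D)\Pr_{D_f}[R]$ to get $\sum_{R\in\mathcal{B}}\Pr_{D_f}[R]\le W(D)^{s-1/2}\xi^{2s}\le\eta/2$. The splits are single thresholds from Lemma~\ref{lm good split}, which uses only one-dimensional logconcave tails. It needs its tail-type case because a moment discrepancy can come entirely from a heavy tail of $X'$ where the CDF gap is negligible.

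\textbf{Smaller points.}
\begin{itemize}
\item Weak hypotheses need not abstain; ``partial'' refers to labels $y=0$.
\item No closure of $\H$ under convex restriction is needed. Sandwiching is assumed for every logconcave law on $\R^K$, and $\mathbf{W}\x$ under $X|_R$ is logconcave. The MIM structure is what bounds the sandwiching polynomials' coefficients (Fact~\ref{fact log concave l1 growth}), so that moment matching transfers $\E[p^u-p^l]$.
\item The factor $1+c$ does not come from an additive $c\cdot\opt$ loss. It arises because the weak learner succeeds whenever $\opt_{D'}\le 1/(2(1+c))$, which the boosting of Fact~\ref{th distribution specific boosting} turns into $(C/2)\opt$ with $C=2(1+c)$.
\end{itemize}
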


\noindent The reader is referred to Theorem~\ref{th sandwhiching} in 
Section~\ref{sec sandwhiching polynomial} for the detailed formal statement 
that also covers the partial learning setting. For our other generic families of functions covered by our main result, see Theorem~\ref{th generic family} in Section~\ref{sec generic class}.


\vspace{-0.3cm}

\paragraph{Discussion} 
We now provide a few remarks regarding Theorem~\ref{thm:gen-intro}. First, we note that 
our general MIM family includes a range of well-studied concept classes, 
including arbitrary functions of $K$ halfspaces, 
$K$-dimensional constant degree PTFs, and others. 
Specifically, all of these classes admit sandwiching polynomials whose degree $\ell$ 
is only a function of $K$ and the desired accuracy.
Therefore, as long as $K = O(1)$, Theorem~\ref{thm:gen-intro} implies a constant-factor robust learner for each of these classes with sample complexity $\poly(d/\eps)$ and computational complexity $\poly(d) 2^{\polylog(1/\eps)}$. 

An additional feature of our result is that (for concept classes with the $L_1$-
sandwiching property) we can obtain {\em any} desired constant factor approximation 
by increasing the complexity by a polynomial factor. In particular, this allows us 
to obtain a Quasi-PTAS for the agnostic learning task (with quasi-polynomial dependence on 
$\eps$ in the runtime). It is also worth noting that our general algorithm is implementable in the 
Statistical Query (SQ) model~\cite{Kearns:98} with complexity (i.e., number of queries and inverse of 
accuracy) $\poly_{K}(d/\eps)$. The quasi-polynomial dependence on $\eps$ in the runtime is 
due to a brute-force search over a space of dimension $\polylog_K(1/\eps)$.




\vspace{-0.3cm}

\paragraph{Extension to Testable Learning}
Interestingly, the algorithm establishing Theorem~\ref{thm:gen-intro} also applies to the 
testable learning framework of~\cite{rubinfeld2023testing}, 
as long as the marginal logconcave distribution is known to the algorithm. 
In testable learning, the goal is to be able to test whether the underlying distributional assumptions for your algorithm are met (or met well enough) rather than having to assume them. In particular, a testable learner should consist of  (see Definition~\ref{def:testable}) 
a tester-learner pair satisfying the following properties: 
(a) if the tester accepts, the learner outputs a hypothesis 
and a certificate that it achieves near-optimal error, and 
(b) it is highly unlikely that the tester rejects if the data satisfies 
the underlying distributional assumptions. 
The reader is referred to Theorem~\ref{th testable} in Section~\ref{sec proof main} for our generic 
testable learning result. This result provides significantly more efficient testable learners
for a range of concept classes with near-optimal error guarantees.

\vspace{-0.3cm}


\paragraph{Extension to Partial Learning}
Furthermore, our results also apply to agnostic partial learning, a stronger but natural extension of the standard agnostic learning setting. In agnostic partial learning, we receive samples $(\x,y)$ from a distribution $D$ over $\R^d \times \{0,\pm 1\}$, with logconcave marginal, 
where a sample with $y=0$ can be thought of as an example that goes unclassified. 
The goal of an agnostic partial learner is to output 
a hypothesis that does well over the examples that are classified in our data. In other words, even if only a small fraction of the examples are classified (whose marginal is logconcave conditioned on an event with a suitably small probability),
our algorithm is still able to learn with qualitatively the same complexity (with a 
necessary dependence on the inverse of the labeled mass). See \Cref{thm: main-weak-to-strong}.

\subsubsection{Algorithmic Applications for Concrete Concept Classes} \label{sssec:app-results}

Our first algorithmic application is for the task of robustly learning general halfspaces under logconcave marginals, thereby answering \Cref{q:LTFs} in the affirmative.

\begin{theorem}[Robustly Learning General Halfspaces] \label{thm:LTFs-intro}
Let $\H$ be the class of halfspaces on $\R^d$.
There are two algorithms for learning distributions $D$ over $\R^d \times \{\pm 1\}$ 
with unknown logconcave $\x$-marginal $D_\x$. 
Each draws $\poly(d/\eps)$ samples from $D$, 
and has the following performance guarantees:
\begin{enumerate}[leftmargin=*]
\item The first algorithm runs in time $\poly(d/\eps)$ and with high probability 
outputs a hypothesis with error $\widetilde{O}(\opt)+\eps$, where $\opt = \opt_D(\H)$. 
\item The second algorithm runs in time $\poly(d) 2^{\polylog(1/\eps)}$ and with high 
probability outputs a hypothesis with error $C \opt+\eps$, where $C>1$ is a universal constant. 
\end{enumerate}
\end{theorem}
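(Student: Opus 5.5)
Both parts of Theorem~\ref{thm:LTFs-intro} will follow from the general framework, namely the weak-to-strong boosting result (\Cref{thm: main-weak-to-strong}) combined with the moment-partitioning weak partial learner. The only class-specific inputs are quantitative properties of halfspaces under logconcave distributions. Halfspaces are $1$-MIMs, so $K=1$ throughout.

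For the second algorithm, I will invoke Theorem~\ref{thm:gen-intro}, whose formal version is Theorem~\ref{th sandwhiching}, with $c$ a fixed small universal constant. The needed input is that halfspaces admit $L_1$-sandwiching polynomials of degree $\ell(c)=O_c(1)$, with error $c$, under every logconcave distribution. I will derive this from the bounded surface area of halfspaces under logconcave measures via the result of~\cite{kane2013learning} cited before Theorem~\ref{thm:gen-intro}. Since the sandwiching degree must be uniform over all logconcave distributions (including conditionings that the framework handles internally), I will check this uniformity using the one-dimensional structure of halfspaces. Each halfspace depends on the single direction $\w$, and its surface area in that direction is bounded by the density of a one-dimensional logconcave marginal, which is $O(1)$ after isotropic normalization. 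Plugging $K=1$ and $\ell=O(1)$ into Theorem~\ref{thm:gen-intro} then gives sample complexity $\poly(d)\poly(1/\eps)$, running time $\poly(d)2^{\polylog(1/\eps)}$, and error $(1+O(c))\opt+\eps = C\opt+\eps$.

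For the first algorithm, I will replace the brute-force search over the $\polylog(1/\eps)$-dimensional subspace, which is the source of the quasi-polynomial runtime, with a polynomial-time weak partial learner. The plan is as follows.
\begin{enumerate}
\item Run the recursive moment-based partitioning to obtain polynomially many convex regions on which the low-degree moments of the conditional distribution match those of a logconcave one.
\item On each good region, find a weak hypothesis correlating with the optimal halfspace. A degree-$O(1)$ polynomial fit, for instance an $L_1$ regression of degree depending only on $c$, suffices, but I will allow the approximation to degrade by a $\log(1/\opt)$ factor.
\item Feed these weak hypotheses into the boosting framework of \Cref{thm: main-weak-to-strong}.
\end{enumerate}
The $\widetilde{O}(\opt)$ guarantee should come from choosing the target constant $c$ adaptively as a function of the noise rate. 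Equivalently, the weak learner only has to beat the noise within regions of mass $\gtrsim\opt$, and the logconcave tail bounds $\exp(-t)$ translate into the logarithmic loss.

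I expect this last step to be the main obstacle. Keeping the whole pipeline polynomial in $1/\eps$ requires showing that, in each good region, a halfspace correlates noticeably with a polynomial of degree independent of $\eps$, and that this polynomial can be found by convex optimization rather than by enumeration. My first attempt would use the fact that a halfspace correlates with a degree-$1$ or degree-$2$ polynomial whenever the first two moments match those of a logconcave distribution. The only failure would be thin-slab cases, where the threshold lies deep in the tail. I would handle these by accepting the $\log$-factor loss coming from logconcave tails.
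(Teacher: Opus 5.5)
\textbf{Second algorithm: correct, via a different route.} You apply the generic sandwiching result, Theorem~\ref{th sandwhiching}, with $K=1$ and a fixed constant $c$, using the degree-$\exp(O(1/c^4))$ sandwiching polynomials for a single halfspace from Theorem~\ref{th logconcave approximation}. The paper instead proves Theorem~\ref{th halfspace quasi}. There, near-matching of the first four moments plus Paley--Zygmund gives constant anti-concentration of $\w^*\cdot\x+t^*$, so plain linear regression is a $(4,1/64,\Omega(1),O(1))$-weak learner (Lemma~\ref{lm halfspace 4moment}).

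The paper's route needs only degree-$4$ discrepancy finding and uses $O(d^4)\poly(1/\eps)$ samples. Yours gets the better factor $1+c$, but the exponent of $d$ is $\exp(O(1/c^4))$ and moments must match up to degree $2\ell(c)$. The paper notes this trade-off at the start of Section~\ref{sec partial linear}. Your requirement that the degree be uniform also over ``conditionings that the framework handles internally'' is unnecessary. Lemma~\ref{lm weak learning L1} uses sandwiching only under the reference logconcave $D'_\x$, never under the conditioned marginal.

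\textbf{First algorithm: genuine gap.} Steps 1--2 as written are not polynomial time. The quasi-polynomial cost lives in the discrepancy finder, not the weak learner. A degree-$\ell(c)$ fit on a cell is justified (Lemma~\ref{lm weak learning L1}) only if moments up to degree $2\ell(c)$ match there. Certifying that needs the brute-force search of Theorem~\ref{th direction finding}. Choosing $c$ adaptively in $\opt$ or $\eps$ only increases $\ell(c)$ and hence $d^{\ell(c)}$. The only polynomial-time discrepancy finder is the degree-$2$ one (mean estimation plus SVD). So the partition can certify only first and second moments, and the weak learner must be essentially affine.

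Your ``first attempt'' goes this way, but its key claim is false. Two matched moments do not make a halfspace correlate with a low-degree polynomial, even a homogeneous one. Let $\w^*\cdot\x$ put mass $(1-\delta)/2$ at each of $\pm\eta$ and $\delta/2$ at each of $\pm\delta^{-1/2}$, with independent Gaussian coordinates orthogonal to $\w^*$. For tiny $\eta$ the first two moments match $N(0,\mathbf{I})$ within $1/64$. Yet $\E[h^*(\x)(\w^*\cdot\x)]\le\eta+\sqrt{\delta}$ while $\E[(\w^*\cdot\x)^2]\ge 1$, and by symmetry quadratics do no better. The ``threshold deep in the tail'' case you single out is actually the easy one: for $|t^*|>4$, Chebyshev makes a constant hypothesis a weak learner.

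\textbf{Missing ingredient.} What you need is item~2 of Lemma~\ref{lm moment to anti-concentration}. On every good cell, the effective marginal is the logconcave $D_R$ conditioned on an event of mass $\alpha'\ge\poly(\eps)$. By Lemma~\ref{lm Q function learning} (H\"older with $r\approx\log(1/\alpha')$ plus Carbery--Wright), $\E[(\w\cdot\x)^4]=O(\log^4(1/\alpha'))$ in every direction. The matched second moments give $\E[(\w^*\cdot\x+t^*)^2]\ge 1/8$ for $|t^*|\le 4$. Paley--Zygmund then yields $\Pr[|\w^*\cdot\x+t^*|>1/4]=\Omega(\log^{-4}(1/\alpha'))$, which excludes the thin-slab concentration above.

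Consequently, linear regression with a random threshold has advantage $\Omega(\log^{-8}(1/\alpha'))$ whenever the local noise is at most $c_0\log^{-8}(1/\alpha')$ (Lemma~\ref{lm halfspace 2ndmoment}). The degree-$2$ partition plus boosting then gives $O(\log^8(1/\eps))\opt+\eps$ in polynomial time. Rerunning with $\eps'=\Theta(\opt)$ when $\opt>\eps$ yields $\widetilde{O}(\opt)+\eps$. So the logarithm does come from exponential tails, but through a fourth-moment bound on a small-mass conditioning, not from tail thresholds or regions of mass $\gtrsim\opt$.
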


The reader is referred to Theorems~\ref{th halfspace quasi}
and~\ref{th halfspace polytime} in Section~\ref{sec partial linear} 
for more detailed formal statements that also cover the partial PAC learning setting.

Moreover, the testable version of Theorem~\ref{thm:LTFs-intro} answers another 
basic open problem in the literature. Specifically, even for the special case of Gaussian marginals, the best known polynomial-time error guarantee for testable learning of halfspaces was $\tilde{\Omega}(\sqrt{\opt})+\eps$~\cite{DKLZ24}. Our learner works for any fixed logconcave distribution and achieves near-optimal 
error of $\tilde{O}(\opt)+\eps$ 
(or $O(\opt)+\eps$ with a quasi-polynomial time dependence on $1/\eps$).

\medskip

Our second algorithmic application is for the task of learning functions of halfspaces under logconcave marginals, thereby answering \Cref{q:inter} in the affirmative.

\begin{theorem}[Robustly Learning Functions of Halfspaces] \label{thm:functions-LTFs-intro}
Let $\mathcal{H}_{K, d}$ be the concept class of functions of $K$ halfspaces on $\R^d$.  
There is an algorithm that for every distribution $D$ 
over $\R^d \times \{\pm 1\}$ 
with unknown logconcave $\x$-marginal $D_\x$ and any given $c>0$, 
draws  $\poly_{K,c}(d/\eps)$ samples from $D$, 
runs in time $\poly_{K,c}(d) 2^{\polylog_{K,c}(1/\eps)}$, 
and with high probability outputs a hypothesis with error $(1+c) \opt+\eps$, 
where $\opt = \opt_D(\H_{K, d})$. 
\end{theorem}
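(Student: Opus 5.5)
The plan is to obtain \Cref{thm:functions-LTFs-intro} as a direct corollary of the general result, \Cref{thm:gen-intro} (formally Theorem~\ref{th sandwhiching}). Functions of $K$ halfspaces are $K$-MIMs: each $f\in\H_{K,d}$ depends only on the projections $\w^{(1)}\cdot\x,\dots,\w^{(K)}\cdot\x$, so $f(\x)=g(\mathbf W\x)$ with $\mathbf W$ the $K\times d$ matrix of normal vectors. It therefore remains to verify the one quantitative hypothesis of \Cref{thm:gen-intro}. We must show that $\H_{K,d}$ admits $L_1$-sandwiching polynomials, with respect to every logconcave distribution, with error $c$ and degree $\ell=\ell(K,c)$ independent of $d$. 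Then \Cref{thm:gen-intro}, run with a rescaled constant $c'=\Theta(c)$, gives error $(1+O(c'))\opt+\eps\le(1+c)\opt+\eps$. Its sample complexity is $\poly_\ell(d)\poly_{K,c}(1/\eps)=\poly_{K,c}(d/\eps)$ and its running time is $\poly_{K,c}(d)2^{\polylog_{K,c}(1/\eps)}$, exactly as claimed.

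For the sandwiching step I would use the route indicated before \Cref{thm:gen-intro}: the results of \cite{kane2013learning} give sandwiching polynomials for functions of bounded effective surface area under logconcave measures. The steps are as follows.

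\textbf{Reduce to an isotropic logconcave measure on $\R^K$.} Sandwiching is affine invariant, so we may assume the relevant $K$-dimensional marginal is isotropic logconcave. Since $f=g(\mathbf W\x)$, any sandwiching polynomials $p_\pm$ for $g$ on $\R^K$ lift to $p_\pm(\mathbf W\x)$ with the same degree and error. The problem is thus purely $K$-dimensional.

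\textbf{Bound the surface area of $g$.} Every $g$ is constant on each cell of the arrangement of $K$ hyperplanes. Its boundary is therefore contained in the union of $K$ hyperplanes, and the boundary of a halfspace has surface area $O(1)$ under an isotropic logconcave density (the density restricted to a hyperplane is bounded). Hence the Gaussian-noise-sensitivity or surface-area type quantity used in \cite{kane2013learning} is $O(K)$, uniformly over all logconcave measures after normalization.

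\textbf{Invoke \cite{kane2013learning}.} This gives sandwiching polynomials of degree $\ell(K,c)$ with $L_1$ error $c$.

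I expect the main obstacle to be making this sandwiching statement hold uniformly over \emph{all} logconcave marginals, not just isotropic ones. A non-isotropic logconcave distribution restricted to the relevant subspace could be nearly degenerate, and thresholds can be far out in the tails. My first attempt would be to apply the affine normalization of the first step to the $K$-dimensional projection. This works because the arrangement of halfspaces is preserved under invertible affine maps. If the projection is degenerate, I would reduce to a lower-dimensional MIM, which has the same form.

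Tails are handled by logconcave concentration. The polynomial only needs to be controlled on a ball of radius $O(\log(1/c))$, outside of which the mass is at most $c$. To keep $p_+\ge g$ there, I would add a term like $(\|\z\|^2/R^2)^{m}$, whose $L_1$ cost is small by logconcave moment bounds. If the cited result is stated only for degree bounds depending on the surface area, the $O(K)$ bound above plugs in directly and the corollary follows.
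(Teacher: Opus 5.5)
Your proposal is correct, and its top-level structure matches the paper. Functions of $K$ halfspaces are $K$-MIMs, so the statement follows from Theorem~\ref{th sandwhiching} (with $W(D)=1$), given degree-$\ell(K,c)$ $L_1$-sandwiching polynomials for every logconcave measure on $\R^K$.

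Where you differ is the sandwiching ingredient.

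\textbf{The paper's route.} It simply cites Theorem~\ref{th logconcave approximation} (Theorem~15 of \cite{kane2013learning}). That result already gives sandwiching polynomials of degree $\exp(O(K^2/c)^4)$ for any function of $K$ halfspaces under any logconcave distribution. No isotropic normalization and no boundary estimate are needed.

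\textbf{Your route.} You go through surface area, which in the paper is Theorem~\ref{th smooth approximation}. Note that Theorem~6 of \cite{kane2013learning} as stated gives only an $L_1$-approximator, not a sandwiching pair. Upgrading it is exactly what the paper does in Appendix~\ref{app application}, including the tail term $(\norm{\x}_2/R)^{2t}$ you anticipate. So you should cite that extension rather than \cite{kane2013learning} directly.

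The quantity that extension actually needs is $\Pr[S_\delta]$, not a surface area per se. For a function of $K$ halfspaces under an isotropic logconcave measure, $S_\delta$ lies in the union of the $\delta$-slabs around the $K$ hyperplanes. Each slab has mass at most $2\delta$, because the one-dimensional marginal density of an isotropic logconcave measure is bounded by an absolute constant. It is this marginal density that is bounded, not the pointwise density on the hyperplane as your parenthetical says. Taking $\delta=\Theta(c/K)$ gives degree $\exp(O(K^8/c^4))$, the same as the paper's bound.

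One small slip: in $\R^K$, the radius outside which an isotropic logconcave measure has mass at most $c$ grows with $K$, e.g.\ $O(K+\log(1/c))$, not $O(\log(1/c))$. This is harmless, since the degree may depend on $K$.

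\textbf{Comparison.} Your route is less direct for this class, but it is the more general one: it is how the paper handles low-dimensional PTFs (Theorem~\ref{thm:low-dim-PTFs}). For functions of halfspaces, the paper's citation of Theorem~15 makes the result a one-line corollary.
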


The reader is referred to Theorem~\ref{th functions of halfspaces} 
in Section~\ref{ssec:qptas-apps} 
for a more detailed formal statement that also 
covers the partial PAC learning setting.

We emphasize that the upper bound of Theorem~\ref{thm:functions-LTFs-intro} is new even for 
the very special case of learning intersections of two halfspaces in the realizable PAC 
model (i.e., $K=2$ and $\opt=0$). As a bonus, the upper bound applies to any function of any 
constant number of halfspaces and is robust to adversarial label noise. Specifically, 
for $K = O(1)$ we obtain a constant factor approximation to the optimal error 
with sample complexity $\poly(d/\eps)$ and computational complexity 
$\poly(d/\eps) 2^{\polylog(1/\eps)}$. Moreover, we can achieve a factor $(1+c)$ 
approximation, for any desired parameter $c>0$, by increasing the complexity 
as a function of $c$. 


\medskip

Our third algorithmic application, obtained via a refinement of our generic technique, 
is for the task of robustly learning multiclass linear classifiers under logconcave marginals, 
thereby answering \Cref{q:mlc} in the affirmative.

\begin{theorem}[Robustly Learning Multiclass Linear Classifiers] \label{thm:MLC-intro}
There are two algorithms for learning the concept class $\mathcal{M}_{K, d}$ of $K$-MLC on $\R^d$. 
Each algorithm draws $\poly(d \, K/\eps)$ samples from a distribution $D$ over $\R^d \times [K]$ 
with unknown logconcave $\x$-marginal $D_\x$, and has the following performance guarantees:
\begin{enumerate}[leftmargin=*]
\item The first algorithm runs in time $\poly(d\, K/\eps)$ and with high probability 
outputs a hypothesis with error $\widetilde{O}(K) \, \widetilde{O}(\opt)+\eps$, 
where $\opt = \opt_D(\mathcal{M}_{K, d})$. 
\item The second algorithm runs in time $\poly(d) 2^{\polylog(K/\eps)}$ and with high probability 
outputs a hypothesis with error $O(K) \, \opt+\eps$.  
\end{enumerate}
\end{theorem}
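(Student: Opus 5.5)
The plan is to obtain both parts of Theorem~\ref{thm:MLC-intro} from the weak-partial-learner boosting framework (the weak-to-strong reduction behind Theorem~\ref{thm:gen-intro} and the partial learning extension). The key move is to reduce multiclass classification to partial binary problems. For an MLC $f(\x)=\argmax_i(\w^{(i)}\cdot\x+t_i)$ and a pair $i\neq j$, consider the partial concept that equals $\sign((\w^{(i)}-\w^{(j)})\cdot\x+t_i-t_j)$ on $\{\x: f(\x)\in\{i,j\}\}$ and is undefined elsewhere. On the points it labels, this partial concept agrees with a single general halfspace. Halfspaces are the cleanest case of the framework: they admit $L_1$-sandwiching polynomials under every logconcave distribution, and the relevant multi-index dimension is one.

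\emph{Step 1: a multiclass weak partial learner.} Given a reweighting $D'$ of $D$ produced by the boosting procedure, I would guess a pair $(i,j)$, or enumerate all $O(K^2)$ pairs and validate each. I relabel the samples by keeping those with $y\in\{i,j\}$, mapping them to $\pm1$, and marking all other samples $0$. On each good region supplied by the moment-based partition (the discrepancy-finding subroutine together with the correlation of well-behaved MIMs with low-degree polynomials), a halfspace correlates with a low-degree polynomial. Running low-degree $L_1$ regression on that region then gives a weak hypothesis that beats random guessing between labels $i$ and $j$. Since $f$ uses $K$ labels, pigeonhole guarantees some pair whose partial problem carries an $\Omega(1/K^2)$, or after more careful accounting $\Omega(1/K)$, share of the labeled mass on which the optimal classifier has an advantage. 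The weak hypothesis either outputs a label in $\{i,j\}$ or abstains.

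\emph{Step 2: boosting to a strong learner and error accounting.} Feeding these weak partial hypotheses into the weak-to-strong boosting theorem (\Cref{thm: main-weak-to-strong}) yields a hypothesis whose error is at most a factor times $\opt$, plus $\eps$. The factor depends on how much noise the weak learner tolerates. Adversarial errors of $f$ can be spread across all $K$ classes, and a pairwise problem sees the noise from both of its classes relative to a mass of about $1/K$. That is where I expect the $O(K)$ blow-up in the approximation factor to enter. I would follow the halfspace analysis (Theorems~\ref{th halfspace quasi} and~\ref{th halfspace polytime}) to control this loss.

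\emph{Step 3: the two runtime regimes.} For part 2, I would use polylogarithmic-degree sandwiching for halfspaces, with the brute-force search over a $\polylog(K/\eps)$-dimensional discrepancy subspace. This gives time $\poly(d)2^{\polylog(K/\eps)}$ and error $O(K)\opt+\eps$. For part 1, I would replace that search with the polynomial-time halfspace weak learner from the proof of Theorem~\ref{thm:LTFs-intro}, part 1. That learner loses only a $\polylog(1/\opt)$ factor, which yields $\widetilde O(K)\widetilde O(\opt)+\eps$ in time $\poly(dK/\eps)$.

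The main obstacle is Step 1's noise accounting. I need to show that after conditioning on $y\in\{i,j\}$ and on the boosting reweighting, there is still a pair on which the restricted halfspace has error small enough, relative to the pair's labeled mass, for the weak learner to succeed. I would attempt a charging argument: sum the pairwise errors of $f$ over all pairs weighted by pair mass, and average to find a pair whose relative noise is $O(K)$ times the global relative noise. I would also check that the partition and moment-matching guarantees survive conditioning on the label-defined event $\{f\in\{i,j\}\}$, which is an intersection of $O(K)$ halfspaces and hence convex.
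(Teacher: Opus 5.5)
\textbf{There is a genuine gap in Steps 1--2.} Your plan builds a \emph{multiclass} weak partial learner and boosts it. The paper's tools do not support this. Theorem~\ref{thm: main-weak-to-strong} and the boosting behind it (Theorem~\ref{th condition boosting}, Fact~\ref{th distribution specific boosting}) are purely binary. They take a weakly learnable class of $\{\pm1\}$-valued functions, run Feldman's potential argument with $f=\E[y\mid\x]\in[-1,1]$, and output a Boolean hypothesis.

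Your weak hypothesis beats random guessing between two labels $i,j$ on an $\Omega(1/K)$ share of the mass and abstains elsewhere. That is not a weak hypothesis in this sense. The paper has no multiclass analogue of Fact~\ref{th distribution specific boosting}, in particular none that keeps every reweighted distribution a bounded-density conditioning of $D_\x$. Multiclass boosting generally needs a stronger weak-learning condition than ``better than random on some pair.''

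Your pigeonhole/charging argument also produces only \emph{one} good pair. A $K$-class predictor must handle every pair, so nothing in Step 2 actually produces the $O(K)$ factor.

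\textbf{The missing idea} is to do all the boosting inside each binary problem and then combine the results.
\begin{itemize}
\item For every pair $i<j$, form $D_{ij}$ by keeping examples with $y\in\{i,j\}$ and setting all other labels to $0$.
\item Run the agnostic partial halfspace learner of Theorem~\ref{th halfspace quasi} (resp.\ Theorem~\ref{th halfspace polytime}) on $D_{ij}$ with accuracy $\eps'=\alpha\eps/K^2$. This gives \emph{strong} pairwise classifiers $\hat h_{ij}$.
\item Output the vote $\hat h(\x)=\argmax_i\#\{j:\hat h_{ij}(\x)=i\}$.
\end{itemize}

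The error accounting (Lemma~\ref{lm reduction multi-binary}) goes as follows.
\begin{itemize}
\item If $\hat h(\x)\neq y$, then some $\hat h_{yj}$ errs. A union bound gives $\err_D(\hat h)\le\sum_{i<j}\err_{D_{ij}}(\hat h_{ij})$.
\item Let $\opt_{r,i}=\Pr[h^*(\x)=r,\,y=i]$. The halfspace $h^*_{ij}=\sign((W^*_i-W^*_j)\cdot(\x,1))$ satisfies $\err_{D_{ij}}(h^*_{ij})\le\opt_{ij}+\opt_{ji}+\sum_{r\notin\{i,j\}}(\opt_{r,i}+\opt_{r,j})$.
\item Summing over pairs, each example misclassified by $h^*$ is charged at most $K-1$ times. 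This is exactly where the $O(K)$ factor comes from.
\end{itemize}

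\textbf{Two smaller corrections.}
\begin{itemize}
\item The conditioning event for $D_{ij}$ is $\{y\in\{i,j\}\}$, not $\{f\in\{i,j\}\}$. Neither set is convex in general; $\{f\in\{i,j\}\}$ is a union of two polyhedral cells, not an intersection. No convexity is needed anyway, because the partial framework handles arbitrary conditioning events of weight at least $\alpha$.
\item In part 2, the quasi-polynomial runtime does not come from polylogarithmic-degree sandwiching. It comes from the fourth-moment discrepancy finder, used together with the four-moment halfspace weak learner of Lemma~\ref{lm halfspace 4moment}.
\end{itemize}
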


The reader is referred to Theorem~\ref{th multiclass}
 in Section~\ref{sec multiclass} 
for a more detailed formal statement 
that also covers the partial PAC learning setting.
For comparison, the prior work~\cite{DiakonikolasIKZ25} gave an algorithm 
for the special case of the Gaussian distribution with complexity 
$\poly(d) 2^{\poly(K/\eps)}$, and error of $O(\opt)+\eps$. 
Our algorithm achieves a somewhat weaker robustness guarantee 
with substantially better complexity and under much weaker distributional assumptions.



\begin{table}[t]
\centering
\small
\setlength{\tabcolsep}{2pt}
\renewcommand{\arraystretch}{1.3}

\begin{tabular}{l |l| l| l| l| l}
\toprule
\textbf{Result} & \textbf{Function Class} & \textbf{Distribution} & \textbf{Time Complexity} & \textbf{Error} & \textbf{Notes} \\
\midrule

Theorem~\ref{th sandwhiching}
& \twolines{$L_1$-sandwiching}{$K$-MIMs} 
& Logconcave
& $d^{O(\ell(c))} 2^{\polylog_{K,c}(1/\eps)}$  
& $ (1+c)\opt+\eps$ 
& \ Partial, Testable   \\ \hline
\cite{KKMS:08}
& \twolines{$L_1$-Approximable}{Classes}  
& General 
& $d^{O(\ell(\eps))}$ 
& $ \opt+\eps$ 
& \    \\ \hline
Theorem~\ref{th functions of halfspaces}
& \twolines{Functions of}{$K$ Halfspaces} 
& Logconcave
&  $\poly_{K,c}(d) 2^{\polylog_{K,c}(1/\eps)}$  
& $ (1+c)\opt+\eps$ 
& \ Partial, Testable   \\ \hline
\cite{kane2013learning}
& \twolines{Functions of}{$K$ Halfspaces} 
& Logconcave
& $d^{2^{\poly(K/\eps)}}$ 
& $ \opt+\eps$ 
& \    \\ \hline
\cite{Vempala10}
& \twolines{Intersections of $K$}{Centered Halfspaces} 
&  \twolines{Isotropic}{Logconcave}  
& $O(dK/\eps)^K$ 
& $ \eps$ 
& \ Realizable   \\ \hline
\cite{DKS18a}
& \twolines{Intersections of $K$}{General Halfspaces} 
&  Gaussian   
& $\poly(dK/\eps)+(K/\eps)^{O(K^2)}$ 
& $ \tilde O(K^{4/11}\opt^{1/11})+\eps$ 
&    \\ \hline
\cite{DiakonikolasIKZ25}
& \twolines{Intersections of $K$}{General Halfspaces} 
& Gaussian
& $\poly(d)2^{\poly(K/\eps)}$ 
& $ K\tilde O(\opt)+\eps$ 
& \   \\ \hline
Theorem~\ref{th multiclass}
& $K$-MLC 
& Logconcave
& $\poly(dK/\eps)$ 
& $ \tilde O(K\opt)+\eps$ 
& \ Partial, Testable  \\ \hline
\cite{DiakonikolasIKZ25}
& $K$-MLC 
& Gaussian
& $\poly(d)2^{\poly(K/\eps)}$ 
& $ O(\opt)+\eps$ 
& \   \\ \hline

Theorem~\ref{th halfspace polytime}
& General LTFs 
& Logconcave 
& $\poly(d/\eps)$ 
& $\tilde O(\opt)+\eps$ 
& \ Partial, Testable  \\ \hline

\cite{ABL17} 
& Homogeneous LTFs 
& Logconcave 
& $\poly(d/\eps)$ 
& $O(\opt)+\eps$ 
& \     \\ \hline

\cite{DKS18a} 
& General LTFs 
& Logconcave 
& $\poly(d/\eps)$ 
& $\tilde O (\opt^{1/2})+\eps$ 
& \  \twolines{$O(\opt)+\eps$}{Gaussian}   \\ \hline

\cite{DKLZ24} 
& General LTFs
& Gaussian 
& $\poly(d/\eps)$ 
& $\tilde O(\opt^{1/2})+\eps$ 
& \ Testable  \\
\bottomrule
\end{tabular}

\caption{Summary and comparison of some of our results on agnostic learning under logconcave distributions. All our theorems listed in the table have sample complexity polynomial in $d/\eps$ with degree depending on $K,c,$ or $\ell(c)$.}

\label{tab:results}
\end{table}

\subsection{Technical Overview} \label{ssec:techniques}


In this section, we give a detailed overview of the techniques developed in this paper.

\paragraph{Weakly Learnable Classes}

A natural method to address such learning problems is the method of moments. 
We note that for nice families of distributions, 
there always exists a low-degree polynomial $p$ so that its correlation with the optimal hypothesis $h^*$,
$\E[p(\x)h^*(\x)]$ is not too small. As we can easily compute moments of $D$, 
it is not hard to find such a $p$ if it exists. Furthermore, thresholding this polynomial can be used as a weak learner---a method to produce a hypothesis with error rate 
slightly less than $1/2$.
Furthermore, this weak learner can be made robust in two ways. 
For one, as long as the noise level over the input distribution $D$ 
is not too large, $p(\x)$ also has non-trivial correlation with the noisy label $y$, 
and thus yields a weak hypothesis over $D$.
Furthermore, one can often somewhat relax the condition on the marginal distribution. 
In particular, so long as the marginal is ``close'' to logconcave in a reasonable sense, 
that should suffice to produce this weak learner. 
In particular, for our purposes, it will be enough to have 
a distribution whose \emph{central moments} nearly match those of a logconcave distribution.
These observations inspire us to consider a broad family of classes, 
formalized in Definition~\ref{def moment-based learner} and termed 
{\em $(k,\Delta,\beta,C)$-weakly learnable} classes. 
Roughly speaking, a class is weakly learnable if given any distribution $D$ with low noise rate, whose first $k$ central moments approximately match those of some logconcave distribution, one can efficiently compute a hypothesis whose error is slightly better than $1/2$.
As we will see in Theorem~\ref{th generic family}, essentially any reasonable class of multi-index models satisfies this assumption for some setting of the parameters.

\paragraph{Agnostic Partial Learning and Boosting from Weak Partial Learners}
Given that we assume above that our function class 
has some kind of weak learner, it is natural to attempt 
to use a boosting algorithm to strengthen this to a stronger learner.
Unfortunately, in order for boosting to work, one typically needs a weak learning algorithm that works not just over the original marginal distribution, but also over the marginals of distributions used by the boosting algorithm, which can be much less structured than a logconcave distribution.
Fortunately, if we use a smooth boosting algorithm, 
see, e.g.,~\cite{Servedio:03jmlr, gavinsky2003optimally,Feldman:10ab,KalaiK09}, 
these marginals will always consist of the original distribution 
conditioned on some not-too-unlikely event (see Fact~\ref{th distribution specific boosting}).
This motivates us to consider a stronger but natural generalization 
of the original problem that we term agnostic {\em partial} learning (see Definition~\ref{def partial learning}). 
In agnostic partial learning, we receive samples $(\x,y)$ from a distribution $D$ over $\R^d \times \{0,\pm 1\}$, with logconcave marginal, 
where a sample with $y=0$ can be thought of as an example that goes unclassified. 
The goal of an agnostic partial learner is to output 
a hypothesis that does well over the examples that are classified in our data. 
We show that, to do boosting, it suffices to have a \emph{weak partial learner
(see Definition~\ref{def:partial-weak}). That is a learner that will always return a hypothesis with error somewhat less than $1/2$ when given a partial classification problem with error not too big and the fraction of non-zero labels not too small.}
In Theorem~\ref{th condition boosting}, we show that given a weak partial learner that works when the noise rate is less than $1/C$, we can use it to produce a hypothesis with error $(C/2)\opt+\eps$.


\paragraph{Efficient Weak Partial Learning via Space Partitioning and Discrepancy Finder}
Unfortunately, even if our function class is itself weakly learnable, 
this does not mean we can immediately obtain a weak partial learner. 
This is because if we let $\x$ and $\x'$ be the marginals over all samples and over classified samples, respectively, we are not guaranteed that 
they have nearly matching moments. 
Without this property, it is not clear that our class 
being $(k,\Delta,\beta,C)$-weakly learnable suffices to produce a weak hypothesis. 
As a result, simply running a weak learning algorithm on the classified data 
does not in general suffice. 
This would only work if $\x$ and $\x'$ had nearly matching central moments.

So what can we do if these moments do not match?
In this case, there must be a direction $\v$ 
and an integer $i \le k$ such that the expectations of 
$(\v \cdot \x)^i$ and $(\v \cdot \x')^i$ substantially disagree.
This means that the distributions of $(\v\cdot\x)$ and $(\v \cdot \x')$ must be somewhat different.
This, in turn, implies that if we can find such a direction $\v$ through some efficient 
algorithm---that we term {\em discrepancy finder} (see Definition~\ref{def direction finding oracle})---then
we can split the entire space into two halves  
such that the fraction of classified samples differs substantially across halves.
After the split, we can recursively run the above procedure 
over each half until a direction with moment discrepancy cannot be found; 
or we reach a part, where the probability that $\x'$ falls into it is too small. 
By doing this repeatedly, we partition the whole space into many convex regions $R$, 
with $\Pr[y\neq 0 \mid R]$ starting to become more and more extreme. 
However, since $\x'$ is obtained from $\x$ by conditioning on some event 
of probability $\alpha$, the ratio $\Pr[\x'\in R]/\Pr[\x\in R]$ will never exceed $1/\alpha$.
This fact, along with a careful analysis of the partitioning scheme, 
allows us to show that we end up with a polynomial number of regions, 
so that most of the mass of $\x$ belongs to regions where the conditional moments 
of $\x$ and $\x'$ nearly match (see Lemma~\ref{lm partition}).
As a result, by calling the two algorithmic modules---namely, the weak learner 
and the discrepancy finder---polynomially many times, we get a weak partial hypothesis over $D$, 
as long as the noise rate is not too high and the mass 
of non-zero labels is not too small (see Theorem~\ref{th decision tree k moment}).

\paragraph{Discrepancy Finder under Logconcave Distributions via Dimension Reduction}
Given the discussion thus far, we already have the ideas necessary 
to design an efficient robust learning algorithm in the Statistical Query model, 
where given the moments of $\x$ and $\x'$ we can existentially determine 
if there is a direction of significant mismatch. 
Unfortunately, computational difficulties arise in designing 
an actual algorithm for the discrepancy finder.
In particular, given distributions $\x$ and $\x'$,
the discrepancy finder is asked to either certify that $(\v\cdot\x)$ and $(\v\cdot\x')$ 
approximately match their moments for each $\v$, 
or to find a direction $\v$, in which these moments differ substantially.
While it is not difficult to approximate the expectation of $(\v\cdot\x)^k-(\v\cdot\x')^k$ 
as an explicit polynomial in $\v$,
finding an approximate maximum of this polynomial is believed to be computationally hard in general. 
Fortunately, the structure of the distributions in question will allow us to 
build such a discrepancy finder in quasi-polynomial time (see Theorem~\ref{th direction finding}). 
We provide an overview of our algorithm to do so in the following paragraphs.

Our first observation is that if $\x$ is sampled from a logconcave distribution 
and $\x'$ is $\x$ conditioned on an event of probability at least $\alpha$, 
then their moments cannot differ arbitrarily. In particular, 
we show in Lemma~\ref{lm Q function learning} that there must be a degree-$k$ polynomial 
$Q$ with relatively small $L_2$ norm over $\x$ 
such that for any degree-$k$ polynomial $p$, $\E[p(\x)Q(\x)] = \E[p(\x)-p(\x')]$. 
How can we exploit this fact?
For the purpose of motivation, we first cover the simpler case 
where $\x$ is drawn from a standard Gaussian distribution. 
In this case, we can expand $Q(\x)$ in the Hermite basis, yielding a sequence of tensors $\mathbf{T}_0,\dots,\mathbf{T}_k$ with small Frobenius norm, 
so that for every $\v \in \s^{d-1}$ we can write $\E_{\x \sim D_\x}[Q(\x)(\v \cdot \x)^k]$ 
as a linear combination of $\mathbf{T}_k\cdot \v^{\otimes i}$ for $i\in [k]$.
Then, viewing $\mathbf{T}_i$ as a bilinear form over $\R^d \times \R^{d^{i-1}}$, 
and using a singular-value decomposition, one can efficiently find a subspace 
$V$ (of $\poly(\log(1/\alpha)/\Delta)$ dimension) 
so that the moment-discrepancy in any direction $\u$ is not too much larger than 
the discrepancy in the direction $\v = \mathrm{proj}_V(\u).$ We can then spend time $2^{O(\dim(V))}$ (which is $\mathrm{quasipoly}(1/\alpha)$ if $\Delta$ is constant) to do an exhaustive search over $V$ for directions of large discrepancy.

Unfortunately, the above argument critically exploits 
many features of the Gaussian distributions 
(such as the fact that moments in orthogonal directions are independent) 
that do not hold for a general logconcave distribution.
However, the basic idea of dimension-reduction can still be made to work.
In particular, we wish to find a subspace $V$ with a not-too-large dimension 
so that the maximum of the multi-linear form 
$\E[Q(\x)\prod_{i=1}^k(\u^{(i)}\cdot\x)]$ over any unit vectors $\u^{(i)}$ 
is at most a constant multiple of the maximum 
of $\E[Q(\x)\prod_{i=1}^k(\v^{(i)}\cdot\x)]$, with $\v^{(i)} \in V$, 
plus a small error $\Delta$ (see Lemma~\ref{lm subspace v3}).
We achieve this in two steps. 
First, we notice that $\E[Q(\x)p(\x)]$ (as a linear function of $p(\x)$) 
can be written as a linear function of the expectations of $p$'s derivative $\nabla^i p(\x)$.
Making use of recent progress on the KLS conjecture, 
we can show that the size of the corresponding tensors cannot be too large. 
Combining this fact with a spectral analysis, 
we give an efficient algorithm (see Lemma~\ref{lm subspace finding v1})
to find a subspace $U$ so that if all of the vectors $\u^{(i)}$s are orthogonal to $U$, then $\E[Q(\x)\prod_{i=1}^k(\u^{(i)}\cdot\x)]$ is absolutely small. 

The next step is to use this as a subroutine to recursively build our target subspace $V$. 
To do this, we decompose $\u^{(i)} = \u^{(i)}_U + \u^{(i)}_{U^\perp}$. 
This allows us to decompose $\E[Q(\x)\prod_{i=1}^k(\u^{(i)}\cdot\x)]$ into $2^k$ terms 
$\E[Q(\x)\prod_{i=1}^k(\v^{(i)}\cdot\x)]$, with each $\v^{(i)}$ either in $U$ or in $U^\perp$. 
As a result, we only need to consider the case where one of the vectors is in $U$. 
Let $\g^{(1)},\dots,\g^{(N)}$ be an orthonormal basis of $U$. 
If we apply the subroutine in the first step to $Q(\x)(\g^{(i)}\cdot\x)$ for each $i$, 
this allows us to get another subspace $U'$ such that when each $\u^{(i)}$ is orthogonal to $U'$, 
then $\E[Q(\x)(\g^{(i)}\cdot\x)\prod_{i=1}^{k-1}(\u^{(i)}\cdot\x)]$ is small. 
In particular, this means that if $\u^{(1)}\in U$ and $\u^{(2)},\ldots,\u^{(k)}$ are orthogonal to $U'$, 
then $\E[Q(\x)\prod_{i=1}^k(\u^{(i)}\cdot\x)]$ will be small.
By repeating this approach for $k$ iterations, and taking $V$ as the span of the subspaces found in the process, 
we obtain that the maximum $\E[Q(\x)\prod_{i=1}^k(\u^{(i)}\cdot\x)]$ will not be much larger 
if all the unit vectors are taken from $V$. This completes the overview of the analysis of our learning algorithm.

\paragraph{Algorithmic Application (I): Generic Family and Learning via Sandwiching Polynomials}
To make use of our algorithm template, it remains to show that there are 
natural function classes that satisfy our condition of being $(k,\Delta,\beta,C)$-weakly learnable 
for some reasonable sets of parameters. In particular, we argue that essentially
any non-pathological family of $K$-MIMs is $(k,\Delta,\beta,C)$-weakly learnable 
for an appropriate choice of parameters. We note that the assumption of being a $K$-MIM 
here is critical, as nearly matching central moments will imply that nearly all moments 
along the critical directions will nearly match.

In order to construct such a weak learner, suppose that for the optimal hypothesis $h^*$ in the class, 
we can find a polynomial $p$, such that $\E[p(\x)] = 0$ and $p(\x)$ is positive whenever $h^*(\x)=1$. 
In this case, we have $\E[h^*(\x)p(\x)] = \E[(h^*(\x)+1)p(\x)]>0$. 
Since this $p$ then correlates with the indicator set of $h^*(\x) = 1$, 
it is not hard to see that some appropriate threshold of $p$ will necessarily produce a weak learner.

We claim that very mild conditions on $h^*$ will allow us to construct such a $p$. 
For example, supposing that $\x$ is isotropic, as long as there is a not-too-small ball $B$ 
not-too-far from the origin so that $h^*$ is $-1$ on $B$, 
we can construct (by standard techniques) a polynomial that approximates 
the indicator function of $B$. Subtracting this from a small constant, 
we produce a polynomial that is mean $0$ and positive whenever $h^*$ is. 
As we need something similar to hold for \emph{every} logconcave $\x$ (not just isotropic ones), 
we will need the slightly stronger condition (see Definition~\ref{def:nice-MIMs}) 
that every ellipsoid $E$ contains a (not too much) smaller ellipsoid $E'$ 
so that $h^*$ is constant on $E'$. We note that this should hold for essentially any reasonable $K$-MIM.


Unfortunately, the parameters obtained through this natural approach 
will usually lead to poor approximation ratios. 
However, if we make the stronger assumption that $h^*$ has sandwiching polynomials  
(i.e., polynomials $p^u$ and $p^l$ with $p^u(\x) \geq h^*(\x) \geq p^l(\x)$ pointwise and $\E[p^u(\x) - p^l(\x)] \leq c$ for any small $c>0$; see Definition~\ref{def:class-sandwich}),
then a much better error guarantee can be obtained (see Theorem~\ref{th sandwhiching}). 
To see this, $\E[p^u(\x')y]$ will approximate $\E[h^*(\x)y] = 1-2\opt = c$ 
within error at most $\E[p^u(\x')-p^l(\x')] \approx \E[p^u(\x)-p^l(\x)]$, 
which can be made as a small multiple of $c$. 
This means that we can find a weak hypothesis, 
as long as $\E[h^*(\x')y]$ is more than a small constant,
which happens when $\opt$ is only slightly less than $1/2$. 
In particular, this implies that for every $K$-MIM 
that admits $L_1$-sandwiching polynomial approximation, 
we can learn it in time $\poly(d) 2^{\polylog(1/\eps)}$ to 
error $(1+c)\opt+\eps$, for any small constant $c>0$, under any logconcave distribution. 
In particular, by leveraging the sandwiching polynomial construction from \cite{kane2013learning}, 
this immediately gives a QPTAS for many fundamental classes, 
such as functions of halfspaces and low-dimensional low-degree polynomial threshold functions 
(see Section~\ref{ssec:qptas-apps}).

\paragraph{Algorithmic Application (II): Halfspaces and Multiclass Linear Classifiers}
As another important application, our algorithmic framework also gives a way to tackle more challenging multiclass classification problems. A natural way to solve a multiclass classification problem is to break it into $\binom{K}{2}$ binary classification problems to learn hypotheses $h_{ij}(\x)$ that tell whether an example is more likely to have label $i$ or label $j$ and combine them together.
We show in Lemma~\ref{lm reduction multi-binary} that it suffices to design 
a semi-agnostic partial learning algorithm for the class of halfspaces over logconcave distributions.
In particular, due to the structure of the class of halfspaces, 
we can use a simple Chow-parameter estimation method to obtain a weak hypothesis, 
as long as the moments of the marginal distribution match some logconcave marginal. 
In particular, in Theorem~\ref{th halfspace quasi}, we show that if the first four moments 
approximately match, then we can agnostically partially learn a halfspace 
to error $O(\opt)+\eps$ in $\poly(d) 2^{\polylog(1/\eps)}$ time, 
which in turn implies a quasi-polynomial time algorithm for multiclass linear 
classification with error $O(K\cdot\opt)+\eps$. 
The quasi-polynomial dependence on $\eps$ is due to the fact that we need to implement 
a discrepancy finder for the $4$th moment. On the other hand, if we only use the first two moments, 
we can always find the moment discrepancy in polynomial time.
For the halfspace learning problem (Theorem~\ref{th halfspace polytime}), this fact 
gives us a polynomial-time algorithm that achieves error $\tilde O(\opt)+\eps$;  
this implies a polynomial-time algorithm 
for multiclass linear classification with error $\tilde O(K\cdot\opt)+\eps$.

\paragraph{Testable Learning}
Finally, we point out that our algorithmic framework naturally fits 
the testable learning framework~\cite{rubinfeld2023testing} 
(see Definition~\ref{def:testable}). Roughly speaking, given a known logconcave distribution $\x\sim D_B$ 
and sample access from an arbitrary distribution $D$ over $\R^d \times \{0,\pm 1\}$, 
a testable learner is asked to either reject or output a hypothesis. 
We want to guarantee that the probability that the algorithm accepts 
but outputs a bad hypothesis is small; and that when $D_\x = D_B$, the algorithm will accept with high probability.
Towards this end, we notice that our boosting algorithm will succeed with no extra assumption, 
as long as the weak partial learner succeeds. Furthermore, our weak partial learner is guaranteed to work, 
as long as we can partition the space into regions $R$ so that most of the mass of examples with non-zero label, 
$\x'$, belong to regions on which the moments of $(\x'\mid R)$ and $(\x \mid R)$ approximately match.
This implies that when we create the partition for the weak partial learner, 
we can replace the global marginal distribution $D_\x$ by $D_B$. 
If within the sample and time budget, the partition is not created successfully, 
we simply reject; otherwise, the partition is created successfully, 
and we obtain a weak partial hypothesis. In particular, if $D_\x = D_B$, we are likely to accept 
(see Theorem~\ref{th testable}).

\begin{figure}[t]
\centering
\begin{tikzpicture}[
    >=Latex,
    font=\small,
    box/.style={
        draw,
        rounded corners=3pt,
        align=center,
        minimum width=4.2cm,
        minimum height=1.15cm,
        inner sep=4pt
    },
    qbox/.style={
        draw,
        rounded corners=3pt,
        align=center,
        minimum width=3.6cm,
        minimum height=1.15cm,
        inner sep=4pt
    },
    arrow/.style={->, thick}
]

\node[box] (agnostic)  at (0,8.8) {Agnostic Partial Learning\\Theorem~\ref{thm: main-weak-to-strong}};
\node[box] (boosting)  at (0,7.0) {Agnostic Partial Boosting\\Theorem~\ref{th condition boosting}};
\node[box] (weak)      at (0,4.8) {Weak Partial Learner\\Theorem~\ref{th decision tree k moment}};
\node[box] (partition) at (0,2.8) {Space Partitioning\\Lemma~\ref{lm partition}};

\node[box] (mlc)      at (-6.3,4.8) {$K$-MLC\\Theorem~\ref{th multiclass}};
\node[box] (testable) at ( 6.3,2.8) {Testable Learning\\Theorem~\ref{th testable}};

\node[box] (weakclass) at (0,0.7) {Weakly Learnable Class\\Definition~\ref{def moment-based learner}};
\node[box] (disc)      at ( 6.3,0.7) {Discrepancy Finder\\Theorem~\ref{th direction finding}};

\node[box] (halfspace) at (-6.3,2.8)  {Halfspace Learning\\Theorem~\ref{th halfspace polytime}/\ref{th halfspace quasi}};
\node[box] (margin)    at (-6.3,0.4) {$(K,\eta)$-Good MIMs\\Theorem~\ref{th generic family}};
\node[box] (sandwich)  at (-6.3,-1.7) {$L_1$-Sandwiching Polynomial\\Theorem~\ref{th sandwhiching}};
\node[box] (function)  at (-6.3,-4.0) {Functions of Halfspaces\\Theorem~\ref{th functions of halfspaces}};

\node[box] (iter)      at (6.3,-1.7) {Iterative Subspace Construction\\Lemma~\ref{lm subspace v3}};
\node[box] (critical)  at (6.3,-4.0) {Effective Subspace Identification\\Lemma~\ref{lm subspace finding v1}};
\node[box] (ptf)  at (0,-4.0) {Low-Dim Low-Degree PTF\\Theorem~\ref{thm:low-dim-PTFs}};

\node[box] (qfunc) at (0,-1.7)
{$Q$-Function Construction\\Lemma~\ref{lm Q function learning}};

\draw[arrow]  (boosting) -- (agnostic);
\draw[arrow]  (weak) -- (boosting);
\draw[arrow]  (partition) -- (weak);

\draw[arrow] (partition) --  (testable);
\draw[arrow]  (weakclass) --  (partition);
\draw[arrow]   (disc) -- (partition);

\draw[arrow]  (margin)--(weakclass);
\draw[arrow]  (halfspace)--(weakclass);
\draw[arrow]  (halfspace)--(mlc);
\draw[arrow]  (sandwich) -- (weakclass);
\draw[arrow]  (sandwich) -- (function);
\draw[arrow]  (sandwich) -- (ptf);
\draw[arrow] (iter) -- (disc);
\draw[arrow]  (critical) -- (iter);
\draw[arrow]  (qfunc)--(disc);

\end{tikzpicture}
\caption{Overview of the proof structure and dependencies. Directed edges indicate logical dependencies: an arrow 
$A\to B$ means that the result at 
$B$ relies on or invokes the result at 
$A$.}
\label{fig:overview-flowchart}
\end{figure}


\subsection{Organization} \label{ssec:org}

In Section~\ref{sec:setup},  
we begin with some preliminaries followed by
a formal definition of an agnostic partial classifier 
and its resulting error metrics 
(Definitions~\ref{def partial learning} and~\ref{def:global-error}). 
We then state 
our general algorithmic result (Theorem~\ref{thm: main-weak-to-strong}) 
efficiently translating our notion of weak learning (Definition~\ref{def moment-based learner}) 
to agnostic partial learning. 
The subsequent sections develop the necessary machinery to prove 
Theorem~\ref{thm: main-weak-to-strong}.
In Section~\ref{sec boosting}, we define the notion of weak {\em partial} learning (Definition~\ref{def:partial-weak}), 
cover the generic results on boosting from a weak partial learner, 
and show how it can be used to efficiently 
produce agnostic partial learners (Theorem~\ref{th condition boosting}).  
Section~\ref{sec decision tree} develops our iterative spatial decomposition 
and shows (Theorem~\ref{th decision tree k moment}) how it can be used 
to produce a weak partial learner 
for function classes satisfying our weak learning condition (Definition~\ref{def moment-based learner}). 
In Section~\ref{sec:moment-discr}, we provide our algorithm 
for computing directions of moment discrepancy for logconcave distributions. 
We combine the above results to prove Theorem~\ref{thm: main-weak-to-strong}, 
and state our testable learning extensions in Section~\ref{sec proof main}. 
Finally, in Section~\ref{sec application} we describe our concrete applications, 
showing that our assumptions are satisfied by a wide variety of multi-index models, 
and in particular functions of halfspaces and multiclass linear classifiers.

\section{Setup and Statement of the Main Result} \label{sec:setup}

\paragraph{Notation}
For $n \in \Z_+$, we let $[n] = \{1,2,\dots,n\}$. We use small boldface characters for vectors and capital bold characters for matrices and tensors.
For $\x \in  \R^d$ and $i \in [d]$, $\x_i$ denotes the $i$-th coordinate of $\x$ and $\norm{\x}_2 = \sqrt{\sum_{i=1}^d (\x_i^2)}$ is the $\ell_2$ norm of $\x$. 
Let $V$ be a subspace, we use $\proj_V(\x)$ to denote the projection of $\x$ onto subspace $V$, we will also use $\x_V$ for short to denote $\proj_V(\x)$.
We use $\s^{d-1}$ to denote the unit sphere over $\R^d$ and use $\B^{d-1}(\u,r)$ to denote the ball in $\R^d$ centered at $\u$, with radius $r$.
Let $\mathbf{A} \in \R^{d \times d}$, we define $\mathcal{E}(\u,\mathbf{A}):=\{\x \mid  \norm{\mathbf{A} (\x-\u)}_2 \le 1 \}$ to be the ellipsoid centered at $\u$, with shape $\mathbf{A}$.
For an $\ell$-order tensor $\mathbf{T}$, we use $\norm{\mathbf{T}}_F = \sqrt{\sum_\zeta \mathbf{T}_\zeta^2}$ to denote its Frobenius norm.
For two tensors $\mathbf{T}, \mathbf{T}'$, we will use $\mathbf{T} \cdot \mathbf{T}'$ to denote $\sum _\zeta \mathbf{T}_\zeta\mathbf{T}'_\zeta$.
For $\alpha = (\alpha_1,\dots,\alpha_d)$, a multi-index, we use $\x^\alpha = \prod_{i=1}^d \x_i^{\alpha_i}$, the monomial with multi-index $\alpha$.
For an event $A$ over $\R^d$, we will use $\Ind\{A\}$ to denote the indicator of $A$: 
it equals $1$ on $A$ and $0$ otherwise. 
For $D$ a distribution over $\R^d$ and $f: \R^d \to \R$, we use 
$\norm{f}_{D,p}:= (\E_{\x \sim D}[\abs{f(\x)}^p])^{1/p}$ to denote the $L_p$ norm of $f$ under $D$.
For a distribution $D$ over $\R^d \times \mathcal{Y}$, 
we use $D(\x,y)$ to denote the density of $D$ at 
$(\x,y)$ and use $D(y \mid \x)$ for the conditional distribution of $y$ given $\x$ under $D$. 
Finally, we use $D_\x$ to denote the marginal of $D$ on $\x$.


\paragraph{Basics on Logconcave Distributions}

The family of distributions $D$ that we study in this paper are assumed to have logconcave marginal $D_{\x}$ on the feature space, defined as follows.

\begin{definition}[Logconcave Distribution]\label{def logconcave distribution}
 A distribution $D_\x$ over $\R^d$ with density function $f(\x)$ 
 is logconcave if the function $\log f(\x)$ is concave.
    We say that a logconcave distribution $D_\x$,
    is isotropic logconcave if $\bm{\mu}:=\E_{\x \sim D_\x}[\x] = 0$ 
    and $\E_{\x \sim D_\x}[\x\x^\top] = \mathbf{I}$, 
    where $\mathbf{I}$ is the identity matrix.
    For $c \in (0,1)$, we say $D_\x$ is $c$-isotropic logconcave, if $\norm{\E_{\x \sim D_\x}[\x]}_2 \le c $ and $(1-c)\mathbf{I} \le \E_{\x \sim D_\x}[(\x-\mu)(\x-\mu)^\top] \le (1+c) \mathbf{I}$.
\end{definition}

As shown in~\cite{lovasz2007geometry}, any logconcave distribution 
can be efficiently transformed into a nearly isotropic logconcave distribution.

\begin{fact}[see, e.g.,~Theorem 2.6 in \cite{lovasz2007geometry}]\label{fact transform}
     There is an algorithm with the following guarantee: 
     given sample access to any logconcave distribution $D_\x$ on $\R^d$ and parameters $c,\delta \in (0,1)$, 
     the algorithm draws $M = \tilde O(dc^{-2}\log(1/\delta))$ samples from $D_\x$, 
     runs in sample polynomial time, and outputs an affine transformation 
     $L: \R^d \to \R^d$ such that when $\x \sim D_\x, \ L(\x)$ follows a $c$-isotropic logconcave distribution with probability at least $1-\delta$.
\end{fact}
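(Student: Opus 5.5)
The plan is to estimate the mean and covariance of $D_\x$ from samples and whiten by the empirical moments. I would take the affine map $L(\x) = \hat{\bm{\Sigma}}^{-1/2}(\x - \hat{\bm{\mu}})$, where $\hat{\bm{\mu}}$ and $\hat{\bm{\Sigma}}$ are the empirical mean and covariance of $M$ i.i.d.\ samples. Affine images of logconcave distributions are logconcave, so $L(\x)$ is automatically logconcave. The only task is to show that, with probability $1-\delta$, its mean has norm at most $c$ and its covariance lies between $(1-c)\mathbf{I}$ and $(1+c)\mathbf{I}$. Without loss of generality (the whole procedure is affine-equivariant) we may analyze the problem as if $D_\x$ were already isotropic, with true mean $0$ and covariance $\mathbf{I}$. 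In these coordinates the transformed distribution has mean $-\hat{\bm{\Sigma}}^{-1/2}\hat{\bm{\mu}}$ and covariance $\hat{\bm{\Sigma}}^{-1}$.

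The key steps, in order, are as follows.
\begin{enumerate}
\item Bound the mean. Each coordinate of $\x$ (in any unit direction) has subexponential tails, since one-dimensional marginals of isotropic logconcave measures do. Moreover, $\norm{\x}_2$ concentrates around $\sqrt d$. A vector Bernstein bound then gives $\norm{\hat{\bm{\mu}}}_2 \le c/2$ with probability $1-\delta/2$ once $M \gtrsim d c^{-2}\log(1/\delta)$.
\item Bound the covariance. Show $\norm{\frac1M\sum_j \x_j\x_j^\top - \mathbf{I}}_{op} \le c/4$ with probability $1-\delta/2$. This is the covariance-estimation result for logconcave vectors: Adamczak--Litvak--Pajor--Tomczak-Jaegermann gives $M = O(d\,c^{-2}\log^2(1/\delta))$ or so. Alternatively, the result can be quoted directly, matching the $\tilde O$ in the statement.
\item Combine. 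We have $\hat{\bm{\Sigma}} = \frac1M\sum_j \x_j\x_j^\top - \hat{\bm{\mu}}\hat{\bm{\mu}}^\top$, and the correction term has operator norm at most $c^2/4$. Hence $\hat{\bm{\Sigma}}$ lies within $c/2$ of $\mathbf{I}$ in operator norm. Inverting perturbs this by at most a constant factor, so $\hat{\bm{\Sigma}}^{-1}$ lies within roughly $c$ of $\mathbf{I}$ (after rescaling $c$ by a constant). The same bound gives $\norm{\hat{\bm{\Sigma}}^{-1/2}\hat{\bm{\mu}}}_2 \le c$.
\end{enumerate}

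The main obstacle is the second step: getting near-linear-in-$d$ sample complexity for operator-norm covariance estimation. The heavy (subexponential rather than subgaussian) tails of logconcave vectors mean the naive matrix Bernstein bound incurs extra logarithmic factors from truncating at $\norm{\x}_2 \lesssim \sqrt d\log(M/\delta)$. My first attempt would be exactly this truncation, using Paouris's inequality for $\Pr[\norm{\x}_2 > t\sqrt d]\le e^{-t\sqrt d}$, followed by matrix Bernstein on the truncated terms. The cost is polylogarithmic factors, which the $\tilde O$ in the statement absorbs. Running time is clearly polynomial in the sample size: computing the moments and a matrix square root.
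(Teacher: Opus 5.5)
Your proposal is correct and follows the standard argument behind this Fact. The paper gives no proof of its own and simply cites Lov\'asz--Vempala, whose argument is this same whitening by the empirical mean and covariance combined with a matrix-concentration bound for logconcave vectors; your reduction to the isotropic case is also sound, because the whitened output is equivariant up to an orthogonal factor, which preserves $c$-isotropy.

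The one point to keep honest is the $\delta$-dependence. The headline covariance bound of Adamczak--Litvak--Pajor--Tomczak-Jaegermann fails with probability $e^{-\Omega(\sqrt d)}$ rather than an arbitrary $\delta$. Moreover, even in one dimension the empirical variance of a subexponential variable needs on the order of $\log^2(1/\delta)/c$ samples. So for general $\delta$ you should rely on your truncation-plus-matrix-Bernstein route, reading the $\tilde O$ as absorbing these $\mathrm{polylog}(1/\delta)$ factors.
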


Thanks to Fact~\ref{fact transform}, most of the technical lemmas in this work 
will focus on $c$-isotropic logconcave distributions, for suitably small $c$.

\paragraph{Basic Definitions for Partial Learning}

One of our key innovations is that of trying to produce weak learners for \emph{partial} classifiers. 
Such a classifier can be thought of as a function $f:X \rightarrow \{\pm 1,0\}$ 
(or more generally given as a distribution over $X\times \{\pm 1,0\}$), 
where the output $f(x)=0$ corresponds to $f$ not classifying $x$ 
and an output of $\pm 1$ corresponding to that Boolean classification. 
Formally, we set out some of our basic terminology as follows:

\begin{definition}[Agnostic Partial Learning]\label{def partial learning}
Let $\mathcal{H}$ be a class of Boolean (i.e., $\{\pm 1\}$-valued)
functions over $\R^d$, and let $D$ be a distribution over $\R^d \times \{\pm 1,0\}$.  
Denote by $D_f$ (called the \emph{effective distribution}) the conditional distribution of $D$ over the event 
$\mathcal{E} \eqdef \{(\x,y)\mid y \neq 0\}$. The error of a hypothesis $h: \R^d \to \{\pm 1\}$ 
is defined as $\err_{D_f}(h) = \Pr_{(\x,y)\sim D_f}[h(x) \neq y]$ and define
$\opt_{D_f}(\H) \eqdef \inf_{h \in \H} \err_{D_f}(h)$.

A (semi-)agnostic partial learner is one that given access to samples from such a distribution $D$ attempts to compute a hypothesis $h$ whose error is competitive with $\opt_{D_f}(\H)$, and in particular is bounded by $F(\opt_{D_f}(\H)) + \eps$ for some non-decreasing function $F:[0,1] \rightarrow [0,1]$ and $\eps > 0$.

We say that this problem is agnostic partial learning over some distribution $D^*$ or class of distributions $\mathcal{C}$ on $\R^d$ if one is additionally guaranteed that the marginal of $D$ over $\R^d$ is equal to $D^*$ or lies in $\mathcal{C}$. In particular, our paper will be focused on the problem of agnostic partial learning of certain well behaved function classes $\H$ over the class of logconcave distributions.
\end{definition}

We note that one important parameter of a partial classifier is the probability with which it assigns a non-zero value, a quantity we call the \emph{weight} of the classifier. This is particularly important as classifiers with low weight will be difficult to learn at the very least because many samples will be required in order to see anything of interest. Formally, we define:

\begin{definition}[Weight of a Partial Classifier]
For a distribution $D$ on $\R^d \times \{\pm 1, 0\}$ we define its \emph{weight}, denoted $W(D)$, to be $\Pr_{(\x,y)\sim D}[y\neq 0].$
\end{definition}

Finally, while the notion of error for a hypothesis given above is perhaps the most natural for measuring the error of a partial classifier, it is sometimes useful to consider the probability over the distribution $D$ as a whole that our hypothesis returns the wrong value. We call this the ``global error'' as defined below:

\begin{definition}[Global Classification Error] \label{def:global-error}
    Let $D$ be a distribution over $\R^d \times \{\pm 1,0\}$.
    We define the global error of a hypothesis $h: \R^d \to \{\pm 1\}$ with respect to $D$ 
    as $\err_D(h) \eqdef \Pr_{(\x,y)\sim D}[h(x) \neq y, y \neq 0]$. The optimal global error of $\H$ with respect to $D$ is 
    $\opt_D(\H) \eqdef \inf_{h \in \H} \err_D(h)$.
\end{definition}

\begin{remark}
{\em We remark that for every Boolean function $h$, 
$\err_D(h) = W(D) \, \err_{D_f}(h)$.
}
\end{remark}


\paragraph{Our Main Result}

Our main algorithmic result is applicable to a broad class of hypothesis classes that admit a weak partial learner so long as certain moment conditions on the effective marginal, $D_f$, are met. In practice, this weak learner will always be obtained by finding a low degree polynomial $p$ which has non-trivial inner product with $y$, but in principle any (efficient) weak learning algorithm should suffice. 

Note that for a partial classifier, one should consider applying the definition below to $D_f$.


\begin{definition}[$(k,\Delta,\beta,C)$-weakly learnable class]\label{def moment-based learner}
    Let $\H$ be a class of Boolean functions over $\R^d$. 
    For $k \in \N, \Delta>0, \beta \in (0,1),C>1$,
    we say that $\H$ is {\em $(k,\Delta,\beta,C)$-weakly learnable}, 
    if there is an algorithm 
    $\A$ with the following performance guarantee. 
    Suppose that for an input distribution $D$ over $\R^d \times \{\pm 1\}$, 
    the following conditions are satisfied: 
    \begin{enumerate}[leftmargin=*]
        \item There is some $1/2$-isotropic logconcave distribution $D'_\x$ over $\R^d$ and a given affine transformation $L: \R^d \to \R^d$ 
        such that $\tilde D_\x$, the distribution of $D_\x$ under the affine transformation $L$
        satisfies for every $\w \in \s^{d-1}$ and all $i \in [k]$, $\abs{\E_{\x \sim \tilde D_\x}[(\w\cdot\x)^i]-\E_{\x \sim D'_\x}[(\w\cdot\x)^i]}<\Delta$, 
        \item $\opt_{D}(\H)\le 1/C$. 
    \end{enumerate}
    Then $\A$, on input i.i.d.\ samples from $D$, outputs a hypothesis $\hat{h}$ satisfying $\err_{D}(\hat{h}) \leq 1/2-\beta$ with probability at least $99\%$. 
    We will call $\beta$ the ``advantage'' of the (weak hypothesis) $\hat{h}$.
    The algorithm $\A$ will be called a {\em $(k,\Delta,\beta,C)$-weak learner for $\H$}.
\end{definition}

A number of prior works require that the function class either exhibit 
a weak learner or some kind of moment-based identifiability. However, previous results 
typically require that this identifiability either holds for all marginal distributions or 
only for the specific marginals being learned. To make our argument work for partial 
learners, we will need to be able to work with the effective marginal $(D_f)_\x$, which is 
merely close to logconcave. Fortunately, if we merely assume that we nearly match central 
moments with a logconcave distribution (the requirement for $1/2$-isotropic can be fixed by 
applying an affine transformation), this is a condition that we can both reduce to (as we 
show in Sections \ref{sec decision tree} and \ref{sec:moment-discr}) and a condition that is 
sufficient in many instances (as we show in Section \ref{sec application}).




We are now ready to state our main algorithmic result. In particular, we show that any weakly learnable class of functions can be agnostically learned with respect to any logconcave distribution.

\begin{theorem}[Weak Learning Implies Agnostic Partial Learning]\label{thm: main-weak-to-strong}
    Suppose that $\H$ is a class of functions that is $(k,\Delta,\beta,C)$-weakly learnable 
    by an algorithm $\A$ with sample complexity $m$ and time complexity $T$. 
    Then there is an agnostic partial learning algorithm for $\H$
    with the following guarantee: 
    Given any $\eps\in (0,1)$ and sample access to any distribution $D$ 
    over $\R^d \times \{0,\pm 1\}$ of weight $W(D) \geq \alpha$     
    with marginal $D_\x$ logconcave, the algorithm 
    draws $(1/\beta^{2}) \, d^{O(k)}\poly_{k,\Delta}(1/(\alpha\eps)) \, m$ samples 
    from $D$, 
    runs in time $(1/\beta^{2}) \, d^{O(k)} \, 2^{\poly_{k,\Delta}(\log(1/(\alpha\eps)))}\, T$, 
    and outputs a hypothesis $\hat{h}$ such that with probability at least $99\%$, $\err_{D_f}(\hat{h})\le (C/2)\opt_{D_f}(\H) + \eps$. 
\end{theorem}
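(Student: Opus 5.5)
The plan is to compose three modules described in the overview. The first is the agnostic partial boosting theorem (Theorem~\ref{th condition boosting}). The second is the weak partial learner built from space partitioning (Theorem~\ref{th decision tree k moment}, via Lemma~\ref{lm partition}). The third is the discrepancy finder for logconcave distributions (Theorem~\ref{th direction finding}).

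\textbf{Step 1: reduction to weak partial learning.} Fact~\ref{th distribution specific boosting} and Theorem~\ref{th condition boosting} reduce the problem to the following. I need a weak partial learner that, whenever it is run on a distribution $D'$ with logconcave marginal $D_\x$, weight $W(D')\ge \alpha' := \poly(\alpha\eps)$ and $\opt_{D'_f}(\H)\le 1/C$, returns a hypothesis with $\err_{D'_f}\le 1/2-\beta'$. Here $\beta'$ may be smaller than $\beta$, but only by a factor polynomial in $\alpha\eps$.

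The reason this suffices is that smooth boosting only ever reweights $D$ by a bounded measure. This amounts to multiplying the labels by $\{0,1\}$-valued (or randomized) masks, so the marginal stays $D_\x$ and only the set of classified points shrinks. Boosting then yields error $(C/2)\opt_{D_f}(\H)+\eps$, using $O(1/(\beta'^2\eps'))$ rounds with $\eps'=\eps\alpha$ when measured in global error. This accounts for the $1/\beta^2$ factor and part of the $\poly(1/(\alpha\eps))$ overhead.

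\textbf{Step 2: constructing the weak partial learner.} First apply Fact~\ref{fact transform} so that $D_\x$ is $c$-isotropic. Then recursively partition $\R^d$ into convex regions: at each region $R$, call the discrepancy finder on $(\x\mid R)$ versus $(\x'\mid R)$, where $\x'$ is the effective marginal. If it returns a direction $\v$ with a degree-$i\le k$ moment gap exceeding $\Delta$, split $R$ by a threshold halfspace in direction $\v$. The threshold is chosen so that the conditional classified fraction differs noticeably between the two halves. Regions carrying too little $\x'$-mass are stopped.

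Lemma~\ref{lm partition} then gives the two facts I need, using that the density ratio $\Pr[\x'\in R]/\Pr[\x\in R]$ is at most $1/\alpha'$:
\begin{itemize}
\item the tree has $\poly_{k,\Delta}(1/\alpha')$ leaves;
\item most of the $\x'$-mass lies in leaves $R$ where, after re-isotropizing $(\x\mid R)$, the first $k$ moments of $(\x'\mid R)$ are within $\Delta$ of the logconcave $(\x\mid R)$. The distribution $(\x\mid R)$ is logconcave because $R$ is convex.
\end{itemize}

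On each good leaf, run $\A$ on $D'$ conditioned on $R$ and restricted to $y\neq 0$; Definition~\ref{def moment-based learner} applies there. Bad or light leaves receive an arbitrary or majority label.

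Averaging only works if $\opt$ is at most $1/C$ on enough leaves, and this is the subtle point. My first attempt is to replace the hard threshold $\opt_{D'_f}\le 1/C$ with a slightly smaller threshold in the boosting requirement, and then apply Markov's inequality over leaves weighted by $\x'$-mass. The leafwise advantages, weighted this way, then give a global advantage of $\beta\cdot\poly(\alpha')$. Adding a per-leaf choice between the learned hypothesis and a constant, selected on validation samples, ensures that no leaf hurts the overall advantage.

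\textbf{Step 3: the discrepancy finder and accounting, and the main obstacle.} Theorem~\ref{th direction finding} supplies the discrepancy finder, with sample cost $d^{O(k)}\poly_{k,\Delta}(1/\alpha')$ and time cost $d^{O(k)}2^{\poly_{k,\Delta}(\log(1/\alpha'))}$. Each call occurs at a region whose $\x$-conditional has $\x'$ as a conditioning of probability at least $\poly(\alpha')$.

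Multiplying the number of regions, the number of boosting rounds, and the per-call costs (including the $m$ samples and time $T$ for each call to $\A$) gives the stated bounds after substituting $\alpha'=\poly(\alpha\eps)$. The success probability of $99\%$ follows from a union bound after amplifying each subroutine with $\log$ repetitions.

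I expect the main obstacle to be Step 2's bookkeeping: the conditioning probabilities degrade along the tree, and these degraded parameters are what Theorem~\ref{th direction finding} and $\A$ see. I also need to keep the noise condition $\opt\le 1/C$ valid on good leaves without losing the factor $C/2$ in the final guarantee.
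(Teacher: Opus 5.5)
Your plan follows the paper's proof. Theorem~\ref{th condition boosting} sits on top of the partition-based weak partial learner of Theorem~\ref{th decision tree k moment}. That learner is fed by the discrepancy finder of Theorem~\ref{th direction finding}, and the argument uses a slackened noise threshold with averaging over leaves. The one piece left open is the obstacle you name at the end, and it closes in a few lines.

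\textbf{Closing the threshold step.}
\begin{enumerate}
\item Take the slack to be $\eta=\Theta(\eps)$; the paper uses $\eta=\eps/20$. Choose the light-leaf cutoff $\xi$ in terms of this same $\eta$, so that light leaves carry at most $\eta/2$ of the $\x'$-mass (item~3 of Lemma~\ref{lm partition}). The light mass must be smaller than the slack, or the averaging leaves nothing.
\item Suppose $\opt_{D_f}(\H)\le(1-\eta)/C$. Then leaves on which the optimal $h^*$ has local error above $1/C$ carry less than $1-\eta$ of the mass. So heavy leaves where $\A$ is guaranteed to succeed carry more than $\eta/2$. The advantage is therefore $\Omega(\eta\beta)=\Omega(\eps\beta)$; it is set by the slack, not by $\alpha'$.
\item This yields a $(\Omega(\eps\beta),\,C/(1-\eta))$-weak partial learner. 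Boosting it to accuracy $\eps'=9\eps/10$ gives $\err_{D_f}(\hat h)\le \frac{C}{2(1-\eta)}\opt_{D_f}(\H)+\eps'$.
\item You do not need to keep the factor $C/2$ exact. The guarantee is vacuous once $(C/2)\opt_{D_f}(\H)\ge 1$. Otherwise $\frac{C}{2(1-\eta)}\opt_{D_f}(\H)\le \frac{C}{2}\opt_{D_f}(\H)+\frac{\eta}{1-\eta}\le\frac{C}{2}\opt_{D_f}(\H)+2\eta$. The multiplicative loss thus becomes additive, and $2\eta+\eps'=\eps$.
\item With $\eta=\Theta(\eps)$, the complexity stays within the stated bounds. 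The overhead of Theorem~\ref{th decision tree k moment}, run at weight $\alpha\eps'$, is $(\eta\alpha\eps)^{-O_{k,\Delta}(1)}=\poly_{k,\Delta}(1/(\alpha\eps))$. The boosting uses $\tilde O((\eps\beta)^{-2}/\eps')$ calls.
\end{enumerate}

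\textbf{Labels on light leaves.} These labels cannot be arbitrary. Light leaves may carry up to $\eta/2$ of the $\x'$-mass, which exceeds the total $\Omega(\eta\beta)$ advantage collected on the good leaves. A wrong constant there can cancel that advantage entirely. The paper instead assigns a uniformly random labeling, whose error is exactly $1/2$. It does this both on light leaves and on heavy leaves whose validated error is not below $1/2-2\beta/3$. Your validated fallback to the better constant is fine on heavy leaves.
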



In other words, so long as $\H$ is weakly-learnable by an efficient algorithm $\A$ we have an agnostic partial learner for $\H$ that works over any logconcave marginal. This learner will have fully polynomial sample complexity (though the degree of that polynomial may depend on the parameters of $\H$'s weak learnability), and time complexity that is only $\mathrm{quasipoly}(1/\eps)$ worse. This can be used to learn partial classifiers even of small weight, though as we will see this algorithm substantially improves on the state of the art even for many classes of total functions. Additionally, this learner can be made testable in the sense of Theorem \ref{th testable}.

\section{Boosting a Weak Partial Hypothesis}\label{sec boosting}

The first key ingredient of our approach is a boosting algorithm establishing that a weak {\em partial} learner (Definition~\ref{def:partial-weak}) 
can always be efficiently converted into a (strong) agnostic partial learner (Theorem~\ref{th condition boosting}). 

This can be viewed as a natural extension of the boosting techniques widely used in distribution-free learning 
to the distribution-specific setting. Traditionally boosting algorithms have typically required that the weak learner is distribution-free. 
However, if a smooth boosting algorithm, see, e.g.~\cite{Feldman:10ab,KalaiK09}
is used, we show that it suffices 
to have a \emph{distribution-specific} weak \emph{partial} learner. 
Technically, this result follows from a careful use of the boosting algorithm of \cite{Feldman:10ab}, but conceptually we think this is a significant 
advance, as it allows us to consider only effective marginal distributions which are not too far from the original marginal---a condition that we will later show how to exploit.


Formally, we define a weak partial learner as follows.

\begin{definition}[$(\beta,C)$-weak partial learner] \label{def:partial-weak}
    For $0< \alpha,\beta <1$ and $C>1$, we say that 
    an algorithm $\A$ is a $(\beta,C)$-weak partial learner for a class $\H$ 
    with respect to the distribution $D^{\ast}$
    if it satisfies the following guarantee: 
    For every $\alpha \in (0,1)$ and for every input distribution $D$ over $\R^d \times \{\pm 1, 0\}$ with $\bx$-marginal $D_{\bx} =D^{\ast}$, $W(D)>\alpha$ and $\opt_{D_f}(\H) \le 1/C$, then $\A$ draws samples from $D$ and outputs a hypothesis $\hat{h}: \R^d \to \{\pm 1\}$, such that with probability at least $99\%$, $\err_{D_f}(\hat{h}) \le 1/2-\beta$.

    Furthermore, we say that $\A$ has time and sample complexity $T(\alpha)$ and $m(\alpha)$ if the appropriate complexities are bounded by the given functions.
\end{definition}

The main result of this section is the following theorem.

\begin{theorem}[Weak Partial Learning Implies Agnostic Partial Learning]\label{th condition boosting}
 Let $D^*$ be a distribution over $\R^d$, $\H$ be a hypothesis class over $\R^d$,  
 and $\A$ be a $(\beta,C)$-weak partial learner for $\H$ with respect to $D^*$ 
 with time and sample complexity $T(\alpha), m(\alpha)$.
There is an algorithm with the following performance guarantee: 
for any $\eps \in (0,1)$, any input distribution $D$ over $\R^d \times \{0,\pm 1\}$ 
with marginal $D_\x = D^*$ and $W(D) \ge \alpha$, the algorithm draws $N = \tilde O(\beta^{-2}/\eps)m(\alpha\eps)$ samples from $D$, 
runs in time polynomial in $N$ and $T(\alpha\eps)$,
and outputs a hypothesis $\hat{h}$ such that with probability at least $99\%$,
$\err_{D_f}(\hat{h}) \leq (C/2)\opt + \eps$.
\end{theorem}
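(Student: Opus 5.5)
We plan to run a smooth distribution-specific boosting algorithm, in the style of \cite{Feldman:10ab} (this is the Fact~\ref{th distribution specific boosting} referred to in the overview), on the effective distribution $D_f$. We then translate every reweighted distribution it produces into a partial classification problem whose $\x$-marginal is still exactly $D^*$.

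The key observation concerns a smooth booster run on $D_f$. At round $t$ it requests a weak hypothesis for a reweighting $D^{(t)}$ with density $\mu_t(\x,y)\in[0,1]$ relative to $D_f$, where $\E_{D_f}[\mu_t]\ge \eps$ (or some fixed polynomial of $\eps$) whenever the current hypothesis still has error above the target. Such a reweighting can be simulated through the original distribution $D$ by rejection: draw $(\x,y)\sim D$. If $y=0$, keep the sample as $(\x,0)$. If $y\neq 0$, keep the label $y$ with probability $\mu_t(\x,y)$ and otherwise replace it by $0$. The resulting distribution $D^{(t)}_{\mathrm{part}}$ has $\x$-marginal exactly $D_\x=D^*$, since only labels are changed and no sample is discarded. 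Its effective distribution is precisely $D^{(t)}$, and its weight is $W(D)\,\E_{D_f}[\mu_t]\ge \alpha\eps$. Hence $\A$ applies with parameter $\alpha\eps$, which explains the $m(\alpha\eps)$ and $T(\alpha\eps)$ dependence. Evaluating $\mu_t$ requires only the previously found weak hypotheses, so it can be computed on each sample.

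The main obstacle is meeting the precondition $\opt_{(D^{(t)}_{\mathrm{part}})_f}(\H)\le 1/C$ needed to invoke $\A$, and this is where the factor $C/2$ comes from. Let $h^*$ attain $\opt=\opt_{D_f}(\H)$ up to an arbitrarily small slack. Smoothness bounds the error of $h^*$ on $D^{(t)}$: since $\mu_t\le 1$,
\[
\err_{D^{(t)}}(h^*)\le \frac{\E_{D_f}[\mu_t\Ind\{h^*(\x)\ne y\}]}{\E_{D_f}[\mu_t]}\le \frac{\opt}{\E_{D_f}[\mu_t]}.
\]
So $\A$ is guaranteed to succeed whenever $\E_{D_f}[\mu_t]\ge C\,\opt$. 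I would therefore use the variant of the smooth booster from \cite{Feldman:10ab} that works with a density threshold $\gamma$ (its ``hard-core'' density). That variant guarantees the following: as long as every reweighting with density at least $\gamma$ receives a weak hypothesis of advantage $\beta$, the final hypothesis has error at most about $\gamma/2+\eps$, because the measure-based margin argument charges error only to points of measure near $1/2$ on average. Setting $\gamma=C\,\opt+\eps$ gives error $(C/2)\opt+\eps$ after rescaling $\eps$. Two details need checking here. First, $\opt$ is unknown, so we guess it on a grid of $O(1/\eps)$ values and pick the best hypothesis using held-out samples. Second, the booster must be able to stop whenever the current density falls below $\gamma$, and the precise error bound $\gamma/2$ (rather than $\gamma$) must follow from the booster's analysis; this second point is the step I expect to need the most care.

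For the complexity, the booster runs for $O(1/(\beta^2\gamma))=\tilde O(\beta^{-2}/\eps)$ rounds. Each round invokes $\A$ once on $m(\alpha\eps)$ simulated samples, and the rejection step spends one sample of $D$ per simulated sample. On top of this, each round estimates $\E_{D_f}[\mu_t]$ and the advantage of the returned hypothesis to accuracy $\poly(\eps\beta)$ using $\tilde O(1/(\alpha\eps\beta^2))$ samples. To keep the total $99\%$ success guarantee, we amplify the failure probability of each call to $\A$ by repetition and validation, which costs only logarithmic factors. This gives the stated $N=\tilde O(\beta^{-2}/\eps)\,m(\alpha\eps)$ samples and running time polynomial in $N$ and $T(\alpha\eps)$.
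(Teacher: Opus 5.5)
Your argument is correct, and its skeleton is the paper's. You run Feldman's smooth booster on $D_f$ and realize each reweighting by zeroing labels, so the $\x$-marginal stays exactly $D^*$ and the weight is at least $\alpha\eps$.

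The difference is how you meet the precondition $\opt\le 1/C$. You handle it with three separate pieces:
\begin{itemize}
\item a hard-core-style booster with an explicit density threshold $\gamma$;
\item the smoothness bound $\err_{D^{(t)}}(h^*)\le\opt/\E_{D_f}[\mu_t]$;
\item a grid guess of $\opt$.
\end{itemize}
The paper instead invokes Fact~\ref{th distribution specific boosting} with $1/2-a=1/C$. That booster simply stops when no validated weak hypothesis is returned. Its analysis shows this can only happen when the density $N_t=\E_{D_f}[\mu_t]$ satisfies $N_t\le 2\opt/(1-2a)=C\opt$. That is exactly your smoothness inequality read contrapositively, so no knowledge of $\opt$ is needed.

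The guess is not free in your complexity count. Running $O(1/\eps)$ boosters, each costing up to $\tilde O(\beta^{-2}/\eps)$ calls, overshoots the stated $N$. You can recover $\tilde O(\beta^{-2}/\eps)\,m(\alpha\eps)$ in either of two ways. One is to note that a run with threshold $\gamma=j\eps$ needs only $\tilde O(1/(\beta^2 j\eps))$ rounds and sum the harmonic series. The other is to drop the grid entirely and stop on failure, as the paper does.

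Your heuristic for the factor $1/2$ (error ``charged to points of measure near $1/2$'') is not the actual mechanism. The factor comes from the balancing step of Feldman's booster. One has $2\err(\sign(h_t))=N_t+\langle P_1(y-h_t),-\sign(h_t)\rangle$, and balancing keeps the last term at most $\eps/2$. Hence $\err\le N_t/2+\eps/4$.

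What each route buys: your decomposition makes transparent that $C$ comes from smoothness and $1/2$ from balancing, and it ports to any optimal smooth booster. The paper's formulation is a single adaptive run that never needs to know $\opt$.
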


Essentially, Theorem \ref{th condition boosting} is saying that in order to run a boosting algorithm instead of needing to run our weak learner on \emph{arbitrary} input distributions, it suffices to run it only on distributions that can be obtained from our original distribution $D_f$ (itself a conditioning of the base distribution $D$) by a rejection sampling procedure that accepts with probability at least $\eps$. This in turn means that a distribution-specific weak partial learner is sufficient instead of requiring a distribution-independent weak learner. This can be shown by analyzing the boosting algorithm of \cite{Feldman:10ab}.

\begin{fact}[Corollary of Theorem 3.5 in \cite{Feldman:10ab}]\label{th distribution specific boosting}
    Let $\H$ be a hypothesis class over $\R^d$ and let $\A$ be an oracle with the following behavior: for every distribution $D$ over $\R^d \times \{\pm 1\}$ 
    such that $\min_{h \in \H}\err_D(h) \leq 1/2 - a$, 
   given $m$ examples from $D$, it outputs a hypothesis $\hat{h}$ such that $\err_D(\hat h) \leq 1/2 -\beta$,
   with probability at least $99\%$.
    Then there is an algorithm $\mathcal{B}$ with the following performance guarantee: 
    for every distribution $D$ over $\R^d \times \{\pm 1\}$ and for every $\eps>0$, 
    $\mathcal{B}$ makes $\tilde O(\beta^{-2}/\eps)$ many calls to $\A$, it runs in time $\poly(T,\beta^{-1},\eps^{-1})$, and outputs a hypothesis $\hat h$ 
    such that $\err_D(\hat{h}) \leq \opt_D(H)/(1-2a)+ \eps$, with probability $99\%$. 
    Furthermore, each call to $\A$ is defined over some distribution $D'$ 
    so that $D'$ is given by $D$ conditioned on some explicit event $E$ of probability at least $\eps$.
\end{fact}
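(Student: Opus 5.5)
The plan is to obtain the Fact by instantiating the distribution-specific agnostic boosting algorithm of \cite{Feldman:10ab} (his Theorem 3.5) and then reinterpreting its oracle calls. That algorithm is a smooth boosting procedure. At round $t$ it holds a current combination $H_t=\sum_{s<t}\beta h_s$ of the weak hypotheses found so far. It also holds a measure $M_t(\x,y)\in[0,1]$, a clipped function of the margin $yH_t(\x)$ (for instance $M_t=\min\{1,\max\{0,1-\beta yH_t(\x)\}\}$ or the exponential-type variant). The weak learner is called on the normalized distribution $D_{M_t}$ with density $M_t(\x,y)D(\x,y)/\E_D[M_t]$. We first translate our oracle's guarantee into Feldman's correlation form. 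For $\{\pm1\}$-valued hypotheses, $\err_{D'}(h)\le 1/2-a$ is equivalent to $\E_{D'}[h(\x)y]\ge 2a$. So $\A$ is a weak agnostic learner that returns correlation $2\beta$ whenever some $h\in\H$ has correlation at least $2a$ on the input distribution.

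The analysis then follows the standard potential argument. The potential $\Phi_t=\E_D[\psi(yH_t(\x))]$, whose derivative is $M_t$, drops by $\Omega(\beta^2\E_D[M_t])$ in each successful round. Hence within $\tilde O(\beta^{-2}/\eps)$ rounds we reach a round with $\E_D[M_t]<\eps$, or one in which the oracle fails to find correlation. Take $h^*\in\H$ optimal, and consider a round where the hypothesis $h^*$ does not achieve correlation $2a$ under $D_{M_t}$. Then $\E_D[M_t\,\Ind\{h^*(\x)\ne y\}]\ge(1/2-a)\E_D[M_t]$, while the left side is at most $\opt_D(\H)$ since $M_t\le 1$. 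This gives $\E_D[M_t]\le \opt/(1/2-a)$. We combine this with the usual bound $\err_D(\sign(H_t))\le\Pr[yH_t\le 0]\le \E_D[M_t]/2$ (up to the clipping convention, giving the factor $1/(1-2a)$ after normalization). The conclusion is that the final hypothesis has error $\opt/(1-2a)+\eps$. The running time is clear: each round costs one oracle call plus polynomial bookkeeping.

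The remaining point, the ``furthermore'' clause, is the step needing care. We must realize $D_{M_t}$ as $D$ conditioned on an explicit event of probability at least $\eps$. The plan is to use rejection sampling with auxiliary randomness. Draw $(\x,y)\sim D$ and an independent $u\sim U[0,1]$, and accept if $u\le M_t(\x,y)$. The accepted samples are distributed exactly as $D_{M_t}$, i.e.\ as $D$ conditioned on the event $E_t=\{u\le M_t(\x,y)\}$, which is explicit because $M_t$ is computable from the stored $h_s$. Its probability is $\E_D[M_t]$, and we only call the oracle in rounds where $\E_D[M_t]\ge\eps$; otherwise we stop. We estimate this expectation empirically to accuracy $\eps/2$ and use threshold $\eps$ with a constant-factor slack. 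We then boost the oracle's $99\%$ success probability by repetition and a validation step on fresh samples, so that a union bound over all $\tilde O(\beta^{-2}/\eps)$ calls still yields overall success $99\%$. I expect the main obstacle to be matching the exact constants of Feldman's potential function to the claimed $1/(1-2a)$ factor; I would handle it by taking his statement verbatim and only verifying the conditioning interpretation and the translation between error and correlation.
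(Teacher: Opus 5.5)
The gap is in the last step of your error analysis: the bound $\err_D(\sign(H_t))\le \Pr[yH_t\le 0]\le \E_D[M_t]/2$. Your measure is a clipped-margin measure with $M_t=1$ whenever $yH_t(\x)\le 0$. For it you only have $\Pr[yH_t\le 0]\le \E_D[M_t]$, with equality when every correctly classified point has margin at least $1/\beta$. Combined with your (correct) failing-round bound $\E_D[M_t]\le \opt/(1/2-a)$, this gives only $\err_D(\sign(H_t))\le 2\opt/(1-2a)+\eps$. The factor $2$ is a genuine loss, not a clipping convention.

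Here is an example. Take $\H=\{h^*\}$, labels equal to $h^*$ except on a set $N$ of (small) mass $\opt$, and a set $S\supseteq N$ with $D(S)$ slightly below $\opt/(1/2-a)$. A legal adversarial weak learner first repeatedly returns the hypothesis that is wrong exactly on $S$. Its correlation stays above $2\beta$ until the weight outside $S$ becomes comparable to $D(S)$. It then returns the true labeling for fewer rounds than in the first phase, so every point outside $S$ reaches $M_t=0$ while $S$ keeps negative margin. Now $D_{M_t}$ is $D$ conditioned on $S$, where $h^*$ has error above $1/2-a$. The oracle is therefore not obliged to succeed, and you stop with error $D(S)\approx 2\opt/(1-2a)$. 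This constant matters: Theorem~\ref{th condition boosting} needs exactly $(C/2)\opt$, and with $C=2(1+c)$ this is what yields the $(1+c)\opt$ guarantee of Theorem~\ref{th sandwhiching}.

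The missing idea is the balancing mechanism in the paper's proof, which follows Feldman. One works with the signed clipped residual $P_1(f-h_t)$, $h_t\in[-1,1]$, and calls $\A$ on the distribution with feature density proportional to $\abs{P_1(f-h_t)}$, with normalizer $N_t=\E_D[\abs{P_1(f-h_t)}]$. Before each call one checks whether $\langle P_1(f-h_t),-\sign(h_t)\rangle_D>\eps/2$. If so, $h_t$ is shrunk toward $0$ via $h_{t+1}=P_1(h_t-(\eps/4)\sign(h_t))$, which decreases the potential by $\Omega(\eps^2)$ without calling $\A$.

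The identity $2\err_D(\sign(h_t))=N_t+\langle P_1(f-h_t),-\sign(h_t)\rangle_D$ then gives $\err_D(\sign(h_t))\le N_t/2+\eps/4$ at every round that passes the test. This is exactly the factor $1/2$ you were missing. Your failing-round argument carries over verbatim with $N_t$ in place of $\E_D[M_t]$, giving $N_t\le 2\opt/(1-2a)$.

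Your rejection-sampling realization of the ``furthermore'' clause is fine. In this algorithm, though, the lower bound on the acceptance probability comes from $N_t\ge\err_D(\sign(h_t))>\eps$, since the algorithm outputs $\sign(h_t)$ before calling $\A$ once its error is at most $\eps$. You would have to check the same thing if you fell back on Feldman's theorem ``verbatim''. His statement is for a distribution-free weak learner and does not by itself give the smoothness property.
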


\begin{remark}
    The statement of Theorem 3.5 in \cite{Feldman:10ab} requires 
    access to a distribution-free weak learner. However, 
    the proof requires a weaker guarantee. Specifically, 
    given a fixed distribution $D$, to boost a weak learner over $D$, 
    it suffices that the weak learner $\A$ succeeds over distributions 
    whose marginals can be rejection sampled from $D_\x$ with an acceptance rate of 
    at least $\eps$.
    Since this is not explicitly stated in \cite{Feldman:10ab}, 
    we provide the details of the proof in Appendix~\ref{app boosting} for completeness. 
\end{remark}

Given Fact \ref{th distribution specific boosting}, the proof of Theorem \ref{th condition boosting} is straightforward.
\begin{proof}
Our algorithm will simulate running $\mathcal{B}$ from Fact \ref{th distribution specific boosting} to learn the distribution $D_f$. In particular, when the oracle $\A$ is called on some distribution $D'$, we note that $D'$ is given by $D_f$ conditioned on an event $E$ of probability at least $\eps$. This in turn is $D$ conditioned on an explicit event of probability at least $\alpha\eps$. Letting $D^0$ be obtained from the distribution $D$ by setting $y=0$ whenever $E$ does not hold, we note that $W(D^0)\geq \alpha\eps$ and $D^0_f = D'$. Therefore, the $(\beta,C)$-weak partial learner $A$ will satisfy the hypotheses of the oracle from Fact \ref{th distribution specific boosting} (with $1/2-a=1/C$). 
Thus, $\mathcal{B}$ can be simulated giving a final error of 
$$
\err_{D_f}(h) \leq \opt_{D_f}(\H)/(1-2a) + \eps = (1/2)\opt_{D_f}(\H)/(1/2-a) + \eps = (C/2)\opt_{D_f}(\H) + \eps.
$$
This completes our proof.
\end{proof}

\section{Efficient Weak Partial Learning of Weakly Learnable Classes}\label{sec decision tree}

By Theorem~\ref{th condition boosting}, to agnostically partially learn a hypothesis class $\H$ 
under logconcave distributions, it suffices to design an efficient weak partial learner.
Recall that our focus is on concept classes $\H$ that are efficiently 
$(k,\Delta,\beta,C)$-weakly learnable by an algorithm $\A$. 
In particular, this means that so long as the effective distribution $D_f$ has marginals whose low degree central moments approximately agree with those of some logconcave distribution (and in particular, if they agree with $D_\x$'s central moments), then $\A$ will learn a hypothesis with $\beta$-advantage over $D_f$.

It is not hard to see that simply running $\A$ over the distribution $D_f$ is not a good strategy, as the moment information of $(D_f)_\x$ could significantly deviate from that of the requirement of $\A$, even if $90\%$ of the examples have non-zero labels. 
We will need to find a way to reduce to the case where these central moments do approximately match. Our basic strategy will be to find a direction in which our moments fail to match if one exists, and use this to partition space into two halfspaces on which the masses of $D_\x$ and $(D_f)_\x$ substantially disagree. Doing this iteratively, we produce finer and finer partitions of $\R^d$ on which $D_f$ becomes more and more concentrated. We show that eventually this procedure must terminate in a partition where almost all of the mass of $D$ lies in regions where our moment matching assumption does apply. We can then use our $(k,\Delta,\beta,C)$-weak learner on each region to get the desired result.

Unfortunately, to even get started on this project, we need to be able to either certify that $D_\x$ and $(D_f)_\x$ have nearly matching central moments or to find a direction that exhibits this moment mismatch. This is itself a complicated algorithmic problem that we will put off solving until Section \ref{sec:moment-discr}. For the moment, we will be satisfied by formally defining the tool that we need:



\begin{definition}[Moment Discrepancy Finder] \label{def direction finding oracle}
    For $k \in \Z_+, \Delta>0, c, \alpha \in (0,1)$,
    we say that an algorithm $\A$ is a $(k,\Delta,c,\alpha)$-discrepancy finder 
    if it satisfies the following performance guarantee: 
    given sample access to a $1/2$-isotropic logconcave distribution $D_\x$ over $\R^d$ 
    and a distribution $D'_\x$, a conditional distribution of $D_\x$ over 
    some event with probability mass at least $\alpha$, if there exists $\w \in \s^{d-1}$ 
    and $j \in [k]$ such that 
    $\abs{\E_{\x\sim D_\x}[(\w\cdot\x)^j]-\E_{\x\sim D'_\x}[(\w\cdot \x)^j]} \geq \Delta$, 
    then with probability at least $99\%$, the algorithm outputs $\v \in \s^{d-1}$ and $i \in [k]$ 
    such that $\abs{\E_{\x\sim D_\x}[(\v\cdot \x)^i]-\E_{\x\sim D'_\x}[(\v\cdot\x)^i]} \geq c\Delta$.
\end{definition}

Our main result in this section is the following.


\begin{theorem}[Weak Learning$+$Discrepancy Finder$\rightarrow$Weak Partial Learning]\label{th decision tree k moment}
Let  $\eps,\eta \in (0,1)$, and let $\A_w$ be a $(k,\Delta,\beta,C)$-weak learner for $\H$ with sample complexity $M_w$ 
and time complexity $T_w$. Let $\A_m$ be a $(k,\Delta,c, \alpha)$-discrepancy finder, 
for an appropriate value of $\alpha = \eta\eps^{O_{k, \Delta}(1)}$
with sample complexity $M_m$ and time complexity $T_m$. 
Then there is an algorithm $\A$ that satisfies the following: 
  For every input distribution $D$ over $\R^d \times \{\pm 1, 0\}$, with $D_\x$ logconcave,  
 $W(D)>\eps$ and $\opt_{D_f}(\H)<(1-\eta)/C$, 
  then $\mathcal{A}$ draws at most $N(M_w+M_m)$ samples from $D$, 
  runs in time at most $N(T_w+T_m)$,
   where 
$$
N =(c\Delta)^{-4k}(k\log k+\log(C/(c\Delta)))^{4k^2}(\eta \eps)^{-O((c\Delta)^{-2}(k\log k+\log(C/(c\Delta)))^{2k})}.
$$ 
Algorithm $\A$ then outputs a hypothesis $h$ such that $\err_{D_f}(h) \leq 1/2-\Omega(\eta \beta)$ with probability at least $99\%$.
\end{theorem}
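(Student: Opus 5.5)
The plan is to build a partition of $\R^d$ into convex cells (intersections of halfspaces) by a recursive splitting procedure, and then run $\A_w$ separately on each good cell. We maintain a list of cells $R$, starting from $R=\R^d$. For each current cell we estimate $p_R=\Pr_{D}[\x\in R]$ and $q_R=\Pr_{D_f}[\x\in R]$ by sampling. If $q_R$ is below a threshold $\tau=\poly(\eta\eps)$ we freeze $R$ as \emph{small}. Otherwise, we compute (from samples) an affine map $L_R$ making $D_\x\mid R$ nearly isotropic, using Fact~\ref{fact transform}. Note that $D_\x\mid R$ is still logconcave, since $R$ is convex. Then $(D_f)_\x\mid R$ is $D_\x\mid R$ conditioned on an event of probability $q_R W(D)/p_R\ge \eps\tau=:\alpha$, so we may call $\A_m$ on the transformed pair. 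If $\A_m$ returns $(\v,i)$ with an $i$-th moment discrepancy at least $c\Delta$, we split $R$ by a hyperplane $\v\cdot L_R\x=t$. If $\A_m$ returns nothing, we freeze $R$ as \emph{good}. On a good cell, condition 1 of Definition~\ref{def moment-based learner} holds with $D'_\x=L_R(D_\x\mid R)$, so $\A_w$ applies to $D_f\mid R$ provided $\opt_{D_f\mid R}(\H)\le 1/C$.

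The key quantitative step is choosing the threshold $t$ and bounding the depth of the recursion. Both one-dimensional marginals are moment-bounded: $\v\cdot\x$ under the isotropic logconcave measure has exponential tails, and the conditional has density ratio at most $1/\alpha$. Hence a $c\Delta$ discrepancy in the $i$-th moment must be witnessed on $[-B,B]$ with $B=O(k\log k+\log(C/(c\Delta))+\log(1/\alpha))$. By a discretization/integration-by-parts argument, some threshold $t$ in this window gives
\[
\bigl|\Pr[\v\cdot\x'\le t]-\Pr[\v\cdot\x\le t]\bigr|\ge \gamma:=c\Delta/B^{k}.
\]
Splitting at $t$ then increases the potential $\Phi=\sum_R q_R\log(q_R/p_R)$, a KL-type divergence between $(D_f)_\x$ and $D_\x$ on the partition, by $\Omega(q_R\gamma^2)$ via Pinsker. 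Since $\Phi\le\log(1/\eps)$ always (the density ratio is at most $1/W(D)$), we get the following accounting. Call a split \emph{heavy} if the cell has $q_R\ge\tau$. Then along any root-to-leaf path the number of heavy splits is $O(\gamma^{-2}\log(1/(\eta\eps)))$. To make this path-wise, we use the ratio $q_R/p_R\le 1/\eps$ together with the growth of $\log(q_R/p_R)$ on the heavier child. The total number of cells is therefore at most $2^{\text{depth}}=(\eta\eps)^{-O(\gamma^{-2}B^{O(1)})}$, which gives the stated $N$. The total $D_f$-mass in small cells is at most (number of cells)$\cdot\tau$. We choose $\tau$ so that this is at most $\eta/4$; this is what forces $\alpha=\eta\eps^{O_{k,\Delta}(1)}$, and there is a circularity between $\tau$, $\alpha$, and $B$ that we resolve by noting that $B$ depends on $\alpha$ only logarithmically. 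I expect this step — making the depth bound rigorous given that bad splits can put little mass on one side, and tuning $\tau$ against the cell count — to be the main obstacle.

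For the final hypothesis, we proceed cell by cell. Since $\opt_{D_f}(\H)<(1-\eta)/C$, Markov's inequality applied to the per-cell error of the optimal $h^*$ shows that at least an $\Omega(\eta)$ fraction of $D_f$-mass lies in good cells on which $\err_{D_f\mid R}(h^*)\le 1/C$. We cannot identify these cells, so we run $\A_w$ (boosting its success probability by repetition) on every good cell. We then test each output on fresh samples, keeping it if its empirical advantage exceeds $\beta/2$ and otherwise outputting the better of the constants $\pm1$, which has error at most $1/2$. Cells where $h^*$ is good contribute advantage $\beta$ on $\Omega(\eta)$ mass, and no cell contributes error above $1/2+o(\eta\beta)$ after testing. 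Gluing the per-cell hypotheses gives $\err_{D_f}(h)\le 1/2-\Omega(\eta\beta)$. The sample and time bounds follow from $N$ calls to each of $\A_w$ and $\A_m$.
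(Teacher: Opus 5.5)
Your architecture (recursive hyperplane splits, freezing light cells, calling $\A_m$ on heavy cells, running $\A_w$ with testing on the rest, and the final Markov argument) matches the paper. The quantitative core, however, does not close.

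\emph{The split gap depends on $\alpha$.} Because you only use CDF splits, your gap is $\gamma=c\Delta/B^k$ with $B\gtrsim\log(1/\alpha)$. For CDF splits this is essentially tight. A conditional that moves mass $\lambda\approx c\Delta s^{-k}$ out to distance $s\approx\log(1/\alpha)$ creates a $c\Delta$ discrepancy in the $k$-th moment, while every CDF gap is at most $\lambda$. Both your accounting and the paper's potential make progress of order $\gamma^2$ per split. Either one therefore forces the light-cell threshold, and hence $\alpha=\eps\tau$, to satisfy $\log(1/\alpha)\gtrsim\gamma^{-2}\log(1/(\eta\eps))=B^{2k}\log(1/(\eta\eps))/(c\Delta)^2$. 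Combined with $B\gtrsim\log(1/\alpha)$, this gives $B\gtrsim B^{2k}\log(1/(\eta\eps))/(c\Delta)^2$, which has no solution with $B\ge 1$ once $\log(1/(\eta\eps))\gg(c\Delta)^2$.

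So the circularity you flag is not resolved by the logarithmic dependence: that logarithm is raised to the power $2k$ and fed back into an exponent. Your own cell count $(\eta\eps)^{-O(\gamma^{-2}B^{O(1)})}$ already has $\alpha$ in the exponent, unlike the stated $N$.

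\emph{The missing idea} is the two-type good split of Definition~\ref{def good split} and Lemma~\ref{lm good split}. Either there is a bulk threshold $|t|\le\tau$, where both $\tau$ and the gap $\gamma=\tau^{-k}c\Delta/100$ depend only on $k$ and $c\Delta$. Or there is a tail threshold $\tau<|t|\le T$ whose tail $R$ has only an $\alpha$-dependent additive gap but a strong multiplicative one: $\Pr_D[R]\le\Pr_{D'}[R]^4$ with $\Pr_{D'}[R]>|t|^{-2k}$. In the example above, the natural split at $t\approx s$ is of this second kind, and additive Pinsker-type accounting cannot credit it.

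\emph{How the paper closes the argument.} Write $a,b$ for the conditional $D_\x$- and $(D_f)_\x$-masses of a child. The paper uses the potential $\Phi=\sum_R(p_Rq_R)^{1/2-s}$ with $s=\gamma^2/10$. This is non-increasing under both split types; tail splits need no quantitative progress, only $a\le b^4$. It yields $\sum_{\text{light}}q_R\le W(D)^{s-1/2}\xi^{2s}\le\eta/2$ directly, with $\xi=(\eta\eps)^{O(1/\gamma^2)}$ and $\gamma$ independent of $\xi$. The $\alpha$-dependent $T$ enters only the cell count $O(T^{2k}/(c\Delta\xi\eps))$. That count comes from lower-bounding the $D_\x$-mass of every cell produced by splitting a heavy cell, not from a depth bound.

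\emph{The path-wise depth bound is also unsupported.} It does not follow from your KL potential. A split can have $a>b+\gamma$ on the $q$-heavier side, so $\log(q_R/p_R)$ can move in either direction along a root-to-leaf path. The KL potential only controls $\sum_{\text{split }R}q_R$, i.e.\ the $q$-weighted average depth, not the maximum depth. Hence the bound $2^{\mathrm{depth}}$ on the number of cells, and with it the estimate $(\text{number of cells})\cdot\tau\le\eta/4$, has no justification. Bounding the light mass needs a potential that controls it directly, as above.
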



As discussed, to prove Theorem~\ref{th decision tree k moment}, we will need to iteratively decompose $\R^{d}$ into smaller and smaller convex sets. In order to ensure that this procedure produces the desired results, we will need to assume that each split satisfies some conditions that force the distributions of $D$ and $D_f$ across the two halves to differ sufficiently. In particular, we will formalize the condition we need in the following definition:


\begin{definition}[Good Split]\label{def good split}
Let $D$ and $D'$ be two distributions over $\R$ such that $D' =D_A$ for some event $A$, with $\Pr_{D}[A]\ge \alpha$. 
We say that $t \in \R$ is a $(k,\tau,\gamma,\Delta)$-good split for $D,D'$, if 
    $\abs{t}<T:=O(k\log k + \log(1/\Delta\alpha))$,
    and
    for $R$ either $\{z \in \R \mid z>t\}$ or $\{z \in \R \mid z<t\}$,
    one of the following properties holds:
    \begin{enumerate}[leftmargin=*]
        \item $\abs{t} \le \tau$ and $\abs{\Pr_{D}[R]-\Pr_{D'}[R]}>\gamma$.
        \item $R = \{z \in \R \mid z>t\}, t>\tau$ or $R = \{z \in \R \mid z<t\}, t<-\tau$,   $\Pr_{D}[R] \le \left(\Pr_{D'}[R]\right)^4$ and 
        $\Pr_{D'}[R]>\abs{t}^{-2k}$.
    \end{enumerate}

\end{definition}


Not only does a good split allow one to subdivide the line into two sets on which the probabilities of $D$ and $D'$ differ substantially, we show that if, in the case we are interested in, we can find a direction in which the moments of $D$ and $D_f$ disagree substantially, that we can produce a good split in this direction.


\begin{figure}[htbp]
    \centering
    \includegraphics[width=0.7\linewidth]{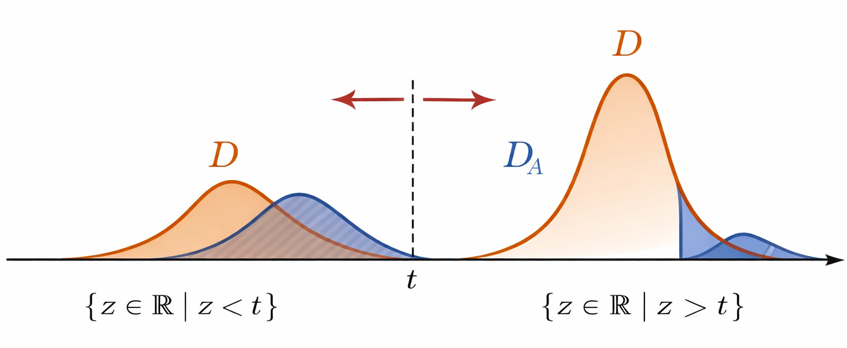}
    \caption{Illustration of a good split.
    } 
    \label{fig:split}
\end{figure}

\begin{lemma}\label{lm good split}
Let $D$ be a $1/2$-isotropic logconcave distribution over $\R$, 
$A$ an event with probability $\alpha >0$. Let $D'$ be the conditional distribution of $D$ over $A$.
 Suppose that $\abs{\E_{z \sim D} [z^k] - \E_{z' \sim D'} [(z')^k]}>\Delta$, 
 for some $\Delta>0$ and $k \in \N$. 
 Then there exists $t \in \R$ that is a $(k,\tau,\gamma,\Delta)$-good split for $D,D'$, with 
 $$\tau \eqdef \max\{(\Delta/20)^{-1/k}, 10(k\log k + \log(C/\Delta))\}, 
 \gamma \eqdef \tau^{-k}\Delta/100 \;,$$
 where $C$ is the constant from Fact~\ref{fact tail logconcave}.
 Furthermore, there is an algorithm that on input $k, \Delta, \alpha$, oracle access to $D$ and $D'$, and $\delta>0$,  
 draws $\poly(1/\gamma,1/\alpha, \log(1/\delta))$ samples from $D$, runs in sample-polynomial time, and outputs a $(k,\tau,\gamma,\Delta)$-good split 
 with probability at least $1-\delta$.
\end{lemma}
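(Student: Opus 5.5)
We will write the moment gap as an integral of tail differences and split it into a bounded window and two tails. Let $F(z)=\Pr_D[\,\cdot>z]$ and $F'(z)=\Pr_{D'}[\,\cdot>z]$ for $z\ge 0$, and use the analogous left tails for $z<0$. Integration by parts gives
\[
\E_D[z^k]-\E_{D'}[z^k]=\int_0^\infty k s^{k-1}\bigl(F(s)-F'(s)\bigr)\,ds+(-1)^k\int_0^\infty k s^{k-1}\bigl(G(s)-G'(s)\bigr)\,ds ,
\]
where $G,G'$ are the left tails. We split each integral at $s=\tau$.

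\textbf{Window part.} Suppose $|\Pr_D[R]-\Pr_{D'}[R]|\le\gamma$ for every threshold $|t|\le\tau$ and both orientations of $R$. Then the contribution of $[0,\tau]$ is at most $2\gamma\tau^k=\Delta/50$. In that case at least $\Delta-\Delta/50$ of the gap must come from the tails $|s|>\tau$.

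\textbf{Tail part.} For $D$, Fact~\ref{fact tail logconcave} gives $F(s)\le Ce^{-s/C}$. Hence $\int_\tau^\infty ks^{k-1}F(s)\,ds\le\Delta/20$ once $\tau\ge 10(k\log k+\log(C/\Delta))$. So the tail gap must come mostly from $D'$:
\[
\int_\tau^\infty ks^{k-1}F'(s)\,ds\gtrsim\Delta .
\]
The same holds for the left tail, up to choosing the side.

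We want a single $t>\tau$ at which $F'(t)>t^{-2k}$ and $F(t)\le F'(t)^4$. We argue by contradiction, assuming that at every $t\in(\tau,T)$ one of these fails.

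Where $F'(t)\le t^{-2k}$, the integrand is $ks^{k-1}F'(s)\le ks^{-k-1}$. Integrated over $s>\tau$ this gives at most $\tau^{-k}\le\Delta/20$, since $\tau\ge(\Delta/20)^{-1/k}$.

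Where $F'(t)>t^{-2k}$ but $F(t)>F'(t)^4$, we get $F'(t)<F(t)^{1/4}\le C^{1/4}e^{-t/(4C)}$. The integral of $ks^{k-1}C^{1/4}e^{-s/4C}$ over $s>\tau$ is again small for $\tau\gtrsim k\log k+\log(C/\Delta)$, after adjusting constants in $\tau$.

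Beyond $T$, we use $F'\le F/\alpha$. This makes the $D'$ tail integral at most $\alpha^{-1}\int_T^\infty ks^{k-1}Ce^{-s/C}\,ds$, which is small by the choice $T=O(k\log k+\log(1/\Delta\alpha))$.

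Summing the three regimes gives a total below $\Delta/2$, a contradiction. So a good split of type 2 exists with $\tau<t<T$, or else one of type 1 exists with $|t|\le\tau$. The main care point is getting the constants to line up: $\tau$ dominates the logconcave tail and the $t^{-2k}$ regime, and $T$ absorbs the $1/\alpha$ blow-up.

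\textbf{Algorithm.} We draw $\poly(1/\gamma,1/\alpha,\log(1/\delta))$ samples from $D$ and keep those in $A$ as samples of $D'$; about $1/\alpha$ factor more suffices. We then form empirical CDFs and test candidate thresholds $t$ on a grid over $[-T,T]$.

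The grid has spacing $\gamma/10$ in the window. This is enough because logconcave densities are bounded by $O(1)$, so moving $t$ by $\gamma/10$ changes $\Pr_D$ by $O(\gamma)$, and $\Pr_{D'}$ is monotone in $t$. Outside the window we check the type-2 conditions.

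Type-2 conditions involve probabilities as small as $T^{-2k}$ and their fourth powers. We therefore verify $\Pr_D[R]\le\Pr_{D'}[R]^4$ in a relaxed form using accuracy $\poly(\gamma)$, noting $T^{-8k}\ge\poly(\gamma)$ up to polynomial factors. Existence is applied with slightly stronger constants, so the relaxed empirical test still certifies a genuine good split with probability $1-\delta$ by a union bound over the grid.
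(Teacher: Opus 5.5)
Your proposal is correct and follows the paper's argument. Both use the layer-cake identity and bound the window contribution by $2\gamma\tau^k$ when no type-1 split exists. Both remove $D$'s tails with the logconcave tail bound and $D'$'s tail beyond $T$ with $\Pr_{D'}\le \Pr_D/\alpha$, which forces some $t\in(\tau,T)$ with $\Pr_{D'}[R]>t^{-2k}$, and both finish with uniform estimation of the tail probabilities with constant-factor slack. The one small difference is that you treat points where $\Pr_D[R]>\Pr_{D'}[R]^4$ as a separate regime, bounded via $\Pr_{D'}[R]<\Pr_D[R]^{1/4}$, whereas the paper notes that $\Pr_{D'}[R]>t^{-2k}$ already gives $\Pr_D[R]\le Ce^{-t/2}\le \Pr_{D'}[R]^4$ once $t>\tau$; both versions need a similar adjustment of the constant in $\tau$.
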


Lemma~\ref{lm good split} says that if the lower order moments of $D$ and $D'$ do not match, then a good split always exists, and we can efficiently find such a good split through a brute-force search. 
In particular, we want to remark that the existence of such a good split does not hold when $D$ is an arbitrary distribution. 
In our proof, we show the existence of such a good split by making use of the fact that a logconcave distribution has a sub-exponential tail probability.

The basic strategy of the proof is as follows. 
We know that $D$ and $D'$ have some low order moment that has substantial disagreement. 
We note that as these moments can be expressed as suitable integrals 
of the cumulative density functions that there must be some $t$ 
for which those cumulative densities differ. 
We have only to show that such $t$ satisfy the particular conditions of a good split. 
We first note that since $D$ has subexponential tails that the contribution 
to its moments from $t$ with $|t| \gg k\log(k) + \log(1/\Delta\alpha)$ 
is small enough that we can ignore these points. We then need to carefully 
analyze the integral over the remaining points to show that unless 
some $t$ satisfies either condition 1 or 2 that the moment discrepancy 
will be less than what is promised.

\begin{proof}[Proof of Lemma~\ref{lm good split}]

We first show the existence of such a good split. To do this, suppose that for every $\abs{t}<\tau$, we have $\abs{\Pr_{z \sim D}[z \in R]-\Pr_{z' \sim D'}[z' \in R]}<2\gamma$. We will show that we can find some $\abs{t}\le O(k\log k + \log(1/(\Delta\alpha)))$ that satisfies the second condition of a good split. We deal with the case where $k$ is odd, and the case where $k$ is even can be analyzed symmetrically.
Notice that 
\begin{align}\label{eq layer cake}
    \E_{z \sim D} [z^k] & = \int_{0}^\infty \Pr_{z \sim D}[z^k>s] ds - \int_{0}^\infty \Pr_{z \sim D}[z^k<-s] ds \notag \\ 
    & = \int_{0}^\infty ks^{k-1} \Pr_{z \sim D}[z>s] ds - \int_{0}^\infty ks^{k-1} \Pr_{z \sim D}[z<-s] ds.
\end{align}
We first show that those $\abs{z'}> T:= 40 (k\log k + \log(C/(\Delta\alpha)))$ only contribute little to the $k$th moment of $z'$. By Fact~\ref{fact tail logconcave}, we know that there is some universal constant $C>0$ such that for every $t>T$, 
\begin{align*}
    \Pr_{z' \sim D'}[\abs{z'}>t] = \Pr_{z \sim D}[\abs{z}>t, A]/\alpha \le \Pr_{z \sim D}[\abs{z}>t]/\alpha \le Ce^{-t/2}/\alpha.
\end{align*}
This implies that 
\begin{align*}
    \int_{T}^\infty ks^{k-1} \Pr_{z' \sim D'}[\abs{z'}>s] ds \le O\left(kT^{k-1}\exp(-T/2)/\alpha\right) \le \Delta/100.
\end{align*}
On the other hand, since for every $t, \abs{t}<\tau$, $\abs{\Pr_{z \sim D}[ \abs{z}>t ]-\Pr_{z' \sim D'}[\abs{z'} > t]}<\gamma$, we have 
\begin{align*}
    & \abs{\int_0^\tau ks^{k-1} \left(\Pr_{z \sim D}[ z>s ]-\Pr_{z' \sim D'}[z' > s]\right)ds} \le \tau^k\gamma\le \Delta/100, \\
    & \abs{\int_0^\tau ks^{k-1} \left(\Pr_{z \sim D}[ z<-s ]-\Pr_{z' \sim D'}[z' < -s]\right)ds} \le \tau^k\gamma\le \Delta/100.
\end{align*}
By \eqref{eq layer cake}, we know that 
\begin{equation}\label{mid range integral equation}
\int_{|s| \in [\tau,T]} ks^{k-1}\left| \Pr_{z \sim D}[z>s] - \Pr_{z' \sim D'}[z'>s] \right|ds \geq \Delta/2 .
\end{equation}
Assuming that for every $t \in [\tau,T]$, $\Pr_{z' \sim D'}[\abs{z'}>t]<t^{-2k}$, would imply that
\begin{align*}
    \int_\tau^T ks^{k-1}\Pr_{z' \sim D'}[\abs{z'}>s] ds \le \int_\tau^T ks^{-(k+1)} ds/10 = \tau^{-k} \le \Delta/10.
\end{align*}
On the other hand,
\begin{align*}
    \int_\tau^T ks^{k-1}\Pr_{z \sim D}[\abs{z}>s] ds \le C\int_\tau^T k s^{k-1} e^{-s/2} ds \le 2C\tau^{k-1}\exp(-\tau/2)\le \Delta/10.
\end{align*}
Combining these with the equivalent equations for negative $s$ imply that the left hand side of \eqref{mid range integral equation} is at most $2\Delta/5$, which is a contradiction.
This implies that there is some $t$ with $\abs{t} \in [\tau,T]$ such that $\Pr_{z' \sim D'}[\abs{z'}>t]>t^{-2k}$. 
In particular, since $t>\tau> 10(k\log k + \log(C))$, we know that $\Pr_{z \sim D}[\abs{z}>t] \le Ce^{-t/2} \le (\Pr_{z' \sim D'}[\abs{z'}>t])^4$. This shows that there is always a $(k,\tau,\gamma,\Delta)$-good split. In fact, reviewing these calculations, we see that there exists a $t$ that forms a good split so that either in case 1, we have the difference in probabilities at least $2\gamma$ or in case 2, we have that $\pr_{D'}[R] > (1.1)|t|^{-2k}$ and $\pr_D[R] < (\pr_{D'}[R]/1.1)^4$. This means that in order to find a good split, it suffices to estimate the tail probabilities $\pr_{z\sim D}[z>t]$ and $\pr_{z'\sim D'}[z'>t]$ to error $T^{-8k}/10$ for all $t$. However, a VC dimension argument says that this is accomplished by taking the empirical distribution over $O(T^{16k})$ samples.
\end{proof}


We next give an overview of how to make use of the discrepancy finding algorithm together 
with Lemma~\ref{lm good split}. Recall that the $(k,\Delta,\beta,C)$-weakly learning algorithms 
$\A_w$ work if the input distribution (after some suitable affine transformation) 
approximately matches each of its first $k$ moments with some approximately isotropic 
logconcave distribution. Thus, unless there is such a large moment discrepancy, 
we can simply run this algorithm. If there is a discrepancy, this can be detected 
by our discrepancy finder, which will also demonstrate a particular direction $v$ 
in which the moments fail to match. It will then be the case that 
$\langle v,D\rangle$ and $ \langle v,D_f\rangle$ will be real random variables 
with a moment mismatch to which we can apply Lemma \ref{lm good split} to find a good split. 
This will allow us to subdivide $\R^d$ into two halfspaces 
so that the restrictions of $D$ to each halfspace will have somewhat different weights.


From here we produce an iterative decomposition. We begin with a single subdivision consisting of all of $\R^d$. Then iteratively for each subdivision $R$ that $(D_f)_\x$ assigns enough mass we run our discrepancy finding algorithm on $D$ and $D_f$ conditioned on $R$. If it finds no discrepancy, we run our weak learner on that region. Otherwise, our discrepancy finder has found a moment discrepancy along some direction and we can run Lemma \ref{lm good split} to find a good split to subdivide $R$. This idea is summarized in Algorithm \ref{alg decision tree k moment}.

\begin{algorithm}[htbp]
		\caption{\textsc{k-MomentDecisionTree} (Constructs A decision tree as a weak hypothesis)}\label{alg decision tree k moment}
		\begin{algorithmic} [1]
\State \textbf{Input:}  Sample access from $D$, minimum weight parameter $\eta>0$,
accuracy parameter $\eps \in (0,1)$, moment discrepancy $\Delta>0$ and advantage $\beta>0$, access to $(k,\Delta,\beta,C)$-weakly learning algorithm $\A_w$ and $(k,\Delta,c,\alpha)$-discrepancy finder $\A_m$ with $\alpha= \eta\eps^{C_{k,\Delta}}$, where $C_{k,\Delta}$ is a large enough constant given $k$ and $\Delta$.
\State \textbf{Output:} $\hat{h}:$ weak hypothesis for $D_f$ and $\mathcal{U}$, a partition of $\R^d$. 

\medskip

\State Let $\tau, \gamma$ be parameters in the statement of Lemma \ref{lm good split} as a function of $c\Delta$ and $k$.
\State Let $\xi= (\eta \eps/2)^{C'/\gamma^2}$ for $C'$ a sufficiently large constant.
\State Define $T$ as in the statement of Definition~\ref{def good split}  as a function of $c\Delta,k$ and $\alpha=\xi$.
\State Let $\mathcal{U}_0=\{\emptyset\},\mathcal{U}_1=\{\R^d\}$, $t=1$.
\While{$\mathcal{U}_{t} \neq \mathcal{U}_{t-1}$}
\State $\mathcal{U}_{t+1} = \emptyset$.
\For{$R \in \mathcal{U}_t$}
\State Let $D_R$ be the conditional distribution of $D$ over $R$, normalizing to $1/2$-isotropic position.
\If{$(\Pr_{(\x,y)\sim D_f}[\x \in R]<\xi)$} add $R$ to $\mathcal{U}_{t+1}$
\Else 
 \State 
 run $\A_m$ over $D_R$ and $(D_R)_f$ to get a direction $\u \in \s^{d-1}$ such that $\abs{\E_{(\x,y)\sim D_R}(\u\cdot \x)^i-\E_{(\x,y)\sim (D_R)_f}(\u\cdot \x)^i}>c\Delta$, for an $i \in [k]$.
\If{$\u$ is not returned by $\A_m$} add $R$ to $\mathcal{U}_{t+1}$
\Else \State  find a $(i,\tau,\gamma,c\Delta)$-good split for $(\u\cdot\x),(\u\cdot\x')$, where $\x\sim (D_R)_\x$ and $\x'\sim ((D_R)_f)_\x$ and add $R_1,R_2$, the two regions split from $R$ to $\mathcal{U}_{t+1}$.
\EndIf
\EndIf
\EndFor
\State $t= t+1$.
\EndWhile
\For{$R \in \mathcal{U}_t$}
\State If $\Pr_{(\x,y) \sim D_f}[\x \in R] \geq \xi$, run $\A_w$ on $D_f$ conditioned on $\x \in R$. Estimate the error of this hypothesis to within $\beta/3$ and if the result is less than $1/2-2\beta/3$, let $h_R$ be the returned hypothesis. If $\Pr_{(\x,y) \sim D_f}[\x \in R] < \xi$ or the error of the returned hypothesis is too large, let $h_R$ be a random function.
\EndFor
\State Return the weak hypothesis $\hat{h}(\x) = h_R(\x)$ if $\x \in R$, for $R \in \mathcal{U}_{t},$  and the partition $\mathcal{U} = \mathcal{U}_t$.
\end{algorithmic}
\end{algorithm}

We give an overview of the analysis of Algorithm \ref{alg decision tree k moment}, before giving the full proof. The analysis will follow two parts.

Firstly, we show that the number of rounds of the algorithm is bounded. For this, we note that for each round that we did not terminate, we must have subdivided at least one region in the previous round. Thus, the total number of rounds is at most $|\mathcal{U}_t|$. On the other hand, whenever we use a good split for a region $R$, the resulting regions $R_1$ and $R_2$ will have $\pr_{D_\x}[R_i]$ not too much smaller than $\pr_{D_\x} [R]$. Since $\pr_{(\x,y)\sim D_f}[\x \in R]$ could not have been less than $\xi$ (or we would not have considered $R$ for subdivision), this means that the mass $D_\x$ assigns to each region must be bounded below. This in turn gives an upper bound on the number of regions and rounds.


In the second part, we need to show that the returned hypothesis has reasonably large advantage. We note 
that for any region $R$ in our final partition with $\Pr_{(\x,y)\sim D_f}[\x \in R] \geq \xi$, we must 
have small central moment discrepancies and so in any such region where $\opt < 1/C$, our weak learner 
will return a hypothesis with advantage at least $\beta$. For the remaining regions, our random hypothesis 
will have error at most  $1/2$. We therefore need to establish that the total mass of the low weight 
regions is small. In particular, we need the total mass of these regions to be noticeably smaller than 
$\eta$, so that as long as $\opt < (1-\eta)/C$, there will still be some heavy regions with $\opt < 1/C$. 
To do this, we instead bound  
$\sum_{R \in \mathcal{U}_t} \Pr_{(\x,y)\sim D_f}[\x \in R]^{1-2s}$ 
for some small, positive constant $s$. This in turn can be bounded by considering 
the potential function
$$
\Phi_t := \sum_{R \in \mathcal{U}_t} (\Pr_{(\x,y)\sim D_f}[\x \in R]\Pr_{(\x,y)\sim D}[\x \in R])^{1/2-s} \;.
$$
In particular, we show that this potential is non-increasing so long as $s$ is at most a small multiple of $\gamma^2$. As $\Phi_1=1$, this provides an appropriate bound on the small mass regions.

We summarize the above properties of the partition $\mathcal{U}$ output by Algorithm \ref{alg decision tree k moment} as Lemma \ref{lm partition} and use it to prove Theorem \ref{th decision tree k moment}.

\begin{lemma}
\label{lm partition}
For $k \in \Z_+$, $\eps,\eta, c \in (0,1)$ and $\Delta>0$, 
Let $T,\gamma,\tau,$ and $\xi$ be as in Algorithm \ref{alg decision tree k moment}.

If Algorithm \ref{alg decision tree k moment} is run on a distribution $D$ over $\R^d \times \{\pm 1,0\}$, with $D_\x$ logconcave with $W(D) > \eps$, then upon termination, 
    with probability at least $99\%$, the final partition $\mathcal{U}$ of $\R^d$ satisfies: 
    \begin{enumerate}
        \item \label{it size} $\card{\mathcal{U}} \le O(T^{2k}/(c\Delta \xi \eps))$.
        \item \label{it mismatch} For every $R \in \mathcal{U}$ 
        with $\Pr_{(\x,y)\sim D_f}[\x \in R]>\xi$, for every $\u \in \s^{d-1}$ and for every $i \in [k]$ 
        $\abs{\E_{(\x,y)\sim D_R}[(\u\cdot \x)^i]-\E_{(\x,y)\sim (D_R)_f}[(\u\cdot \x)^i]} \le \Delta.$
        \item\label{it prob} $\sum_{R \in \mathcal{B}}\Pr_{(\x,y)\sim D_f}[\x \in R] \le \eta/2$, where $\mathcal{B}=\{R \in \mathcal{U} \mid \Pr_{(\x,y)\sim D_f}[\x \in R] \le \xi\}$.
    \end{enumerate}
\end{lemma}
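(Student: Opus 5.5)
The plan is to analyze Algorithm~\ref{alg decision tree k moment} through the potential function
$$
\Phi_t := \sum_{R \in \mathcal{U}_t} \bigl(p'_R\, p_R\bigr)^{1/2-s},\qquad p'_R:=\Pr_{(\x,y)\sim D_f}[\x \in R],\ p_R:=\Pr_{(\x,y)\sim D}[\x\in R],
$$
with $s$ a small multiple of $\gamma^2$. Conditioning on the success of every call to $\A_m$ and to the good-split procedure of Lemma~\ref{lm good split} is handled by a union bound over the number of calls. That number is controlled by item~\ref{it size}, and we boost each subroutine's success probability by repetition. Item~\ref{it mismatch} is then essentially by construction. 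At termination, every region $R$ with $p'_R>\xi$ was passed to $\A_m$, which returned nothing. Here $(D_R)_f$ is $D_R$ conditioned on an event of probability $p'_R W(D)/p_R \ge \xi\eps$, which is at least $\alpha$ by the choice $\alpha=\eta\eps^{C_{k,\Delta}}$ once $\xi\ge\alpha/\eps$. The contrapositive of Definition~\ref{def direction finding oracle} (with threshold $\Delta$ and output guarantee $c\Delta$) then gives discrepancy less than $\Delta$ in every direction and every degree $i\le k$.

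The key step is showing that $\Phi$ is non-increasing under a good split. Write a split of $R$ into $R_1,R_2$ in relative terms: $a_j=p_{R_j}/p_R$ and $b_j=p'_{R_j}/p'_R$. We need $\sum_j (a_jb_j)^{1/2-s}\le 1$. By Cauchy--Schwarz, $\sum_j\sqrt{a_jb_j}=1-\tfrac12\sum_j(\sqrt{a_j}-\sqrt{b_j})^2=1-H^2$. In case~1 of Definition~\ref{def good split}, $|a_1-b_1|>\gamma$, which gives $H^2\gtrsim\gamma^2$. The exponent perturbation costs at most a factor $(a_jb_j)^{-s}$. This factor could be large if some $a_jb_j$ is tiny, so I would bound $\sum_j (a_jb_j)^{1/2}(a_jb_j)^{-s}$ by splitting on whether $a_jb_j\ge\gamma^{C}$ (giving a loss $1+O(s\log(1/\gamma))$) or tiny (contributing $(a_jb_j)^{1/2-s}\le\gamma^{C/3}$). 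Choosing $s\ll\gamma^2/\log(1/\gamma)$ then absorbs everything. In case~2 the tail region has $a\le b^4$ and $b>|t|^{-2k}\ge T^{-2k}$, so $\sqrt{ab}\le b^{5/2}$. The complement then satisfies $\sqrt{(1-a)(1-b)}\le 1-b/2$, so the sum is at most $1-b/2+b^{5/2}$, a drop of order $b\ge T^{-2k}$. This drop must dominate the $s$-losses, and that forces the careful choice of $s$ relative to $T^{-2k}$ as well. I expect this case analysis to be the main obstacle: tuning $s$ so that both cases strictly decrease $\Phi$, given that the thresholds in Definition~\ref{def good split} have constants (such as the fourth power) that must be checked to suffice.

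With monotonicity in hand, $\Phi_t\le\Phi_1=1$. For item~\ref{it prob}, each $R\in\mathcal{B}$ satisfies $p_R \ge$ some lower bound. This holds because its parent was split only when $p'_{\text{parent}}\ge\xi$, so $p_{\text{parent}}\ge p'_{\text{parent}}W(D)\ge\xi\eps$. A good split also keeps $a_j\gtrsim\min(\gamma,T^{-8k})$ (in case~2 the tail side has $a\ge$ its true mass; in the remaining cases the retained side has mass bounded below via $|a-b|$ and isotropic logconcave anticoncentration on $|t|<T$). Then $p'_R=(p'_Rp_R)^{1/2-s}\cdot p_R'^{1/2+s}p_R^{-(1/2-s)}$. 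Since $p'_R\le p_R/\eps$, this is at most $(p'_Rp_R)^{1/2-s}\xi^{2s}\eps^{-O(1)}$. Summing over $\mathcal{B}$ gives $\xi^{2s}\eps^{-O(1)}\le\eta/2$ by the choice $\xi=(\eta\eps/2)^{C'/\gamma^2}$. Finally, item~\ref{it size} follows from the lower bound $p_R\gtrsim \xi\eps\, c\Delta\, T^{-2k}$ on every final region, derived from the same child-mass bound applied to the last split, which gives $|\mathcal{U}|\le 1/\min_R p_R$. Since each round splits at least one region, the number of rounds is also bounded by $|\mathcal{U}|$, which controls the union bound.
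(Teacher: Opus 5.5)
Your plan has the paper's architecture. It uses the identical potential $\Phi_t=\sum_{R\in\mathcal{U}_t}(p_Rp'_R)^{1/2-s}$, the same two-case analysis of a good split, the same estimate $p'_R\le (p_Rp'_R)^{1/2-s}\xi^{2s}\eps^{-(1/2-s)}$ on $\mathcal{B}$, and mass counting for item~1. The gap is in the step you flag as the main obstacle: the parameter choices you sketch there would not prove the lemma with the $\xi$ fixed by the algorithm.

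With $\xi=(\eta\eps/2)^{C'/\gamma^2}$ and $C'$ an absolute constant, the potential argument gives item~3 only if $\eps^{s-1/2}\xi^{2s}\le\eta/2$. For small $\eta$ this forces $s\gtrsim\gamma^2/C'$. Your case-1 choice $s\ll\gamma^2/\log(1/\gamma)$ violates this once $\log(1/\gamma)\gg C'$, and $\gamma=\tau^{-k}c\Delta/100$ can be arbitrarily small. Tuning $s$ against $T^{-2k}$ in case~2 is not just lossy but circular. $T$ grows with $\log(1/\xi)$, while item~3 needs $\log(1/\xi)\gtrsim 1/s$, so $s\lesssim T^{-2k}$ would force $T\gtrsim T^{2k}$.

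Neither concession is needed. Case~1 uses only $|a-b|>\gamma$. Write $m=(a+b)/2$ and $\delta=|a-b|/2>\gamma/2$, so that $ab=m^2-\delta^2$ and $(1-a)(1-b)=(1-m)^2-\delta^2$. The function $g(m)=(m^2-\delta^2)^{1/2-s}$ is concave on $[\delta,\infty)$, because $g''(m)$ has the sign of $-(2sm^2+\delta^2)$. Hence $g(m)+g(1-m)$ is concave and symmetric on $[\delta,1-\delta]$, so it is at most $2(1/4-\delta^2)^{1/2-s}\le 4^s(1-\gamma^2)^{1/2-s}<1$ for $s=\gamma^2/10$. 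Equivalently, keep your bound $\sum_j\sqrt{a_jb_j}\le 1-\gamma^2/4$ and apply Jensen, $\sum_{j=1}^2 x_j^{1-2s}\le 2^{2s}(\sum_j x_j)^{1-2s}$, rather than splitting off tiny products.

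In case~2 there is no $s$-loss to absorb. By Bernoulli's inequality, $((1-a)(1-b))^{1/2-s}\le(1-b)^{1/2-s}\le 1-(1/2-s)b$. Also $(ab)^{1/2-s}\le b^{5(1/2-s)}\le b^2$ for $s\le 1/10$. So the sum is at most $1-(1/2-s)b+b^2<1$ whenever $b<1/10$, with no dependence on $T$. When $b\ge 1/10$, your bound $1-b/2+b^{5/2}$ also fails near $b=1$. In that range, the tail beyond $|t|>\tau$ of the $1/2$-isotropic logconcave projection has $a\le Ce^{-\tau/2}$, so $|a-b|>\gamma$ and the case-1 inequality applies.

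Finally, in item~1 the claim $a_j\gtrsim\min(\gamma,T^{-8k})$ is false in general. A case-1 child can have relative $D$-mass far below that, down to order $\eps\xi\gamma$. Instead use three facts:
\begin{itemize}
\item in case~1, either $a_j>\gamma$ or $b_j>\gamma$;
\item in case~2, the tail child has $b_j>T^{-2k}$ and the other child has $a_j\ge 1/2$;
\item $p_{R_j}\ge W(D)\,p'_{R_j}\ge\eps\, p'_{R_j}$.
\end{itemize}
Together these give $p_{R_j}\gtrsim\eps\xi\, c\Delta\, T^{-2k}$ without any anticoncentration.
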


\begin{proof}[Proof of Lemma \ref{lm partition}]
    We start by showing the first item in the statement that $\card{\mathcal{U}} \le O(T^{2k}/(c\Delta\xi \eps))$. 
    For each time step $t$, let $R \in \mathcal{U}_t$.
    If $R \not \in \mathcal{U}_{t-1}$, it must be the case, where $R$ is split from some $R' \in \mathcal{U}_{t-1}$ using a $(i,\tau,\gamma,c\Delta)$ good split, with $i\in [k]$. 
This implies that $\Pr_{(\x,y) \sim D_f}[\x \in R']>\xi$ and $\Pr_{(\x,y) \sim D}[\x \in R']>\eps\xi$.

By the definition of an $(i,\tau,\gamma,c\Delta)$ good split, either $\Pr_{D_\x}[R\mid R']>\gamma$ or $\Pr_{D_\x}[R\mid R']>T^{-2k}$. Since $c\Delta T^{-2k}<\gamma$, in the first case, we have 
$\Pr_{D_\x}[R]>c\Delta T^{-2k}\Pr_{D_\x}[R']>c\Delta T^{-2k}\xi\eps$. In the second case, $\Pr_{D_\x}[R] \ge  \eps \Pr_{(D_f)_\x}[R]>\eps T^{-2k}\Pr_{(D_f)_\x}[R']>T^{-2k}\xi\eps.$
Since $\sum_{R \in \mathcal{U}_t}\Pr_{(\x,y)\sim D}[\x \in R] = 1$ for each time step $t$,
we know that $\card{\mathcal{U}} \le O(T^{2k}/(c\Delta\xi \eps))$.

Next, we show the second item in the statement. Let $R \in \mathcal{U}$ be a region with $\Pr_{(D_f)_\x}[R]>\xi$.
Notice that by the definition of $\A_m$, if there is a direction $\u \in \s^{d-1}$ and some $i \in [k]$  such that $\abs{\E_{(\x,y)\sim D_R}(\u\cdot \x)^i-\E_{(\x,y)\sim (D_R)_f}(\u\cdot \x)^i} > \Delta,$ 
then some direction $\w$ will be output and $R$ will be split into two regions, which contradicts the fact that Algorithm \ref{alg decision tree k moment} terminates.

Finally, we show the third item in the statement. To do this, we 
define a potential function
\begin{align*}
    \Phi_t: = \sum_{R \in \mathcal{U}_t}\left(\Pr_{(\x,y)\sim D}[\x \in R]\Pr_{(\x,y)\sim D_f}[\x \in R]\right)^{1/2-s}.
\end{align*}
We show that for the choice of $s = \gamma^2/10$, $\Phi_{t+1}\le \Phi_t$ for all $t$.

It suffices to show that for any $R \in \mathcal{U}_t$ that is split into regions $R_1,R_2$,   
\begin{align*}
 \left(\Pr_{(\x,y)\sim D}[\x \in R]\Pr_{(\x,y)\sim D_f}[\x \in R]\right)^{1/2-s} \ge \sum_{i=1,2} \left(\Pr_{(\x,y)\sim D}[\x \in R_i]\Pr_{(\x,y)\sim D_f}[\x \in R_i]\right)^{1/2-s}
\end{align*}
To simplify the notation, we denote by $a:=\Pr_{(\x,y)\sim D}[\x \in R_1 \mid \x \in R]$ and $b:=\Pr_{(\x,y)\sim D_f}[\x \in R_1 \mid \x \in R]$.
Notice that 
\begin{align*}
    &\sum_{i=1,2} \left(\Pr_{(\x,y)\sim D}[\x \in R_i]\Pr_{(\x,y)\sim D_f}[\x \in R_i]\right)^{\frac{1}{2}-s} \\
    =&  \left(\Pr_{(\x,y)\sim D}[\x \in R]\Pr_{(\x,y)\sim D_f}[\x \in R]\right)^{\frac{1}{2}-s} \left((ab)^{1/2-s}+\left((1-a)(1-b)\right)^{\frac{1}{2}-s}\right)
\end{align*}
We next show that under an $(i,\tau,\gamma,c\Delta)$-good split, $\left((ab)^{1/2-s}+\left((1-a)(1-b)\right)^{\frac{1}{2}-s}\right) \le 1$. We prove this by considering cases for a good split. In the first case in the definition of $(i,\tau,\gamma,\Delta)$-good split, we have $\abs{a-b}>\gamma$.
In this case, 
we have 
\begin{align*}
    \left((ab)^{1/2-s}+\left((1-a)(1-b)\right)^{\frac{1}{2}-s}\right) 
   & = \left(\left( (\frac{a+b}{2})^2-(\frac{a-b}{2})^2  \right)^{\frac{1}{2}-s}+\left( (1-\frac{a+b}{2})^2-(\frac{a-b}{2})^2  \right)^{\frac{1}{2}-s}\right) \\ 
   &\le 2 \left( \frac{1}{4}-(\frac{a-b}{2})^2  \right)^{\frac{1}{2}-s} \le 2 \left( \frac{1-\gamma^2}{4}  \right)^{\frac{1}{2}-s} \le 1.
\end{align*}
Here in the first inequality, we use the fact that, 
for fixed value of $(a-b)/2$, the summation is maximized when $(a+b)/2 = 1/2$; 
while in the second inequality, we use the condition that $\abs{a-b}>\gamma$. 
In the last inequality, we use the choice of $s \le \gamma^2/10$. 

In the second case, by the definition of $(i,\tau,\gamma,c\Delta)$-good split, 
we have $a \le b^4$. Notice that $b<1/10$, otherwise we are in the first case. 
We have 
\begin{align*}
    \left((ab)^{1/2-s}+\left((1-a)(1-b)\right)^{\frac{1}{2}-s}\right) 
   & = \left(\left( ab  \right)^{\frac{1}{2}-s}+\left( 1-a-b+ab  \right)^{\frac{1}{2}-s}\right) \\ 
   &\le b^{5(1/2-s)} + 1-(1/2-s)(a+b-ab)\\
   & \le 1- (1/2-s)b + 2b^{5(1/2-s)}<1 \;.
\end{align*}
Here, the first inequality follows that for $z \in (0,1)$, and $p<1$, $(1-z)^p \le 1-pz$, and the second inequality follows the bound $a \le b^4, b \le 1/10$.

So, we have $\Phi_{t+1} \le \Phi_t$ for all $t$.
Using this, we know that 
\begin{align*}
    \Phi_1 = 1 & \ge \sum_{R \in \mathcal{B}_t}\left(\Pr_{(\x,y)\sim D}[\x \in R]\Pr_{(\x,y)\sim D_f}[\x \in R]\right)^{1/2-s} \\ & \ge \left( \Pr_{(\x,y)\sim D}[y \neq 0] \right)^{1/2-s} \sum_{R \in \mathcal{B}_t}\left(\Pr_{(\x,y)\sim D_f}[\x \in R]\right)^{1-2s} \\
    & \ge \left( \Pr_{(\x,y)\sim D}[y \neq 0] \right)^{1/2-s} \sum_{R \in \mathcal{B}_t}\left(\Pr_{(\x,y)\sim D_f}[\x \in R]\right)\xi^{-2s},
\end{align*}
which gives us 
\begin{align*}
\sum_{R \in \mathcal{B}_t}\left(\Pr_{(\x,y)\sim D_f}[\x \in R]\right) \le \left( \Pr_{(\x,y)\sim D}[y \neq 0] \right)^{s-1/2}\xi^{2s} \le \eta/2,
\end{align*}
where the last inequality holds when $\xi=(\eta \eps/2)^{1/2s} = (\eta \eps/2)^{O(1/\gamma^2)}$.
\end{proof}

With Lemma \ref{lm partition} at hand, 
we can immediately prove Theorem \ref{th decision tree k moment}.


\begin{proof}[Proof of Theorem \ref{th decision tree k moment}]
We first show the correctness of Algorithm \ref{alg decision tree k moment} by showing that it outputs a hypothesis with $\Omega(\eta\beta)$ advantage with respect to $D_f$.
Let $\mathcal{B}=\{R \in \mathcal{U} \mid \Pr_{(\x,y)\sim D_f}[\x \in R] \le \xi\}$.
We denote by $\mathcal{G} = \mathcal{U} \setminus \mathcal{B}$ and denote by $\mathcal{M} \subseteq \mathcal{G}$ the set of $R$, where a random binary function $h_R$ is assigned. 
We show that $\sum_{R \in \mathcal{M}}\Pr_{(\x,y)\sim D_f}[\x \in R]$ is not too large. 
By Item \ref{it mismatch} in Lemma \ref{lm partition},
it holds for every $R \not \in \mathcal{B}$ and $\u \in \s^{d-1}$ and for every $i \in [k]$,
$\abs{\E_{(\x,y)\sim D_R}(\u\cdot \x)^i-\E_{(\x,y)\sim (D_R)_f}(\u\cdot \x)^i} \le \Delta.$
If $\A$ fails to output a weak hypothesis over $R$, it must be the case, where 
we have $\Pr_{(\x,y)\sim D_f}[h^*(\x) \neq y \mid \x \in R]>1/C$, where $h^*$ is the hypothesis in $\H$ that achieves $\opt_{\D_f}(\H)$. 
Now suppose that $\sum_{R \in \mathcal{M}}\Pr_{(\x,y)\sim D_f}[\x \in R]>(1-\eta)$, then we have
\begin{align*}
 \err_{D_f}(h^*)\ge \sum_{R \in \mathcal{M}}\Pr_{(\x,y)\sim D_f}[\x \in R]\Pr_{(\x,y)\sim D_f}[h^*(\x) \neq y \mid \x \in R]>(1-\eta)/C \;,
\end{align*}
which gives a contradiction to the fact that $\opt_{D_f}\le (1-\eta)/C$.
By Item~\ref{it prob} of Lemma~\ref{lm partition}, $\sum_{R \in \mathcal{B}}\Pr_{(\x,y)\sim D_f}[\x \in R] \le \eta/2$.
Thus, we have $\sum_{R \in \mathcal{G}\setminus \mathcal{M}}\Pr_{(\x,y)\sim D_f}[\x \in R]>\eta/2$. 
Since for each $R \in \mathcal{G}\setminus \mathcal{M}$, 
we have $\Pr_{(\x,y)\sim D_f}[h_R(\x) \neq y \mid R]<1/2-\beta/3$, (and the error on all other regions is $1/2$) we know that $\err_{D_f}(\hat{h}) \le 1/2-\Omega(\eta \beta)$.

Next, we bound the sample complexity and the time complexity of Algorithm \ref{alg decision tree k moment}. 
Observe that every time Algorithm \ref{alg decision tree k moment} makes a split, 
it adds one more region to the partition $\mathcal{U}_t$. 
This implies Algorithm \ref{alg decision tree k moment} will terminate 
after making $\card{\mathcal{U}}-1$ splits.
By Item \ref{it size} of Lemma \ref{lm partition}, 
it will run $\A_w,\A_m$ and the algorithm for finding a good split
at most $ O(T^{2k}/(c\Delta\xi\eps))$ times for regions $R$ 
created by Algorithm \ref{alg decision tree k moment}. 
Since $\Pr_{(\x,y)\sim D}[\x \in R]>\xi\eps$ and 
$\Pr_{(\x,y)\sim D_f}[\x \in R]>\xi$, an example drawn from $D_R, \ (D_R)_f$ 
can be done by drawing $(\eps\xi)^{-1} = (\eta\eps)^{-O(\gamma^{-2})}$ examples from $D$. 
So, each call to $\A_w$ draws $(\eps\xi)^{-1}M_w$ samples from $D$
and runs in $(\eps\xi)^{-1}T_w$ time. Similarly,
each call to $A_m$ draws $(\eps\xi)^{-1}M_m$ samples from $D$ and runs for $(\eps\xi)^{-1}T_m$ time.
Each call to the algorithm for finding a good split takes $\poly(\xi^{-1})$ sample and time. 
Thus, the sample complexity of Algorithm \ref{alg decision tree k moment} 
is $ O(T^{2k}/(c\Delta\xi^2\eps))(M_w+M_m)$ and the time complexity 
of Algorithm \ref{alg decision tree k moment} is $O(T^{2k}/(c\Delta\xi^2\eps))(T_w+T_m)$.

Recall that $\gamma = c\Delta (k\log k+\log(C/c\Delta))^{-k}$, 
$\xi = (\eta\eps)^{O(1/\gamma^2)}$ and $T = O(k\log k + \log(1/\Delta\xi))$. 
This gives $T^{2k}/(c\Delta\xi^2\eps^2)=N 
= (c\Delta)^{-4k}(k\log k+\log(C/(c\Delta)))^{4k^2}(\eta \eps)^{-O((c\Delta)^{-2}(k\log k+\log(C/(c\Delta)))^{2k})}$. 
\end{proof}

\section{Finding Moment Discrepancies under Logconcave Distributions} 
\label{sec:moment-discr}


By Theorem~\ref{th decision tree k moment}, an important step towards 
a weak learner for agnostic partial learning (Definition~\ref{def moment-based learner}), 
is to design 
a moment discrepancy finder (Definition~\ref{def direction finding oracle}). 
For the simplest case of $k=2$, one can leverage mean estimation and singular-value decomposition to achieve this goal. Unfortunately, for $k>3$, this task appears challenging in general.
For example, the straightforward way to achieve this goal requires approximating 
eigenvectors of higher-order moment tensors, which is NP-hard \cite{hillar2013most} in general. 

We instead show that if the distribution $D_\x$ 
is logconcave and $(D_f)_\x$ is the conditional distribution 
over an event $A$ with probability $\alpha$
there is a $(k,\Delta,2^{-O(k^2)}, \alpha)$-discrepancy finder 
that runs in 
$\poly_{k,\Delta}(d)2^{\polylog_{k,\Delta}(\alpha^{-1})}$ time. 

Specifically, the main algorithmic result of this section is the following.

\newpage


\begin{theorem}[Moment Discrepancy Finder for Logconcave Distributions]\label{th direction finding}
There is an algorithm with the following performance guarantee.  
Given sample access to a $1/2$-isotropic logconcave distribution $D_\x$ over $\R^d$, 
oracle access to an event $A$ with $\Pr_{D_\x}[A] = \alpha$, $k \in \Z_+$ and $\Delta \in (0,1)$, 
the algorithm draws $M= d^{O(k)}\alpha^{-2}(\log(1/\alpha)/\Delta)^{2^{O(k)}}$ 
examples from $D_\x$,
runs in time 
$$\poly(M) + 2^{(\log(1/\alpha)/\Delta)^{2^{O(k)}}}$$ 
and satisfies the following:
\begin{enumerate}
    \item With probability at least $99\%$, it either outputs $\perp$, or outputs a vector $\v \in \s^{d-1}$ such that 
$\abs{\E_{\x \sim D_\x}[(\v\cdot\x)^i]-\E_{\x \sim (D_f)_\x}[(\v\cdot\x)^i]} \geq 2^{-O(k^2)}\Delta$, for some $i \in [k]$.
\item If there exists $\w \in \s^{d-1}$ and $i \in [k]$ such that 
$\abs{\E_{\x \sim D_\x}[(\w\cdot\x)^i]-\E_{\x \sim (D_f)_\x}[(\w\cdot\x)^i]} \geq \Delta$,
where $(D_f)_\x$ is the conditional distribution of $D_\x$ over $A$, 
the algorithm outputs $\v \in \s^{d-1}$ with probability at least $99\%$.
\end{enumerate}

\end{theorem}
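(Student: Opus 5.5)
The plan follows the outline given in the technical overview. We reduce to a low-dimensional subspace and then brute-force over a net of that subspace.

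\textbf{Step 1 (the $Q$-function).} Let $A$ have probability $\alpha$. The density ratio $\Ind_A/\alpha$ has $L_2(D_\x)$ norm $\alpha^{-1/2}$. Its projection $Q$ onto the space of degree-$\le k$ polynomials, minus the constant, satisfies
\[
\E_{D_\x}[p(\x)Q(\x)]=\E_{(D_f)_\x}[p]-\E_{D_\x}[p]
\]
for every degree-$k$ polynomial $p$. We would like a bound on $Q$ that is only polylogarithmic in $1/\alpha$. To get it, we use that $\E[p(\x')]-\E[p(\x)]$ is at most $\|p\|_2\,\polylog(1/\alpha)^{k}$, by logconcave hypercontractivity / tail bounds (this is presumably Lemma~\ref{lm Q function learning}). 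This gives $\|Q\|_{2}\le(\log(1/\alpha))^{O(k)}$. We estimate the coefficient tensors of $Q$ empirically, in an orthonormal polynomial basis, from $d^{O(k)}\alpha^{-2}\poly$ samples.

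\textbf{Step 2 (effective subspace for one form).} Consider the multilinear forms
\[
F_Q(\u^{(1)},\dots,\u^{(j)})=\E[Q(\x)\prod_i(\u^{(i)}\cdot\x)].
\]
Integrating by parts against the logconcave density turns $\E[Qp]$ into a combination of expectations of derivatives $\nabla^i p$. Then the KLS-type bound (the current $\polylog d$ Poincaré constant, used as a bounded-spectral-gap isoperimetry) controls the Frobenius norms of the resulting tensors by $\|Q\|_2\,\poly_k(\log)$. Flatten each tensor into a $d\times d^{i-1}$ matrix and take the top singular vectors above threshold $\Delta'$. Their span $U$ has dimension at most $(\text{Frobenius norm}/\Delta')^2$, and any form with all arguments in $U^\perp$ is at most $\Delta'$. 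This is Lemma~\ref{lm subspace finding v1}.

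\textbf{Step 3 (iterating).} Expand each argument as $\u=\u_U+\u_{U^\perp}$, giving $2^k$ terms. Terms with all components in $U^\perp$ are small. For each basis vector $\g$ of $U$, apply Step 2 to $Q(\x)(\g\cdot\x)$, which is again a low-norm polynomial of degree $k+1$, to kill forms with one argument in $U$. Repeat $k$ levels, each time raising the dimension polynomially. This yields $V$ of dimension $(\log(1/\alpha)/\Delta)^{2^{O(k)}}$ such that
\[
\max_{\u^{(i)}}F_Q\le 2^{O(k^2)}\max_{\v^{(i)}\in V}F_Q+\Delta/2,
\]
which is Lemma~\ref{lm subspace v3}. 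The $2^{O(k^2)}$ comes from the $2^k$ terms at each of $k$ levels, together with polarization constants when passing between symmetric and multilinear forms.

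\textbf{Step 4 (search).} Build a $\Delta 2^{-O(k^2)}$-net of the unit sphere of $V$, of size $2^{\tilde O(\dim V)}$. For each net point $\v$ and each $i\le k$, estimate
\[
\E_{D_\x}[(\v\cdot\x)^i]-\E_{(D_f)_\x}[(\v\cdot\x)^i]
\]
from samples; only $\log$(net size) overhead is needed by a union bound. Output $\v$ if some estimate exceeds $2^{-O(k^2)}\Delta$, and $\perp$ otherwise. Soundness (item 1) follows from accurate estimation. Completeness (item 2) follows from Step 3 applied to the symmetric form $F_Q(\w,\dots,\w)$: a large discrepancy at $\w$ forces a large multilinear value on $V$, and hence, by polarization, a large symmetric value at some net point, for some $i\le k$.

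I expect the main obstacle to be Step 2: proving that the flattened tensors have Frobenius norm only $\polylog(1/\alpha)$ and independent of $d$. Unlike the Gaussian case, there is no orthogonal Hermite decomposition here. I would first try writing $\E[Q p]$ via Poincaré/Stein-type identities for the logconcave density and bounding $\|\E[\nabla^i(\cdot)]\|_F$ with the Klartag–KLS bound.
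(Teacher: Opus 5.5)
Your architecture matches the paper's: the $Q$-function of Lemma~\ref{lm Q function learning}, the derivative-tensor representation with singular-vector truncation (Lemma~\ref{lm subspace finding v1}), the $k$-level iteration (Lemma~\ref{lm subspace v3}), Bochnak--Siciak polarization, and a final exhaustive search. The gap is the $d$-dependence, which is exactly the obstacle you flag and leave open.

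The Poincar\'e constant of isotropic logconcave measures is only known to be $\polylog(d)$; the paper uses $O(\log^2 d)$ (Lemma~\ref{lm gradient}). So the representing tensors satisfy only $\norm{\mathbf{T}_i}_F\le (C\log d)^k\norm{Q}_{D_\x,2}$. After $k$ levels your $V$ therefore has dimension $(\log(1/\alpha)\log(d)/\Delta)^{2^{O(k)}}$, not $(\log(1/\alpha)/\Delta)^{2^{O(k)}}$. A net on its sphere then has size $2^{(\log(1/\alpha)\log d/\Delta)^{2^{O(k)}}}$, which is superpolynomial in $d$ and violates the stated running time. A $d$-free Frobenius bound would follow from the KLS conjecture, but neither Stein-type identities nor the Klartag bound supplies it.

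The paper instead bootstraps:
\begin{enumerate}
\item Run the construction once to get $U$ with $\dim U=\polylog(d)\cdot\poly_k(\log(1/\alpha)/\Delta)$.
\item Observe that $\proj_U\x$ is again $1/2$-isotropic logconcave, and that moment discrepancies along directions of $U$ are the same for the projected pair. The conditioning on $A$ becomes a $[0,1]$-valued thinning, which the H\"older argument behind $Q$ tolerates.
\item Recompute $Q'$ for the projected pair and rerun Lemma~\ref{lm subspace v3} inside $U$, where the Poincar\'e constant is only $\polylog(\dim U)$.
\end{enumerate}
This replaces $\log d$ by $O(\log\log d+\log\log(1/\alpha)+\log(1/\Delta))$. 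The search cost $2^{\tilde O(\dim V)}$ then splits as $d^{o(1)}\cdot 2^{(\log(1/\alpha)/\Delta)^{2^{O(k)}}}$, which fits the stated bound.

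Two smaller corrections to Step~2. First, the representation $\E[Qp]=\sum_i\mathbf{T}_i\cdot\E[\nabla^i p]$ does not come from integrating by parts, which would introduce the non-polynomial score $\nabla\log f$. It comes from duality. The map $F(p)=(\E[\nabla^i p])_{i=0}^k$ is invertible on degree-$k$ polynomials, so some tensors $\mathbf{T}$ represent $p\mapsto\E[Qp]$. Testing against $p=F^{-1}(\mathbf{T})$, together with $\norm{p}_{D_\x,2}\le (C\log d)^k\norm{F(p)}_F$ (Lemma~\ref{lm gradient norm}, the Poincar\'e inequality iterated on mean and gradient), bounds $\norm{\mathbf{T}}_F$.

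Second, truncation leaves $\mathbf{T}_0=\E[Q]$ untouched, so forms with all arguments in $U^\perp$ are small only when $Q$ is nearly mean-zero. This holds for $Q-1$ but not for $(Q-1)(\g\cdot\x)$ in Step~3. There you must subtract the mean and charge the leftover term to the order-one value $\abs{\E[(Q-1)(\g\cdot\x)]}$ with $\g\in V$, as the paper does.
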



Before going to the technical analysis needed to prove Theorem~\ref{th direction finding}, 
we provide an overview of the algorithm. Our first step is to show that the discrepancy between the higher moment tensors of $D$ and $D_f$ is in some sense small. For this, we note by the
Carbery-Wright concentration inequality (Fact~\ref{fact polynomial growth}), one can show that for any degree-$k$ polynomial $p(\x)$, 
$\abs{\E_{\x \sim D_\x}[p(\x)]-\E_{\x \sim (D_f)_\x}[p(\x)]}$
is not too large. This implies (see Lemma~\ref{lm Q function learning}), that there must be a degree-$k$ 
polynomial $Q(\x)$, with $L_2$-norm at most $O(\log^k(1/\alpha))$ under $D_\x$, 
such that for every degree $k$ polynomial $p(\x)$, we have $\E_{\x \sim D_\x}[p(\x)]-\E_{\x \sim (D_f)_\x}[p(\x)] = \E_{\x \sim D_\x}[Q(\x)p(\x)]$. 
Given that the difference between the higher moments is in some sense ``small'', our plan will be to find a low dimensional subspace $V$ that captures any possible directions of large discrepancy. We can then conclude with an exhaustive search over $V$.

For the purpose of motivation, we begin by assuming that $D_\x$ is a standard Gaussian 
distribution. In this case, we can expand $Q(\x)$ in the Hermite basis, yielding a sequence 
of tensors $\mathbf{T}_0,\dots,\mathbf{T}_k$, with small Frobenius norm so that for every 
$\v \in \s^{d-1}$, we can write $\E_{\x \sim D_\x}[Q(\x)(\v \cdot \x)^k]$ as a linear 
combination of the $\mathbf{T}_{\new{i}}\cdot \v^{\otimes i}$ for $i\in [k]$.
If we flatten each tensor $\mathbf{T}_i$ and view it as a bilinear form 
$\R^d \times \R^{d^{i-1}} \to \R$, we can then find singular vectors 
of each bilinear form that correspond to large singular values.
These singular vectors span a subspace $V$ with dimension $\poly_k(\log(1/\alpha)/\Delta)$ 
such that $\E_{\x \sim D_\x}[Q(\x)(\v \cdot \x)^k]$ can be approximated as a linear 
combination of $\E_{\x \sim D_\x}[Q(\x)(\v' \cdot \x)^i]$, where $\v'$ is the projection of 
the unit vector $\v$ onto $V$, plus an error of at most $\Delta/2$.
Then in time exponential in $\dim(V)$, we can find such a direction in $V$ near maximal 
moment discrepancy, which gives an algorithm with sample complexity $\poly(1/\alpha)$ and 
time complexity quasi-polynomial with respect to $1/\alpha$.

However, the above approach heavily makes use of properties of Gaussian distributions, such as moments in orthogonal directions being independent, which do not hold for a general logconcave distribution.
Our plan is still to make use of the above idea of dimension reduction, but with a relaxed goal. 
In particular, we would like to find a subspace $V$ such that $\sup_{\u^{1},\dots,\u^{(k)} \in \s^{d-1}} \E_{\x \sim D_\x}[Q(\x)\prod_{j=1}^k(\u^{(j)}\cdot\x)]]$ is at most some constant multiple of the maximum when $\u^{1},\dots,\u^{(k)}$ are taken from $V$, plus $\Delta/2$.
If such a subspace $V$ is found, then we can brute force search over $V$, to find a direction with moment discrepancy. We achieve this goal through two steps.

In the first step, Lemma~\ref{lm subspace finding v1}, we observe that given any polynomial $Q(\x)$, the linear function $\E_{\x \sim D_\x}[Q(\x)p(\x)]$ can be written as $\sum_{i=0}^k \mathbf{T}_i \cdot \E_{\x\sim D_\x}[\nabla^i p(\x)]$.
By making use of the recent progress on KLS conjecture \cite{klartag2025thin}, 
we are able to show in Lemma~\ref{lm gradient norm} that the Frobenius norms 
of these tensors $\mathbf{T}_1,\dots,\mathbf{T}_k$ are not large.
By flattening these tensors as a bilinear form and finding singular vectors 
of each bilinear form that correspond to large singular values,
we are able to find a subspace $V_1$, such that if $\u^{1},\dots,\u^{(k)}$ 
are all orthogonal to $V_1$, 
then $\E_{\x \sim D_\x}[Q(\x)\prod_{j=1}^k(\u^{(j)}\cdot\x)]$ must be very small.

The second step is to recursively build the target subspace using the subroutine in the first step. Given any $k$ unit vectors $\u^{1},\dots,\u^{(k)}$,
we can write each $\u^{(i)} = \u^{(i)}_{V_1}+\u^{(i)}_{V_1^\perp}$.
So, we can expand
$\E_{\x \sim D_\x}[Q(\x)\prod_{j=1}^k(\u^{(j)}\cdot\x)]$ into a linear combination of $2^k$ terms, each of which is the expectation of $Q(\x)\prod_{j=1}^k(\v^{(j)}\cdot \x)$, where $\v^{(j)}$ is either $\u^{(j)}_{V_1}$ or $\u^{(j)}_{V_1^\perp}$.
We know if $\v^{1},\dots,\v^{(k)}$ are all orthogonal to $V_1$, then $\E_{\x \sim D_\x}[Q(\x)\prod_{j=1}^k(\v^{(j)}\cdot\x)]$ must be very small.
This implies that during the analysis, we can without loss of generality assume that $\u^{(1)} \in V_1$.
Specifically, let $\g^{(1)},\dots,\g^{(N)}$ be a basis for the subspace $V_1$,
we can write $\u^{(1)} = \sum_{i=1}^N a_i \g^{(i)} $.
This implies that 
\begin{align*}
    \E_{\x \sim D_\x}[Q(\x)\prod_{j=1}^k(\u^{(j)}\cdot\x)] = \sum_{i=1}^N a_i\E_{\x \sim D_\x}[Q(\x)(\g^{(i)}\cdot\x)\prod_{j=2}^k(\u^{(j)}\cdot\x)].
\end{align*}

Notice that if we apply the subroutine in the first step to each of the $Q(\x)(\g^{(i)}\cdot\x)$ and denote by the span of the found subspaces as $V_2$, then if $\u^{(j)} \perp V_1 \oplus V_2$ for $j=2,\dots,k$. then $\E_{\x \sim D_\x}[Q(\x)\prod_{j=1}^k(\u^{(j)}\cdot\x)]$ must be very small.
As a result, if we recursively run this approach for $k$ times, it must be the case that the final subspace $V$ approximately maintains the maximum of $\E_{\x \sim D_\x}[Q(\x)\prod_{j=1}^k(\u^{(j)}\cdot\x)]$.

We now give a roadmap of this section. In Section~\ref{sec KLS}, we will provide technical lemmas that will be used in our analysis based on the recent progress on KLS conjecture. In Section~\ref{sec dimension reduction}, we will give formal analysis of the dimension reduction approach we discussed above. 
Finally, in Section~\ref{sec mismatch finding}, we will make use of the dimension reduction method to give our discrepancy finding algorithm and present the proof of Theorem~\ref{th direction finding}.

\subsection{Useful Technical Facts}\label{sec KLS}

 To begin with, we have the following Poincaré inequality for isotropic logconcave distributions, 
 which follows from recent progress on thin-shell bounds for isotropic logconcave distributions.

\begin{lemma}\label{lm gradient}
    Let $D_\x$ be a $1/2$-isotropic logconcave distribution over $\R^d$ and $p:\R ^d \to \R$ 
    be a degree-$k$ polynomial. Then it holds
    $
        \var_{\x \sim D_\x}[p(\x)] \le O((\log^2 d) \E_{\x \sim D_\x}[\norm{\nabla p(\x)}_2^2]).
    $
\end{lemma}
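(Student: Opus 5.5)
The plan is to deduce the lemma from a Poincaré inequality for isotropic logconcave measures, with the $O(\log^2 d)$ factor coming from the recent near-resolution of the KLS conjecture \cite{klartag2025thin}. Klartag's thin-shell bound (building on Chen's and Klartag–Lehec's work, together with Eldan's reduction) shows that every isotropic logconcave measure $\mu$ on $\R^d$ has Poincaré constant $C_P(\mu)\le O(\log d)$. More precisely, for every locally Lipschitz $f\in L_2(\mu)$,
\[
\var_\mu[f]\le C\log d\,\E_\mu[\norm{\nabla f}_2^2],
\]
and the stated $O(\log^2 d)$ is weaker than this, so any current bound at least as good as $\log^2 d$ suffices. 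I will cite this as a black box. The only things to check are the reduction from $1/2$-isotropic to exactly isotropic position, and that polynomials are admissible test functions.

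\textbf{Step 1 (reduction to isotropic position).} Let $\bm\mu$ and $\Sigma$ be the mean and covariance of $D_\x$. By assumption $\tfrac12\mathbf I\preceq\Sigma\preceq\tfrac32\mathbf I$. Set $\x=\Sigma^{1/2}\bz+\bm\mu$, where $\bz$ is isotropic logconcave; logconcavity is preserved under affine maps. Define $q(\bz)=p(\Sigma^{1/2}\bz+\bm\mu)$, which is again a polynomial of degree $k$. Then $\var[p(\x)]=\var[q(\bz)]$. The chain rule gives $\nabla q(\bz)=\Sigma^{1/2}\nabla p(\x)$, so
\[
\norm{\nabla q}_2^2\le\norm{\Sigma}_{op}\norm{\nabla p}_2^2\le\tfrac32\norm{\nabla p}_2^2.
\]

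\textbf{Step 2 (apply the Poincaré inequality).} Apply the Poincaré bound to $q$ under the isotropic law of $\bz$. A polynomial is locally Lipschitz, and it lies in $L_2$ (indeed in every $L_r$) because logconcave tails are subexponential (Fact~\ref{fact tail logconcave}), so $\E[\norm{\nabla q}_2^2]<\infty$. This gives
\[
\var[p(\x)]=\var[q(\bz)]\le O(\log^2 d)\,\E[\norm{\nabla q(\bz)}_2^2]\le O(\log^2 d)\,\E[\norm{\nabla p(\x)}_2^2].
\]
The constant $3/2$ is absorbed into the $O(\cdot)$.

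\textbf{Main obstacle.} The only substantive input is the isotropic Poincaré constant bound itself, which I would cite from the KLS literature rather than reprove. It comes from Klartag's polylogarithmic thin-shell/KLS bound, combined with the standard fact that the KLS (Cheeger) constant controls the Poincaré constant up to a square. If one only had a bound on the Cheeger constant $\psi_d$, then the Cheeger–Maz'ya inequality $C_P\le 4\psi_d^2$ would convert $\psi_d=O(\sqrt{\log d})$ into $C_P=O(\log d)$. Even a weaker $\psi_d=O(\log d)$ would still give exactly the $O(\log^2 d)$ claimed, which is presumably why the lemma is stated with that factor. Degree $k$ plays no role, since the inequality holds for all smooth functions.
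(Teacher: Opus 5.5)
Your proposal is correct and follows essentially the same route as the paper: an affine reduction to isotropic position (your bound $\norm{\nabla q}_2^2\le \tfrac32\norm{\nabla p}_2^2$ is actually cleaner than the paper's), followed by the polylogarithmic Poincaré constant coming from recent KLS progress. One attribution point: the paper builds that constant from the $O(1)$ thin-shell bound of Klartag--Lehec plus Eldan's and Milman's reductions, and this route gives $C_P = O(\log d \sum_{i\le d} 1/i) = O(\log^2 d)$, not the $O(\log d)$ you credit to it. The $O(\log d)$ Poincaré bound is a separate result of Klartag, and, as your fallback remark already says, $O(\log^2 d)$ is all the lemma needs.
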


\begin{proof}[Proof of Lemma~\ref{lm gradient}]
We without loss of generality assume $D_\x$ is isotropic. To see this, let $\bm{\mu} = \E_{\x\sim D_\x}[\x]$ and $\bm \Sigma$ be the covariance matrix of $\x$. Then $\x' = \bm \Sigma^{-1/2}(\x-\bm \mu)$ is isotropic.
Denote by $D_\x'$, the distribution of $\x'$ and write $g(\x') = p(\Sigma^{1/2}\x'+\bm\mu)$.
We have $\var_{\x\sim D_\x}[p(\x)] = \var_{\x'\sim D_{\x'}}[g(\x')]$
and $\E_{\x'\sim D_{\x'}}[\norm{\nabla g(\x')}_2^2] = \E_{\x'\sim D_{\x'}}[\norm{\Sigma^{1/2}\nabla p(\Sigma^{1/2}(\x'+\bm\mu))}_2^2] \le 2\E_{\x \sim D_\x}[\norm{\nabla p(\x)}_2^2]$.

    By the Poincaré inequality \cite{lee2018kannan}, we know that there is a constant $C(D)>0$ such that for every polynomial $p(\x)$, we have 
    \begin{align*}
        \var_{\x \sim D_\x}[p(\x)] \le C(D) \E_{\x \sim D_\x}[\norm{\nabla p(\x)}_2^2].
    \end{align*}
By Theorem 1.5 in \cite{milman2009role} and Theorem 1.1 in \cite{eldan2013thin}, there is a universal constant $C>0$ such that
we have 
\begin{align}\label{eq small shell constant}
    C(D) \le C \log d \sum_{i=1}^d \frac{\sigma^2_i}{i}.
\end{align}
Here, $\sigma_i^2 = \sup_{D_\x} \E_{\x\sim D_\x} [(\norm{\x}_2-\sqrt{i})^2]$, where the supremum is taken over all isotropic logconcave distributions over $\R^i$. By Theorem 1.1 in \cite{klartag2025thin}, there is a universal constant $c>0$ such that $\sigma^2_i<c, \forall i$. Thus, by \eqref{eq small shell constant}, we get
\begin{align*}
        \var_{\x \sim D_\x}[p(\x)] \le O(\log^2 d) \E_{\x \sim D_\x}[\norm{\nabla p(\x)}_2^2]) \;.
\end{align*}
This completes the proof. 
\end{proof}

Using Lemma~\ref{lm gradient}, we establish the following result that bounds the $L_2$ norm of a polynomial $p$ in terms of the norm of its gradient under a logconcave distribution $D_\x$.

\begin{lemma}\label{lm gradient norm}
    Let $D_\x$ be a $1/2$-isotropic logconcave distribution over $\R^d$ 
    and let $p: \R^d \to \R$ be a degree-$k$ polynomial. 
    Denote by $F(p) \eqdef (\E_{\x\sim D_\x}[\nabla^i p(\x)])_{i=0}^k$, where 
    $\nabla^i p(\x)$ is the $i$-th order derivative tensor of $p$. Then 
    $
        \E_{\x \sim D_\x}[p^2(\x)] \le O(\log d)^{2k} \norm{F(p)}_F^2 \;.$
\end{lemma}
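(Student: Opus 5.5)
We argue by induction on the degree $k$, with Lemma~\ref{lm gradient} as the only analytic input. Since $\E[p^2]=\var[p]+(\E[p])^2$ and $\E[p]=F(p)_0$ is the zeroth component of $F(p)$, it is enough to bound $\var[p]$. Lemma~\ref{lm gradient} gives
\begin{equation*}
\var_{\x\sim D_\x}[p(\x)]\le O(\log^2 d)\sum_{j=1}^d \E_{\x\sim D_\x}[(\partial_j p(\x))^2].
\end{equation*}
Each $\partial_j p$ has degree at most $k-1$, so the induction hypothesis applies to it and gives $\E[(\partial_j p)^2]\le O(\log d)^{2(k-1)}\norm{F(\partial_j p)}_F^2$.

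The step that needs care is summing over $j$. The $i$-th component of $F(\partial_j p)$ is $\E[\nabla^i\partial_j p]$, which is the slice of the $(i+1)$-th order tensor $\E[\nabla^{i+1}p]$ with one index fixed to $j$. Summing the squared Frobenius norms of these slices over $j$ recovers $\norm{\E[\nabla^{i+1}p]}_F^2$ exactly. Hence
\begin{equation*}
\sum_{j=1}^d\norm{F(\partial_j p)}_F^2=\sum_{i=1}^{k}\norm{\E[\nabla^i p]}_F^2\le\norm{F(p)}_F^2.
\end{equation*}

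Putting the pieces together,
\begin{equation*}
\E[p^2]\le (F(p)_0)^2+O(\log^2 d)\,O(\log d)^{2(k-1)}\norm{F(p)}_F^2\le O(\log d)^{2k}\norm{F(p)}_F^2.
\end{equation*}
The constant inside the $O(\cdot)$ must be fixed once, not allowed to grow with each induction step. Write the bound as $(C_0\log d)^{2k}$, with $C_0$ at least the Poincaré constant from Lemma~\ref{lm gradient} and at least $1$. Then the additive $(F(p)_0)^2$ term is absorbed because $1+C_0^2\log^2 d\cdot(C_0\log d)^{2k-2}\le (C_0\log d)^{2k}$ when $C_0\log d\ge 2$, say. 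The cases $d=1,2$, where $\log d$ is too small, are handled by replacing $\log d$ with $\max(\log d,2)$ throughout.

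The base case $k=0$ is immediate: $p$ is a constant and $\E[p^2]=(F(p)_0)^2$.

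I expect no serious obstacle here. The only points to check are that the induction hypothesis applies to $\partial_j p$, which holds because the distribution $D_\x$ is unchanged and only the degree drops, and the tensor-slicing identity used for the sum over $j$.
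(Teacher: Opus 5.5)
Your argument is correct and is essentially the paper's proof: the paper writes $\E\norm{\nabla p}_2^2=\E_{\v\sim N(0,\mathbf{I})}\E[(\v\cdot\nabla p)^2]$ and applies the induction hypothesis to the degree-$(k-1)$ polynomial $\v\cdot\nabla p$. That is just the rotation-averaged form of your coordinate-slicing identity $\sum_j\norm{F(\partial_j p)}_F^2=\sum_{i\ge 1}\norm{\E[\nabla^i p]}_F^2$.

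One slip: your displayed absorption inequality $1+(C_0\log d)^{2k}\le(C_0\log d)^{2k}$ is false as written. No absorption is actually needed, because $(F(p)_0)^2$ and $\sum_{i\ge1}\norm{\E[\nabla^i p]}_F^2$ are complementary parts of $\norm{F(p)}_F^2$. With $C_P$ the Poincar\'e constant of Lemma~\ref{lm gradient}, this gives $\E[p^2]\le\max\{1,\,C_P\log^2 d\,(C_0\log d)^{2k-2}\}\norm{F(p)}_F^2\le(C_0\log d)^{2k}\norm{F(p)}_F^2$, once $C_0^2\ge C_P$ and $C_0\log d\ge 1$.
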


\begin{proof}[Proof of Lemma~\ref{lm gradient norm}]
We prove Lemma~\ref{lm gradient norm} using an induction on $k$.
If $k=0$, we have $\E_{\x \sim D_\x}[p^2(\x)] = \norm{F(p)}_F^2$. 
Suppose the statement holds for $k-1$, we show this for $k$. 
By Lemma~\ref{lm gradient}, for some universal constant $C>0$, we have
    \begin{align*}
        \E_{\x \sim D_\x}[p^2(\x)] & = \E_{\x \sim D_\x} [p(\x)]^2 + \var_{\x \sim D_\x}[p(\x)] \\
        & \le  \E_{\x \sim D_\x} [\nabla^0 p(\x)]^2 + C \log^2 d \E_{\x \sim D_\x}[\norm{\nabla p(\x)}_2^2] \\
        & = \E_{\x \sim D_\x} [\nabla^0 p(\x)]^2 + C \log^2 d \E_{\v\sim N(0,\mathbf{I})} \E_{\x \sim D_\x}[(\v\cdot \nabla p(\x))^2] \\
        & \le \E_{\x \sim D_\x} [\nabla^0 p(\x)]^2 + C^{2k-1}\log^{2k} d \E_{\v\sim N(0,\mathbf{I})}[ \norm{F(\v\cdot \nabla p(\x))}_F^2 ]\\
        & \le O (\log d)^{2k} \norm{F(p)}_F^2 \;.
    \end{align*}
Here the second inequality follows from the induction on the degree $k-1$ 
polynomial $\v \cdot \nabla p(\x)$.
\end{proof}

\subsection{Novel Dimension-Reduction Approach for Logconcave Distributions}\label{sec dimension reduction}

In this subsection, we design and analyze a new dimension-reduction approach 
to obtain our discrepancy finding algorithm. 
The key module of our dimension-reduction approach is the following result. 



\begin{algorithm}[htbp]
		\caption{\textsc{EffectiveSubspaceIdentifying}}\label{alg subspace finding v1}
		\begin{algorithmic} [1]
\State \textbf{Input:}  Sample access to $D_\x$, accuracy parameter $\eps \in (0,1)$, confidence parameter $\delta \in (0,1)$, degree $m$-polynomial $Q(\x)$ with a small mean, degree parameter $k$
\State \textbf{Output:} $V$, a low-dimensional subspace

\medskip

\State Let $\eta=\eps2^{-O(k\log k)} \norm{Q(\x)}_{D_\x,2}$ and $\eta'= \norm{Q(\x)}_{D_\x,2} d^{-k}\eps/(2\sqrt{k!})$.
\State For a degree-$k$ polynomial $p$, define $F(p):= \left( \E_{\x\sim D_\x}[\nabla^i p(\x)]\right)_{i=0}^k.$
\State For each monomial $\x^\zeta$ of degree at most $k$, where $\zeta$ is the multi index, use empirical mean estimation to find $\mu_\zeta \in \R$ and a sequence of symmetric tensors $Z_\zeta$ 
such that $\abs{\mu_\zeta - \E_{\x \sim D_\x}[Q(\x)\x^{\zeta}]}<\eta'/4$ and $\norm{Z_\zeta-F(\x^\zeta)}_F \le O(\eta'/(C^k\log^k(d)\norm{Q}_{D_\x,2}) )$.
\State Find tensor $\hat{\mathbf{T}}_1,\dots,\hat{\mathbf{T}}_k$ such that $\norm{\hat{\mathbf{T}}_i}_F \le 2C^k \log^k(d) \norm{Q}_{D_\x,2}$, and $\abs{\mu_\zeta-\sum_{i=1}^k\hat{\mathbf{T}}_i \cdot Z^{(i)}_\zeta} \le \eta'/2$, for every $\zeta$, with $\abs{\zeta} \le k$, using convex programming.
\State For each $\hat{\mathbf{T}}_i$, treat it as a bilinear form over $\R^d \times \text{sym}(\R^{d^{i-1}})$ and compute the singular vectors $\v^{(i,1)},\dots,\v^{(i,l)} \in \R^d$ of $\hat{\mathbf{T}}_i$ with corresponding singular value at least $\eta$.
\State Return $V$, the subspace spanned by these singular vectors.
\end{algorithmic}
\end{algorithm}


\begin{lemma}[Effective Subspace Identification]\label{lm subspace finding v1}
    There is an algorithm that, given sample access to any $1/2$-isotropic logconcave distribution $D_\x$ over $\R^d$, has the following performance guarantee. 
    For any given parameters $m, k \in \N$, $\eps,\delta \in (0,1),$ and for any 
    degree $m$-polynomial $Q: \R^d \to \R$ with 
    $\abs{\E_{\x\sim D_\x}[Q(\x)]}<\eps(dk)^{-O(k)}\norm{Q(\x)}_{D_\x,2}$,
    the algorithm draws $M=\poly(d^{k},2^{O(k\log k+m)},1/\eps,\log(1/\delta))$ samples from $D_\x$, 
    runs in $\poly(M)$ time, and outputs a subspace $V$ of dimension 
    $2^{O(k\log k)}\log^{2k}(d)/\eps^2$ such that with probability at least $1-\delta$ the following holds: for every collection of unit vectors $\{\u^{(i)}\}_{i=1}^k$  and $j \in [k]$,
    that are perpendicular to $V$, we have 
    $\abs{\E_{\x \sim D_\x}\left[Q(\x)\prod_{i=1}^j(\u^{(i)}\cdot\x)\right]}<\eps \norm{Q(\x)}_{D_\x,2}$.
\end{lemma}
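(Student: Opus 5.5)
The plan is to analyze Algorithm~\ref{alg subspace finding v1}. The driving idea is to represent the linear functional $p \mapsto \E_{\x\sim D_\x}[Q(\x)p(\x)]$ on degree-$k$ polynomials through the derivative moments $F(p)$. The steps are: show that this functional factors through $F$ with tensors of controlled Frobenius norm, find such tensors approximately by convex programming, and then read off the subspace $V$ from the large singular vectors of the tensors.

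\textbf{Step 1 (existence of small tensors).} Consider the map $p\mapsto F(p)$ on the space $\mathcal{P}_k$ of degree-$\le k$ polynomials. By Lemma~\ref{lm gradient norm}, $\|p\|_{D_\x,2}\le O(\log d)^k\|F(p)\|_F$, so $F$ is injective. Moreover, the functional $\ell(p)=\E[Qp]$ satisfies $|\ell(p)|\le \|Q\|_{D_\x,2}\|p\|_{D_\x,2}\le O(\log d)^k\|Q\|_{D_\x,2}\|F(p)\|_F$. Thus $\ell$ defines a bounded linear functional on the image $F(\mathcal{P}_k)$, viewed as a subspace of the space of tensor sequences with the Frobenius inner product. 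By Riesz representation (extending by zero on the orthogonal complement), there exist symmetric tensors $\mathbf{T}_0,\dots,\mathbf{T}_k$ with $\sum_i\|\mathbf{T}_i\|_F^2\le C^{2k}\log^{2k}(d)\|Q\|_{D_\x,2}^2$ and $\ell(p)=\sum_i \mathbf{T}_i\cdot\E[\nabla^i p]$. The $i=0$ term equals $\mathbf{T}_0\E[p]$. The hypothesis that $|\E[Q]|$ is tiny lets us absorb or drop $\mathbf{T}_0$ up to negligible error; alternatively, we restrict to mean-zero products, which is the reason the algorithm sums over $i\ge1$.

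\textbf{Step 2 (estimation).} The values $\E[Q\x^\zeta]$ and $F(\x^\zeta)$ are estimated by empirical means. Polynomial concentration under logconcave measures (Carbery--Wright, or hypercontractivity) gives that $\poly(d^k,2^{O(k\log k+m)},1/\eps,\log(1/\delta))$ samples suffice to reach the accuracies $\eta'$ stated in the algorithm. The true $\mathbf{T}_i$ from Step 1 form a feasible point of the convex program, so the program returns some $\hat{\mathbf{T}}_i$. By linearity over the monomial basis, whose coefficients for a product of $j\le k$ unit linear forms are bounded by $d^{k/2}\sqrt{k!}$-type quantities (hence the choice $\eta'\propto d^{-k}/\sqrt{k!}$), we get the following. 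For every $p=\prod_{i\le j}(\u^{(i)}\cdot\x)$,
\[
\Bigl|\E[Qp]-\sum_{i=1}^k\hat{\mathbf{T}}_i\cdot\E[\nabla^ip]\Bigr|\le \eps\|Q\|_{D_\x,2}/2 .
\]

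\textbf{Step 3 (subspace).} Flatten $\hat{\mathbf{T}}_i$ into a matrix $\R^d\times\R^{d^{i-1}}$ and let $V$ be the span of its left singular vectors with singular value at least $\eta$. Since $\|\hat{\mathbf{T}}_i\|_F^2$ bounds the sum of squared singular values, the dimension of $V$ is at most $\sum_i\|\hat{\mathbf{T}}_i\|_F^2/\eta^2=2^{O(k\log k)}\log^{2k}(d)/\eps^2$. Now take unit vectors $\u^{(i)}\perp V$ and set $p=\prod_{i\le j}(\u^{(i)}\cdot\x)$. The tensor $\E[\nabla^ip]$ is a sum of at most $j!/(j-i)!$ terms, each of the form (symmetrization of) $\u^{(a_1)}\otimes\cdots\otimes\u^{(a_i)}$ times $\E[\prod_{b\notin a}(\u^{(b)}\cdot\x)]$, and the latter is bounded by $2^{O(k\log k)}$ by logconcave moment bounds. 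Because $\hat{\mathbf{T}}_i$ is symmetric, we may contract it along the first index with a vector $\u^{(a_1)}\perp V$. This removes all singular directions of size at least $\eta$, so $|\hat{\mathbf{T}}_i\cdot(\u^{(a_1)}\otimes \mathbf{w})|\le\eta\|\mathbf{w}\|$. Summing over terms gives $2^{O(k\log k)}\eta\le\eps\|Q\|_{D_\x,2}/2$ by the choice of $\eta$, and combined with Step 2 this gives the claim.

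\textbf{Main obstacle.} I expect the main difficulty to be Step 1 together with the $i=0$ term. One must check that the Riesz representation can be chosen with symmetric tensors, and that the convex program's solution $\hat{\mathbf{T}}$, which is only approximately correct on monomials, still transfers to arbitrary products of unit linear forms. That transfer requires controlling the monomial-coefficient blowup, which is where the $d^{-k}$ factor in $\eta'$ comes in, and using the small-mean hypothesis on $Q$ to make the $\E[p]$ term negligible.
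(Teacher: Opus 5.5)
Your proposal is correct and takes essentially the same route as the paper. You represent $p\mapsto\E[Qp]$ through $F(p)$ with Frobenius norms bounded via Lemma~\ref{lm gradient norm}, recover approximate tensors $\hat{\mathbf{T}}_i$ by a norm-constrained convex program over monomials, and take the large left singular vectors of the flattenings. Your Riesz-representation version of Step~1 is equivalent to the paper's invertibility-of-$F$ duality argument, and it makes symmetry of the representer automatic because the image of $F$ consists of symmetric tensors. The $i=0$ term is handled exactly as in the paper, since $\mathbf{T}_0=\E[Q]$ is controlled by the small-mean hypothesis.
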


\begin{proof}[Proof of Lemma~\ref{lm subspace finding v1}]
We show that Algorithm~\ref{alg subspace finding v1} satisfies the statement.
To this end, we define the following linear map from $P_k$---the space of degree-$k$ polynomials 
over $\R^d$---to $\bigoplus_{i=0}^k \text{sym}(\R^{d^i})$, the space of vectors, whose $i$th entry is an $i$th order symmetric tensor,
\begin{align*}
    F(p) \eqdef \left( \E_{\x\sim D_\x}[\nabla^i p(\x)]\right)_{i=0}^k.
\end{align*}
We claim that $F(p)$ is an invertible linear map. To see this, consider any polynomial $p(\x) = \sum_{i=0}^k \mathbf{A}_i \cdot \x^{\otimes i}$ such that $F(p) = 0$. Notice that $\nabla^k p(\x) = k! \mathbf{A}_k$. Since $F(p) = 0$, we know that $\nabla^k p(\x) = 0$, which implies $\mathbf{A}_k=0$. By induction, we know that $\mathbf{A}_i=0, i \in [k]$ and thus $p(\x) = 0$. So, $F(p)$ is an invertible linear map.

We consider the function over $P_k$, 
$G(p):= \E_{\x\sim D_\x}[Q(\x)p(\x)] = \E_{\x\sim D_\x}[Q(\x)F^{-1}\circ F(p)]$ which 
is a linear functional over $\text{sym}(\R^{d^i})$. 
This implies that there exist tensors $\mathbf{T}_i \in \text{sym}(\R^{d^i})$, 
for $i=0,\dots,k$, such that 
\begin{align*}
    \E_{\x\sim D_\x}[Q(\x)p(\x)] = \sum_{i=0}^k \mathbf{T}_i \cdot \E_{\x\sim D_\x}[\nabla^i p(\x)] \;.
\end{align*}
In particular, $\abs{\mathbf{T}_0} < \eps(dk)^{-O(k)}\norm{Q(\x)}_{D_\x,2}$, because by 
setting $p(\x)=1$,   $\abs{\E_{\x\sim D_\x}[Q(\x)]}<\eps(dk)^{-O(k)} \norm{Q(\x)}_{D_\x,2}$. 
This implies that for every monomial $\x^\zeta$, with degree at most $k$, it holds 
$\abs{\E_{\x\sim D_\x}[\x^\zeta]\mathbf{T}_0}<\eta'$.


We claim that $\norm{\mathbf{T}_i}_F \le C^k \log^k(d) \norm{Q}_{D_\x,2}$ for $i \in [k]$. Suppose this is not correct, since $F(p)$ is an invertible linear map, we choose $p(\x)$ such that $F(p) =(\mathbf{T}_i)_{i=0}^k$. By Lemma~\ref{lm gradient norm}, we have
\begin{align*}
    \E_{\x\sim D_\x}[Q(\x)p(\x)] \le \norm{Q}_{D_\x,2}\norm{p}_{D_\x,2} \le C^k \log^{k} d \norm{F(p)}_F\norm{Q}_{D_\x,2} \;.
\end{align*}
On the other hand,
\begin{align*}
    \E_{\x\sim D_\x}[Q(\x)p(\x)] =  \sum_{i=0}^k\mathbf{T}_i \cdot \E_{\x\sim D_\x}[\nabla^i p(\x)]  = \sum_{i=0}^k \norm{\mathbf{T}_i}_F^2 \;.
\end{align*}
Thus, we have $\norm{F(p)}_F \le C^k \log^k d \norm{Q}_{D_\x,2}$, which contradicts the fact that $\norm{F(p)}_F > C^k \log^k d \norm{Q}_{D_\x,2}$.

Suppose that we have found $\hat{\mathbf{T}}_1,\dots,\hat{\mathbf{T}}_k$ such that for every $\hat{\mathbf{T}}_i$, $\norm{\hat{\mathbf{T}}_i}_F \le 2C^k \log^k(d) \norm{Q}_{D_\x,2}$ and for every monomial $\x^\zeta$ with degree at most degree $k$, 
where $\zeta$ is a multi-index, we have $\abs{\E_{\x \sim D_\x}[Q(\x)\x^\zeta]-\sum_{i=1}^k\hat{\mathbf{T}}_i \cdot \E_{\x\sim D_\x}[\nabla^i \x^\zeta]} \le \eta'$. 
Notice that such a set of tensors always exists due to the existence of $\mathbf{T}_0,\dots,\mathbf{T}_k$. 
To complete the proof, we first show that we can efficiently find the desired subspace $V$ through these tensors and then analyze the complexity of finding these tensors.

For each $\hat{\mathbf{T}}_i$, we treat it as a bilinear form over $\R^d \times \text{sym}(\R^{d^{i-1}})$ and compute the singular vectors $\v^{(i,1)},\dots,\v^{(i,l)} \in \R^d$ of $\hat{\mathbf{T}}_i$ with corresponding singular value at least $\eta$.
Since $\ell \le \norm{\hat{\mathbf{T}}_i}_F^2/\eta^2$, and $\norm{\hat{\mathbf{T}}_i}_F \le 2C^k \log^k(d) \norm{Q(\x)}_{D_\x,2}$,
we know that the dimension of $V$ is at most $2^{O(k\log k)}\log^{2k}(d)/\eps^2$.
We claim that $V$ satisfies the statement of Lemma~\ref{lm subspace finding v1}. 

Let $\u^{(1)},\dots,\u^{(k)} \perp V$ be unit vectors and let $p(\x) = \prod_{i=1}^j(\u^{(i)}\cdot\x)$. We have 
\begin{align*}
    \abs{\E_{\x\sim D_\x}[Q(\x)p(\x)]} & \le  \abs{ \sum_{i=1}^k \hat{\mathbf{T}}_i \cdot \E_{\x\sim D_\x}[\nabla^i p(\x)]} + \abs{\sum_{i=1}^k \hat{\mathbf{T}}_i \cdot \E_{\x\sim D_\x}[\nabla^i p(\x)]-\E_{\x\sim D_\x}[Q(\x)p(\x)]} \\
    & = \abs{ \sum_{i=1}^k \sum_{j = 1}^k \u^{(j)} \cdot \hat{\mathbf{T}}_i \cdot \E_{\x\sim D_\x}[\nabla^{i-1} p(\x)/(\u^{(j)}\cdot \x)] } + \abs{\sum_{i=1}^k \hat{\mathbf{T}}_i \cdot \E_{\x\sim D_\x}[\nabla^i p(\x)]-\E_{\x\sim D_\x}[Q(\x)p(\x)]} \\
    & \le \eta \sum_{i=1}^k \sum_{j = 1}^k \norm{\E_{\x\sim D_\x}[\nabla^{i-1} p(\x)/(\u^{(j)}\cdot \x)]}_F + \abs{\sum_{i=1}^k \hat{\mathbf{T}}_i \cdot \E_{\x\sim D_\x}[\nabla^i p(\x)]-\E_{\x\sim D_\x}[Q(\x)p(\x)]}.
\end{align*}
We next bound the term $\norm{\E_{\x\sim D_\x}[\nabla^{i-1} p(\x)/(\u^{(j)}\cdot \x)]}_F =\norm{\E_{\x\sim D_\x}[\nabla^{i-1} \prod_{\ell \neq j}(\u^{(\ell)}\cdot\x)]}_F $. We have
\begin{align*}
    \norm{\E_{\x\sim D_\x}[\nabla^{i-1} \prod_{\ell \neq j}(\u^{(\ell)}\cdot\x)]}_F & \le k! \norm{\E_{\x\sim D_\x}[\sum_{ \substack {S \subseteq [k]\setminus\{j\} \\ \card{S}=i-1}} \prod_{\ell \not\in S}(\u^{(\ell)}\cdot\x)\text{sym}\bigotimes_{\ell \in S}\u^{(\ell)}]}_F \\
    & \le k!\sum_{ \substack {S \subseteq [k]\setminus\{j\} \\ \card{S}=i-1}} \abs{\E_{\x\sim D_\x} [\prod_{\ell \not\in S}(\u^{(\ell)}\cdot\x)]} \le k!\binom{k-1}{i-1}\left(2(k-i)\right)^{k-i}.
\end{align*}
Here, the first inequality follows from the fact that $\norm{\text{sym}\bigotimes_{\ell \in S}\u^{(\ell)}}_F \le 1$ and the second inequality follows from Fact~\ref{fact moment logconcave}.
This implies that 
\begin{align*}
    \sum_{i=1}^k \sum_{j = 1}^k \norm{\E_{\x\sim D_\x}[\nabla^{i-1} p(\x)/(\u^{(j)}\cdot \x)]}_F 
    &\le k!k \sum_{i=1}^k\binom{k-1}{i-1}\left(2(k-i)\right)^{k-i} \\ 
    &= k!k \sum_{i=0}^{k-1}\binom{k-1}{i}\left(2i\right)^{i} \\ 
    &\le k! k^2(2k^2)^k=  2^{O(k\log(k))} \;.
\end{align*}
On the other hand, we notice that we can expand $p(\x)$ as a linear combination of monomials, $p(\x) = \sum_{\zeta} c_\zeta \x^\zeta$.
By linearity, we have 
\begin{align*}
    \abs{\sum_{i=1}^k \hat{\mathbf{T}}_i \cdot \E_{\x\sim D_\x}[\nabla^i p(\x)]-\E_{\x\sim D_\x}[Q(\x)p(\x)]} & = \abs{\sum_\zeta c_\zeta \left(\sum_{i=1}^k \hat{\mathbf{T}}_i \cdot \E_{\x\sim D_\x}[\nabla^i \x^\zeta]-\E_{\x\sim D_\x}[Q(\x)\x^\zeta]\right)} \\
    & \le \norm{c}_2 \sqrt{\sum_{\zeta} \left(\sum_{i=1}^k \hat{\mathbf{T}}_i \cdot \E_{\x\sim D_\x}[\nabla^i \x^\zeta]-\E_{\x\sim D_\x}[Q(\x)\x^\zeta]\right)^2} \\
    & \le \sqrt{k!}d^{k}\eta'.
\end{align*}
Here, in the second inequality, we use the fact that the coefficient vector 
$c = (c_\zeta)$ comes from  expanding $\prod_{i=1}^j(\u^{(i)}\cdot\x)$ 
and has $\ell_2$-norm at most $\sqrt{k!}$. 
By the choice of $\eta$ and $\eta'$, we have 
$\abs{\E_{\x\sim D_\x}[Q(\x)p(\x)]} \le \eps\norm{Q}_{D_\x,2}$.

To conclude the proof, it remains to upper bound the sample complexity of finding the tensors $\hat{\mathbf{T}}_1,\dots,\hat{\mathbf{T}}_k$. We know that for each monomial $\x^\zeta$, by Fact~\ref{fact moment logconcave} and Fact~\ref{fact polynomial growth}, 
there is some constant $C>0$ such that 
\begin{align*}
    \E_{\x \sim D_\x}[Q^2(\x)(\x^{\zeta})^2] \le \sqrt{\E_{\x \sim D_\x}[Q^4(\x)]\E_{\x \sim D_\x}[(\x^{\zeta})^4]} \le C^{2m}(8k)^{4k} \norm{Q}_{D_\x,2}^2.
\end{align*}
By Chebyshev's inequality, we can estimate some $\mu_\zeta \in \R$ such that $\abs{\mu_\zeta - \E_{\x \sim D_\x}[Q(\x)\x^{\zeta}]}<\eta'/4$ with $d^{O(k)}2^{O(k\log k+m)}/\eps^2$ examples from $D_\x$. Furthermore, denote by $S_\zeta$ the set of indices that are involved in $\x^\zeta$,
for every $i$, 
we have 
\begin{align*}
    \E_{\x\sim D_\x}[\| \nabla^i \x^\zeta\|_F^2] 
    &\le (k!)^2\E_{\x\sim D_\x}\left[\Big\| \sum_{ \substack {S \subseteq S_\zeta \\ \card{S}=i}}\prod_{\ell \not\in S}(\mathbf{e}_\ell\cdot\x)\text{sym}\bigotimes_{\ell  \in S}\mathbf{e}_\ell \Big\|_F^2 \right]  \\
    &\le (k!)^2 \sum_{ \substack {S \subseteq S_\zeta \\ \card{S}=i}} \E_{\x\sim D_\x}[\prod_{\ell \not\in S}(\mathbf{e}_\ell\cdot\x)^2] \\
    &\le 2^{k\log k}\binom{k}{i}(2C(k-i))^{2(k-i)}.
\end{align*}
By Chebyshev's inequality, we can estimate a tensor $Z^{(i)}_\zeta$ such that $\norm{Z^{(i)}_\zeta-\E_{\x\sim D_\x}[\nabla^i \x^\zeta]}_F<\eta'/(C^k \norm{Q}_{D_\x,2} 4\sqrt{k})$ using $O\left((Ck)^k\log^{2k}(d) \E_{\x\sim D_\x}[\norm{\nabla^i \x^\zeta}_F^2]/(\eta')^2\right)$ examples from $D_\x$. This implies that $Z=(Z^{(i)}_\zeta)_{i=0}^k$ is $\eta'/(C^k \norm{Q}_{D_\x,2} 4\log^k(d))$ close to $F(\x^\zeta)$ in Frobenius norm.

By the triangle inequality, any tensors $\hat{\mathbf{T}}_1,\dots,\hat{\mathbf{T}}_k$ such that 
$\norm{\hat{\mathbf{T}}_i}_F \le 2C \log^k(d)\norm{Q}_{D_\x,2}$ for all $i \in [k]$, 
and $\abs{\mu_\zeta-\sum_{i=0}^k\hat{\mathbf{T}}_i \cdot Z^{(i)}_\zeta} \le \eta'/2$ must satisfy
$\abs{\E_{\x \sim D_\x}[Q(\x)\x^\zeta]-\sum_{i=1}^k\hat{\mathbf{T}}_i \cdot \E_{\x\sim D_\x}[\nabla^i \x^\zeta]} \le \eta'$, 
for every monomial $\x^\zeta$. Since $\E_{\x\sim D_\x}[Q(\x)p(\x)] =  \sum_{i=1}^k\mathbf{T}_i \cdot \E_{\x\sim D_\x}[\nabla^i p(\x)]$, 
we know that such a set of tensors exists due to $\mathbf{T}_1,\dots,\mathbf{T}_k$ and thus we can find $\hat{\mathbf{T}}_1,\dots,\hat{\mathbf{T}}_k$ through convex programming.
Furthermore, the total number of examples used by Algorithm~\ref{alg subspace finding v1} is $d^{O(k)}2^{O(k\log k+m)}/\eps^2$.
\end{proof}


Using Lemma~\ref{lm subspace finding v1} as a subroutine, 
we give an algorithm that recursively calls Lemma~\ref{lm subspace finding v1} 
to build a low-dimensional subspace $V$ that approximately preserves 
the maximum of the quantity 
$\E_{\x \sim D_\x}[Q(\x)\prod_{j=1}^k(\u^{(j)}\cdot\x)]$, 
when $\u^{(1)},\dots,\u^{(k)}$ are taken from $V$.
Specifically, we establish the following.

\begin{lemma}\label{lm subspace v3}
There is an algorithm with the following performance guarantee. 
Given sample access to a $1/2$-isotropic logconcave distribution $D_\x$,
parameters $m,k \in \N$, $\eps,\delta \in (0,1),$ and any 
degree $m$-polynomial $Q: \R^d \to \R$, it draws 
$M = \poly(d^k, C_kN_{k,m}(\eps),\log(1/\delta))$ examples from 
$D_\x$ over $\R^d$, 
runs in $\poly(M)$ time, and outputs a subspace $V$ of dimension $N_{k,m}(\eps)$ 
such that with probability at least $1-\delta$,
    \begin{align*}
       S_{k,\R^d}(Q) \le C_{k}S_{k,V}(Q) + \eps \norm{Q}_{D_\x,2},
    \end{align*}
where for $k \in \N$, subspace $V \subseteq \R^d$ and a function $Q$, we denote 
\begin{align*}
 S_{k,V}(Q) \eqdef \sup_{0 \le i \le k}\sup_{\u^{(1)},\dots,\u^{(i)} \in V \cap \s^{d-1}} \abs{\E_{\x \sim D_\x}[Q(\x)\prod_{j=1}^i(\u^{(j)}\cdot\x)]}.   
\end{align*}
For $m,k \in \N$ and for $\eps \in (0,1)$, $C_k = 2^{O(k^2)}$ and $N_{k,m}(\eps) = (2^{O(k+m)}\log^k(d)/\eps)^{2^{O(k)}}$.
\end{lemma}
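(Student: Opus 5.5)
The plan is to build $V$ by $k$ rounds of Lemma~\ref{lm subspace finding v1}, following the recursive scheme sketched in the overview. Set $V_0=\{0\}$. In round $r$, fix an orthonormal basis $\g^{(1)},\dots,\g^{(N_{r-1})}$ of the current subspace $W_{r-1}=V_0\oplus\cdots\oplus V_{r-1}$. For every tuple $(\g^{(a_1)},\dots,\g^{(a_s)})$ with $s\le r-1$, apply Lemma~\ref{lm subspace finding v1} to the polynomial $Q_{a}(\x)=Q(\x)\prod_{t=1}^s(\g^{(a_t)}\cdot\x)$, which has degree at most $m+k$, with accuracy $\eps_r$. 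Let $V_r$ be the span of all returned subspaces. The mean hypothesis of Lemma~\ref{lm subspace finding v1} may fail. To handle this I would split off the constant: replace $Q_a$ by $Q_a-\E[Q_a]$ and account for the constant term separately. This is harmless because $\abs{\E[Q_a]}\le S_{k,W_{r-1}}(Q)$, and $\E[\prod_j(\u^{(j)}\cdot\x)]$ is bounded by Fact~\ref{fact moment logconcave}. Output $V=W_k$.

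\textbf{Dimension and samples.} By Fact~\ref{fact polynomial growth} and Fact~\ref{fact moment logconcave}, $\norm{Q_a}_{D_\x,2}\le 2^{O(k+m)}\norm{Q}_{D_\x,2}$. Writing $N_r=\dim W_r$, the recursion is roughly $N_r\le N_{r-1}+N_{r-1}^{k}\,2^{O(k\log k)}\log^{2k}(d)/\eps_r^2$. We will choose $\eps_r=\eps\,2^{-O(k^2)}/\prod_{t<r}\sqrt{N_t}^{\,?}$. The accuracy has to absorb the coefficients $\sum_a\abs{\prod a_i}\le N^{s/2}$ that arise when a unit vector in $W_{r-1}$ is expanded in the basis. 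Iterating $k$ times gives $N_k=(2^{O(k+m)}\log^k d/\eps)^{2^{O(k)}}$. The sample bound follows by a union bound over the at most $N_k^{k}$ calls.

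\textbf{Correctness.} Fix $i\le k$ and unit vectors $\u^{(1)},\dots,\u^{(i)}$. Decompose each as $\u^{(j)}=\u^{(j)}_V+\u^{(j)}_{V^\perp}$ and expand the multilinear form into $2^i$ terms, each indexed by a set $P\subseteq[i]$ of positions using the $V$-part. Terms with $P=[i]$ are bounded by $S_{k,V}(Q)$ after rescaling by norms at most $1$. For a term with $P\ne[i]$, peel coordinates one at a time. Expand the $V$-components in the nested bases, so that components in $W_{r-1}$ become basis vectors $\g^{(a)}$. The claim to show inductively is that
\[
\abs{\E[Q_a(\x)\prod_{j\notin P}(\w^{(j)}\cdot\x)]}\le \eps_r\norm{Q_a}_{D_\x,2}
\]
whenever all $\w^{(j)}\perp V_r$, which is exactly the conclusion of Lemma~\ref{lm subspace finding v1}. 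The subtlety is that some $V$-components lie in $V\setminus W_{r-1}$ rather than in earlier layers. To handle this, I would order by the layer index: the layer reached by the $r$th peeled vector determines which $V_r$ the remaining $V^\perp$-vectors must avoid, and they avoid all of $V$. A pigeonhole step over the $k$ layers is needed, since there are only $i-1<k$ peeled vectors. It guarantees some round $r$ in which no peeled vector lands in $V_r$ "new" directions. Then the remaining factors split into vectors perpendicular to $V_r$ (contributing an $\eps_r$ error) plus vectors in $W_{r-1}$, which are themselves expanded. Summing the $2^i$ terms with constants $C_k=2^{O(k^2)}$ yields the bound $S_{k,\R^d}\le C_kS_{k,V}+\eps\norm{Q}$.

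\textbf{Main obstacle.} I expect this last combinatorial bookkeeping to be the hardest part. It must ensure that mixed terms, with some vectors in $V$ and some in $V^\perp$, are genuinely small and not merely bounded by $S_{k,V}$. It must also control the blow-up in basis expansions so the $\eps_r$ choices close. If the pigeonhole ordering fails, the fallback is to simply allow the mixed terms to be bounded by $S_{k,V}$ times a constant via Cauchy–Schwarz in the basis. Those terms would then be absorbed into $C_k$.
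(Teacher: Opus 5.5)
\textbf{There is a genuine gap.} The step that carries the lemma is bounding the terms with at least one $V^\perp$ factor, and you have not proved it. The mechanisms you sketch for it fail as stated, for three reasons.

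(i) A plain pigeonhole (``some layer is unhit'') is not enough, because at round $r$ you only called Lemma~\ref{lm subspace finding v1} on tuples of length $s\le r-1$. For $k=3$ with both $V$-components in the first layer and one $V^\perp$ factor, layer $2$ is unhit, yet the two first-layer vectors cannot be absorbed into a round-$2$ tuple. Here is what you need.
\begin{itemize}
\item Take layers $L_r=W_r\ominus W_{r-1}$. These are orthogonal complements, so components in layers above $r$ are perpendicular to $V_r$.
\item Let $n_r$ be the number of $V$-components falling in $L_r$.
\item Find $r$ with $n_r=0$ and $\sum_{t<r}n_t=r-1$. Since $\sum_t n_t\le k-1$, the largest $r\in[k]$ with $\sum_{t<r}n_t\ge r-1$ has this property.
\end{itemize}

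(ii) Lemma~\ref{lm subspace finding v1} controls $Q_a-\E[Q_a]$, not $Q_a$. So every mixed term carries the correction $\sum_a c_a\E[Q_a]\,\E[\prod_j(\w^{(j)}\cdot\x)]$. Recombine it by linearity into $\E[Q(\x)\prod_t(\v^{(t)}\cdot\x)]\,\E[\prod_j(\w^{(j)}\cdot\x)]$, where the $\v^{(t)}\in W_{r-1}$ are the lower-layer components; this is at most $k^{O(k)}S_{k,V}(Q)$. If you instead bound each $\abs{\E[Q_a]}\le S_{k,W_{r-1}}(Q)$ separately, as you propose, you multiply by the $\ell_1$-norm of the coefficients $c_a$, which can be as large as $N_{r-1}^{s/2}$. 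Then $C_k$ would depend on the dimension. So the mixed terms are \emph{not} genuinely small; they are $O_k(S_{k,V}(Q))$ plus a small error, which is all the inequality needs.

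(iii) Your fallback is circular. No Cauchy--Schwarz argument bounds a form with an argument $\b\perp V$ by $S_{k,V}(Q)$; that is exactly what the lemma asserts.

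\textbf{Comparison with the paper.} With (i) and (ii) supplied, your flat decomposition relative to the final $V$ does close, even with $C_k=2^{O(k\log k)}$. However, it is not the paper's route, and it does not give the stated dimension. Drawing all tuples of length up to $r-1$ from a basis of all of $W_{r-1}$ means about $N_{r-1}^{r-1}$ calls at round $r$. Hence $\log N_r\gtrsim(r-1)\log N_{r-1}$, and the exponent becomes $2^{O(k\log k)}$ rather than $2^{O(k)}$. Your $\eps_r$ is also left unspecified.

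The paper avoids all of this by inducting on $k$ and splitting only with respect to the first layer $U^{(1)}$.
\begin{itemize}
\item The all-$\b$ term is small by Lemma~\ref{lm subspace finding v1}, up to the mean term $(2i)^i\abs{Q_\emptyset}$, where $\abs{Q_\emptyset}\approx\abs{\E[Q]}\le S_{k,V}(Q)$.
\item Any term containing some $\a^{(j)}\in U^{(1)}$ is rewritten as $\E[\mathbf{c}\cdot Q^{(1)}(\x)M_S(\x)]$, with the tensor-valued $Q^{(1)}=(Q(\x)(\g\cdot\x))_{\g\in G_1}$ measured in the injective norm $\norm{\cdot}_*$. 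This makes the basis expansion cost nothing.
\item The inductive hypothesis for $Q^{(1)}$, whose subspaces are $U^{(2)},\dots,U^{(k)}$, bounds this by $C_{k-1}S_{k,V}(Q)+\eps_1N_1C^m\norm{Q}_{D_\x,2}$ for \emph{arbitrary} remaining vectors. So no layer bookkeeping is needed.
\end{itemize}
Because one basis vector per earlier layer suffices, round $i$ makes only $\prod_{j<i}\dim U^{(j)}$ calls. This yields the $2^{O(k)}$ exponent and $C_k=2^kC_{k-1}+2^{O(k\log k)}=2^{O(k^2)}$.
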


\begin{algorithm}[htbp]
		\caption{\textsc{LowSubspaceFinding} (Find a low dimensional subspace that preserves discrepancy)}\label{alg subspace finding v3}
		\begin{algorithmic} [1]
\State \textbf{Input:}  Sample access from $D_\x$, accuracy parameter $\eps \in (0,1)$, confidence parameter $\delta \in (0,1)$, degree $m$-polynomial $Q(\x)$, $k$-degree parameter
\State \textbf{Output:} $V$, a low-dimensional subspace

\medskip

\State If $k=0$, Return $\emptyset$
\State  Let $U^{(0)} = \emptyset$ and $\eps_0 = \eps$
\For{$i=1,\dots,k$}
\State Let $G_j=\{\g^{(j1)},\dots,\g^{(jN_j)}\}$ be a standard basis for $U^{(j)}$, for $j \le i-1$
\For{$\g=(\g_1,\dots,\g_{i-1})$, where $\g_j \in G_j$}
\State Define $F_g(\x) = Q(\x)\prod_{j=1}^{i-1}(\g_j\cdot\x)$
\State Estimate $Q_g \in \R$ such that $\abs{Q_g-\E_{\x\sim D_\x}[F_g(\x)]}< (dk)^{-O(k)}\eps_{i-1}  \sqrt{\var[F_g(\x)]}$.
\State Apply Algorithm~\ref{alg subspace finding v1} to $F_g(\x) - Q_g$ with dimension $k-i+1$ and error parameter $\eps_{i-1}/2$ to get a subspace $U_g$
\EndFor
\State Let $U^{(i)}$ be the subspace with dimension $N_i$ spanned by all the $U_g$ found in the $i$th iteration
\State Let $\eps_i = (2^kC^m N_{i})^{-1}\eps_{i-1}/2$.
\EndFor
\State Return $V$ the subspace spanned by $U^{(i)}$, $i \in [k]$.
\end{algorithmic}
\end{algorithm}

\begin{proof}[Proof of Lemma~\ref{lm subspace v3}]
For the purpose of analysis, we will extend a degree $m$ polynomial to a tensor-valued polynomial. Specifically, such a function $P$ maps $\R^d$ to $\R^{d_{1}\times \dots \times d_{\ell}}$ and each entry $P_\zeta$ of $P$ is a degree-$m$ polynomial.
For an $\ell$-order tensor $\mathbf{T} \in \R^{d_1\times \cdots \times d_\ell}$, 
define
$\norm{\mathbf{T}}_* := \sup_{\v^{(1)} \in S^{d_1-1},\dots,\v^{(\ell)} \in S^{d_\ell-1}}\abs{\mathbf{T}\cdot \otimes_{j=1}^\ell \v^{(j)} }$. For $k \in \N$ and subspace $V \subseteq \R^d$, and a $\ell$-order tensor valued function $P$, define
\begin{align*}
 S_{k,V}(P) = \sup_{0 \le i \le k}\sup_{\u^{(1)},\dots,\u^{(k)} \in V \cap \s^{d-1}} \norm{\E_{\x \sim D_\x}[P(\x)\prod_{j=1}^i(\u^{(j)}\cdot\x)]}_*.   
\end{align*}

Notice that in the $i$th iteration of the algorithm, if we write $Q^{(i-1)} = (F_\g)$, for $g = (\g_1,\dots,\g_{i-1}), \g_j \in G_j$ as a $i-1$-order tensor, 
then, we can write $Q^{(i)}$ as a $i$-order tensor, for each $g = (\g_1,\dots,\g_{i-1})$, $Q^{(i)}_g = (F_g(\x)(g^{(i1)}\cdot\x),\dots,F_g(\x)(g^{(iN_i)}\cdot\x))$, where $g^{i1},\dots,g^{iN_i}$ is the standard basis of $U^{(i)}$.
In particular, under this notation, we have $Q(\x) = Q^{(0)}$.
We will show by induction that for $j=0,\dots,k$, the subspace $V^{(j)}$ spanned by $U^{(k)},\dots,U^{(k-j+1)}$ satisfies
\begin{align*}
    S_{j,\R^d}(Q^{(k-j)}) \le C_{j}S_{j,V^{(j)}}(Q^{(k-j)}) + \eps_{k-j} \norm{ \norm{Q^{(k-j)}}_F }_{D_\x,2}.
\end{align*}
Notice that if this is true, then for $j = k$, we have
\begin{align*}
    S_{k,\R^d}(Q) \le C_{k}S_{k,V}(Q) + \eps \norm{Q}_{D_\x,2}.
\end{align*}
The base case $j = 0$ holds trivially because for every subspace $V'$, we have $S_{0,V'}(Q^k) = \norm{\E_{\x\sim D_\x}[Q^{(k)}]}_*$.
Suppose this holds for $j = k-1$, show this for $j=k$.

Let $\u^{(1)},\dots,\u^{(i)}$ be $i$ unit vectors in $\R^d$, where $i \le k$.
We write $\u^{(j)} = \a^{(j)} + \b^{(j)}$, where $\a^{(j)} = \proj_{U^{(1)}} \u^{(j)}$ and $\b^{(j)} =\proj_{(U^{(1)})\perp}\u^{(j)}$.
Then, we have
\begin{align*}
    \abs{\E_{\x \sim D_\x}[Q(\x)\prod_{j=1}^i(\u^{(j)}\cdot\x)]} \le \sum_{S \subseteq [i]} \abs{\E_{\x \sim D_\x}[Q(\x)\prod_{j\not\in S}(\b^{(j)}\cdot\x)\prod_{j\in S}(\a^{(j)}\cdot\x)]}.
\end{align*}
We upper bound each term in the above summation. Recall that $\abs{Q_\emptyset-\E_{\x\sim D_\x}[Q(\x)]}< 2^{-O(k\log k)}\eps_1 \norm{Q}_{D_\x,2}$.
If $S = \emptyset$, 
then we have 
\begin{align*}
    \abs{\E_{\x \sim D_\x}[Q(\x)\prod_{j\not\in S}(\b^{(j)}\cdot\x)\prod_{j\in S}(\a^{(j)}\cdot\x)]} & 
    = \abs{\E_{\x \sim D_\x}[Q(\x)\prod_{j=1}^i(\b^{(j)}\cdot\x)]}\\
    & = \abs{\E_{\x \sim D_\x}[Q_\emptyset\prod_{j=1}^i(\b^{(j)}\cdot\x) + (Q(\x)-Q_\emptyset)\prod_{j=1}^i(\b^{(j)}\cdot\x)]} \\
    & \le \abs{Q_\emptyset}\abs{\E_{\x \sim D_\x}[\prod_{j=1}^i(\b^{(j)}\cdot\x)]} + \abs{\E_{\x \sim D_\x}[(Q(\x)-Q_\emptyset)\prod_{j=1}^i(\b^{(j)}\cdot\x)]} \\
    & \le (2i)^i\abs{Q_\emptyset} + \eps_0 \norm{Q}_{D_\x,2}/2 \;,
\end{align*}
where the last inequality follows by Fact~\ref{fact moment logconcave} 
and the definition of $U^{(1)}$. 

In the rest of the proof, we focus on the case where $S \neq \emptyset$, 
which implies there is at least one $a^{(j)}$ that 
appears in the product $\prod_{j\not\in S}(\b^{(j)}\cdot\x)\prod_{j\in S}(\a^{(j)}\cdot\x)$. 
Without loss of generality, we assume it is $a^{(1)}$ 
and denote by $M_S(\x) = (\a^{(1)}\cdot\x)^{-1} \prod_{j\not\in S}(\b^{(j)}\cdot\x)\prod_{j\in S}(\a^{(j)}\cdot\x)$ the remaining part of the product. 
We write $\a^{(1)} = \sum_{\g \in G_1}^{N_1} \mathbf{c}_\g \g$ under the basis $G_1$.
Then, we have
\begin{align*}
 \abs{\E_{\x \sim D_\x}[Q(\x)\prod_{j\not\in S}(\b^{(j)}\cdot\x)\prod_{j\in S}(\a^{(j)}\cdot\x)]}  & = \abs{\sum_{\g \in G_1} \mathbf{c}_\g \E_{\x \sim D_\x}[Q(\x)(\g\cdot\x)M_S(\x)]} \\
 & = \abs{\sum_{\g \in G_1} \E_{\x\sim D_\x} [\mathbf{c}\cdot Q^{(1)}(\x)M_S(\x)]} \;,
\end{align*}
where $\mathbf{c} = (\mathbf{c}_\g)_{\g \in G_1}$.
We notice that for every polynomial $p(\x)$, we have 
\begin{align}\label{eq star norm}
   \sup_{\v \in U^{(1)}\cap \s^{d-1}} \abs{Q(\x)(\v\cdot\x) p(\x)} = \sup_{\v \in U^{(1)}\cap \s^{d-1}} \abs{\v \cdot Q^{(1)}(\x) p(\x)} = \norm{Q^{(1)}(\x) p(\x)}_*  \;.
\end{align}
By induction, we have 
\begin{align*}
    \abs{\E_{\x \sim D_\x}[Q(\x)(\a^{(1)}\cdot\x)M_S(\x)]} & \le \norm{\E_{\x \sim D_\x}[Q^{(1)}(\x)M_S(\x)]}_* \le S_{k-1,R^d}(Q^{(1)}) \\
    &\le C_{k-1}S_{k-1,V^{(k-1)}}(Q^{(1)}) + \eps_1 \sqrt{\E_{\x \sim D_\x} [\norm{Q^{(1)}(\x)}_F^2]} \\
    & \le C_{k-1}S_{k,V}(Q) + \eps_1 \sqrt{\E_{\x \sim D_\x} [\norm{Q^{(1)}(\x)}_F^2]} \;.
\end{align*}
Here, the last inequality follows from \eqref{eq star norm}.
By Fact~\ref{fact polynomial growth}, for some universal constant $C>0$, 
\begin{align*}
   \E_{\x \sim D_\x}\left[\norm{Q^{(1)}(\x)}_F^2 \right] 
   &= \sum_{\g \in G_1}\E_{\x \sim D_\x}[(Q(\x)(\g^{(i)}\cdot\x))^2] = \E_{\x \sim D_\x}[ Q(\x)^2\sum_{\g \in G_1} (\g\cdot\x)^2] \\
   &\le \sqrt{\E_{\x \sim D_\x}Q(\x)^4}\sqrt{\E_{\x \sim D_\x}[(\sum_{i=1}^N (\g^{(i)}\cdot\x)^2])^2} \le N_1^2 C^{2m} \norm{Q}_{D_\x,2}^2 \;.
\end{align*}
Thus, we get 
\begin{align*}
   \abs{\E_{\x \sim D_\x}[Q(\x)(\a^{(1)}\cdot\x)M_S(\x)]}  \le C_{k-1}S_{k,V}(Q) + \eps_1 N_1 C^{m} \norm{Q}_{D_\x,2} \;,
\end{align*}
which implies 
\begin{align*}
    \abs{\E_{\x \sim D_\x}[Q(\x)\prod_{j=1}^i(\u^{(j)}\cdot\x)]} & \le (2^kC_{k-1}+2^{O(k
    \log k)})S_{k,V}(Q) + (2^kC^mN_1\eps_1+\eps_0/2)\norm{Q(\x)}_{D_\x,2} \\
    & \le C_k S_{k,V}(Q) + \eps \norm{Q(\x)}_{D_\x,2} \;,
\end{align*}
where we use our choice that
$C_k = 2^{k}C_{k-1}+2^{O(k\log k)} = 2^{O(k^2)}$ and $\eps_1 \le (N_1C^m)^{-1}\eps_0/2$.

To complete the proof, we bound the dimension of the subspace $V$ and the 
complexity of the algorithm.
Notice that in the $i$th iteration of the algorithm, 
we run Algorithm~\ref{alg subspace finding v1} $T_i = \prod_{j=1}^{i-1} \dim(U^{(j)})$ 
times with parameter $\eps_{i-1}/2$.

By Lemma~\ref{lm subspace finding v1}, it follows that 
in the $i$th iteration, each time we run Algorithm~\ref{alg subspace finding v1}, 
it draws $ M_i=d^{O(k)}2^{O(k\log k+m)}/\eps_{i-1}^2$ examples from $D$, 
runs in $\poly(M_i)$ time, and outputs a subspace $U^{(g)}$ of 
dimension $2^{O(k\log k)\log^{2k}(d)}/\eps_{i-1}^2$. So, in the $i$th iteration, 
the total number of examples we draw from $D$ is $T_iM_i$, 
the running time is $T_i \poly(M_i)$ and 
$\dim (U^{(i)}) = T_i 2^{O(k\log k)\log^{2k}(d)}/\eps_{i-1}^2$.

Since $\eps_0 = \eps$ and $\eps_i = (2^kC^m N_{i})^{-1}\eps_{i-1}/2$ for $i \in [k]$, we get 
the dimension of $V$ is at most $N_{k,m}(\eps) = (2^{O(k+m)}\log^k(d)/\eps)^{2^{O(k)}}$ and the 
sample/time complexity of the algorithm is $\poly(d^k, C_kN_{k,m}(\eps),\log(1/\delta))$.
\end{proof}

\begin{remark}
     We remark that the dimension of $V$ found by Lemma~\ref{lm subspace v3} has a $\polylog(d)$ factor dependence on the dimension $d$. Such a dependence appears because Lemma~\ref{lm gradient norm} has a $\polylog(d)$ factor dependence. If the KLS conjecture holds, then we can get an improved version of Lemma~\ref{lm gradient norm} without the dependence on $d$, which implies we can remove the dependence of $\polylog(d)$ in Lemma~\ref{lm subspace v3}. However, for our application, we will see later that we can recursively apply Lemma~\ref{lm subspace v3} to the discovered subspace $V$ to reduce the $\polylog(d)$ dependence to $\log\log(d)$ dependence.
\end{remark}


\subsection{From Dimension Reduction to Moment Discrepancy Finding}\label{sec mismatch finding}

In this section, we will apply Lemma~\ref{lm subspace v3} to find a direction $\w$ along which $D_\x$ and $(D_f)_\x$ have mismatched moments. To begin with, we will specify the polynomial function $Q(\x)$ 
we want to use.



\begin{lemma}\label{lm Q function learning}
There is an algorithm such that, given sample access to a logconcave distribution $D_\x$ over $\R^d$, oracle access to an event $A$ with $\Pr_{D_\x}[A] = \alpha$, and parameters $k \in \N, \eps,\delta \in (0,1)$,  
it draws $\poly(d^k,1/\eps^k,\log(1/\delta),1/\alpha)$ samples from $D_\x$, runs in sample-polynomial time, 
and outputs a degree-$k$ polynomial $Q: \R^d \to \R$ with 
    $\norm{Q}_{D_\x,2} \le O(\log^{k}(1/\alpha))+\eps$, 
    such that with probability at least $1-\delta$, 
    for every degree at most $k$ polynomial $p: \R^d \to \R$, it holds 
    $\abs{\E_{\x\sim D_\x}[Q(\x)p(\x)]-\E_{\x\sim (D_f)_\x}[p(\x)]} \le \eps \norm{p(\x)}_{D_\x,2}$, where $(D_f)_\x$ is the conditional distribution of $D_\x$ over event $A$.
\end{lemma}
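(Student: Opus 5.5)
The plan is to define $Q$ as the $L_2(D_\x)$-projection onto $P_k$ (the degree-$\le k$ polynomials) of the density ratio $\Ind\{A\}/\alpha$. We then show this $Q$ has small norm, and finally estimate it from samples.

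\textbf{Existence and the norm bound.} The functional $L(p)\eqdef\E_{\x\sim (D_f)_\x}[p(\x)]=\E_{\x\sim D_\x}[p(\x)\Ind\{A\}(\x)]/\alpha$ is linear on $P_k$. By Riesz representation in the finite-dimensional inner product space $(P_k,\langle\cdot,\cdot\rangle_{D_\x})$, there is a unique $Q^*\in P_k$ with $\E_{D_\x}[Q^*p]=L(p)$ for all $p\in P_k$. Moreover,
\[
\norm{Q^*}_{D_\x,2}=\sup_{p\in P_k,\ \norm{p}_{D_\x,2}=1} L(p).
\]
To bound $L(p)$ for $\norm{p}_{D_\x,2}=1$, I use the layer cake formula:
\[
L(p)\le \E[\abs{p}\Ind\{A\}]/\alpha \le \frac1\alpha\int_0^\infty \min\bigl(\alpha,\Pr_{D_\x}[\abs{p}>t]\bigr)\,dt.
\]
For logconcave $D_\x$ and degree-$k$ $p$, strong hypercontractivity (the polynomial growth fact, Fact~\ref{fact polynomial growth}, equivalently Carbery--Wright style tails) gives
\[
\Pr[\abs{p}>t]\le \exp\bigl(-\Omega(t^{1/k})\bigr).
\]
Splitting the integral at $t_0=O(\log^k(1/\alpha))$ gives the contribution $t_0$ from $[0,t_0]$, and $\frac1\alpha\int_{t_0}^\infty e^{-\Omega(t^{1/k})}dt=O(1)$ from the tail. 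Hence $\norm{Q^*}_{D_\x,2}=O(\log^k(1/\alpha))$. I do not expect any affine normalization to be needed, since the tail bound is stated relative to $\norm{p}_{D_\x,2}$.

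\textbf{Algorithm.} First I estimate the Gram matrix and the target vector in a convenient basis. Using $\poly(d^k/\eps)$ samples, put $D_\x$ in near-isotropic position via Fact~\ref{fact transform}; this is not strictly needed, but it keeps the basis well conditioned. Take the monomial basis $\{\x^\zeta\}_{\abs\zeta\le k}$ and estimate
\[
G_{\zeta\zeta'}=\E_{D_\x}[\x^\zeta\x^{\zeta'}],\qquad b_\zeta=\E_{D_\x}[\x^\zeta\Ind\{A\}]/\alpha
\]
by empirical means. The variances are bounded by $(Ck)^{O(k)}/\alpha$ thanks to logconcave moment bounds (Fact~\ref{fact moment logconcave}), so Chebyshev plus a median trick suffices. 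Then output $Q=\sum_\zeta c_\zeta\x^\zeta$ with $c=\hat G^{-1}\hat b$. A cleaner alternative is to use empirical orthonormalization: first Gram--Schmidt an orthonormal basis $\{q_j\}$ of $P_k$ from a large sample, then set $Q=\sum_j \hat L(q_j)q_j$.

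\textbf{Error analysis.} For $p=\sum_j a_jq_j$ with $\norm{a}_2=\norm{p}_{D_\x,2}$, we have
\[
\abs{\E[Qp]-L(p)}\le\norm{a}_2\,\bigl\|(\hat L(q_j)-L(q_j))_j\bigr\|_2,
\]
which is at most $\eps\norm{p}_{D_\x,2}$ once each of the $d^{O(k)}$ coordinates is accurate to $\eps d^{-O(k)}$. The norm bound follows the same way: $\norm{Q}_{D_\x,2}\le\norm{Q^*}_{D_\x,2}+\eps$.

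\textbf{Main obstacle.} The hardest step is controlling the ill-conditioning of the estimated basis. The estimated Gram matrix of monomials must be close to the true one in operator norm relative to its smallest eigenvalue. That smallest eigenvalue is bounded below by $(dk)^{-O(k)}$ under near-isotropic logconcave $D_\x$, using anti-concentration of nonzero polynomials (Carbery--Wright). I would therefore spend $\poly(d^k,1/\eps^k)$ samples so that the relative perturbation is small, and then propagate it through the change of basis.
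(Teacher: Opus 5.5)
Your proposal follows the paper's proof in all essentials. $Q^*$ is defined by Riesz representation on $P_k$, i.e.\ as the $L_2(D_\x)$-projection of $\Ind\{A\}/\alpha$. Your estimator $\hat G^{-1}\hat b$ is the same least-squares fit the paper uses. Your norm bound rests on the same moment-growth fact. The paper applies H\"older with $r=\Theta(\log(1/\alpha))$ to get $\E[\abs{p}\Ind\{A\}]/\alpha\le\alpha^{-1/r}\norm{p}_{D_\x,r}\le\alpha^{-1/r}(Cr)^k\norm{p}_{D_\x,2}$. You turn Fact~\ref{fact polynomial growth} into the tail $\exp(-\Omega(t^{1/k}))$ and integrate it against $\min(\alpha,\cdot)$; the two are interchangeable, and you are right that no affine normalization is needed.

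Where you diverge is the statistical analysis of the fit. The paper only needs excess squared loss $\eps^2$ and the Pythagorean identity $\E[(Q-\Ind\{A\}/\alpha)^2]=\norm{Q-Q^*}_{D_\x,2}^2+\norm{Q^*-\Ind\{A\}/\alpha}_{D_\x,2}^2$. This gives $\norm{Q-Q^*}_{D_\x,2}\le\eps$, and both conclusions follow by Cauchy--Schwarz and the triangle inequality.

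You instead do a perturbation analysis in the monomial basis, which forces you to lower-bound $\lambda_{\min}$ of the monomial Gram matrix. That ingredient is unnecessary, because least squares is basis-invariant. Analyze it in an (unknown) $D_\x$-orthonormal basis $q$ of $P_k$, with $N=\dim P_k$. There the fit is $\hat\Sigma^{-1}\hat{\mathbf{L}}$, where $\hat\Sigma$ is the empirical second-moment matrix of $q$. Fact~\ref{fact polynomial growth} gives $\E\norm{q(\x)}_2^4\le N^2C^{O(k)}$. With Chebyshev, this yields both $\norm{\hat\Sigma-I}_{\mathrm{op}}\le 1/2$ and the needed bound on the mean-zero term $\frac1n\sum_i q(\x_i)\bigl(\Ind\{A\}(\x_i)/\alpha-Q^*(\x_i)\bigr)$. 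No conditioning estimate is needed.

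This also repairs your error analysis. As written, it treats empirically orthonormalized $q_j$ as exactly $D_\x$-orthonormal, which would otherwise cost an extra term $\norm{Q^*}_{D_\x,2}\norm{G-I}_{\mathrm{op}}\norm{p}_{D_\x,2}$.

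If you keep the conditioning route, note that the $L_2$-normalized small-ball bound (Fact~\ref{fact small ball polynomial}) is circular here. It measures $\abs{p}$ against $\norm{p}_{D_\x,2}$ itself, so it cannot compare $\norm{p}_{D_\x,2}$ with coefficients. You would need Carbery--Wright's sup-normalized form, applied to $D_\x$ restricted to a ball, together with a coefficient-versus-sup bound on that ball.
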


\begin{proof}[Proof of Lemma~\ref{lm Q function learning}]
    We first show the existence of such a polynomial. 
    Define an inner product over the space of polynomial functions over $\R^{d}$ 
    as $\langle p,q\rangle_{D_\x}:=\E_{\x\sim D_\x}[pq]$ 
    as well as $\norm{\cdot}_{D_\x,2}$, the norm induced by the inner product, 
    and denote by $P_k$ the subspace of all polynomials with degree at most $k$ over $\R^d$. 
    Under the inner product $\langle p,q\rangle_{D_\x}$, 
    define $Q^*(\x): = \proj_{P_k}\Ind\{A\}/\alpha$. 
    Notice that for every polynomial $p(\x)$ with at most degree $k$, 
    \begin{align*}
         \E_{\x\sim (D_f)_\x}[p(\x)] = \E_{\x\sim D_\x}[p(\x)\Ind\{A\}/\alpha] = \E_{\x\sim D_\x}[Q^*(\x)p(\x)] \;,
    \end{align*}
where the last equation uses the fact that $\langle Q^*(\x) - \Ind\{A\}/\alpha,p(\x) \rangle_{D_\x} = 0$.
Furthermore, we have 
\begin{align*}
    \norm{Q^*(\x)}_{D_\x,2} = \sup_{p \in P_k, \norm{p}_{D_\x,2}=1}\langle p(\x),Q^*(\x)\rangle_{D_\x} = \sup_{p \in P_k, \norm{p}_{D_\x,2}=1} \E_{\x\sim D_\x}[p(\x)\Ind\{A\}/\alpha] \;.
\end{align*}
For $r \ge 2$, let $r' = (1-r^{-1})^{-1}$.
By H\"{o}lder's inequality, we have 
\begin{align*}
    \abs{\alpha^{-1}\E_{\x\sim D_\x}[p(\x)\Ind\{A\}]} & \le \alpha^{-1} \E_{\x\sim D_\x}[p^r(\x)]^{1/r}\E_{\x\sim D_\x}[\Ind\{A\}^{r'}]^{1/r'} \\
    &= \alpha^{-1/r}\E_{\x\sim D_\x}[p^r(\x)]^{1/r} \le \alpha^{-1/r}(Cr)^k \;.
\end{align*}
Here, in the last inequality, we use Fact~\ref{fact polynomial growth}. By choosing $r = O(\log(1/\alpha))$, we get $\norm{Q^*(\x)}_{D_\x,2} \le O(\log^k(1/\alpha))$.

It suffices to find a polynomial $Q(\x) \in P_k$ such that $\norm{Q-Q^*}_{D_\x,2}<\eps$. To do this, solve the polynomial regression problem that finds a polynomial $Q \in P_k$ such that 
\begin{align*}
    \E_{\x\sim D_\x}[(Q(\x)-\Ind\{A\}/\alpha)^2] 
    &\le \min_{p(\x) \in P_k}\E_{\x\sim D_\x}[(p(\x)-\Ind\{A\}/\alpha)^2] + \eps^2 \\
    &= \E_{\x\sim D_\x}[(Q^*(\x)-\Ind\{A\}/\alpha)^2]+\eps^2 \;,
\end{align*}
which needs $\poly(d^k/(\alpha\eps))$ sample from $D_\x$, using standard sample complexity analysis of linear regression (see, e.g., \cite{bartlett2002rademacher}.)
Notice that 
\begin{align*}
    \E_{\x\sim D_\x}[(Q(\x)-Q^*(\x)+Q^*(\x)-\Ind\{A\}/\alpha)^2] = \E_{\x\sim D_\x}[(Q(\x)-Q^*(\x))^2] + \E_{\x\sim D_\x}[(Q^*(\x)-\Ind\{A\}/\alpha)^2] \;.
\end{align*}
This means the solution of the polynomial regression gives a polynomial $Q(\x) \in P_k$ such that $\norm{Q(\x)-Q^*(\x)}_{D_\x.2} \le \eps$ and $\norm{Q}_{D_\x,2} \le O(\log^k(1/\alpha))+\eps$. 

Furthermore, for every polynomial $p(\x) \in P_k$, we have 
\begin{align*}
    \abs{\E_{\x\sim D_\x}[Q(\x)p(\x)]-\E_{\x\sim (D_f)_\x}[p(\x)]} = \abs{\E_{\x\sim D_\x}[(Q(\x)-Q^*(\x))p(\x)]}\le \eps\norm{p(\x)}_{D_\x,2}.
\end{align*}
\end{proof}



With Lemma~\ref{lm subspace v3} and Lemma~\ref{lm Q function learning} at hand, 
we are able to directly prove Theorem~\ref{th direction finding}.

\begin{algorithm}[htbp]
		\caption{\textsc{MismatchFinding} (Finding a direction with a moment discrepancy)}\label{alg mismatch finding}
		\begin{algorithmic} [1]
\State \textbf{Input:}  Sample access from $D$, event $A$, degree $k$, parameter $\Delta$.
\State \textbf{Output:} $\v \in \s^{d-1}$ or $\perp$, indicating $D$ and $D_f$ have nearly matched moments. 

\medskip 

\State Apply Lemma~\ref{lm Q function learning} to $D_\x$ and $A$ with parameter $\eps_1 = 2^{-O(k^2)}\Delta$ and degree $k$ to get a degree $k$-polynomial $Q(\x)$.
\State Run Algorithm~\ref{alg subspace finding v3} with input $Q(\x)-1$, degree $k$ and parameter $\eps = O(\Delta \log^{-k}(1/\alpha))$ to get a subspace $U$
\State Let $D^{(U)}_\x$ be the distribution of $\proj_U(\x)$
\State Apply Lemma~\ref{lm Q function learning} to $D^{(U)}_\x$ and $A$ with parameter $\eps = 2^{-O(k^2)}\log^{-k}(1/\alpha)\Delta$ and degree $k$ to get a degree $k$-polynomial $Q'(\x)$.
\State Run Algorithm~\ref{alg subspace finding v3} with input polynomial $Q'(\x)-1$, degree $k$ and parameter $\eps = 2^{-O(k^2)}\log^{-k}(1/\alpha)\Delta$ to get a subspace $V$.
\State Run a brute-force search over $V$ to find a vector $\v \in \s^{d-1}$ such that $\abs{\E_{\x \sim D_\x}[(\v\cdot\x)^i]-\E_{\x \sim D_f}[(\v\cdot\x)^i]}>2^{-O(k^2)}\Delta$, for some $i \in [k]$. 
\State If such a vector is not found, then return $\perp$, otherwise return $\v$.
\end{algorithmic}
\end{algorithm}

\begin{proof}[Proof of Theorem~\ref{th direction finding}]

    

By Fact~\ref{fact moment logconcave}, we know that for every $\w \in \s^{d-1}$, $\norm{(\w\cdot\x)^i}_{D_\x,2} \le (Ck)^k$, for $i \in [k]$
So, by applying Lemma~\ref{lm Q function learning} with $\eps_1 =2^{-O(k^2)}\Delta$ over $D_\x$ and $A$
we can find a degree $k$-polynomial $Q(\x')$ with $\norm{Q(\x)}_{D_\x,2}
\le O(\log^{k}(1/\alpha))$ such that for every unit vector $\w \in \s^{d-1}$, and $i \in [k]$
    \begin{align*}
        \abs{\E_{\x\sim D_\x}[Q(\x)(\w\cdot\x)^i]-\E_{\x\sim (D_f)_\x}[(\w\cdot\x')^i]} \le 2^{-O(k^2)}\Delta.
    \end{align*}

Consider the $k$-symmetric linear form $f(\v^{(1)},\dots,v^{(k)}) = \E_{\x \sim D_\x}[(Q(\x)-1)\prod_{i=1}^k(\v^{(i)}\cdot\x)]$. By \cite{bochnak1971polynomials}, for every subspace $V$, $\sup_{\v^{(1)},\dots,\v^{(k)} \in V \cap \s^{d-1}}f(\v^{(1)},\dots,v^{(k)}) = \sup_{\v \in V \cap \s^{d-1}}f(\v,\dots,\v)$.
By applying Lemma~\ref{lm subspace v3} on $Q(\x)-1$ with distribution $D_\x$, we find a subspace $U$ with dimension $(2^{O(k)}\log^{k}(1/\alpha)\log^k(d)/\Delta)^{2^{O(k)}}$,
such that
\begin{align*}
    \sup_{\w \in \s^{d-1}} \abs{\E_{\x\sim D_\x}[(Q(\x)-1)(\w\cdot\x)^k)]} \le 2^{O(k^2)} \sup_{i \in [k]}\sup_{\v \in U \cap \s^{d-1}} \abs{\E_{\x\sim D_\x}[(Q(\x)-1)(\v\cdot\x)^i)]} + \Delta/100.
\end{align*}
If there is some $\w \in \s^{d-1}$ such that $\abs{\E_{\x \sim D_\x}[(\w\cdot\x)^k]-\E_{\x \sim (D_f)_\x}[(\w\cdot\x)^k]}>\Delta$, 
there exists a unit vector $\u \in U$ and $i \in [k]$ such that
$ \abs{\E_{\x\sim D_\x}[(Q(\x)-1)(\u\cdot\x)^i)]} \ge 2^{-O(k^2)}\Delta$ and thus
\begin{align*}
\abs{\E_{\x \sim D_\x}[(\u\cdot\x)^i]-\E_{\x \sim (D_f)_\x}[(\u\cdot\x)^i]} \ge 2^{-O(k^2)}\Delta.
\end{align*}

Denote by $D^{(U)}_\x$ the distribution of $\a=\proj_{U} \x$ and $D^{(U)}_f$ the conditional distribution of $D^{(U)}_\x$ over $A$. For every $\u \in U$ and $i \in [k]$, we have 
\begin{align*}
    \E_{\x \sim D_\x}[(\u\cdot\x)^i]-\E_{\x \sim (D_f)_\x}[(\u\cdot\x)^i] = \E_{\a \sim D^{(U)}_\x}[(\u\cdot\a)^i]-\E_{\a \sim (D^{(U)}_f)_\x}[(\u\cdot\a)^i].
\end{align*}
Since $D^{(U)}_\x$ is a $1/2$-isotropic logconcave distribution, by applying Lemma~\ref{lm Q function learning} to $D^{(U)}_\x$ with $\eps_2 = 2^{-O(k^2)}\Delta$, 
we can find a degree $k$-polynomial $Q'(\a)$ with $\norm{Q'}_{D^{(U)}_\x,2}
\le O(\log^{k}(1/\alpha))$ such that for every unit vector $\w \in U$ and $i \in [k]$,
    \begin{align*}
        \abs{\E_{\a\sim D^{(U)}_\x}[Q'(\a)(\w\cdot\a)^i)]-\E_{\a\sim D^{(U)}_f}[(\w\cdot\a)^i]} \le 2^{-O(k^2)}\Delta.
    \end{align*}
By applying Lemma~\ref{lm subspace v3} on $Q'(\a)-1$ and distribution $D^{(U)}_\x$
we find a subspace $V$ with dimension $(2^{O(k^2)}\log^k(1/\alpha)\log^k\log(d)/\Delta)^{2^{O(k)}}$
such that
\begin{align*}
    \sup_{\w \in U, \norm{\w}_2=1 } \abs{\E_{\a\sim D^{(U)}_\x}[(Q'(\a)-1)(\w\cdot\a)^k)]} \le 2^{O(k^2)} \sup_{i \in [k]}\sup_{\v \in V, \norm{\v}_2=1} \abs{\E_{\a\sim D^{(U)}_\x}[(Q'(\a)-1)(\v\cdot\a)^i)]} + 2^{-O(k^2)}\Delta.
\end{align*}
Since
there exists a unit vector $\u \in U$ such that
$ \abs{\E_{\x\sim D_\x}[(Q(\x)-1)(\u\cdot\x)^k)]} \ge 2^{-O(k^2)}\Delta$, we know that
there exists a unit vector $\v \in V$ such that
$ \abs{\E_{\x\sim D_\x}[(Q'(\x)-1)(\v\cdot\x)^i)]} \ge 2^{-O(k^2)}\Delta$ and thus
\begin{align*}
\abs{\E_{\x \sim D_\x}[(\v\cdot\x)^i]-\E_{\x \sim D_f}[(\v\cdot\x)^i]}>2^{-O(k^2)}\Delta \;.
\end{align*}
By running a brute force search over $V$, we find such a vector $\v$. In particular, if no such vector $\v$ is output, then it must be the case that for every $\w \in \s^{d-1}$, $\abs{\E_{\x \sim D_\x}[(\w\cdot\x)^k]-\E_{\x \sim D_f}[(\w\cdot\x)^k]}<\Delta$.

Finally, we upper bound the sample and time complexity of the algorithm.
In Algorithm~\ref{alg mismatch finding}, we run Algorithm~\ref{alg subspace finding v3} and Lemma~\ref{lm Q function learning} twice. Each run of Lemma~\ref{lm Q function learning} takes $\poly(2^{(k^2)}d^k/(\alpha\Delta))$ sample and time.
Each run of Algorithm~\ref{alg subspace finding v3} takes at most $d^{O(k)}(2^{O(k)}\log^k(1/\alpha)\log^k(d)/\Delta)^{2^{O(k)}}$ sample and time. 
Finally, since the final found subspace $V$ has dimension $(2^{O(k^2)}\log^k(1/\alpha)\log^k\log(d)/\Delta)^{2^{O(k)}}$, the brute force search takes time
$2^{O(k^2\log(1/\Delta)\dim(V))}= 2^{(\log(1/\alpha)\log\log(d)/\Delta)^{2^{O(k)}}}$. 
Thus, the algorithm draws $d^{O(k)}(\log(1/\alpha)\log(d)/\Delta)^{2^{O(k)}}$ sample and runs in $d^{O(k)}(\log(1/\alpha)\log(d)/\Delta)^{2^{O(k)}}+2^{(\log(1/\alpha)\log\log(d)/\Delta)^{2^{O(k)}}}$ time.  
\end{proof}


\section{ Proof of Theorem~\ref{thm: main-weak-to-strong} and Testable Learning Extensions}\label{sec proof main}

In this section, we put together results established in the previous sections to complete the proof of Theorem~\ref{thm: main-weak-to-strong} and also provide our testable learning extensions.


\begin{proof}[Proof of Theorem~\ref{thm: main-weak-to-strong}]

    By Theorem~\ref{th direction finding}, for any $\alpha,\eps \in (0,1)$ and for any fixed constant $c(k,\Delta)\ge 1$, 
    there is a $(k,\Delta,2^{-O(k^2)},(\alpha\eps)^{c(k,\Delta)})$-discrepancy finder $\A_m$ for $D_\x$ and $(D_f)_\x$, with sample complexity 
    $d^{O(k)}\poly_{k,\Delta}((\alpha\eps)^{-1})$ and time complexity 
    $d^{O(k)} 2^{\polylog_{k,\Delta}((\alpha\eps)^{-1})}$, 
whenever $W(D)$---the weight of $D$ over $\R^d \times \{0,\pm 1\}$---is 
at least $(\alpha\eps)^{c(k,\Delta)}$.


Let $\eps' = 9\eps/10$.
Let $D$ be any distribution $\R^d \times \{0,\pm 1\}$ with weight $(\alpha\eps')$ and $D_\x$ a logconcave distribution. 
By Theorem~\ref{th decision tree k moment}, applied with $\eta = \eps/20$ and $\A_m$ as the discrepancy finder,
it follows that whenever $\opt_{D_f}(\H) \le (1-\eta)/C$, 
Algorithm~\ref{alg decision tree k moment} draws 
$d^{O(k)}\poly_{k,\Delta}((\alpha\eps)^{-1})$ samples from $D$, runs in $\poly(d^k)2^{\polylog_{k,\Delta}((\alpha\eps)^{-1})}$ time
and with probability at least $99\%$ outputs a hypothesis $\hat{h}$ with $\err_{D_f}(\hat{h}) \le 1/2-\Omega(\beta\eta)$.
In other words, there is a $(\Omega(\eps\beta),C/(1-\eta))$-weak partial learner for $\H$ under $D_\x$ with sample complexity $d^{O(k)}\poly_{k,\Delta}((\alpha\eps)^{-1})$ and time complexity $\poly(d^k)2^{\polylog_{k,\Delta}((\alpha\eps)^{-1})}$.

By Theorem~\ref{th condition boosting}, it follows that 
by calling the weak partial learner 
$\tilde O((\eps\beta)^{-2}/\eps')$ times, 
we obtain a hypothesis $\hat{h}$ with 
$\err_{D_f}(\hat{h}) \le C\opt_{D_f}(\H)/(2(1-\eta))+\eps'$.
Without loss of generality, we assume that $(C/2)\opt_{D_f}(\H)\le 1/2$; otherwise, the guarantee is vacuous.
This implies that 
\begin{align*}
 \err_{D_f}(\hat{h}) \le C\opt_{D_f}(\H)/(2(1-\eta))+\eps' \le (C/2)\opt_{D_f}(\H) + 2\eta + \eps' \le (C/2)\opt_{D_f}(\H)+\eps \;,   
\end{align*}
where the last inequality follows by our choice of $\eps'$. This concludes the proof of Theorem~\ref{thm: main-weak-to-strong}.    
\end{proof}


\paragraph{Testable Learning}
The algorithm establishing Theorem~\ref{thm: main-weak-to-strong} automatically gives an algorithm with the same complexity that succeeds 
in the testable learning framework~\cite{rubinfeld2023testing}, 
for any {\em fixed} logconcave distribution. 

Before we state our relevant theorem, we start with the definition of testable learning. 


\begin{definition}[Testable Agnostic Partial Learning under Logconcave Distributions] \label{def:testable}
Let $\mathcal{H}$ be a class of Boolean (i.e., $\{\pm 1\}$-valued)
functions over $\R^d$.
Let $D$ be a distribution over $\R^d \times \{\pm 1,0\}$ and 
$D_f$ be the conditional distribution of $D$ over the event 
$\mathcal{E} \eqdef \{(\x,y)\mid y \neq 0\}$. The error of $h: \R^d \to \{\pm 1\}$ 
is defined as $\err_{D_f}(h) = \Pr_{(\x,y)\sim D_f}[h(x) \neq y]$ and let 
$\opt_{D_f}(\H) \eqdef \inf_{h \in \H} \err_{D_f}(h)$.
Given a set of \iid examples $S$ drawn from $D$ 
and parameters $\eps,\delta \in (0,1)$, a testable learner with respect to a \emph{given} base distribution $D_B$ over $\R^d$ can either reject $S$
or accept and output a hypothesis $\hat{h}: \R^d \to \{\pm 1\}$.
Further, the following conditions must be met:
\begin{enumerate}
    \item(Soundness.) With probability at most $\delta$,
    the learner accepts but outputs a hypothesis $\hat{h}$, such that 
    $\err_{D_f}(\hat{h}) > F(\opt_{D_f}(\H))+\epsilon$, 
for some function $F:(0,1) \to (0,1)$. 
\item(Completeness.) Whenever $D_\x = D_B$, for some known logconcave distribution $D_B$ over $\R^d$, the learner will accept with probability at least $1-\delta$. 
\end{enumerate}
 For short, we say such a learner testably learns $\H$ with respect to $D_B$ to error $F(\opt_{D_f}(\H))+\epsilon$ with probability $1-\delta$.
\end{definition}

Our extension of Theorem~\ref{thm: main-weak-to-strong} to 
the testable learning setting is as follows.

\begin{theorem}[Testable Agnostic Partial Learning]\label{th testable}
Consider the task of testable agnostic partial learning $\H$ under logconcave distributions. 
    Suppose that $\H$ is $(k,\Delta,\beta,C)$-weakly learnable 
    by an algorithm $\A$ with sample complexity $m$ and time complexity $T$. 
    Then there is a testable agnostic partial learning algorithm for $\H$ with respect to a logconcave distribution $D_B$
    with the following guarantee: 
    given any $\eps\in (0,1)$ and sample access to any distribution $D$ 
    over $\R^d \times \{0,\pm 1\}$ of weight $W(D) \geq \alpha$ and sample access to $D_B$, the algorithm 
    draws $(1/\beta^{2}) \, d^{O(k)}\poly_{k,\Delta}(1/(\alpha\eps)) \, m$ samples 
    from $D$ and $D_B$, 
    runs in time $(1/\beta^{2}) \, d^{O(k)} \, 2^{\poly_{k,\Delta}(\log(1/(\alpha\eps))}\, T$, 
    and testably learns $\H$ with respect to $D_B$ to error $(C/2)\opt_{D_f}(\H) + \eps$ with probability at least $99\%$.
\end{theorem}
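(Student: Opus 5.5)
We follow the proof of Theorem~\ref{thm: main-weak-to-strong} essentially verbatim. The one change is that every place where the algorithm uses the marginal $D_\x$ is replaced by the known base distribution $D_B$. Concretely, the modified Algorithm~\ref{alg decision tree k moment} uses the following three quantities, each computed from samples of $D_B$ restricted to $R$:
\begin{enumerate}
\item the region weights $\Pr_{D_B}[R]$;
\item the $1/2$-isotropic normalization on each region $R$;
\item the moment reference distribution $D_{B}\mid R$ that is fed to the discrepancy finder $\A_m$ of Theorem~\ref{th direction finding}, and to the good-split search of Lemma~\ref{lm good split}.
\end{enumerate}
The labelled samples from $D$ are used only to sample $(D_f)_\x$ restricted to $R$ and to run the weak learner $\A_w$ on $D_f\mid R$. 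Since $D_B$ is logconcave, so is $D_B\mid R$ for every convex $R$. Hence the hypotheses of Theorem~\ref{th direction finding} and Lemma~\ref{lm good split} about the reference distribution hold no matter what $D$ is.

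The tester runs the partitioning procedure within the sample and time budget of Theorem~\ref{thm: main-weak-to-strong}. It rejects if any of the following occurs:
\begin{enumerate}
\item the number of regions exceeds the bound of Lemma~\ref{lm partition}, Item~\ref{it size};
\item the loop fails to terminate within that many rounds;
\item the empirical check of Item~\ref{it prob} fails, i.e.\ the estimated $(D_f)_\x$-mass of regions with $(D_f)_\x$-mass below $\xi$ exceeds $\eta/2$.
\end{enumerate}
It also rejects if, on some heavy region, the ratio between the empirical $(D_f)_\x$-density and the $D_B$-density of an explicit event used in a split or a discrepancy test looks inconsistent with $(D_f)_\x$ being a conditioning of $D_B$ of weight at least the required level. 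This last check guards the one place where $\A_m$ uses the promise that $D'_\x$ is a conditional of $D_\x$ with mass at least $\alpha$. Otherwise the tester accepts, and the learner runs the boosting of Theorem~\ref{th condition boosting} on top of this weak partial learner. Each boosting round reruns the partitioning step, with the same rejection rules, on the reweighted distribution $D^0$.

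\emph{Soundness.} Whenever we accept, each weak hypothesis was produced on heavy regions where the following holds for all $\u\in\s^{d-1}$ and $i\le k$:
\[
\abs{\E_{D_B\mid R}[(\u\cdot\x)^i]-\E_{(D_f)_\x\mid R}[(\u\cdot\x)^i]}\le \Delta .
\]
This holds because $\A_m$ returned $\perp$ there. Its soundness direction, Item~1 of Theorem~\ref{th direction finding}, needs no assumption on $D$ beyond the mass condition, which we checked. So on each heavy region, $D_f\mid R$ satisfies Condition~1 of Definition~\ref{def moment-based learner} with $D'_\x=D_B\mid R$. The argument in the proof of Theorem~\ref{th decision tree k moment} then gives advantage $\Omega(\eta\beta)$. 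That argument uses only the partition properties, which we verified explicitly, together with the weak-learner guarantee and the empirical error check. Each weak hypothesis therefore has its advantage certified directly on the distribution passed to it. Theorem~\ref{th condition boosting}, via Fact~\ref{th distribution specific boosting}, only requires that each oracle call succeeds, so the final hypothesis has error $(C/2)\opt_{D_f}(\H)+\eps$, with failure probability controlled by a union bound.

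\emph{Completeness.} If $D_\x=D_B$, every run is exactly the algorithm of Theorem~\ref{thm: main-weak-to-strong}. Lemma~\ref{lm partition} then shows that all verified properties hold with probability $99\%$, so we accept. The main obstacle is soundness of the discrepancy finder: its analysis (Lemma~\ref{lm Q function learning}, via $\Ind\{A\}/\alpha$) assumes $D'_\x$ is a genuine conditioning of the reference. When $D_\x\ne D_B$ this can fail, but only Item~2 of Theorem~\ref{th direction finding} (finding a discrepancy) relies on it. A false $\perp$ from Item~2 cannot hurt soundness directly, but it could make the partition terminate with the moment condition unverified. To rule this out, I would add a final empirical check. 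Under the moment condition, the $\perp$ certificate becomes testable by re-estimating moments along the searched subspace, and I would use the weak-learner error estimate as the ultimate certificate.
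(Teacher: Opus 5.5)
Your soundness argument fails at the step it depends on most. You claim that on every heavy region the first $k$ moments of $D_B\mid R$ and $(D_f)_\x\mid R$ agree up to $\Delta$ ``because $\A_m$ returned $\perp$'', and you cite Item~1 of Theorem~\ref{th direction finding}. Item~1 only guarantees that a \emph{returned} vector is a genuine discrepancy direction. The implication ``$\perp$ implies no $\Delta$-discrepancy in any direction'' is the contrapositive of Item~2, and the proof of Item~2 uses that the effective marginal is a conditioning of the reference. That proof needs a degree-$k$ polynomial $Q$ that has $L_2$-norm $O(\log^k(1/\alpha))$ under the reference and reproduces the moments of the effective marginal. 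Only then does the exhaustive search over the subspace $V$ of Lemma~\ref{lm subspace v3} see every discrepant direction.

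In fact, after you substitute $D_B$ for $D_\x$, the first step of Algorithm~\ref{alg mismatch finding} is not even well defined. Regressing $\Ind\{A\}/\alpha$ requires reference samples labeled by $A=\{y\neq 0\}$, and draws from $D_B$ carry no such labels.

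Your patches do not repair this:
\begin{itemize}
\item Density-ratio checks on a few explicit events do not certify that such a $Q$ exists.
\item ``Re-estimating moments along the searched subspace'' is circular, because $V$ captures all discrepancies only under the very promise in question.
\item The weak learner's empirical error cannot serve as the certificate. Both the termination case of Fact~\ref{th distribution specific boosting} (which derives $N_t\le 2\opt/(1-2a)$) and the bound on the mass of the failed regions $\mathcal{M}$ in Theorem~\ref{th decision tree k moment} need a \emph{failure} of $\A_w$ to certify large local error of the optimum. That holds only if the moment condition really holds.
\end{itemize}
You also leave two issues unaddressed: the moments of an arbitrary $(D_f)_\x$ need not be estimable from samples, and a good split need not exist once the conditioning promise fails.

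The missing idea is to make the discrepancy finder itself testable, as in Definition~\ref{def:discr-gen} and Lemma~\ref{lm testable discepancy finder}. The paper's finder works as follows.
\begin{itemize}
\item It first rejects unless the effective marginal puts mass below $\alpha\eps/100$ outside a ball of radius $O(\log(1/(\alpha\eps))\sqrt{d})$; a genuine conditioning of $D_B$ passes this test.
\item It replaces the effective marginal by its truncation $D_m$. This is $\eps$-close in total variation and has bounded support, hence estimable moments.
\item It solves a linear program, using only unlabeled draws from $D_B$ and draws from $D_m$, for a degree-$k$ polynomial $Q$ with $\norm{Q}_{D_B,2}=O(\log^k(1/\alpha))$. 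The constraint is that $\E_{D_B}[Q(\x)\x^\zeta]$ matches $\E_{D_m}[\x^\zeta]$ to accuracy $d^{-O(k)}2^{-O(k^2)}\Delta$ for every monomial of degree at most $k$. If no such $Q$ exists, it rejects.
\end{itemize}
Completeness comes from the existence argument of Lemma~\ref{lm Q function learning}. Conversely, once such a $Q$ is in hand, the dimension reduction and exhaustive search are sound for arbitrary $D_m$, so $\perp$ genuinely certifies moment matching with $D_B\mid R$.

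The paper then runs $\A_w$ on the truncated distributions of the heavy regions and absorbs the total-variation loss through the accuracy parameter $\eps'=c\Delta\xi^2\eps^2/T^{2k}$. It rejects whenever the testable finder rejects, a good split is not found, the budget is exceeded, or the light-region mass check fails.
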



Our algorithm for Theorem~\ref{th testable} will be essentially the same as the algorithm for Theorem~\ref{thm: main-weak-to-strong} with the following modification. In the process of running Algorithm~\ref{alg decision tree k moment}, we compute a partition of space so that (if our distributional hypotheses are satisfied) almost all of the mass of $(D_f)_\x$ lies in regions where the central moments of $(D_f)_\x$ and $D_B$ approximately match (as is certified by our discrepancy finder). For the testable learning setting, as a high-level idea, we will essentially run Algorithm~\ref{alg decision tree k moment} identically, except every time we draw a sample from $D_\x$, we draw it from $D_B$, and we stop the algorithm after using an appropriate amount of time and samples. If a partition is generated successfully before we stop the algorithm, then it is likely that we can get a weak partial hypothesis using this partition; otherwise, we just reject. In particular, if $D_\x = D_B$, we will accept with high probability, as the input is the same as the one in the standard learning setting.

To formalize the above intuition, we will define a testable discrepancy finder as follows.
\begin{definition}[$(k,\Delta,c)$-Testable Discrepancy Finder] \label{def:discr-gen}
    For $k \in \Z_+, \Delta>0, c, \in (0,1)$,
    we say that an algorithm $\A$ is a $(k,\Delta,c)$-testable discrepancy finder for $D_\x$,
    if it satisfies the following performance guarantee: 
    given sample access to a $1/2$-isotropic logconcave distribution $D_\x$ over $\R^d$ 
    and a distribution $D'_\x$ over $\R^d$, and accuracy parameter $\eps \in (0,1/2)$ and weight parameter $\alpha \in (0,1)$,
    it either rejects or accepts.  With probability at least $99\%$ over the sample access, if it accepts, it will output a distribution $D_m$ over $\R^d$, with $\dtv(D_m,D'_\x)<\eps$, and a direction $\v \in \s^{d-1}$ and $i \in [k]$ 
    such that $\abs{\E_{\x\sim D_\x}[(\v\cdot \x)^i]-\E_{\x\sim D_m}[(\v\cdot\x)^i]} \geq c\Delta$,  if there exists $\w \in \s^{d-1}$ 
    and $j \in [k]$ such that 
    $\abs{\E_{\x\sim D_\x}[(\w\cdot\x)^j]-\E_{\x\sim D_m}[(\w\cdot \x)^j]} \geq \Delta$.
    Furthermore, if $D'_\x$ is a conditional distribution of $D_\x$ over some event with mass at least $\alpha$, with probability at least $99\%$, the algorithm will accept.
\end{definition}

\begin{lemma}\label{lm testable discepancy finder}
For every $1/2$-isotropic logconcave distribution $D_\x$ over $\R^d$, for every $k \in \Z_+, \Delta \in (0,1)$, there is a $(k,\Delta,2^{-O(k^2)})$ testable discrepancy finder for $D_\x$ with sample complexity $d^{O(k)}\poly_{k,\Delta}(1/(\alpha\eps))$ and time complexity $\poly(d^k/\eps)2^{\polylog_{k,\Delta}(1/\alpha)}$.
\end{lemma}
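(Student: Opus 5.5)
The approach is to reuse the pipeline behind Theorem~\ref{th direction finding} (Algorithm~\ref{alg mismatch finding}), but to have everything that used the hidden event $A$ depend only on samples of $D'_\x$. We also add explicit tests that reject whenever $D'_\x$ behaves unlike a conditioning of $D_\x$ with mass at least $\alpha$. The distribution $D_m$ that we certify against will be the empirical (or a truncated) version of $D'_\x$. Its moments are what the algorithm actually computes, so $\dtv(D_m,D'_\x)<\eps$ will come from a truncation argument.

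\textbf{Step 1: build $Q$ and run tests.} Since $D_\x$ is known, we can draw as many samples from it as we like. We run the regression of Lemma~\ref{lm Q function learning}, but with the target $\Ind\{A\}/\alpha$ replaced by the empirical functional $p\mapsto \E_{D'_\x}[p]$. Concretely, we find $Q\in P_k$ minimizing $\E_{D_\x}[Q^2]-2\E_{D'_\x}[Q]$, the Riesz representer of $p\mapsto\E_{D'_\x}[p]$ under $\langle\cdot,\cdot\rangle_{D_\x}$. This only needs moments of $D'_\x$ up to degree $k$. We then impose the following checks, and reject if any fails.
\begin{enumerate}
\item $\norm{Q}_{D_\x,2}\le O(\log^k(1/\alpha))$.
\item Tail mass: for every direction-threshold pair $(\w,t)$ on a net, $\Pr_{D'_\x}[\w\cdot\x>t]\le \Pr_{D_\x}[\w\cdot\x>t]/\alpha+\eps$. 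The relevant class has bounded VC dimension, so uniform convergence applies, and the check itself is carried out through the moment/subspace reduction below rather than over all of $\R^d$.
\end{enumerate}
In the completeness case, $D'_\x=(D_\x\mid A)$, and every test passes by Lemma~\ref{lm Q function learning} and the sub-exponential tails (Fact~\ref{fact tail logconcave}). The tail check lets us define $D_m$ as $D'_\x$ conditioned on $\norm{\x}\le \poly(d)\log(1/(\alpha\eps))$. We then have $\dtv(D_m,D'_\x)<\eps$, and the degree-$\le 2k$ empirical moments of $D_m$ concentrate with $d^{O(k)}\poly(1/(\alpha\eps))$ samples. In particular, $\E_{D_\x}[Qp]\approx\E_{D_m}[p]$ holds for all $p\in P_k$, up to $\eps'\norm{p}_{D_\x,2}$.

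\textbf{Step 2: dimension reduction and search.} Lemma~\ref{lm subspace v3} only ever accesses $D_\x$ and the polynomial $Q$, and its guarantee is purely a statement about $Q$ under $D_\x$. We therefore apply it, twice as in Algorithm~\ref{alg mismatch finding}, to $Q-1$ to obtain a subspace $V$ of dimension $(\log(1/\alpha)\log\log d/\Delta)^{2^{O(k)}}$. This is valid because the bound $\norm{Q}_{D_\x,2}$ has been verified in Step 1. Then, exactly as in the proof of Theorem~\ref{th direction finding}, if some $\w$ has a discrepancy of at least $\Delta$ between $D_\x$ and $D_m$, some $\v\in V$ has a discrepancy of $2^{-O(k^2)}\Delta$ in the $Q$-representation. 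Since $Q$ represents $D_m$ up to $\eps'$, the same holds for the true moments of $D_m$. A brute-force net search over $V$ finds such a $\v$ in time $2^{\polylog_{k,\Delta}(1/\alpha)}$, using the empirical moments of $D_m$, and we output it.

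\textbf{Main obstacle.} The delicate point is the second application of Lemma~\ref{lm Q function learning} on the projected distribution $D^{(U)}_\x$. It again requires a bounded-norm representer, and without the promise on $A$ this must be enforced by the norm check rather than derived from H\"older's inequality. Soundness needs only that the checked $Q$ represents $D_m$'s moments; the $\log^k(1/\alpha)$ norm bound is used solely for the dimension count, and is therefore tested. A related subtlety is choosing the truncation radius so that $D_m$'s moments are estimable without assumptions on $D'_\x$. The approach is to make the truncation explicit: $D_m$ is defined as the truncated empirical distribution, so its moments are bounded by construction, and the tail test is what ensures that only $\eps$ mass is discarded.
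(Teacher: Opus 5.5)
Your architecture is the paper's, and your least-squares representer is an acceptable substitute for the paper's feasibility program. The paper's steps are: test a tail event of $D'_\x$, let $D_m$ be $D'_\x$ truncated to a ball of radius $r$, find a degree-$k$ polynomial $Q$ with $\norm{Q}_{D_\x,2}\le O(\log^k(1/\alpha))$ that represents the moments of $D_m$ against $D_\x$ (rejecting otherwise), and drive Algorithm~\ref{alg mismatch finding} with this $Q$. Your substitute works because in the completeness case the exact representer of $p\mapsto\E_{D_m}[p]$ already has norm $O(\log^k(1/\alpha))$, by the argument of Lemma~\ref{lm Q function learning}.

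The gap is in how $Q$ is tied to $D_m$. As written, you fit $Q$ to the empirical functional of the \emph{untruncated} samples of $D'_\x$. You then check only its norm and assert $\E_{D_\x}[Qp]\approx\E_{D_m}[p]$, which does not follow. Soundness must hold for an arbitrary $D'_\x$, and the mass that truncation discards is negligible in total variation but not in moments. For example, an atom of mass $\Delta/R^k$ at $R\w$ with $R>r$ costs only $\Delta/R^k\ll\eps$ in total variation, yet shifts $\E[(\w\cdot\x)^k]$ by $\Delta$. Moreover, for an arbitrary $D'_\x$ the untruncated empirical moments need not even concentrate. So your $Q$ can mask a $\Delta$-discrepancy of $D_m$, a bound on $\norm{Q}_{D_\x,2}$ cannot detect this, and Lemma~\ref{lm subspace v3} applied to $Q-1$ then says nothing about $D_m$.

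The paper avoids this by truncating first and constraining $Q$ against $D_m$ itself. It requires $\abs{\E_{D_\x}[Q\x^\zeta]-\E_{D_m}[\x^\zeta]}\le d^{-O(k)}2^{-O(k^2)}\Delta$ for every monomial of degree at most $k$, together with the norm bound, and rejects if this is infeasible. It estimates the moments of $D_m$ from $r^{O(k)}2^{O(k^2)}\poly(d^k/\Delta)$ samples, which is possible precisely because $D_m$ lives in the ball. Relatedly, $D_m$ must be this population truncation, not the ``truncated empirical distribution'' of your last paragraph: an empirical distribution is at total variation distance $1$ from any continuous $D'_\x$.

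Your tail test also needs replacing. The truncation requires $\Pr_{D'_\x}[\norm{\x}_2>r]\le\eps$. Checks of $\Pr_{D'_\x}[\w\cdot\x>t]\le\Pr_{D_\x}[\w\cdot\x>t]/\alpha+\eps$ with additive slack $\eps$ over a net of directions do not imply this, because a union bound over a net of $\s^{d-1}$ multiplies the slack by $2^{\Omega(d)}$. Moreover, running such a check ``through the moment/subspace reduction below'' is circular, since that reduction presupposes $D_m$ and a $Q$ representing it. The paper simply estimates the probability of the single event $\{\norm{\x}_2>r\}$, with $r=100C\log(1/(\alpha\eps))\sqrt d$, from $\poly(1/(\alpha\eps))$ samples, and rejects unless it is below $\alpha\eps/100$. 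Completeness then follows from the norm tail bound for logconcave distributions, and $\dtv(D_m,D'_\x)<\eps$ is immediate. With these two repairs your Step~2 goes through as in the paper. You also need the same truncate-then-fit-then-check construction for the second polynomial $Q'$ on the projection to $U$, which you correctly flag as needing an explicit norm test.
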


\begin{proof}[Proof of Lemma~\ref{lm testable discepancy finder}]
    By Lemma~\ref{lm subspace v3}, Lemma~\ref{lm Q function learning} and Theorem~\ref{th direction finding},
    suppose there is a degree $k$ polynomial $Q(\x)$ with $\norm{Q}_{D_B,2} \le O(\log^k(1/\alpha))$, such that  $\abs{\E_{\x\sim D_B}[Q(\x)p(\x)]-\E_{\x\sim D_m}[p(\x)]} \le 2^{-O(k^2)}\Delta $ for every degree $k$ polynomial $p(\x)$ of the form $p(\x)= \prod_{i=1}^k(\w^{(i)}\cdot\x)$, where $\w^{(i)} \in \s^{d-1}$,
    then Algorithm~\ref{alg mismatch finding} with such a function $Q$, will output in time $d^{O(k)}2^{\polylog_{k,\Delta}(1/\alpha)}$, a direction $\v \in \s^{d-1}$ and $i \in [k]$ 
    such that $\abs{\E_{\x\sim D_B}[(\v\cdot \x)^i]-\E_{\x\sim D_m}[(\v\cdot\x)^i]} \geq c\Delta$,  if there exists $\w \in \s^{d-1}$ 
    and $j \in [k]$ such that 
    $\abs{\E_{\x\sim D_B}[(\w\cdot\x)^j]-\E_{\x\sim D_m}[(\w\cdot \x)^j]} \geq \Delta$.

Given sample access to $D'_\x$, we will first use $\poly(1/(\alpha\eps))$ example to check 
if $\Pr_{\x\sim D'_\x}[\norm{\x}_2>100C\log(1/\alpha\eps)\sqrt{d}]<\alpha\eps/100$. If not, 
we will simply reject.
Since $D_B$ is a $1/2$-isotropic logconcave distribution, 
it follows from Fact~\ref{fact norm} that for $t>1$, 
we have $\Pr_{\x \sim D_B}[\norm{\x}_2>Ct\sqrt{d}]<\exp(-t\sqrt{d})$, 
which implies that if $D'_\x$ is a conditional distribution of $D_B$ 
over some event with mass $\alpha$, then 
$\Pr_{\x\sim D'_\x}[\norm{\x}_2>100C\log(1/\alpha\eps)\sqrt{d}]<\alpha\eps/100$ 
and we will not reject with high probability at this step.

Notice that the distribution $D_m$ defined as $D'_\x$ conditioned over examples with norm $\norm{\x}_2<r:=100C\log(1/\alpha\eps)\sqrt{d}$ satisfies $\dtv(D_m,D'_\x)<\eps$. To complete the proof, it suffices to find a $Q$ function discussed above.
By Lemma~\ref{lm Q function learning}, if $D'_\x$ is a conditional distribution of $D_B$ over an event with mass $\alpha$, such a function $Q$ exists. So, it suffices to find such a polynomial $Q$ using only sample access to $D_m$ and $D_B$. Let $B_{d,k} = d^{O(k)}$.
    We will use linear program to find the coefficient $(q_\zeta)_\zeta$ of polynomial $Q(\x) = \sum_\zeta q_\zeta \x^\zeta$, with $\norm{Q}_{D_B}\le O(\log^k(1/\alpha))$
    such that for each monomial $\x^\zeta$,
    \begin{align*}
        \abs{\E_{\x\sim D_B}[Q(\x) \x^\zeta]-\E_{\x\sim D_m} [\x^\zeta]} \le B_{d,k}^{-1}2^{-O(k^2)}\Delta .
    \end{align*}
    Notice that suppose this holds, then for every polynomial $p(\x) = \prod_{i=1}^k(\w^{(i)}\cdot\x)= \sum_{\zeta}a_\zeta \x^\zeta$, we have 
    \begin{align*}
    \abs{\E_{\x\sim D_B}[Q(\x)p(\x)]-\E_{\x\sim D_m}[p(\x)]} \le B_{d,k}^{-1}2^{-O(k^2)}\Delta \sum_\zeta \abs{a_\zeta} \le 2^{-O(k^2)}\Delta. 
    \end{align*}
Here, we use the fact that $\sum_{\zeta}\abs{a_\zeta}\le d^{O(k)}$.
The above linear program can be built by using samples from $D_B$ and $D_m$ to estimate $\E_{\x \sim D_B}[\x^\zeta]$ and $\E_{\x \sim D_m}[\x^\zeta]$ to error $B_{d,k}^{-1}2^{-O(k^2)}\Delta/4$.
We remark that if such a function $Q(\x)$ is not found, then we can just reject.
Since $D_B$ is logconcave, for each monomial, we can take the empirical mean of $\x^\zeta$ using $2^{-O(k^2)}\poly(d^k/\Delta)$ examples to do this estimation.
Since $\x \sim D_m$ has bounded norm $r$, we know that we can take $r^{O(k)}2^{O(k^2)}\poly(d^k/\Delta)$ examples from $D_m$, we can estimate the expectation of $\x^\zeta$ to the desired accuracy. This completes the proof.
\end{proof}

\begin{proof}[Proof of Theorem~\ref{th testable}]
We run the algorithm in Theorem~\ref{thm: main-weak-to-strong} with a modified version of Algorithm~\ref{alg decision tree k moment}. Let $T,\gamma,\tau,$ and $\xi$ be as in Algorithm~\ref{alg decision tree k moment}.
Every time Algorithm~\ref{alg decision tree k moment} draws an example from $D_\x$, instead, we draw it from $D_B$. Every time Algorithm~\ref{alg decision tree k moment} calls a discrepancy finder for $(D_R)_\x$ and $((D_R)_f)_\x$ over some region $R$, we will call a 
$(k,\Delta,c)$-testable discrepancy finder for $(D_B)\mid_R$ with input distribution $((D_R)_f)_\x$ with parameter weight parameter $\xi\eps$ and accuracy parameter $\eps'=(c\Delta\xi^2 \eps^2)/T^{2k}$.
 We will terminate the algorithm mutually and reject if its running time exceeds $d^{O(k)}2^{\poly_{k,\Delta}(1/\eps)}$ or draws more than $d^{O(k)}\poly_{k,\Delta}(1/\eps)$ samples or it stops because the testable discrepancy finder rejects or a good split is not found.

If a partition $\mathcal{U}$ is output, we will check if $\sum_{R \in \mathcal{B}}\Pr_{(\x,y)\sim D_f}[\x \in R] \le \eta/2$, where $\mathcal{B}=\{R \in \mathcal{U} \mid \Pr_{(\x,y)\sim D_f}[\x \in R] \le \xi\}$. We will reject if it is not the case, and otherwise we will run the weak learner over each region $R$ with $\Pr_{(\x,y)\sim D_f}[\x \in R] > \xi$ over the distribution output by the testable discrepancy finder. Since the distribution $(D_R)_m$ output by the testable discrepancy finder has TV distance at most $\eps'$, over all such regions $R$, we only throw away at most $\xi$ fraction of the mass. Furthermore, by the definition of the testable discrepancy finder, for every  $R \in \mathcal{U}$, with $\Pr_{(\x,y)\sim D_f}[\x \in R]>\xi$, it holds for every $\u \in \s^{d-1}$ and for every $i \in [k]$,
$\abs{\E_{\x\sim (D_B)_R}(\u\cdot \x)^i-\E_{(\x,y)\sim (D_R)_m}(\u\cdot \x)^i} \le \Delta,$ which implies that the modified version of Algorithm~\ref{alg decision tree k moment} will output a hypothesis with $\Omega(\eta\beta)$ advantage over $D_f$.
Thus, the algorithm used for Theorem~\ref{thm: main-weak-to-strong}, will succeed. Furthermore, since the testable discrepancy finder will accept with a high probability when $D_\x = D_B$, we know that the whole testable learning algorithm will accept when $D_\x = D_B$ with a high probability.
\end{proof}

\section{Algorithmic Applications to Concrete Hypothesis Classes}\label{sec application}

In this section, we  give 
algorithmic applications of Theorem~\ref{thm: main-weak-to-strong} 
to several fundamental families of multi-index models.


\paragraph{Organization of This Section}
In Section~\ref{sec tools}, we will give basic technical lemmas that 
will be used for designing algorithms in this section. 
In Section~\ref{sec generic class}, we introduce a generic 
family of concept classes that can be partially learned using Theorem~\ref{thm: main-weak-to-strong}. Specifically, we show that a wide range of  
``smooth'' $K$-MIMs $\H$ have an efficient 
$(\ell,\Delta,\beta,C)$-weak learner, 
where the parameters $\ell,\Delta,\beta,C$ only depend 
on geometric properties of the underlying class.
This implies that any such family of smooth MIMs 
admits an agnostic partial learner under logconcave distributions
with sample complexity $\poly_{K,\Delta}(d,1/\eps)$ 
and computational complexity 
$\poly_{K,\Delta}(d)2^{\polylog_{K,\Delta}(1/\eps)}$.
We remark that the parameters $(\ell,\Delta,\beta,C)$ 
for the weak learning algorithm above could be qualitatively quite large, 
if they are derived directly from the underlying geometric properties of the class. To rectify this, in Section~\ref{sec sandwhiching polynomial}, 
we show that MIMs that admit low-degree $L_1$-sandwiching 
polynomials can be learned to any constant factor approximation 
using our framework. As a corollary of this result, 
in Section~\ref{ssec:qptas-apps}, we obtain such a constant factor approximation for agnostically learning functions of halfspaces and low-dimensional low-degree PTFs. 
In Section~\ref{sec partial linear}, we use a refinement of our framework 
to obtain improved learning algorithms for agnostic partial learning of 
halfspaces under logconcave distributions. Finally, 
in Section~\ref{sec multiclass}, we leverage these algorithms 
to obtain efficient and robust learners for multiclass linear classification.



\subsection{Basic Tools for Weak Learning Algorithms}\label{sec tools}

In this subsection, we give some basic tools that we will use in the following subsections.
By Theorem~\ref{thm: main-weak-to-strong}, given a hypothesis class $\H$, to design an agnostic partial learner under logconcave distributions, it suffices to design an efficient 
$(k,\Delta,\beta,C)$-weak learner for it.

Our weak learner makes use of the following simple framework. 
Given a distribution $D$ over $\R^d \times \{\pm 1\}$,
if we can find a real-valued function $g(\x)$ such that 
$\E_{(\x,y)\sim D}[yg(\x)]$ is non-trivial, 
then we can sample a random threshold $t \in [-1,1]$ 
such that $\err_D(\sign(g(\x)\new{-t)})$ has small error with non-trivial probability.

To formally formulate this framework, we give the following lemma.

\begin{lemma}\label{lm robust chow parameter}
    Let $D$ be a distribution over $\R^d \times \{\pm 1\}$, 
    and let $\H$ be a hypothesis class over $\R^d$. 
    Let $h^* \in \H$ be such that $\err_D(h^*) = \opt = \inf_{h \in \H}\err_D(h)$. 
    For a real-valued function $p: \R^d \to \R$ and $\beta>0$, the following holds.
    \begin{enumerate}[leftmargin=*]
        \item If $\E_{\x \sim D_\x}[h^*(\x)p(\x)]>\beta$ and $\opt<\beta^2(\E_{\x \sim D_\x}[p^2(\x)])^{-1}/16$, then $\E_{(\x,y)\sim D}[yp(\x)]>\beta/2$.
        \item If $\E_{(\x,y)\sim D}[yp(\x)]>\beta$, then the function $g(\x): \R^d \to [-1,1]$, defined by
        \begin{equation} \label{eqn:g}
        g(\x)= \proj_{[-1,1]} \left( \beta p(\x)(\E_{\x \sim D_\x}[p^2(\x)])^{-1}/4 \right) \;,
        \end{equation}
        satisfies $\E_{(\x,y)\sim D}[yg(\x)]>\Omega(\beta^2/(\E_{\x \sim D_\x}[p^2(\x)]))$.
    \end{enumerate}        
\end{lemma}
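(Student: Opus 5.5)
For Item 1, I will compare $\E[yp(\x)]$ with $\E[h^*(\x)p(\x)]$ directly. Since $y$ and $h^*(\x)$ are both $\pm 1$-valued, their difference $y-h^*(\x)$ is nonzero (and then of absolute value $2$) only on the event $\{y\neq h^*(\x)\}$, which has probability $\opt$. This gives
\[
\abs{\E_{(\x,y)\sim D}[yp(\x)]-\E_{\x\sim D_\x}[h^*(\x)p(\x)]} \le 2\,\E\bigl[\Ind\{y\neq h^*(\x)\}\abs{p(\x)}\bigr] \le 2\sqrt{\opt}\,\norm{p}_{D_\x,2},
\]
where the last step is Cauchy--Schwarz. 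Under the hypothesis $\opt<\beta^2/(16\E[p^2])$ the right-hand side is less than $2\cdot\beta/4=\beta/2$. Therefore $\E[yp(\x)]>\beta-\beta/2=\beta/2$. This part is routine.

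For Item 2, write $\sigma^2=\E_{\x\sim D_\x}[p^2(\x)]$ and $\lambda=\beta/(4\sigma^2)$, and let $q(\x)=\lambda p(\x)$, so that $g=\proj_{[-1,1]}(q)$. The unclipped function already has good correlation: $\E[yq(\x)]=\lambda\E[yp]>\beta^2/(4\sigma^2)$. It remains to show that clipping loses at most half of this. The clipping error $e(\x)=q(\x)-g(\x)$ is nonzero only where $\abs{q}>1$, and there $\abs{e}\le\abs{q}\le q^2$. Hence
\[
\abs{\E[y e(\x)]}\le \E[q^2(\x)]=\lambda^2\sigma^2=\frac{\beta^2}{16\sigma^2}.
\]
This yields $\E[yg(\x)]\ge \beta^2/(4\sigma^2)-\beta^2/(16\sigma^2)=3\beta^2/(16\sigma^2)=\Omega(\beta^2/\sigma^2)$. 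We also have $g\in[-1,1]$ by construction.

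The only delicate point is the clipping step. The scaling factor $\beta/(4\sigma^2)$ is chosen precisely so that the quadratic loss $\E[q^2]$ is a constant fraction smaller than the linear gain $\E[yq]$. The bound $\abs{q}\,\Ind\{\abs{q}>1\}\le q^2$ is what makes this work. The argument requires no distributional assumptions beyond $\sigma^2<\infty$. If needed downstream, the same computation combined with a random threshold $t\in[-1,1]$ converts $g$ into a Boolean weak hypothesis, because $\E_t[\sign(g(\x)-t)]=g(\x)$.
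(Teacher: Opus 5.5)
Your proof is correct and follows the paper's argument. In Item~1 you use the same decomposition $\E[yp]=\E[h^*p]+\E[(y-h^*)p]$ with Cauchy--Schwarz, and your arithmetic $\beta-\beta/2$ is the right one under the stated $1/16$ hypothesis (the paper's write-up slips to $\beta-\beta/4$ there); in Item~2 the paper splits on $\{\abs{p}<W\}$ with $W=4\E[p^2]/\beta$, where $g=p/W$ exactly, and bounds the two tail contributions separately via Chebyshev and Cauchy--Schwarz, whereas your pointwise bound $\abs{q-\proj_{[-1,1]}(q)}\le q^2$ does this in one step with a slightly better constant.
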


\begin{proof}[Proof of Lemma~\ref{lm robust chow parameter}]
The first item of the statement follows from the inequalities below
\begin{align*}
    \E_{(\x,y)\sim D}[yp(\x)] & = \E_{(\x,y)\sim D}[h^*(\x)p(\x)] + \E_{(\x,y)\sim D}[(y-h^*(\x))p(\x)]\\
    &\ge \E_{(\x,y)\sim D}[h^*(\x)p(\x)] - 2\E_{(\x,y)\sim D}[\Ind\{y \neq h^*(\x)\} \abs{p(\x)}] \\
    &  \ge \E_{\x\sim D_\x}[h^*(\x)p(\x)] - 2\sqrt{\opt\E_{(\x,y)\sim D}[ p(\x)^2]} > \beta -\beta/4=3\beta/4. 
\end{align*}
Here, in the first inequality, we use the fact that $h^*$ takes values in $\{\pm 1\}$.
The second inequality follows from H\"{o}lder's inequality. 
The third inequality follows from the assumption that 
$\opt<\beta^2(\E_{\x \sim D_\x}[p^2(\x)])^{-1}/4$.

In the rest of the proof, we prove the second item of the statement.
Let $W:=4(\E_{\x \sim D_\x}[p^2(\x)])/\beta$. 
    We have 
    \begin{align*}
        \E_{(\x,y)\sim D}[yg(\x)] = \E_{(\x,y)\sim D}[yg(\x)\Ind\{\abs{p(\x)}<W\}]+\E_{(\x,y)\sim D}[yg(\x)\Ind\{\abs{p(\x)}>W\}].
    \end{align*}

By the choice of $W$, we have 
\begin{align*}
    \abs{\E_{(\x,y)\sim D}[yg(\x)\Ind\{\abs{p(\x)}>W\}]}  \le \Pr_{\x \sim D_\x}[\abs{p(\x)}>W] \le \E_{\x \sim D_\x}[p^2(\x)]/W^2 \le \beta^2/(16(\E_{\x \sim D_\x}[p^2(\x)])).
\end{align*}
Here the first inequality follows from the fact that $\abs{yg(\x)}\le 1$ 
and the second inequality follows from Markov's inequality.
Furthermore,
\begin{align*}
    \E_{(\x,y)\sim D}[yp(\x)\Ind\{\abs{p(\x)}<W\}] &= \E_{(\x,y)\sim D}[yp(\x)]-\E_{(\x,y)\sim D_\x}[yp(\x)\Ind\{\abs{p(\x)}>W\}] \\
    &\ge  \E_{(\x,y)\sim D}[yp(\x)]-\sqrt{\E_{\x\sim D_\x}[p^2(\x)]\Pr_{\x \sim D_\x}[\abs{p(\x)}>W]} \\
    & \ge \beta - \beta/2 = \beta/2 \;,
\end{align*}
    which implies by the definition of $g(\x)$ that
    \begin{align*}
        \E_{(\x,y)\sim D}[yg(\x)\Ind\{\abs{p(\x)}<W\}] \ge 3\beta^2/(16(\E_{\x \sim D_\x}[p^2(\x)])).
    \end{align*}
So, we conclude that $g(\x):\R^d \to [-1,1]$ satisfies $\E_{(\x,y)\sim D}[yg(\x)]>\Omega(\beta^2/(\E_{\x \sim D_\x}[p^2(\x)]))$.
\end{proof}

Lemma~\ref{lm robust chow parameter} shows the following: 
For an agnostic learning problem, suppose we have some real-valued function $p(\x)$ 
such that the correlation $\E_{\x\sim D_\x}[h^*(\x)p(\x)]$ is non-trivial, 
where $h^*(\x)$ is the hypothesis in the target class that achieves the optimal error. 
Then, as long as $\opt$ is reasonably small, we can find a function $g(\x)$ 
with values in $[-1,1]$ such that it has non-trivial correlation with respect 
to the noisy label $y$.
Once such a $[-1,1]$-valued function $g(\x)$ is found, we can easily round 
it to a Boolean function as a weak hypothesis over the noisy distribution $D$.

\begin{fact}\label{fact correlation}
Let $X$ be a domain of examples and let $D$ be a distribution over $X \times \{\pm 1\}$. Let $g(\x): X \to [-1,1]$ such that $\E_{\x\sim D}[yg(\x)]>\gamma$, then $\E_{t \sim [-1,1]}[\err_{D}(\sign(g(\x)-t))]<(1-\gamma)/2$.    
\end{fact}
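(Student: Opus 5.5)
We compute the expected error over a uniformly random threshold $t\in[-1,1]$ pointwise in $(\x,y)$ and then average over $D$. Concretely, for a fixed pair $(\x,y)$ write $z=g(\x)\in[-1,1]$. The hypothesis $\sign(z-t)$ outputs $+1$ exactly when $t<z$, which for $t$ uniform on $[-1,1]$ happens with probability $(1+z)/2$. It outputs $-1$ with probability $(1-z)/2$; the boundary event $t=z$ has measure zero and can be ignored.

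From this, the probability over $t$ that $\sign(z-t)\neq y$ is $(1-z)/2$ when $y=+1$ and $(1+z)/2$ when $y=-1$. Both cases are captured by the single formula
\[
\Pr_{t}[\sign(g(\x)-t)\neq y]=\frac{1-y\,g(\x)}{2}.
\]

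Next we take the expectation over $(\x,y)\sim D$ and exchange it with the expectation over $t$, which is justified by Fubini since the integrand is a bounded indicator. This gives
\[
\E_{t}\bigl[\err_D(\sign(g(\x)-t))\bigr]=\frac{1-\E_{(\x,y)\sim D}[y\,g(\x)]}{2}<\frac{1-\gamma}{2},
\]
using the hypothesis $\E[y g(\x)]>\gamma$.

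There is no real obstacle in this argument. The only points needing care are that the range of $g$ is contained in $[-1,1]$, so that $(1\pm z)/2$ are genuine probabilities, and that measure-zero ties do not affect the error. As a consequence, by averaging, some threshold $t$ achieves error below $(1-\gamma)/2$, and a random threshold achieves error at most $(1-\gamma/2)/2$ with probability $\Omega(\gamma)$ by Markov's inequality applied to the complementary quantity. This is what the subsequent weak learner uses.
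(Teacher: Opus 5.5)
Your proof is correct and is essentially the paper's argument: you compute pointwise that a uniform threshold $t\in[-1,1]$ errs with probability $(1-y\,g(\x))/2$, then average over $D$, using Fubini to swap the expectations. Your closing remark, that a random threshold has error at most $(1-\gamma/2)/2$ with probability $\Omega(\gamma)$ by a reverse-Markov bound on the bounded error, is also correct, though it goes beyond what the statement asks.
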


\begin{proof}[Proof of Fact~\ref{fact correlation}]
    To simplify the notation, let $g_t(\x) = \sign(g(\x)-t), t \in [-1,1]$. Fix an example $(\x,y) \in X\times \{\pm 1\}$. If $y=1$, then we have $\Pr_{t\sim [-1,1]}[g_t(\x) \neq y] = (1-g(\x))/2$, while if $y=-1$, then we have $\Pr_{t\sim [-1,1]}[g_t(x) \neq y] = (1+g(\x))/2$. This implies that for every example $(\x,y)$, we have $\Pr_{t\sim [-1,1]}[g_t(\x) \neq y] = (1-yg(\x))/2$. Thus, we have 
    \begin{align*}
      \E_{t\sim [-1,1]}\left[\err_{D}(g_t)\right] = \E_{(\x,y)\sim D}\left[ \Pr_{t\sim [-1,1]}\left[g_t(\x) \neq y\right]\right] = \E_{(\x,y)\sim D} \frac{1-yg(\x)}{2} \le \frac{1-\gamma}{2} \;.
    \end{align*}
\end{proof}

\subsection{Generic Family of MIMs}\label{sec generic class}
By Lemma~\ref{lm robust chow parameter} and Fact~\ref{fact correlation}, to get an efficient weak learning algorithm for a class $\H$ under a distribution $D$,
it suffices to show that when the moment information of the marginal distribution is close to that of some logconcave distribution, 
then we can efficiently find a polynomial with non-trivial correlation 
with respect to the optimal hypothesis.

Based on this observation, we show that a broad family of MIMs 
that satisfies a natural ``smoothness'' condition 
can be efficiently weakly learned. 
Specifically, we introduce the following generic family of MIMs.



\begin{definition}[$(K,\eta)$-good Multi-Index Models] \label{def:nice-MIMs}
    We say that a family of Boolean functions $\H$ on $\R^d$ 
    is a $(K,\eta)$-good class of Multi-Index Models if 
    it satisfies the following properties: 
    \begin{enumerate}[leftmargin=*]
        \item 
        $\H$ is a family of $K$-Multi-index models ($K$-MIMs). That is, 
        for every $h \in \H$ there is a $K$-dimensional subspace $V \subset \R^d$  
        such that $h(\x) = h(\proj_V(\x))$. Equivalently, there is a class of Boolean functions $\mathcal{G}$ over $\R^{K}$ 
        such that for every $h \in \H$ there exists $g \in \mathcal{G}$ and a matrix $\mathbf{W} \in \R^{K \times d}$ 
        with $\mathbf{W}\mathbf{W}^\top = \mathbf{I}_{K \times K}$  
        so that $h(\x) = g(\mathbf{W} \x)$.
        \item  Every $g \in \mathcal{G}$, such that $h(\x) = g(\mathbf{W} \x)$ for some $h \in \H$,         
        satisfies the following condition:
        For every ellipsoid $E=\mathcal{E}(\u,A)$, in $\R^K$, 
        there is a ball $E'=\mathcal{E}(\v,A/\eta)$ such that $E' \subseteq E$ and $g(\x)$ 
        has a constant sign over $E'$.
    \end{enumerate}
\end{definition}

\begin{figure}[tbp]
    \centering
    \includegraphics[width=0.45\linewidth]{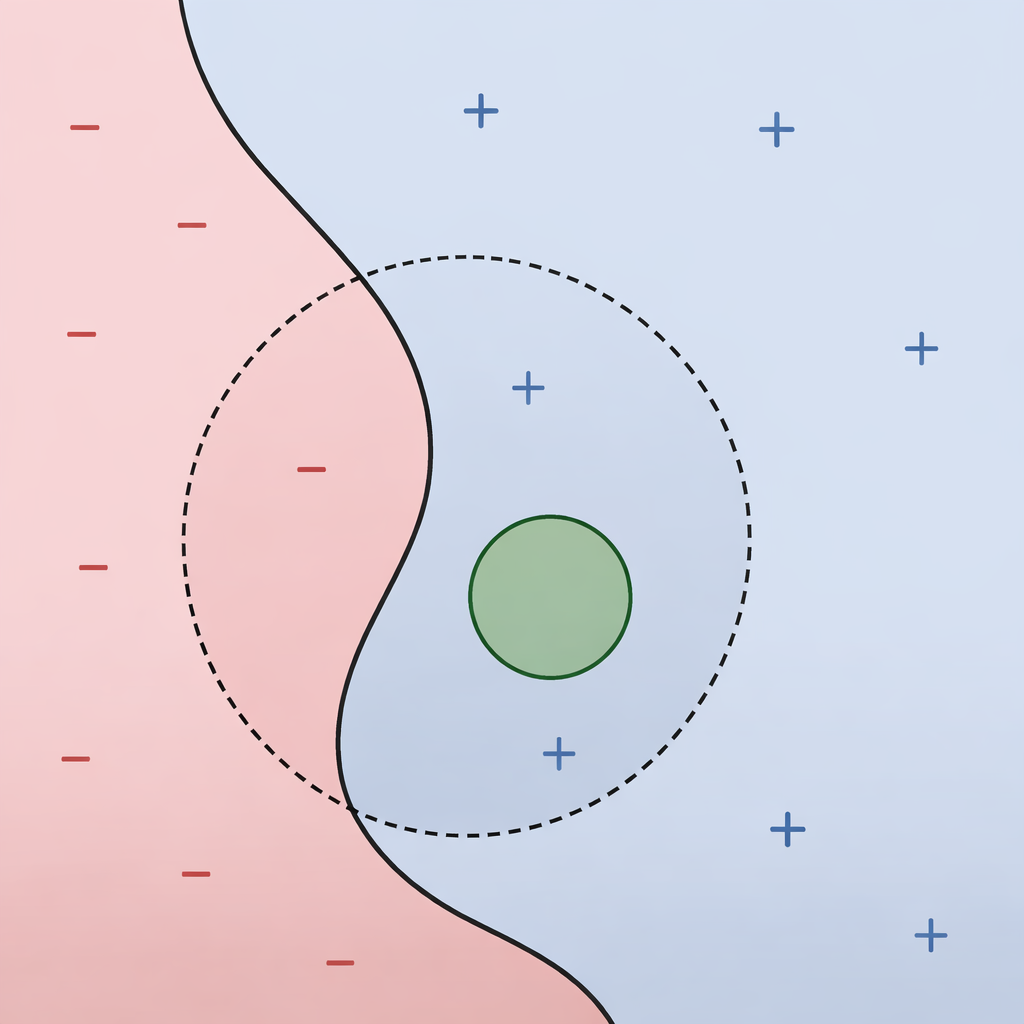}
    \caption{Generic Geometric Concept Class}
    \label{fig:placeholder}
\end{figure}

Intuitively, any function in a $(K,\eta)$-good class of MIMs 
has a reasonably smooth decision boundary, 
so that the classification of points will not change 
too frequently within a small ball.

As a concrete example, 
consider the class of halfspaces---a basic class of $1$-MIMs.
Given any interval $I \subseteq \R$ 
and any one-dimensional linear threshold function,
one can always find a subinterval $I'\subseteq I$, 
with half of the length, such that examples in $I'$ have the same sign.
This means the class of halfspaces is a $(1,1/2)$-good class of MIMs.


\medskip

Our main learning result for $(K,\eta)$-good classes of MIMs
is the following theorem.

\begin{theorem}[Agnostic Partial Learning of Good MIMs]\label{th generic family}
     Let $\H$ be a $(K,\eta)$-good class of MIMs over $\R^d$.  
     There is an algorithm such that for every distribution $D$ over $\R^d \times \{0,\pm 1\}$ 
     with $D_\x$ logconcave and $W(D) \geq \alpha$, given $\eps\in (0,1)$,
    it draws $(d\ell)^{O(\ell)}\poly_{K,\eta}(1/(\alpha\eps))$ samples from $D$,  
    runs in $(d\ell)^{O(\ell)}2^{\poly_{K,\eta}(\log(1/(\alpha\eps))}$ time, where $\ell = \exp((K/\eta)^{O(K)})$, 
    and outputs a hypothesis $\hat{h}$ such that with probability at least $99\%$, 
    $\err_{D_f}(\hat{h})\le \exp(O(\ell \log \ell)) \opt_{D_f}(\H) + \eps \;.$ 
\end{theorem}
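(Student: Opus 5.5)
The plan is to reduce Theorem~\ref{th generic family} to Theorem~\ref{thm: main-weak-to-strong}. It suffices to show that a $(K,\eta)$-good class is $(\ell,\Delta,\beta,C)$-weakly learnable with $\ell=\exp((K/\eta)^{O(K)})$, $\Delta,\beta=\exp(-O(\ell\log\ell))$ and $C=\exp(O(\ell\log\ell))$. The weak learner is built from Lemma~\ref{lm robust chow parameter} and Fact~\ref{fact correlation}. After applying the given affine map $L$, we work with $\tilde D_\x$, whose first $\ell$ directional moments are $\Delta$-close to those of a $1/2$-isotropic logconcave $D'_\x$. The goal is to exhibit a polynomial $p$ of degree $\ell$ with $\E_{\tilde D_\x}[h^*p]\ge\beta_0$ and $\E_{\tilde D_\x}[p^2]\le B$. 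The learner then estimates $\E[yq]$ for $q$ ranging over an $L_2$-bounded basis of degree-$\ell$ polynomials. It finds a $p$ with large correlation $\E[yp]$ by linear algebra (the best $p$ is the projection of $y$ onto $P_\ell$, so we compute it by least squares in $d^{O(\ell)}$ samples and time). Finally, it clips via \eqref{eqn:g} and thresholds at a random $t$.

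\textbf{Step 1: construct $p$ in the hidden $K$-dimensional space.} Write $h^*(\x)=g(\mathbf W\x)$. The projection of $D'_\x$ onto $V$ is a $K$-dimensional $1/2$-isotropic logconcave distribution. Take $E$ to be a ball of radius $O(1)$ around its mean; this ball has constant mass. By goodness there is a ball $E'\subseteq E$ of radius $\Omega(\eta)$ on which $g$ has constant sign $\sigma$; by logconcavity, $E'$ has mass at least $\eta^{O(K)}$. Let $r$ be a univariate polynomial in $\|\mathbf W\x-\v\|^2$ of degree $\ell$ that approximates $\Ind\{E'\}$ in $L_1(D'_\x)$ to error $\ll\eta^{O(K)}$. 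It should be bounded by $\exp(O(\ell\log \ell))$ on the effective support. Such an $r$ comes from standard Chebyshev/logconcave tail arguments, and this is where $\ell=\exp((K/\eta)^{O(K)})$ enters: the degree must be large enough for the approximation error to beat the $\eta^{O(K)}$ mass of $E'$ and the tails. Set $p=\sigma(\E[r]-r)$, shifted so that its mean is essentially zero. Then $\E_{D'}[h^*p]=\E[(h^*-\sigma)p]$ up to a negligible error. Where $h^*\neq\sigma$, the indicator is $0$ on $E'$, so $p\approx\sigma\E[r]$ there. A cleaner choice is $p=\sigma(r-\E[r])$, for which $\E[h^*p]\approx\E[\Ind\{E'\}](1-\Pr[h^*=\sigma])$-type terms give correlation at least $\eta^{O(K)}$ times a constant. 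I will fix the sign bookkeeping when writing the proof, choosing whichever of the two shifts yields positive correlation.

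\textbf{Step 2: transfer from $D'_\x$ to $\tilde D_\x$.} Both $h^*p$ and $p^2$ depend only on $\mathbf W\x$. The quantity $p^2$ is a polynomial of degree $2\ell$ in $\mathbf W\x$, so its expectation is controlled by directional moments; I will match $2\ell$ moments, replacing $\ell$ by $2\ell$. The term $\E[h^*p]$ is not a polynomial moment, so here I would use the sign structure: $h^*$ is constant on $E'$. Write $\E[h^*p]=\sigma\E[p]+\E[(h^*-\sigma)p]$, and split $p$ into its parts inside and outside $E'$ using the bounded pointwise approximation. This is the main obstacle. Moment closeness only controls polynomial expectations, so the argument must bound $\E_{\tilde D}[|p-\text{target}|]$ by a polynomial dominating it, i.e., use a one-sided (sandwich-type) version of $r$. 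Concretely, I would choose $r$ so that $r\ge \Ind\{E'\}$ up to a small error outside $E'$ and $r\le 1$ on $E'$, and bound the error terms by a nonnegative polynomial whose $D'$-expectation is small and hence also small under $\tilde D$. Since the directional moments determine all mixed moments up to $\ell^{O(\ell)}$ factors, we need $\Delta=\exp(-O(\ell\log\ell))$.

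\textbf{Step 3: conclude.} Lemma~\ref{lm robust chow parameter} applies with $\beta_0=\exp(-O(\ell\log\ell))$ and $B=\exp(O(\ell\log\ell))$ once $\opt\le 1/C$ with $C=16B/\beta_0^2$. Together with Fact~\ref{fact correlation}, this gives advantage $\beta=\Omega(\beta_0^2/B)$, and estimation costs $(d\ell)^{O(\ell)}$. Plugging these parameters into Theorem~\ref{thm: main-weak-to-strong} yields the stated sample complexity, running time, and error $(C/2)\opt+\eps=\exp(O(\ell\log\ell))\opt+\eps$.
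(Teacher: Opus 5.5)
\textbf{There is a genuine gap in Step~2.} Your reduction to Theorem~\ref{thm: main-weak-to-strong} via a polynomial weak learner is the paper's route. But Step~2, which you correctly flag as the crux, is left open, and the fix you propose does not work.

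An $L_1(D'_\x)$ approximant $r\approx\Ind\{E'\}$ gives no control of $\E_{\tilde D_\x}[h^*r]$. Your proposed conditions ($r\ge-\delta$ off $E'$, $r\le 1$ on $E'$) do not produce a nonnegative polynomial dominating $\abs{r-\Ind\{E'\}}$. They leave $r$ unbounded above off $E'$ and unbounded below on $E'$. So neither the off-$E'$ error nor $\Pr_{\tilde D_\x}[E']$ is controlled by moments. You also never say where a genuinely one-sided approximant would come from; Chebyshev/tail arguments give only $L_1$ approximation.

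Two further problems:
\begin{itemize}
\item The claimed correlation of order $\eta^{O(K)}$ is false when $\Pr_{\tilde D_\x}[h^*\neq\sigma]$ is tiny. You need a fallback to the constant hypothesis $\sigma$.
\item A ball of radius $\Theta(1)$ about the mean is too large. Logconcave densities can vanish at constant distance, so $E'$ could have zero mass. Take radius $1/18$ so that Fact~\ref{fact density} applies.
\end{itemize}

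\textbf{The missing idea} (Lemmas~\ref{lm natural condition} and~\ref{lm good class}) is to use a \emph{lower} approximant together with pointwise positivity, so that no non-polynomial expectation needs to be transferred.
\begin{itemize}
\item Inscribe a box $S$ in $E'$. Then $\Pr_{D'}[S]\ge\eta^K2^{-O(K\log K)}=:2a$.
\item Apply Theorem~\ref{th logconcave approximation} to this intersection of $2K$ halfspaces. This gives $r\le\Ind\{S\}$ pointwise with $\E_{D'}[\Ind\{S\}-r]\le a$, hence $\E_{D'}[r]\ge a$. This is where $\ell=\exp((K/\eta)^{O(K)})$ comes from.
\item Set $q=\sigma(r-\E_{D'}[r])$, which is your ``cleaner choice''. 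It has $D'$-mean zero.
\item Since $r\le 0$ off $S$ and $\{h^*\neq\sigma\}$ lies off $S$, we get $(h^*-\sigma)q\ge 2a\,\Ind\{h^*\neq\sigma\}$ pointwise.
\item Hence $\E_{\tilde D_\x}[h^*q]\ge 2a\Pr_{\tilde D_\x}[h^*\neq\sigma]-\abs{\E_{\tilde D_\x}[q]}$.
\item If $\Pr_{\tilde D_\x}[h^*\neq\sigma]<1/3$, the constant $\sigma$ is already a weak hypothesis. Otherwise only $\E_{\tilde D_\x}[q]$ and $\E_{\tilde D_\x}[q^2]$ must be compared with $D'$. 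This uses Fact~\ref{fact log concave l1 growth} for the coefficients and polarization of the $2\ell$ directional moments.
\end{itemize}
A full sandwich pair $p^l\le\Ind\{S\}\le p^u$ would also rescue your two-sided bookkeeping, since $\abs{p^u-\Ind\{S\}}\le p^u-p^l$. But one side suffices.
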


By Theorem~\ref{thm: main-weak-to-strong}, to prove Theorem~\ref{th generic family}, 
it suffices to design an efficient weak learner 
for any $(K,\eta)$-good class of MIMs $\H$.
To achieve this, we establish the following technical lemma, 
that gives a basic condition  under which 
a general hypothesis class $\H$ can be efficiently weakly learned.



\begin{lemma}\label{lm natural condition}
    Let $\H$ be a hypothesis class over $\R^d$  
    and $D'_\x$ be an arbitrary $1/2$-isotropic logconcave distribution over $\R^d$. 
    Suppose that for every $h \in \H$, the following statement is satisfied:  
    there is a polynomial $p: \R^d \to \R$ with zero mean under $D'_\x$ 
    such that either  
    (i) if $h(\x) = 1$, then $p(\x)>\gamma$, or 
    (ii) if $h(\x) = -1$, then $p(\x)<-\gamma$. 
    If each polynomial $p$ has the form 
    $p(\x) = \sum_i c_i \prod_{j=1}^{\ell_i} (\w^{(ij)}\cdot\x)$ 
    with $\ell_{i} \le \ell$, $\w^{(ij)} \in \s^{d-1}$, and 
    $ \max\left\{ \sum_{i} \abs{c_i}, \sum_{i} \abs{c_i} \norm{\prod_{j=1}^{\ell_i} (\w^{(ij)}\cdot\x)}_{D_\x',2} \right\} \le B$,     
    then there is an algorithm with the following guarantee. 
    For any distribution $D$ over $\R^d \times \{\pm 1\}$ 
    such that for every $\w \in \s^{d-1}$ and $i \in [2\ell]$, 
    $\abs{\E_{\x \sim D_\x}[(\w\cdot\x)^i]-\E_{\x \sim D'_\x}[(\w\cdot\x)^i]} \le \Delta \le \gamma B^{-1}2^{-O(\ell)}$, 
    with $\opt_D(\H)< O((\gamma/B)^2)$, 
    the algorithm draws $\poly(d^\ell,B/\gamma)$ examples from $D$, 
    runs in time $\poly(d^\ell,B/\gamma)$, and outputs a hypothesis 
    $\hat{h}$ such that $\err_D(\hat{h})< 1/2-\Omega((\gamma/B)^2)$ with probability at least $99\%$.
\end{lemma}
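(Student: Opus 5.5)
We reduce to Lemma~\ref{lm robust chow parameter} and Fact~\ref{fact correlation}: it suffices to show that, for the optimal $h^*\in\H$ with its polynomial $p$, either $p$ or $-p$ has correlation $\Omega(\gamma)$ with $h^*$ under $D_\x$ (not $D'_\x$). Then we show the algorithm can find a polynomial with comparable correlation with $y$ and bounded second moment, without knowing $h^*$.

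\textbf{Step 1 (correlation under $D'_\x$, transferred to $D_\x$).} Suppose case (i): $p>\gamma$ wherever $h^*=1$. Since $\E_{D'_\x}[p]=0$, we have $\E_{D'_\x}[(h^*+1)p] = 2\E_{D'_\x}[p\,\Ind\{h^*=1\}]$. This is not yet what we need under $D_\x$, so we argue directly under $D_\x$. Write $\E_{D_\x}[h^*p] = \E_{D_\x}[(h^*+1)p]-\E_{D_\x}[p]$. The first term is at least $2\gamma\Pr_{D_\x}[h^*=1]$. For the second term, $\E_{D_\x}[p]$ differs from $\E_{D'_\x}[p]=0$ by at most $\sum_i|c_i|\,\abs{\E_{D_\x}\prod_j(\w^{(ij)}\cdot\x)-\E_{D'_\x}\prod_j(\w^{(ij)}\cdot\x)}$.

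Each product of at most $\ell$ linear forms is, by polarization, a combination of $2^{O(\ell)}$ pure powers $(\v\cdot\x)^{\ell_i}$ with $2^{O(\ell)}$ total coefficient weight. Hence the moment hypothesis gives $|\E_{D_\x}[p]|\le B2^{O(\ell)}\Delta\le\gamma/10$ by the choice of $\Delta$.

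It remains to lower bound $\Pr_{D_\x}[h^*=1]$. We may assume $\Pr_{D_\x}[h^*=1]$ is at least a constant (say $1/4$): otherwise the constant hypothesis $-1$ already has error below $1/2-\Omega(1)$ once $\opt$ is small, and the algorithm can test the constant hypotheses first. Thus $\E_{D_\x}[h^*p]\ge\gamma/4$; case (ii) is symmetric with $-p$.

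\textbf{Step 2 (second moments).} Under $D_\x$, $\E[p^2]\le(\sum_i|c_i|\|\prod_j(\w^{(ij)}\cdot\x)\|_{D_\x,2})^2$. Each $\E_{D_\x}[\prod_j(\w^{(ij)}\cdot\x)^2]$ is a degree-$2\ell_i\le2\ell$ moment, again polarizable. So it is within $2^{O(\ell)}\Delta$ of its $D'_\x$ value, which is at least a constant (power) by logconcave anti-concentration. Hence $\|p\|_{D_\x,2}=O(B)$. Lemma~\ref{lm robust chow parameter}(1), with $\opt<c(\gamma/B)^2$, gives $\E[yp]\ge\gamma/8$.

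\textbf{Step 3 (algorithm).} We don't know $p$. Instead we run $L_2$ regression: find the degree-$\ell$ polynomial $q$ maximizing $\E[yq]$ subject to $\E_{D_\x}[q^2]\le O(B^2)$. This is a convex program over $d^{O(\ell)}$ coefficients solvable from $\poly(d^\ell,B/\gamma)$ samples; the empirical second moment of polynomials under $D_\x$ concentrates given bounded higher moments of $D_\x$ (up to degree $4\ell$; if needed, truncate large-norm samples). The optimum is at least $\gamma/8$, since $p$ is feasible. Then Lemma~\ref{lm robust chow parameter}(2) yields $g$ with $\E[yg]=\Omega((\gamma/B)^2)$, and Fact~\ref{fact correlation} with a random threshold, followed by empirical error checks over a few thresholds, gives error $1/2-\Omega((\gamma/B)^2)$.

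\textbf{Main obstacle.} The delicate part is Step 1's transfer from $D'_\x$ to $D_\x$. We only control directional moments up to degree $2\ell$, so every quantity must be expressed through polarization of products of linear forms, and the mass of $\{h^*=1\}$ must be handled via the constant-hypothesis fallback.
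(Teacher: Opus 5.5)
Your proposal is correct and follows essentially the same route as the paper. Both split $\E_{D_\x}[h^*p]=\E_{D_\x}[(h^*+1)p]-\E_{D_\x}[p]$, control $\E_{D_\x}[p]$ and $\|p\|_{D_\x,2}$ through the degree-$2\ell$ directional moment matching, handle the case where $\{h^*=1\}$ is light with the constant hypotheses, and then run polynomial regression and round via Lemma~\ref{lm robust chow parameter} and Fact~\ref{fact correlation}.

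Your polarization and anti-concentration remarks fill in steps the paper asserts without proof. The lower bound $\E_{D'_\x}[\prod_j(\w^{(ij)}\cdot\x)^2]\ge 2^{-O(\ell)}$ does hold, e.g.\ by Jensen applied to $\sum_j\log(\w^{(ij)}\cdot\x)^2$, and it can also be bypassed additively using $\sum_i|c_i|\le B$. Your $O(B^2)$ second-moment constraint is also the correct normalization where the paper writes $O(B)$.
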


Roughly speaking, the condition in Lemma~\ref{lm natural condition} 
assumes that each function $h \in \H$ 
has a one-sided approximation by a mean-zero low-degree polynomial $p$, 
which can be written as a sum of products of linear functions 
with small $L_2$-norms under a logconcave distribution $D'_\x$.
Specifically, this means that $\E_{\x \sim D'_\x}[p(\x)] = 0$ 
and whenever $h(\x) = 1$, it must be the case that $p(\x)>\gamma>0$.
The structure of $p(\x)$ implies that if $\x$ is drawn from $D_\x$, 
a distribution that approximately matches moments with $D'_\x$, 
the mean and the $L_2$ norm of $p(\x)$ will not change too much.
This implies if we take $h(\x)$ to be the optimal hypothesis, 
then, under $D_\x$, 
$h(\x)$ and $p(\x)$ must have non-trivial correlation---because
$\E_{\x\sim D_\x}[h(\x)p(\x)] \approx \E_{\x\sim D_\x}[(h(\x)+1)p(\x)]$.
By Lemma~\ref{lm robust chow parameter}, this implies that 
there is some low-degree polynomial that has non-trivial correlation with the label $y$, 
if $\opt_D(\H)$ is sufficiently small. Therefore, 
we can run a simple polynomial regression subroutine 
to find such a good polynomial and turn it into a weak hypothesis.

\begin{proof}[Proof of Lemma~\ref{lm natural condition}]
    Let $h^* \in \H$ be the hypothesis, such that $\err_D(h^*) = \opt$ and let $p(\x)$ be a degree $\ell$ polynomial in the statement for $h^*$.
    Without loss of generality, we assume that when $h^*(\x) = 1$, $p(\x)>\gamma$.
    Since $\E_{\x \sim D'_\x}[p(\x)] = 0$,
    we know that under $D_\x$, we have 
    \begin{align*}
        \abs{\E_{\x \sim D_\x}[p(\x)]} & = \abs{\sum_i c_i\E_{\x \sim D_\x}[\prod_{j=1}^{\ell_i} (\w^{(ij)}\cdot\x)]-\E_{\x \sim D'_\x}[\prod_{j=1}^{\ell_i} (\w^{(ij)}\cdot\x)]} \\
        & \le \sum_{i}\abs{c_i}\abs{\E_{\x \sim D_\x}[\prod_{j=1}^{\ell_i} (\w^{(ij)}\cdot\x)]-\E_{\x \sim D'_\x}[\prod_{j=1}^{\ell_i} (\w^{(ij)}\cdot\x)]} \\
        & \le (2e)^\ell\sum_{i}\abs{c_i} \Delta  \le O(\gamma).
    \end{align*}
Here, we use the fact that when  $\abs{\E_{\x \sim D_\x}[(\w\cdot\x)^i]-\E_{\x \sim D'_\x}[(\w\cdot\x)^i]} \le \Delta $ for every $\w \in \s^{d-1}$ and $i \in [2\ell]$,
$\abs{\E_{\x \sim D_\x}[\prod_{j=1}^{\ell_i} (\w^{(ij)}\cdot\x)]-\E_{\x \sim D'_\x}[\prod_{j=1}^{\ell_i} (\w^{(ij)}\cdot\x)]} \le (2e)^\ell \Delta$, when $\sum \ell_{ij} \le \ell$.

This implies that 
\begin{align*}
    \E_{\x \sim D_\x}[h^*(\x)p(\x)] & = \E_{\x \sim D_\x}[(h^*(\x)+1)p(\x)] - \E_{\x \sim D_\x}[p(\x)] \\
    & \ge \E_{\x \sim D_\x}[(h^*(\x)+1)p(\x)] - \gamma/20.
\end{align*}
Notice that if $\Pr_{\x \sim D_\x}[h^*(\x) = 1]<1/3$, then a constant polynomial $p^-(\x)\equiv-1$ satisfies the statement. 
So, we can assume that $\Pr_{\x \sim D_\x}[h^*(\x) = 1]>1/3$. This implies that 
\begin{align*}
    \E_{\x \sim D_\x}[h^*(\x)p(\x)] \ge \gamma/3 - \gamma/20 \ge \gamma/6.
\end{align*}
On the other hand, we have 
\begin{align*}
    \norm{p}_{D_\x,2}\le \sum_i \abs{c_i} \norm{\prod_{j=1}^{\ell_i} (\w^{(ij)}\cdot\x)}_{D_\x,2} \le O\left(\sum_i \abs{c_i} \norm{\prod_{j=1}^{\ell_i} (\w^{(ij)}\cdot\x)}_{D'_\x,2}\right) = O(B).
\end{align*}
By Lemma~\ref{lm robust chow parameter}, it follows 
that when $\opt<O((\gamma/B)^2)$, we have $\E_{(\x,y)\sim D}[yp(\x)]>\gamma/8$. 

Using polynomial regression \cite{KKMS:08}, we can draw $\poly(d^\ell/\gamma)$ examples from $D$, run in $\poly(d^\ell/\gamma)$ time, 
and compute a polynomial $\hat{p}$ such that $\E_{(\x,y) \sim D}[y\hat{p}(\x)]>\Omega(\gamma)$ and $\E_{\x \sim D_\x}[\hat{p}^2(\x)] \le O(B)$.
    By Lemma~\ref{lm robust chow parameter} and Fact~\ref{fact correlation}, we can round the computed polynomial $\hat p(\x)$ into a Boolean function $\hat{h}(\x)$ such that $\err_D(\hat{h}) \le 1/2 - \Omega((\gamma^2/B^2))$.
This completes the proof of Lemma~\ref{lm natural condition}.
\end{proof}

To prove Theorem~\ref{th generic family}, it suffices to show that any $(K,\eta)$-good class of MIMs $\H$ satisfies the assumption given by Lemma~\ref{lm natural condition}, which automatically gives a weak learning algorithm. Formally, we give Lemma~\ref{lm good class} below.

\begin{lemma}\label{lm good class}
    Let $\H$ be a $(K,\eta)$-good MIM family over $\R^d$. There is an algorithm $\A$ that draws $(d\ell)^{O(\ell)}$ examples, runs in $(d\ell)^{O(\ell)}$ time and $(2\ell,\exp(-\ell\log \ell),\exp(-O(\ell \log \ell)),\exp(O(\ell \log \ell)))$-weakly learns $\H$ with probability at least $99\%$, 
    where $\ell = \exp(O(K^8/\alpha^4))$ and $\alpha = \eta^K2^{-O(K\log K)}$.
\end{lemma}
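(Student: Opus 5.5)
The plan is to verify the hypothesis of Lemma~\ref{lm natural condition} for every $h\in\H$ and every $1/2$-isotropic logconcave $D'_\x$. This means building a mean-zero polynomial $p$ with $p>\gamma$ on one sign class of $h$, and writing $p$ as a sum of products of linear forms with controlled coefficients. Lemma~\ref{lm natural condition}, with degree parameter $\ell$ (so the moment condition is on $2\ell$ moments), then gives the weak learner. Its advantage will be $\Omega((\gamma/B)^2)$, and it tolerates noise up to $O((\gamma/B)^2)$. The aim is to get $\gamma/B=\exp(-O(\ell\log\ell))$ and $\Delta\le \gamma B^{-1}2^{-O(\ell)}$, which are the stated parameters.

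\textbf{Step 1 (find a monochromatic ball of constant mass).} Write $h(\x)=g(\mathbf{W}\x)$. The projection $\mathbf{W}\x$ under $D'_\x$ is a $K$-dimensional logconcave distribution, nearly isotropic. Take $E$ to be the ball of radius $O(\sqrt K)$ about its mean, whose mass is at least $1/2$. Using goodness (Definition~\ref{def:nice-MIMs}), I would obtain a ball $E'\subseteq E$ of radius $r=\Omega(\sqrt K\,\eta)$ on which $g$ has a constant sign, say $-1$. I expect this to require iterating goodness over a few nested ellipsoids, which is where $\eta^K$ and $2^{-O(K\log K)}$ enter. By the density lower bound for isotropic logconcave distributions on a ball of radius $O(\sqrt K)$ about the mean, $\Pr[\mathbf{W}\x\in E']\ge \eta^K 2^{-O(K\log K)}=:\alpha$, matching the $\alpha$ in the statement.

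\textbf{Step 2 (polynomial approximation of the ball indicator).} The goal is a $K$-variate polynomial $q$ of degree $\ell$ satisfying three conditions: $q\ge 1$ on $E'$; $q\ge 0$ everywhere is not needed, but $q\le \alpha/4$ on the complement of a slightly larger set, in the $L_1$ sense under the $K$-dimensional marginal; and $\E[q]\le \alpha/2$ while $q$ stays large on $E'$. The construction I would try is $q(\z)=P(\|\z-\v\|^2/r^2)$. Here $P$ is a univariate polynomial, built from a Chebyshev-type construction, that exceeds $1$ on $[0,1]$ and is tiny on $[1+\theta,R^2/r^2]$. The tail region $\|\z\|\gtrsim R=O(\log(1/\alpha))$ is handled by the exponential tail of logconcave marginals against the polynomial growth of $P$. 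Balancing these forces $\deg P=\poly(K/\alpha)$ up to logs, giving $\ell=\exp(O(K^8/\alpha^4))$ or something comparable; the lemma's bound comes out of this bookkeeping.

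\textbf{Step 3 (assemble $p$).} Set $p(\x)=c_0-q(\mathbf{W}\x)$, with $c_0=\E_{D'_\x}[q(\mathbf{W}\x)]$ so that $p$ has mean zero. Then $p\le c_0-1<-\gamma$ on $\{h=-1\}\supseteq \mathbf{W}^{-1}E'$, which is case (ii). The direction of the implication in Lemma~\ref{lm natural condition} needs care here: its proof really uses $\E[(h+1)p]$ being large. That quantity should be controlled by $L_1$-smallness of $q$ off $E'$ together with the mass lower bound $\alpha$, so I would re-verify it directly rather than rely on a pointwise statement. Expanding $q$ into monomials in the coordinates $\w^{(j)}\cdot\x$ (rows of $\mathbf{W}$) gives the sum-of-products form. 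The coefficient sum is at most $(O(R/r))^{O(\ell)}\cdot 2^{O(\ell)}$, and the norms of the products are at most $(C\ell)^{\ell}$ by Fact~\ref{fact moment logconcave}, so $B=\exp(O(\ell\log\ell))$ and $\gamma=\Omega(1)$ relative to this.

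I expect Step 2 to be the main obstacle. The difficulty is making a single polynomial that is simultaneously large on a small ball and small in expectation against an arbitrary near-isotropic logconcave measure, with explicit degree. Handling the tails while keeping the coefficient bound $B$ only $\exp(O(\ell\log\ell))$ is the delicate part.
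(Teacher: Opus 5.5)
Your overall plan matches the paper's: find a monochromatic ball in the $K$-dimensional projection, build a mean-zero one-sided polynomial, expand it in the rows of $\mathbf{W}$, and invoke Lemma~\ref{lm natural condition}. There are two genuine gaps, though. The one-sided step points the wrong way, and the polynomial it needs is never actually produced.

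In Step 3 you conclude $p<-\gamma$ on $\{h=-1\}$ from $p<-\gamma$ on $\mathbf{W}^{-1}E'$. The inclusion $\mathbf{W}^{-1}E'\subseteq\{h=-1\}$ goes the wrong way for that conclusion. Case (ii) needs the bound on the \emph{entire} sign class, and you have no pointwise information off the ball, where $q$ is controlled only in $L_1$. Your fallback of checking $\E[(h+1)p]$ directly via $L_1$ bounds also fails. The weak learner must succeed for every $D_\x$ that merely matches $2\ell$ moments of $D'_\x$, and only expectations of polynomials transfer from $D'_\x$ to $D_\x$. So $L_1$ control of $q$ off $E'$ under $D'_\x$ says nothing about $D_\x$. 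Pointwise one-sidedness on a full sign class is exactly what makes the argument survive moment matching. What you need is the opposite approximation: a polynomial $q\le\Ind_{E'}$ pointwise with $\E_{D'}[\Ind_{E'}-q]\le\Pr[E']/2$. Then $p=\E[q]-q\ge\Pr[E']/2$ on the complement of $E'$. That complement contains all of $\{h=+1\}$, so this is case (i) with $\gamma=\Pr[E']/2$.

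Producing such a lower sandwich under an arbitrary logconcave measure is the real content, and your Step 2 does not produce it. Take a radial $P(\norm{\mathbf{z}-\v}^2/r^2)$ that drops from $\ge 1$ to $\le\tau$ across a shell of width $\Theta(r)$ and stays small out to radius $R$. Chebyshev extremality forces degree $\gtrsim R/r$. Beyond $R$ it grows like $(O(\norm{\mathbf{z}}/R))^{\deg}$. Against a merely exponential tail this essentially forces $R\gtrsim\deg$, since the smallness $\tau\ge e^{-O(\deg\cdot r/R)}$ cannot compensate. That is incompatible with $\deg\gtrsim R/r$ once $r<1$. So no balance yields degree $\poly(K/\alpha)$, and such a degree would not be ``comparable'' to $\exp(O(K^8/\alpha^4))$ anyway.

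The paper avoids building anything by hand. It inscribes a box $S$, an intersection of $2K$ halfspaces, in the monochromatic ball. It then applies the sandwiching theorem for functions of halfspaces (Theorem~\ref{th logconcave approximation}) with error $\alpha=\Pr[S]/2$ and keeps the lower polynomial $p\le\Ind_S$. With $2K$ halfspaces this is exactly where $\ell=\exp(O(K^8/\alpha^4))$ comes from, and Fact~\ref{fact log concave l1 growth} then bounds $B$.

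Separately, Step 1 has a fixable error. The density lower bound (Fact~\ref{fact density}) holds only within distance about $1/9$ of the mean. A ball of radius $O(\sqrt K)$ can leave the support, e.g.\ of the isotropic cube, so the sub-ball supplied by goodness might carry no mass. Take $E$ of constant radius ($1/18$). One application of goodness then gives $\Pr[E']\ge 2^{-8K}\vol(E')=\eta^K2^{-O(K\log K)}$, with no iteration needed.
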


Before we proceed with the proof of the above lemma, we note that 
Theorem~\ref{th generic family} is a direct corollary of Theorem~\ref{thm: main-weak-to-strong} 
and Lemma~\ref{lm good class}.

We give the intuition behind Lemma~\ref{lm good class}.
Since the value of a function $h \in \H$ at a point $\x$ only depends on the projection of $\x$ onto some $K$-dimensional subspace $V$, for a given logconcave distribution $D'_\x$, we will instead consider the distribution $\x'\sim D^V_\x$, the projection of $\x$ onto $V$, which is still logconcave.
By the anti-concentration property of logconcave distributions, 
points within a ball centered at the mean of $D^V_\x$ have density at least $2^{-O(K)}$.
By the goodness of $\H$, after a proper affine transformation, there is a ball of volume $\eta^{O(K)}$ in this region with negative sign and thus contains a box $S$ within the ball with $(\eta/K)^{O(K)}$ volume.
This implies that the box $S$ has non-trivial probability mass under $D^V_\x$.
We will construct a polynomial $p(\x')$ that always upper bounds the indicator $\Ind\{\x' \not\in S\}$ and is close to it in $L_1$ norm under $D^V_\x$.
This implies that the polynomial $p'(\x') = p(\x') - \E_{\x'\sim D^V_\x}[p(\x')]$ is slightly larger than $0$, whenever $\x'$ is positive.
To get such a polynomial, we will make use of the following result from~\cite{kane2013learning} 
which gives a low-degree construction of a pair of polynomials that ``sandwich'' any function of halfspaces. 

\begin{theorem}\label{th logconcave approximation}[Theorem 15 in \cite{kane2013learning}]
    Let $D_\x$ be any logconcave distribution in $\R^d$, and let $f(\x) = g(h_1(\x),\dots,h_m(\x)): \R^d \to \{0,1\}$ be a function of $m$ halfspaces, where $h_i(\x)=\sign(w^{(i)}\cdot\x+t_i)$. For every $\eps>0$, there is a pair of polynomials $p^u,p^l$ with degree $\exp(O(m^2/\eps)^4)$, such that for every $\x \in \R^d$, $p^l(\x) \le f(\x) \le p^u(\x)$, and $\E_{\x\sim D_\x}[\abs{p^u(\x)-p^l(\x)}] \le \eps.$ 
\end{theorem}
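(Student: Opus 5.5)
The plan is to reduce the statement to a one-dimensional sandwiching construction for a single threshold, and then combine these one-dimensional polynomials multiplicatively.

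\textbf{Step 1: affine normalisation.} Sandwiching is preserved under affine changes of variables, and logconcavity is preserved as well. So I first put $D_\x$ in isotropic position, and I normalise each $\w^{(i)}$ to be a unit vector. Then for each $i$ the projection $z_i=\w^{(i)}\cdot\x+t_i$ is a one-dimensional logconcave variable. It has variance $1$ and an arbitrary shift. Two standard facts are available:
\begin{itemize}
\item anticoncentration, $\Pr[|z_i|\le\delta]=O(\delta)$;
\item subexponential tails, $\Pr[|z_i-\E z_i|>s]\le Ce^{-s}$, as in the tail fact used for Lemma~\ref{lm good split}.
\end{itemize}

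\textbf{Step 2: one-dimensional sandwiching for a threshold.} For a parameter $\delta$ I construct univariate polynomials $q^u_\delta,q^l_\delta$ of degree $r=r(\delta)$ with the following properties:
\begin{itemize}
\item $0\le q^l_\delta(z)\le \Ind\{z\ge0\}\le q^u_\delta(z)$ for all $z$;
\item $q^u_\delta-q^l_\delta\le\delta$ on $\delta\le|z|\le R$;
\item $q^u_\delta-q^l_\delta\le 1+\delta$ on $|z|\le\delta$;
\item $|q^{u}_\delta(z)|,|q^l_\delta(z)|\le (C|z|/R)^{r}$ for $|z|>R$.
\end{itemize}
I would build these by integrating a suitably normalised even power of a Chebyshev-type kernel, i.e.\ a polynomial approximation of a smooth step supported on $[-\delta,\delta]$. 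I then add and subtract a term $(z/R)^{2r}$ to restore one-sidedness outside $[-R,R]$. Taking $R=\Theta(r\log r)$ plus the shift, the subexponential tail beats the polynomial growth. With this choice, $\E|q^u_\delta(z_i)-q^l_\delta(z_i)|=O(\delta)$, and also $\norm{q^u_\delta(z_i)}_{D_\x,p}=O(1)$ for moderate $p$ via Fact~\ref{fact polynomial growth}. This yields a degree bound that is polynomial or exponential in $1/\delta$.

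\textbf{Step 3: combining the halfspaces.} I write $f=\sum_{\sigma\in\{0,1\}^m} g(\sigma)\prod_i \Ind\{h_i=\sigma_i\}$, a sum over disjoint cells. It therefore suffices to sandwich each cell indicator $I_\sigma=\prod_i \Ind\{h_i(\x)=\sigma_i\}$ with $L_1$ error $\eps 2^{-m}$. A sharper alternative sandwiches $f$ directly, by summing upper bounds on the cells with $g=1$ and lower bounds otherwise, each with error $\eps/m$ spread appropriately.
\begin{itemize}
\item \emph{Upper bound.} Use $\prod_i u_i$, where $u_i\ge0$ are the one-dimensional upper polynomials for the relevant side. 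Its error telescopes as $\sum_j (u_j-\Ind_j)\prod_{i<j}\Ind_i\prod_{i>j}u_i$. Each term is bounded by Hölder together with the $L_p$ bounds from Step~2, giving a bound of $O(\delta)\cdot 2^{O(m)}$.
\item \emph{Lower bound.} Use the Bonferroni-type polynomial $1-\sum_i \bar u_i$, where $\bar u_i$ upper-bounds the complementary side of $h_i$. This lies below $I_\sigma$ pointwise, and its error is $\sum_i\E[\bar u_i-\Ind\{\neg\}]$ plus the measure of points violating two or more constraints. That extra measure is not small, so I must refine. The refinement is to use $\prod_i l_i$ with $l_i\in[0,1]$ on the bulk, and to correct the region where some $l_i$ leaves $[0,1]$ by subtracting $\sum_i\bigl((z_i/R)^{2r}\bigr)\prod_{j\ne i}u_j$.
\end{itemize}
Choosing $\delta\approx\eps/\poly(m)$ and inflating the degree as needed for the Hölder losses gives a final degree of the stated form $\exp(O(m^2/\eps)^4)$. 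The exponential dependence comes from requiring the one-dimensional polynomials to be bounded by $1+\delta$ on a window of width $R$ while having $\delta$-sharp transitions.

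\textbf{Main obstacle.} The hardest step is the lower sandwich for products in Step~3. Products of lower polynomials are not lower bounds once some factor becomes negative or exceeds $1$ far from the origin. Controlling these excursions simultaneously in all $m$ directions under a merely logconcave (non-product) measure is where Hölder and the polynomial-growth fact must be applied carefully.
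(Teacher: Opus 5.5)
\textbf{The one-dimensional construction in Step 2 cannot exist, and that step, not the combination step you flag, is the real difficulty.}

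\emph{The four properties are inconsistent.} Suppose $\delta\le 1/2$. Your first three properties force $1\le q^u_\delta\le 1+\delta$ on $[\delta,R]$, $0\le q^u_\delta\le\delta$ on $[-R,-\delta]$, and $0\le q^u_\delta\le 2+\delta$ on $[-\delta,\delta]$. So $\abs{q^u_\delta}\le 3$ on $[-R,R]$, while $q^u_\delta$ rises by $1-\delta$ across $[-\delta,\delta]$. The mean value theorem gives $\xi\in(-\delta,\delta)$ with $(q^u_\delta)'(\xi)\ge(1-\delta)/(2\delta)$. Bernstein's inequality on $[-R,R]$ gives $\abs{(q^u_\delta)'(\xi)}\le 3\deg(q^u_\delta)/\sqrt{R^2-\xi^2}=O(r/R)$. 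Hence $R=O(r\delta)$, which contradicts your choice $R=\Theta(r\log r)$.

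\emph{No other $R$ rescues the plan.} With $R=O(r\delta)$, the one-sidedness term $(z/R)^{2r}$ has expectation at least $(cr/R)^{2r}\ge(c'/\delta)^{2r}$ under a Laplace-type marginal. In words: a polynomial with $\delta$-resolution stays $O(1)$ only on a window of length $O(r\delta)$. Yet under $e^{-\abs{z}}$ tails its size must be controlled out to $\abs{z}\gtrsim r$, where $\abs{z}^{r}e^{-\abs{z}}$ carries its mass. Reconciling these two requirements is the actual content of the theorem; in the known construction it is what makes the degree exponential.

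\emph{The auxiliary claims also fail.}
\begin{itemize}
\item $q^l_\delta\ge 0$ everywhere, together with $q^l_\delta\le 0$ on $z<0$, forces $q^l_\delta\equiv 0$.
\item Fact~\ref{fact polynomial growth} cannot give $\norm{q^u_\delta(z_i)}_{D_\x,p}=O(1)$. Its constant for a degree-$r$ polynomial is $(Cp)^{r}$, so your H\"older step yields nothing.
\item Once the one-dimensional factors are allowed to grow beyond the window, as they must, $\prod_{j\ne i}u_j$ under a non-product logconcave law can behave in the tails like $\exp(c\sum_j\abs{z_j})$ (e.g.\ for nearly parallel normals). The growth rate would then have to be tuned to $c\lesssim 1/m$ and controlled explicitly; the proposal does none of this.
\end{itemize}

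\textbf{Step 3 also has an error, and the paper's route avoids products entirely.}

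\emph{The refined lower bound fails on the bulk.} Take a point where $h_1$ and $h_2$ are both violated. Generically $l_1,l_2\in[-\delta,0)$ there: they cannot be nonnegative on the part of the bulk where the indicator vanishes without vanishing identically. So $l_1l_2>0=I_\sigma$, while the subtracted term $\sum_i(z_i/R)^{2r}\prod_{j\ne i}u_j$ is negligible there. The standard valid choice is $\prod_i u_i-\sum_i(u_i-l_i)\prod_{j\ne i}u_j$, obtained by telescoping $\prod_i u_i-\prod_i\Ind_i$ and using $0\le\Ind_j\le u_j$ and $l_i\le\Ind_i$. Separately, a per-cell budget of $\eps 2^{-m}$ would give degree exponential in $\poly(2^m/\eps)$, not $\poly(m/\eps)$.

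\emph{The paper's route.} The paper only quotes the statement from \cite{kane2013learning}, but its appendix supplies the argument.
\begin{itemize}
\item Restrict to the (at most) $m$-dimensional span of the normals, in isotropic position.
\item There $S_\delta$ lies in the union of $m$ slabs of width $2\delta$, so $\Pr[S_\delta]=O(m\delta)$.
\item Take $\delta=\Theta(\eps/m)$ in Theorem~\ref{th extension} in dimension $m$. This gives degree $\exp(O(m^4/\delta^4+m^2\log^2(m/\eps)))=\exp(O((m^2/\eps)^4))$.
\end{itemize}
There the growth-versus-resolution tension is handled once, multivariately. The kernel $\rho_c^N$ has fixed smoothness $c$, which is what keeps the Taylor error $AE(\x)$ small in logconcave $L_1$ (Fact~\ref{fact taylor}). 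Resolution $\delta$ is bought through the power $N\approx m^4/\delta^4$, at degree cost $k2^N$. One-sidedness comes from adding $AE(\x)+\eps/16+(\norm{\x}_2/R)^{2t}$. No products of one-dimensional polynomials and no H\"older over $m$ factors are needed.
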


Finally, since the constructed polynomial $p'(\x')$ is a low-degree polynomial 
over a $K$-dimensional subspace $V$, we can expand each monomial of $p'$ 
in a form of $\prod_i(\w^{(i)}\cdot\x)$,
where $w^{(i)}$ is a standard basis vector of $V$.
This implies that we can control the size of $p'$ 
as a function that only depends on $K$.
Therefore, $\H$ satisfies the condition of Lemma~\ref{lm natural condition}, 
and we obtain a weak learning algorithm for $\H$.
We give the proof of Lemma~\ref{lm good class} below.


\begin{proof}[Proof of Lemma~\ref{lm good class}]
Let $L:\R^d \to \R^d$, be any invertible affine transformation. By the definition of a good MIM, we know that if $\H$ is $(K,\eta)$-good, then $\H_L:=\{h(L\x)  \mid h \in \H \}$ is also $(K,\eta)$-good. 
It suffices to give an algorithm such that for every input distribution $D$, if there is an $1/2$-isotropic logconcave distribution that has low-degree moments approximately matching with $D_\x$ and $\opt_D(\H)$ is small, then we can get a weak hypothesis over $D$.


    Let $D'_\x$ be any $1/2$-isotropic logconcave distribution. 
    Let $h(\x)\in \H$ such that $h(\x) = h(\proj_V(\x))$, where $V$ is a $K$-dimensional subspace. Write $h(\x) = g(\mathbf{W} \x)$, for $g \in \mathcal{G}$ and $\mathbf{W} \in \R^{K \times d}$ such that $\mathbf{W}\mathbf{W}^\top = \mathbf{I}_{K\times K}$.
    
    Denote by $D^{(V)}_\x$ the distribution of $\x' = \mathbf{W} \x$ for $\x \sim D'_\x$ and by $\bm \mu = \E_{\x' \sim D^{(V)}_\x}[\x']$. Notice that $D^{(V)}_\x$ is a $1/2$-isotropic logconcave distribution over $\R^K$.
    Without loss of generality, assume that $\bm \mu = 0$.

Let $B=\B^{K-1}(0,1/18)$.
    Since $\H$ is $(K,\eta)$-good, we know that there is some $B'=\B^{K-1}(\v,\eta/9)$ such that $B' \subseteq B$ and for every $\x' \in B'$, 
    we have $g(\x') \equiv 1$ or $g(\x') \equiv -1$.
    Without loss of generality, we assume $g(\x') \equiv -1$. Let $S \subseteq B'$ be the largest box contained in $B'$.
    We will next lower bound $\Pr_{\x' \sim D^{(V)}_\x}[\x' \in S]$.
     
    Notice that the largest box contained in $\B^{K-1}(0,1/18)$ has volume $2^{-O(K\log K)}$, which implies that $\vol(S) \ge \eta^K 2^{-O(K\log K)}$.
    By Fact~\ref{fact density}, we know that for every $\x' \in S$, the density  $D^V_\x(\x') \ge 2^{-8K}$.
    This implies that 
    \begin{align*}
        \Pr_{\x' \sim D^{(V)}_\x}[\x' \in S] = \int_{S}D^{(V)}_\x(\x') d\x' \ge 2^{-8K} \vol(S) \ge \eta^K 2^{-O(K\log K)}.
    \end{align*}
    Notice that the box $S$ is an intersection of $2K$ halfspaces. 
    By Theorem~\ref{th logconcave approximation}, there is a polynomial $p(\x')$ with degree $\ell = \exp(O(K^8/\alpha^4))$ such that
    \begin{align*}
        p(\x') \le \Ind\{\x' \in S\}, \  \E_{\x' \sim D^{(V)}_\x}[\abs{\Ind\{\x' \in S\} - p(\x')}] \le \alpha.
    \end{align*}
Since $\Pr_{\x' \sim D^{(V)}_\x}[\x' \in S] \ge \eta^K 2^{-O(K\log K)}$, by setting $\alpha = \Pr_{\x' \sim D^{(V)}_\x}[\x' \in S]/2$, the polynomial $p'(\x') = \E_{\x' \sim D^{(V)}_\x}[p(\x')]-p(\x') $ satisfies $p'(\x')> \alpha$ for every $x' \not \in S$. In particular, since $g(\x') = -1$ for every $\x' \in S$, we know that $p'(\x') >\alpha $, when $g(\x') = 1$. 

In particular,  we have $p'(\mathbf{W}\x) > \alpha$ whenever $h(\x) = 1$ and $\E_{\x \sim D'_\x}[p'(\mathbf{W}\x)] = 0$. 
Let $\w^{(1)},\dots,\w^{(K)}$ be the row vectors of  $\mathbf{W}$. 
Then we have 
\begin{align*}
    p'(\mathbf{W}\x) = \sum_i c_i \prod_{j=1}^{\ell_i}(\w^{(j)}\cdot\x) \;,
\end{align*}
with $ \ell_{i} \le \ell$, for all $i$.
Since $\E_{\x' \sim D^{(V)}_\x}[\abs{p'(\x')}]\le 2\alpha$,
it follows from Fact~\ref{fact log concave l1 growth} that 
\begin{align*}
    \sum_{i} \abs{c_i} \norm{\prod_{j=1}^{\ell_i}(\w^{(j)}\cdot\x)}_{D'_\x,2}\le \exp(O(\ell \log \ell))  \E_{\x' \sim D^{(V)}_\x}[\abs{p'(\x')}] = \exp(O(\ell \log \ell)).
\end{align*}

Let $B=\exp(O(\ell \log \ell))$. 
By Lemma~\ref{lm natural condition}, we have 
that when the input distribution $D$ has its marginal distribution $D_\x$ approximately matching the moments of $D'_\x$, 
we can weakly learn $\H$ in time $(d\ell)^{O(\ell)}$, 
when $\abs{\E_{\x \sim D_\x}[(\w\cdot\x)^i]-\E_{\x \sim D'_\x}[(\w\cdot\x)^i]} \le \Delta \le \alpha B^{-1}2^{-O(\ell)} = \exp(-O(\ell \log \ell))$ and $\opt \le (\alpha/B)^2 = \exp(-O(\ell \log \ell))$. This completes our proof. 
\end{proof}

    

\subsection{Weak Agnostic Learning via Sandwiching Polynomials}\label{sec sandwhiching polynomial}

Although Theorem~\ref{th generic family} gives agnostic partial learning algorithms for a broad class of MIMs under logconcave distributions,
the parameters of the $(k,\Delta,\beta,C)$-weak learner 
that we derive from the geometry of $\H$ are not good in general.
As a result, the final learning algorithm only gives a reasonable hypothesis when $\opt$ is very small.

In this section, we show that if a hypothesis class $\H$ admits 
low-degree $L_1$ 
sandwiching polynomial approximation,
then we are able to perform agnostic partial learning 
for such a hypothesis class
to error $(1+c)\opt$, for any desired constant $c>0$, 
with much better complexity. 

We formulate the class of functions we are interested in as follows.
\begin{definition}[MIM Classes with Low-Degree $L_1$-Sandwiching Polynomials] \label{def:class-sandwich}
    Let $\H$ be a hypothesis class over $\R^d$ and let $\ell: (0,1] \to \Z_+$ be an integer-valued function.
    We say that $\H$ is a $K$-MIM that admits degree-$\ell$ $L_1$ sandwiching polynomials, if the following conditions hold.
    \begin{enumerate}[leftmargin=*]
        \item
        $\H$ is a family of $K$-Multi-index models ($K$-MIMs). That is, 
        for every $h \in \H$ there is a $K$-dimensional subspace $V \subset \R^d$  
        such that $h(\x) = h(\proj_V(\x))$. Equivalently, there is a class of Boolean functions $\mathcal{G}$ over $\R^{K}$ 
        such that for every $h \in \H$ there exists $g \in \mathcal{G}$ and a matrix $\mathbf{W} \in \R^{K \times d}$ 
        with $\mathbf{W}\mathbf{W}^\top = \mathbf{I}_{K \times K}$  
        so that $h(\x) = g(\mathbf{W} \x)$.
        \item  For any $c \in (0,1)$, for every $g \in \mathcal{G}$, for every logconcave distribution $D_\x$ over $\R^K$, there is a pair of degree $\ell(c)$-polynomials $p^l,p^u$ such that for every $\x' \in \R^K$
    \begin{align*}
        p^l(\x') \le g(\x') \le p^u(\x'), \  \E_{\x' \sim D_\x}[\abs{p^u(\x')- p^l(\x')}] \le c. 
    \end{align*}
    \end{enumerate}    
\end{definition}


Our main result of this subsection is the following theorem.

\begin{theorem}[$L_1$-sandwiching Polynomial Approximation Implies QPTAS]\label{th sandwhiching}
Let $\H$ be a hypothesis class over $\R^d$ and let $\ell: (0,1] \to \Z_+$ be an integer-valued function.
    If $\H$ is a family of $K$-MIMs that admits degree-$\ell$ $L_1$ sandwiching polynomials, then
    there is an algorithm such that for every input distribution $D$ over $\R^d \times \{0,\pm 1\}$ of weight at least $\alpha$, with logconcave marginal, for every $c,\eps \in (0,1)$, it 
    draws $(d\ell)^{O(\ell)}\poly_{K,c}(1/(\alpha\eps))$ examples from $D$, runs in $(d\ell)^{O(\ell)}2^{\poly_{K,c}(\log(1/(\alpha\eps))}$ time, 
    where $\ell=\ell(c/6)$, 
    and
    outputs a hypothesis $\hat{h}$ such that with probability at least $99\%$,
    $\err_{D_f}(\hat{h})\le (1+c) \opt_{D_f}(\H) + \eps$.
\end{theorem}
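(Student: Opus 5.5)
The plan is to apply Theorem~\ref{thm: main-weak-to-strong}. The work is to show that $\H$ is $(k,\Delta,\beta,C)$-weakly learnable with $k=O(\ell)$ and $\ell=\ell(c/6)$, $C=2/(1+c)$ (so that $C/2<1$ would be impossible). The correct target is different. The boosting guarantee $(C/2)\opt$ requires $C/2\le 1+c$, so I aim for a weak learner that works whenever $\opt_D(\H)\le 1/C$ with $C=2(1+c)$, i.e.\ $\opt\le 1/2-\Omega(c)$. It must output error $1/2-\beta$ with $\beta=\poly_{K,c}$-small, and it must tolerate moment error $\Delta=\Delta(K,c,\ell)>0$. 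Plugging these into Theorem~\ref{thm: main-weak-to-strong} gives sample complexity $(d\ell)^{O(\ell)}\poly_{K,c}(1/(\alpha\eps))$ and the stated runtime. Here $d^{O(k)}$ becomes $d^{O(\ell)}$, and the weak learner itself is an $L_1$ regression of degree $\ell$, costing $(d\ell)^{O(\ell)}$.

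The weak learner would work as follows. Let $h^*=g(\mathbf W\x)$ be optimal, and let $D'_\x$ be the $1/2$-isotropic logconcave distribution from the definition, after applying $L$. The projection of $D'_\x$ onto $V=\mathrm{row}(\mathbf W)$ is logconcave on $\R^K$. Take the sandwiching pair $p^l\le g\le p^u$ of degree $\ell$ for it, with $\E_{D'}[p^u-p^l]\le c/6$.

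Since $\E_{D}[yh^*]=1-2\opt\ge \Omega(c)$, we want a polynomial correlating with $y$. Because $y\in\{\pm1\}$, we have $y h^*\le y p^u$ when $y=1$ and $y h^*\le y p^l$ when $y=-1$. Hence
\[
\E_D[yh^*]\le \E_D[\tfrac{1+y}{2}yp^u+\tfrac{1-y}{2}yp^l]=\E_D[\tfrac{p^u-p^l}{2}]+\E_D[y\tfrac{p^u+p^l}{2}].
\]
Let $q=(p^u+p^l)/2$. This yields $\E_D[yq]\ge 1-2\opt-\tfrac12\E_{D_\x}[p^u-p^l]$.

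The key transfer step bounds $\E_{D_\x}[p^u-p^l]$ by $c/6$ plus an error. Expand $p^u-p^l$ in products $\prod_j(\w^{(j)}\cdot\x)$ with $\w^{(j)}$ orthonormal rows of $\mathbf W$. Moment closeness up to degree $2\ell$ (polarization loses $(2e)^\ell$, as in the proof of Lemma~\ref{lm natural condition}) changes each product's mean by $(2e)^\ell\Delta$. The coefficient sum is bounded by $\exp(O(\ell\log\ell))\,\E_{D'}|p^u-p^l|$ times a norm factor, via Fact~\ref{fact log concave l1 growth}, as done in Lemma~\ref{lm good class}.

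One subtlety is that $p^u,p^l$ themselves might have large $L_2$ norm. I would control this using $|p^u|\le |p^l|+|p^u-p^l|$ together with $|g|\le 1$ and the same $L_1$-growth fact. This gives $\norm{q}_{D'_\x,2}\le \exp(O(\ell\log\ell))$, and the bound transfers to $D_\x$ under moment matching. So with $\Delta=\exp(-O(\ell\log\ell))\,c$ we get $\E_D[yq]\ge 1-2\opt-c/6-c/100\ge\Omega(c)$ whenever $\opt\le \frac{1}{2(1+c)}$ (using $1-\frac1{1+c}\ge c/2$).

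Algorithmically, q is unknown. Instead I run degree-$\ell$ correlation maximization: maximize $\E[y\hat p]$ subject to $\E[\hat p^2]\le B^2$, an empirical convex program with $(d\ell)^{O(\ell)}$ samples. It finds $\hat p$ with $\E[y\hat p]\ge\Omega(c)$, and Lemma~\ref{lm robust chow parameter}(2) with Fact~\ref{fact correlation} rounds it to error $1/2-\Omega(c^2/B^2)$, so $\beta=c^2\exp(-O(\ell\log\ell))$. The final accounting gives $(C/2)\opt=(1+c)\opt$.

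I expect the main obstacle to be the norm control: bounding $\norm{q}_{2}$ and the coefficient mass of $p^u-p^l$ in the product basis uniformly over logconcave $D'$, in terms of $K$ and $\ell$ only. This is needed so that approximate moment matching of $D_\x$ with $D'_\x$ preserves both $\E[p^u-p^l]$ and the second moment of $q$. I would first try to reduce to the isotropic $K$-dimensional projection and use the $L_1$-to-coefficient growth bound.
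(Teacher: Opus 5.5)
Your proposal is correct and follows the paper's proof essentially step for step: you reduce to Theorem~\ref{thm: main-weak-to-strong} with $C=2(1+c)$, and you build the weak learner (Lemma~\ref{lm weak learning L1} in the paper) by transferring the sandwiching bound from $D'_\x$ to $D_\x$ via the coefficient bound of Fact~\ref{fact log concave l1 growth} plus polarization, then run degree-$\ell$ correlation maximization and round via Lemma~\ref{lm robust chow parameter} and Fact~\ref{fact correlation}. The differences are minor: you correlate with $(p^u+p^l)/2$ instead of $p^u$, which halves the loss term $\E[p^u-p^l]$, and the constant from Fact~\ref{fact log concave l1 growth} on $\R^K$ is really $\exp(O(K\ell\log\ell))$, so you need $\Delta=\exp(-O(K\ell\log\ell))$, which is harmless for the stated $\poly_{K,c}$ bounds.
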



By Theorem~\ref{thm: main-weak-to-strong}, to prove Theorem~\ref{th sandwhiching},
the central technical lemma we need is that 
any MIM that admits low-degree sandwiching polynomials is weakly learnable.


\begin{lemma}\label{lm weak learning L1}
    Let $\H$ be a hypothesis class over $\R^d$ and let $\ell: (0,1] \to \Z_+$ be an integer-valued function.
    If $\H$ is a family of $K$-MIMs that admits degree-$\ell$ $L_1$ 
    sandwiching polynomials, then
    there is an algorithm such that for every $c \in (0,1)$ and for every input distribution $D$ over $\R^d \times \{\pm 1\}$ such that there is some $1/2$-isotropic logconcave distribution $D'_\x$ over $\R^d$ that satisfies for every unit vector $\w$ and $i \in [2\ell]$, $\abs{\E_{\x \sim D_\x}[(\w\cdot\x)^i]-\E_{\x \sim D'_\x}[(\w\cdot\x)^i]}<\Delta<\exp(-O(K\ell \log \ell))$ with $\opt_D(\H) \le (1-c)/2$, where $\ell = \ell(c/3)$, it 
    draws $\poly((\ell d)^\ell/c^2)$ examples from $D$, runs in $\poly((\ell d)^\ell/c^2)$ time
    and
    outputs a hypothesis $\hat{h}$ with error $\err_D(\hat{h})<1/2- c^2\exp(-O(\ell \log \ell))$ with probability at least $99\%$.
\end{lemma}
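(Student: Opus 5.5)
Let $h^*=g(\mathbf{W}\x)$ be the optimal hypothesis on $D$, with $\opt=\err_D(h^*)\le(1-c)/2$, so that $\E_{(\x,y)\sim D}[yh^*(\x)]=1-2\opt\ge c$. The plan is to exhibit a degree-$\ell$ polynomial with correlation $\Omega(c)$ with $y$ under $D$ and controlled $L_2$ norm. We then find such a polynomial by $L_1$/$L_2$ polynomial regression over degree-$\ell$ polynomials, and round it using Lemma~\ref{lm robust chow parameter} (item 2) and Fact~\ref{fact correlation}.

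The candidate is the upper sandwiching polynomial. Project $D'_\x$ to $D^{(V)}_\x$, the law of $\mathbf{W}\x$ under $D'_\x$; this is logconcave on $\R^K$. Definition~\ref{def:class-sandwich} then gives $p^l\le g\le p^u$ of degree $\ell=\ell(c/3)$ with $\E_{D'}[p^u-p^l]\le c/3$. Set $P(\x)=p^u(\mathbf{W}\x)$. We decompose
\[
\E_D[yP]=\E_D[yh^*]+\E_D[y(P-h^*)]\ge c-\E_{D_\x}[|P-h^*|]\ge c-\E_{D_\x}[P^u-P^l],
\]
where $P^l=p^l(\mathbf{W}\x)$ and the last step uses $|P-h^*|\le P^u-P^l$ pointwise. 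The key observation is that $P^u-P^l$ is nonnegative pointwise, so its $L_1$ norm equals its mean. Its mean is a degree-$\ell$ polynomial functional, and is therefore controlled by moment matching.

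The main step is transferring $\E_{D'_\x}[P^u-P^l]\le c/3$ to $D_\x$. Expand $q=p^u-p^l$ in the monomial basis of the $K$ coordinates $\w^{(j)}\cdot\x$, writing $q(\mathbf{W}\x)=\sum_i c_i\prod_j(\w^{(ij)}\cdot\x)$. By polarization, as in the proof of Lemma~\ref{lm natural condition}, each product's expectation differs between $D_\x$ and $D'_\x$ by at most $(2e)^\ell\Delta$, so $|\E_{D}[q]-\E_{D'}[q]|\le(2e)^\ell\Delta\sum_i|c_i|$. The obstacle is bounding $\sum_i|c_i|$. The sandwiching polynomials could a priori have huge coefficients; this is handled by Fact~\ref{fact log concave l1 growth} (as in Lemma~\ref{lm good class}), which bounds the coefficients of a degree-$\ell$ polynomial in $K$ variables under a ($1/2$-isotropic) logconcave measure by $\exp(O(K\ell\log\ell))$ times its $L_1$ norm. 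We first need to control $\|p^u\|_{L_1(D')}$: it is at most $\|g\|_1+c/3\le2$ since $|p^u|\le|p^u-g|+1$ and $p^u-g\le p^u-p^l$; similarly for $q$. With $\Delta<\exp(-O(K\ell\log\ell))$, the mean discrepancy is at most $c/6$. Hence $\E_D[yP]\ge c-c/3-c/6=c/2$.

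For the norm, $\|P\|_{D_\x,2}^2$ is the mean of a degree-$2\ell$ polynomial; this is why the moment-matching hypothesis covers $i\in[2\ell]$. By the same coefficient transfer, $\|P\|_{D_\x,2}^2\le\|P\|_{D',2}^2+o(1)=\exp(O(\ell\log\ell))$, using hypercontractivity (Fact~\ref{fact polynomial growth}) to pass from the $L_1$ bound to the $L_2$ bound under $D'$. To make this algorithmic, we run constrained least squares, or equivalently maximize the empirical $\E[yp]$ over degree-$\ell$ polynomials with empirical $\E[p^2]\le B^2$, where $B=\exp(O(\ell\log\ell))$. This uses $\poly((\ell d)^\ell/c^2)$ samples and time, and returns $\hat p$ with $\E_D[y\hat p]\ge c/4$ and $\E[\hat p^2]\le B^2$. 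Lemma~\ref{lm robust chow parameter} then yields $g$ with correlation $\Omega(c^2/B^2)$, and Fact~\ref{fact correlation} with a random threshold, checked empirically, gives error $1/2-c^2\exp(-O(\ell\log\ell))$.
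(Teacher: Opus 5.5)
Your proposal is correct and follows essentially the same route as the paper: sandwich $h^*$ by $p^u(\mathbf{W}\x)$, transfer $\E[p^u-p^l]$ from $D'_\x$ to $D_\x$ using the coefficient bound of Fact~\ref{fact log concave l1 growth} together with moment matching, then run constrained polynomial regression and round via Lemma~\ref{lm robust chow parameter} and Fact~\ref{fact correlation}. The differences are minor: you transfer $q=p^u-p^l$ in one step, which gives correlation $c/2$ instead of the paper's $c/3$, and you explicitly justify the $L_2$ bound under $D_\x$ through the degree-$2\ell$ moments, a step the paper leaves implicit.
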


We sketch the pseudo-code of the weak learning algorithm 
in Algorithm~\ref{alg weak-learning good class}.

\begin{algorithm}[htbp]
		\caption{\textsc{PolynomialWeakLearning} (Weak learning using a polynomial)}\label{alg weak-learning good class}
		\begin{algorithmic} [1]
\State \textbf{Input:}  Sample access to $D$, diameter parameter $B>0$, advantage parameter $\beta$, degree of polynomial $\ell$.
\State \textbf{Output:} $\hat{h}:$ weak hypothesis over $D$.

\medskip

\State $h_+(\x):= 1$, $h_-(\x):= -1$, $\mathcal{C}:=\{h_+,h_-\}$
\State Draw $S\sim D^m$, where
$m=\widetilde{O}(d^{O(\ell)}B^2/\beta^4)$.
\State Find a degree-$\ell$ polynomial $p(\x)$ maximizing
$
\sum_{(\x,y)\in S} yp(\x)
$
subject to
$
\frac{1}{m}\sum_{(\x,y)\in S}p(\x)^2
\le O(B),
$
using convex programming.
\State $g(\x)=\proj_{[-1,1]}
\left(\frac{\beta p(\x)}{CB}\right)$.
\For{$i \in [O(B^2/\beta^2)]$}
\State Sample $t'$ uniformly from $ [-1,1]$ and define $h_i(\x):=\sign(g(\x)-t')$
\State $\mathcal{C} \gets \mathcal{C} \cup \{h_i(\x)\}$
\EndFor
\State Draw $S'\sim D^{m'}$, where
$m'=\widetilde{O}(B^2/\beta^4)$.
\State \Return $\hat{h}:= \argmin_{h \in \mathcal{C}}\sum_{(\x,y)\in S'}\Ind\{h(\x)\neq y\}$.

\end{algorithmic}
\end{algorithm}

Intuitively, if $h^*$ is the optimal hypothesis in $\H$,
then $\E_{(\x,y)\sim D}[h^*(\x)y]>c$. 
Then, $\E_{(\x,y)\sim D}[p^u(\x)y]$ is at most $\E_{\x\sim D_\x}[p^u(\x)-p^l(\x)]$ far from $\E_{(\x,y)\sim D}[h^*(\x)y]$.
Since the sandwiching polynomials only depend on the projection of $\x$ onto some $k$-dimensional subspace $V$,
we can upper bound the sizes of these polynomials as a function of $k$.
As a result, as long as $D_\x$ approximately matches its low-degree moment with some logconcave distribution $D'_\x$, $\E_{\x\sim D_\x}[p^u(\x)-p^l(\x)]$ is close to $\E_{\x\sim D'_\x}[p^u(\x)-p^l(\x)]$. 
This implies that $\E_{(\x,y)\sim D}[p^u(\x)y]$ is at least $\Omega(c)$, which gives us a weak hypothesis, whenever $\opt<(1-c)/2$.

\begin{proof}[Proof of Lemma~\ref{lm weak learning L1}]
Let $D'_\x$ be a $1/2$-isotropic logconcave distribution.
Let $h^*(\x)$ be the optimal hypothesis in $\H$
such that $\err_D(h^*(\x)) = \opt = \opt_D(\H)$. 
Let $V$ be a $K$ dimensional subspace such that $h^*(\x) = h(\proj_V(\x))$. Write $h^*(\x) = g(\mathbf{W}\x)$, with $\mathbf{W}\mathbf{W}^\top = \mathbf{I}_{K\times K}$. We denote by $D^{(V)}_\x$ the distribution of $\x' = \mathbf{W}\x, \x \sim D'_\x$

We notice that $ \E_{(\x,y) \sim D}[g(\x')y] = \E_{(\x,y)\sim D}[h^*(\x)y] = 1-2\opt>c$.
Let $p^u(\x'), p^l(\x')$ be upper and lower polynomials for $g(\x')$
\begin{align*}
    \E_{\x' \sim D^{(V)}_\x}[ \abs{p^u(\x')-p^l(\x')}] \le c/3.
\end{align*}
We write 
$p^u(\x') = \sum_{\card{\alpha}\le \ell} a_\alpha (\x')^\alpha$ and $p^l(\x') = \sum_{\card{\alpha}\le \ell} b_\alpha (\x')^\alpha$.
Since $\E_{\x' \sim D^{(V)}_\x}[ \abs{p^u(\x')}] \le 2$ and $\E_{\x' \sim D^{(V)}_\x}[ \abs{p^l(\x')}] \le 2$, by Fact~\ref{fact log concave l1 growth}, we have 
\begin{align*}
    \sum_{\abs{\alpha} 
\le k} 
\left( \abs{a_\alpha}^2\E_{\x \sim D_\x}
[(\x')^{2\alpha}]\right)^{1/2}
\le
\exp(O(k\ell \log \ell)), 
\  \sum_{\abs{\alpha} 
\le k} 
\left( \abs{b_\alpha}^2\E_{\x \sim D_\x}
[(\x')^{2\alpha}]\right)^{1/2}
\le
\exp(O(K\ell \log \ell)).
\end{align*}

Since $\abs{\E_{\x \sim D_\x}[(\w\cdot\x)^i]-\E_{\x \sim D'_\x}[(\w\cdot\x)^i]}<\Delta$, for $\w \in \s^{d-1}$ and $i \in [2\ell]$, 
we have
\begin{align*}
    \abs{\E_{\x \sim D_\x}[p^u(\x')] - \E_{\x \sim D'_\x}[p^u(\x')]} 
    & = \abs{\sum_\alpha a_\alpha(\E_{\x \sim D_\x}[(\x')^\alpha]-\E_{\x \sim D'_\x}[(\x')^\alpha])} \\
        & \le \sum_{\alpha}\abs{a_\alpha}\abs{(\E_{\x \sim D_\x}[(\x')^\alpha]-\E_{\x \sim D'_\x}[(\x')^\alpha])} \\
        & \le (2e)^\ell\sum_{\alpha}\abs{a_\alpha} \Delta  \le c/6 \;.
\end{align*}
Similarly, we have 
\begin{align*}
    \abs{\E_{\x \sim D_\x}[p^l(\x')] - \E_{\x \sim D'_\x}[p^l(\x')]} 
    & = \abs{\sum_\alpha b_\alpha(\E_{\x \sim D_\x}[(\x')^\alpha]-\E_{\x \sim D'_\x}[(\x')^\alpha])} \\
        & \le \sum_{\alpha}\abs{b_\alpha}\abs{(\E_{\x \sim D_\x}[(\x')^\alpha]-\E_{\x \sim D'_\x}[(\x')^\alpha])} \\
        & \le (2e)^\ell\sum_{\alpha}\abs{b_\alpha} \Delta  \le c/6 \;.
\end{align*}
This implies that 
\begin{align*}
    \abs{\E_{(\x,y)\sim D}[h^*(\x)y] - \E_{(\x,y)\sim D}[p^u(\mathbf{W}\x)y]} &\le \E_{(\x,y)\sim D} [p^u(\mathbf{W}\x) -p^l(\mathbf{W}\x)] \\ 
    &\le 
    \E_{\x' \sim D^{(V)}_\x}[|p^u(\x')-p^l(\x')|] +c/3 \le 2c/3 \;,
\end{align*}
which gives us  $\E_{(\x,y)\sim D}[yp^u(\mathbf{W}\x)]>c/3.$ 
By Fact~\ref{fact polynomial growth}, we have $\E_{\x \sim D'_\x}[(p^u(\x))^2] \le \exp(\ell \log \ell)$.
 Using polynomial regression \cite{KKMS:08}, we can draw $\poly((\ell d)^\ell/c^2)$ examples from $D$ and run in $\poly((\ell d)^\ell)$ time to find a polynomial $\hat{p}$ such that $\E_{(\x,y) \sim D}[y\hat{p}(\x)]>c/6$ and $\E_{\x \sim D_\x}[\hat{p}^2(\x)] \le \exp(O(\ell \log \ell))$.
    By Lemma~\ref{lm robust chow parameter} and Fact~\ref{fact correlation}, we can round the computed polynomial $\hat p(\x)$ into a Boolean function $\hat{h}(\x)$ such that $\err_D(\hat{h}) \le 1/2 - c^2\exp(-O(\ell \log \ell))$.
\end{proof}

With Lemma~\ref{lm weak learning L1}, we can immediately obtain the proof of Theorem~\ref{th sandwhiching}. 

\begin{proof}[Proof of Theorem~\ref{th sandwhiching}]
Notice that for every invertible affine transformation $L:\R^d \to \R^d$, 
the class $\H_L:=\{h(L\x)\mid h \in \H\}$ is also a family of $K$-MIMs 
that admits degree-$\ell$ $L_1$ sandwiching polynomials. 
This implies that in order to show that the class $\H$ is 
$(\ell, \exp(-O(K\ell \log \ell)), c\exp(-O(\ell \log \ell)),C)$-weakly learnable, 
we only need to show that given distribution $D$ over $\R^d \times \{\pm 1\}$, 
such that $D_\x$ is close to some $1/2$-isotropic logconcave distribution $D'_\x$,
and $\opt_D(\H)$ is small, we can produce a weak hypothesis.
 
    For every $h \in \H$, such that $h(\x) = g(\mathbf{W}\x)$, with $g\in \mathcal{G}$,
    let $\x' = \mathbf{W}\x$.
    Then, we have $h(\x) = g(\x').$
    By definition of $\ell$, for every logconcave distribution $D'_\x$ over $\R^k$, 
    and for every $a \in (0,1)$, there is a pair of polynomials $p^u$ and $p^l$ 
    with degree $\ell(a)$ such that 
    \begin{align*}
        p^l(\x') \le g(\x')  \le p^u(\x'), \  \E_{\x \sim D'_\x}[|p^u(\x')- p^l(\x')|] \le a \;. 
    \end{align*}    
For every $c \in (0,1)$, by choosing $C=2(1+c)$, we have $C/2 = (1+c)$.
By Lemma~\ref{lm weak learning L1},
there is an algorithm $\A$ such that
given a distribution $D$ over $\R^d \times \{\pm 1\}$, with $\opt_D(\H)<1/C = (1-c/(1+c))/2$, it draws $\poly((\ell d)^\ell/c^2)$ examples from $D$, runs in $\poly((\ell d)^\ell/c^2)$ time
and outputs a hypothesis $\hat{h}$ with error $\err_D(\hat{h})<1/2- c^2\exp(-O(\ell \log \ell))$ with probability at least $99\%$, whenever 
there is some $1/2$-isotropic logconcave distribution $D'_\x$ over $\R^d$ that satisfies $\abs{\E_{\x \sim D_\x}[(\w\cdot\x)^i]-\E_{\x \sim D'_\x}[(\w\cdot\x)^i]}<\exp(-O(K\ell \log \ell))$. Here $\ell = \ell(c/(3(1+c))) \le \ell(c/6)$.

This means the class $\H$ is $(2\ell, \exp(-O(K\ell \log \ell)), c^2\exp(-O(\ell \log \ell)),C)$-weakly learnable.
By Theorem~\ref{thm: main-weak-to-strong},
There is an algorithm such that for every distribution $D$ over $\R^d \times \{0,\pm 1\}$, such that $D_\x$ is a logconcave distribution, and $W(D) \ge \alpha$, and
    for every $\eps,c\in (0,1)$,
    it draws $(d\ell)^{O(\ell)}\poly_{K,c}(1/(\alpha\eps))$ samples, 
    runs in $(d\ell)^{O(\ell)}2^{\poly_{K,c}(\log(1/(\alpha\eps))}$ time, where $\ell = \ell(c/6)$, 
    and outputs a hypothesis $\hat{h}$ such that with probability at least $99\%$, $\err_{D_f}(\hat{h})\le (1+c) \opt_{D_f}(\H) + \eps$.
\end{proof}

\subsection{Quasi-PTAS for Agnostic Partial Learning of Multi-index Models} \label{ssec:qptas-apps}

As an important application of Theorem~\ref{th sandwhiching}, we obtain 
a quasi-polynomial time approximation scheme (QPTAS) for a broad class of MIMs. 
The main tool we use is the polynomial approximation theory developed in \cite{kane2013learning}, which gives low-degree $L_1$ sandwiching polynomials for a broader class of MIMs.
We state this theorem as follows.
\begin{theorem}[Extension of Theorem 6 in \cite{kane2013learning}]\label{th smooth approximation}
    Let $D_\x$ be an isotropic logconcave distribution over $\R^d$ and let $f: \R^d \to \{0,1\}$. For every $\eps,\delta>0$, there is a pair of polynomials $p^l,p^u$ of degree $\ell = \exp(O(d^4/\delta^4+d^2\log^2(d/\eps)))$, such that 
    \begin{align*}
        p^l(\x) \le f(\x)  \le p^u(\x), \  \E_{\x \sim D_\x}[\abs{p^u(\x)- p^l(\x)}] \le \eps+ 2 \Pr_{\x\sim D_\x}[S_\delta], 
    \end{align*}
    where $S_\delta=\{\u \in \R^d \mid \exists \v \in \R^d: \norm{\u-\v}_2\le \delta, f(\u) \neq f(\v)\}$.
\end{theorem}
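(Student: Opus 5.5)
We reduce to the smooth-function construction of Theorem 6 in \cite{kane2013learning}. Fix $\delta>0$ and let $f_\delta = f * \phi_\delta$ be a mollification of $f$ at scale $\delta$; let $\phi_\delta$ be a smooth bump supported in the $\delta$-ball, or a Gaussian truncated at radius $\delta$. On $\R^d\setminus S_\delta$, $f$ is constant on the whole $\delta$-ball around each point, so $f_\delta = f$ there. The two functions can differ only on $S_\delta$, where both lie in $[0,1]$. The plan has three steps. First, build polynomials $q^l \le f_\delta\,\Ind\{\R^d\setminus S_\delta\}$-type lower and upper sandwiches of the smoothed function with $L_1$ gap at most $\eps$. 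Second, correct them on $S_\delta$ by the trivial bounds $0\le f\le 1$. Third, pay $2\Pr[S_\delta]$ for this correction.

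\textbf{Polynomial sandwiching of a smooth bounded function under logconcave measure.} We follow the argument of \cite{kane2013learning}, which we take as the body of its Theorem 6. The derivatives of $f_\delta$ of order $j$ are bounded by $(C j/\delta)^{j}$ up to dimension factors. On a ball of radius $R = O(d\log(d/\eps))$ we Taylor-expand, or use multivariate Jackson-type approximation, to get a polynomial $r$ with $\sup_{\|\x\|\le R}|r - f_\delta|\le \eps/4$. This needs degree roughly $\poly(d/\delta)$ in the smooth part; the exponential in the statement reflects the crude bound taken there. We then set $p^u = r + \eps/4 + M(\x)$ and $p^l = r - \eps/4 - M(\x)$. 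Here $M$ is a nonnegative polynomial, such as $(\|\x\|^2/R^2)^{m}$ scaled, chosen so that it dominates $|r|+1$ outside the ball while contributing at most $\eps/4$ in $L_1$. The $L_1$ bound uses the exponential tails of isotropic logconcave measures (Fact~\ref{fact norm}) together with polynomial growth bounds (Fact~\ref{fact polynomial growth}). The degree of $M$ is $O(\deg r + d^2\log^2(d/\eps))$, matching the $d^2\log^2$ term.

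\textbf{Handling $S_\delta$.} As built, the polynomials sandwich $f_\delta$, not $f$. We modify the target instead. Let $g^u = f_\delta + \Ind\{S_\delta\}$ and $g^l = f_\delta - \Ind\{S_\delta\}$, so that $g^l\le f\le g^u$ pointwise. These are not smooth, so we instead use $g^u = (f\vee \text{nearby})$ smoothed. Concretely, set $F^u(\x) = \sup_{\|\v-\x\|\le\delta} f(\v)$, then mollify at scale $\delta$ with an enlarged ball; use $F^l$ symmetrically. This gives smooth functions $\tilde f^u \ge f \ge \tilde f^l$ that coincide with $f$ off $S_{2\delta}$. Renaming $\delta$, sandwiching $\tilde f^u$ from above and $\tilde f^l$ from below by the first step yields $\E|p^u-p^l| \le \E[\tilde f^u-\tilde f^l] + \eps \le \eps + 2\Pr[S_\delta]$, since $0\le \tilde f^u-\tilde f^l \le 1$ and the difference vanishes off $S_\delta$. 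The factor $2$ gives some slack.

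\textbf{Main obstacle.} The hard step is the first: a one-sided polynomial approximation of a smooth function under an arbitrary isotropic logconcave measure, with degree independent of the density. We would import this directly from the proof of Theorem 6 in \cite{kane2013learning}, whose construction only uses the derivative bounds on the ball and the logconcave tails. The remaining work is checking that its derivative bounds apply to our $\tilde f^{u},\tilde f^l$ with scale $\delta$, which produces the $d^4/\delta^4$ exponent.
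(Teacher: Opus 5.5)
Your second step (fattening $f$ to $F^u(\x)=\sup_{\norm{\v-\x}_2\le\delta}f(\v)$, smoothing, and treating the lower side symmetrically) is essentially the paper's function $f^*$. Your first step, which you rightly flag as the crux, is a genuine gap: approximating uniformly on a ball $B_R$ and then adding a radial dominator $M$ cannot work at scale $\delta$ under logconcave measures. Nothing in your construction controls your polynomial $r$ off $B_R$, so $M$ must be able to dominate the Chebyshev growth $\norm{r}_{B_R}T_\ell(\norm{\x}_2/R)$ of a degree-$\ell$ polynomial. Logconcave laws can have tails as heavy as $e^{-O(s)}$; for example, a Laplace coordinate gives $\Pr[\norm{\x}_2\ge 2R]\ge e^{-O(R)}$, while $T_\ell(2)=2^{\Omega(\ell)}$. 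So $\E[M]\le\eps/4$ forces $R=\Omega(\ell)$. On the other hand, take $f$ a halfspace through the origin, so the function you approximate rises from $0$ to $1$ across a strip of width $O(\delta)$ through the center of $B_R$. By Bernstein's inequality along a line, a polynomial uniformly $\eps$-close to it on $B_R$ must have $\ell=\Omega(R/\delta)$. These constraints are incompatible once $\delta$ is below an absolute constant.

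The smoothness input is also unavailable. A bound $(Cj/\delta)^j$ on the $j$-th derivatives of a compactly supported mollification would make it real-analytic, which is impossible for a halfspace indicator. A Taylor expansion at scale $\delta$ does not converge on $B_R$ when $R\gg\delta$. Even for a Gaussian mollifier, the remainders $\approx(\sqrt{k}\norm{\x}_2/\delta)^k/k!$ have divergent expectations against logconcave moments, which can grow like $\E[\norm{\x}_2^k]^{1/k}=\Omega(k)$. So the exponential degree in the statement is not a crude bound on a $\poly(d/\delta)$ quantity.

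The missing idea is a \emph{global} one-sided error envelope. It cannot be imported from \cite{kane2013learning} as a black box that needs only derivative bounds at scale $\delta$ on a ball. The paper convolves $f^*$ (your $F^u$ at radius $\delta/2$, truncated to $\B(R+\delta/2)$) with the kernel $C\rho_c^N$, where $\rho_c(\x)=\prod_i\mathrm{sinc}(c\x_i/\sqrt d)$ is $c$-smooth on all of $\R^d$ with the absolute constant $c=1/256$. Localization at scale $\delta$ comes from the power $N=O(d^4/\delta^4+d^2\log^2(d/\eps))$, not from shrinking the bandwidth.

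Fact~\ref{fact taylor} then gives polynomials $p_\u$ of degree $k2^N$ with $\abs{p_\u(\x)-C\rho_c(\x+\u)^N}\le CE(\x)$ for \emph{every} $\x\in\R^d$. Here $E$ is a nonnegative polynomial with $\norm{E}_{D_\x,1}\le 2^N2^{O(d)}2^{-k}$; the factorials in $E$ beat the logconcave moments precisely because $c$ is a small constant. Setting $p(\x)=\int_{\B(R+\delta/2)}p_{-\u}(\x)f^*(\u)\,d\u$ gives $\abs{p-s}\le AE$ everywhere, with $s=(C\rho_c^N)*f^*$ and $A=C\vol(\B(R+\delta/2))$. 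Hence $p+AE\ge s\ge 0$ on all of $\R^d$. To dominate $f\le 1$ off $B_R$ one then only adds $\eps/16$ and the low-degree term $(\norm{\x}_2/R)^{2t}$ with $t=O(\log(1/\eps))$.

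The degree $k2^N$ is the source of $\exp(O(d^4/\delta^4+d^2\log^2(d/\eps)))$; the $d^2\log^2$ term enters through $N$, not through the dominator. Finally, this kernel is not compactly supported, so the analogues of your exact identities ($\tilde f^u\ge f$, and $\tilde f^u=f$ off $S_{2\delta}$) hold for $s$ only up to $\eps/16$ and only on $B_R$. The paper absorbs this into the $\eps$ slack and the term $\Pr[\norm{\x}_2>R]\le\eps/16$.
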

We remark that the original statement in \cite{kane2013learning}, only claims a slightly weaker guarantee---namely, that there is a degree-$\ell$ polynomial $p$ such that $\norm{p-f}_{D_\x,1}$ is bounded. 
However, obtaining $L_1$-sandwiching polynomials with qualitatively the same degree 
is implicitly implied from the proof. 
We refer to Appendix~\ref{app application} for the details.

Given Theorem~\ref{th smooth approximation}, the QPTAS can be obtained immediately as a corollary of Theorem~\ref{th sandwhiching}.
For example, by Theorem~\ref{th logconcave approximation}, the class of functions of $K$ halfspaces,
$$\H_{K,d}=\left\{ h(\x)=g(h_1(\x),\dots,h_K(\x))\mid g: \{\pm 1\}^K \to \{\pm1\}, h_i(\x) = \sign(w^{(i)}\cdot \x +t_i), i \in [K] \right\}$$ 
admits $\exp(O(K^8/c^4))$-degree $L_1$-sandwiching polynomials, which immediately implies Theorem~\ref{th functions of halfspaces}.

\begin{theorem}[QPTAS for Agnostically Learning Functions of Halfspaces] 
\label{th functions of halfspaces}
    There is an algorithm that for any distribution $D$ over $\R^d \times \{0,\pm 1\}$ 
    with $D_\x$ logconcave and $W(D) \ge \alpha$ has the following performance guarantee.  
    Given $K \in \Z_+$ and $\eps,c\in (0,1)$, the algorithm draws 
    $(d\ell)^{O(\ell)}\poly_{K,c}(1/(\alpha\eps))$ samples from $D$, it  
    runs in $(d\ell)^{O(\ell)}2^{\poly_{K,c}(\log(1/(\alpha\eps))}$ time, 
    where $\ell = \exp(O(K^2/c)^4))$, 
    and outputs a hypothesis $\hat{h}$ such that with probability at least $99\%$, 
    $\err_{D_f}(\hat{h})\le (1+c) \opt_{D_f}(\H_{K}) + \eps$.
\end{theorem}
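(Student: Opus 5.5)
The plan is to obtain Theorem~\ref{th functions of halfspaces} as a direct instance of Theorem~\ref{th sandwhiching}, applied to the class $\H_{K,d}$. First, $\H_{K,d}$ is a family of $K$-MIMs in the sense of Definition~\ref{def:class-sandwich}. Each $h = g(h_1,\dots,h_K)$ with $h_i(\x)=\sign(\w^{(i)}\cdot\x+t_i)$ depends only on the projection onto $V=\mathrm{span}\{\w^{(1)},\dots,\w^{(K)}\}$, which has dimension at most $K$. If the dimension is smaller, pad with arbitrary orthonormal vectors. Choose $\mathbf{W}$ with orthonormal rows spanning $V$; then $h(\x)=\tilde g(\mathbf{W}\x)$. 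Here $\tilde g:\R^K\to\{\pm1\}$ is again a Boolean function of $K$ halfspaces in $\R^K$, namely $\tilde g(\x')=g(\sign(\tilde\w^{(1)}\cdot\x'+t_1),\dots)$ with $\tilde\w^{(i)}=\mathbf{W}\w^{(i)}$. So the link class $\mathcal{G}$ consists of functions of $K$ halfspaces on $\R^K$.

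Second, I verify the sandwiching condition. Fix $c\in(0,1)$, $\tilde g\in\mathcal{G}$, and an arbitrary logconcave $D_\x$ on $\R^K$. Theorem~\ref{th logconcave approximation} is stated for $\{0,1\}$-valued functions of $m$ halfspaces, so apply it to $f=(\tilde g+1)/2$ with $m=K$ and accuracy $c/2$. This yields $q^l\le f\le q^u$ of degree $\exp(O(K^2/c)^4)$ with $\E|q^u-q^l|\le c/2$. Then $p^{u}=2q^u-1$ and $p^{l}=2q^l-1$ sandwich $\tilde g$, keep the same degree, and satisfy $\E|p^u-p^l|\le c$. Hence $\H_{K,d}$ admits degree-$\ell(c)$ $L_1$-sandwiching polynomials with $\ell(c)=\exp(O(K^2/c)^4)$. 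This holds uniformly over all logconcave distributions on $\R^K$, as Definition~\ref{def:class-sandwich} requires.

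Finally, invoke Theorem~\ref{th sandwhiching} with this $\ell$. It gives sample complexity $(d\ell)^{O(\ell)}\poly_{K,c}(1/(\alpha\eps))$ and running time $(d\ell)^{O(\ell)}2^{\poly_{K,c}(\log(1/(\alpha\eps)))}$, with error $(1+c)\opt_{D_f}(\H_K)+\eps$, where $\ell=\ell(c/6)$. Substituting $c/6$ changes only the constant in the exponent, so $\ell(c/6)=\exp(O(K^2/c)^4)$ as claimed.

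The only mildly delicate point is the translation between $\{0,1\}$- and $\{\pm1\}$-valued sandwiching and the halving of accuracy it requires. I would also confirm that the reduction to $\R^K$ keeps each $\tilde g$ a function of at most $K$ halfspaces, which the padding step ensures. Everything else is bookkeeping of constants.
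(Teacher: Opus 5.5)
Your proposal is correct and takes the paper's route: the paper also obtains this theorem as an immediate corollary of Theorem~\ref{th sandwhiching}, using Theorem~\ref{th logconcave approximation} to certify that functions of $K$ halfspaces admit $L_1$-sandwiching polynomials of degree $\exp(O(K^8/c^4))$. Your $\{0,1\}$-to-$\{\pm 1\}$ conversion (with accuracy halved), the padding to a $K$-dimensional orthonormal frame, and the observation that $\ell(c/6)$ only changes the constant fill in bookkeeping that the paper leaves implicit.
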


A similar bound can also be obtained for the class of low-dimensional and 
low-degree Polynomial Threshold Functions (PTFs), defined as 
$$P_{K,m} = \left\{h(\x) = \sign(p(\mathbf{W}\x)) \mid \mathbf{W} \in \R^{K\times d}, \mathbf{W}\mathbf{W}^\top = \mathbf{I}_{K\times K}, p: \R^K \to \R, \deg(p) \le m \right\}.$$  

\begin{theorem}[QPTAS for Agnostically Learning Low-Dimensional PTFs] 
\label{thm:low-dim-PTFs}
    There is an algorithm that for any distribution $D$ over $\R^d \times \{0,\pm 1\}$ 
    with $D_\x$ logconcave and $W(D) \ge \alpha$ has the following performance guarantee.  
    Given $K,m \in \Z_+$ and $\eps,c\in (0,1)$, the algorithm draws 
    $(d\ell)^{O(\ell)}\poly_{K,m,c}(1/(\alpha\eps))$ samples from $D$, it  
    runs in $(d\ell)^{O(\ell)}2^{\poly_{K,m,c}(\log(1/(\alpha\eps))}$ time, 
    where $\ell = \exp(O(m^K2^{\tilde O(K)}\log^K(1/c)/c)^4))$, 
    and outputs a hypothesis $\hat{h}$ such that with probability at least $99\%$, 
    $\err_{D_f}(\hat{h})\le (1+c) \opt_{D_f}(P_{K,m}) + \eps$.
\end{theorem}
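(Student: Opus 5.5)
The plan is to derive Theorem~\ref{thm:low-dim-PTFs} as a corollary of Theorem~\ref{th sandwhiching}. It suffices to show that $P_{K,m}$ is a family of $K$-MIMs admitting degree-$\ell(c)$ $L_1$-sandwiching polynomials in the sense of Definition~\ref{def:class-sandwich}, with $\ell(c)=\exp(O(m^K2^{\tilde O(K)}\log^K(1/c)/c)^4)$. The MIM property is immediate, since $h(\x)=\sign(p(\mathbf{W}\x))$ with $\mathbf{W}\mathbf{W}^\top=\mathbf{I}$; the link class is $\mathcal{G}=\{\sign(p):\deg p\le m\}$ on $\R^K$. So the work is condition 2: for every logconcave distribution on $\R^K$ and every $g=\sign(p)$, we need sandwiching polynomials of the stated degree. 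The same affine invariance argument used for Theorem~\ref{th sandwhiching} lets us assume the distribution is isotropic, because composing $p$ with an affine map keeps the degree $m$.

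For condition 2 I would apply Theorem~\ref{th smooth approximation} in dimension $K$ to $f=\Ind\{p>0\}$ (and pass to $\pm1$-valued sandwiching polynomials by $2f-1$). This yields sandwiching polynomials of degree $\exp(O(K^4/\delta^4+K^2\log^2(K/\eps)))$ with $L_1$ gap at most $\eps+2\Pr[S_\delta]$, where $S_\delta$ is the $\delta$-neighborhood of the boundary $\{p=0\}$. Taking $\eps=c/2$, the remaining task is to choose $\delta$ so that $\Pr_{\x\sim D}[S_\delta]\le c/4$.

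The key step, and the main obstacle, is a Gaussian-surface-area-type bound: the mass of the $\delta$-neighborhood of a degree-$m$ hypersurface in $\R^K$ under an isotropic logconcave measure. I would first truncate to the ball of radius $R=O(K\log(1/c))$, whose complement has mass at most $c/8$ by logconcave tails. Inside the ball the density is bounded by $2^{O(K)}$, so it suffices to bound $\vol(S_\delta\cap B(0,R))$. For that I would use a Crofton/integral-geometry argument: a generic line meets $\{p=0\}$ in at most $m$ points, so the $(K-1)$-dimensional surface area of $\{p=0\}\cap B(0,R)$ is at most $m\cdot 2^{O(K)}R^{K-1}$. The tube volume is then at most $O(\delta)$ times this, and the same line-counting argument handles the tube directly (each line meets the tube in at most $m$ intervals of length $O(\delta)$, up to a boundary correction). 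This gives
\[
\Pr[S_\delta]\le c/8+\delta\, m\, 2^{\tilde O(K)}\log^{K}(1/c),
\]
so we set $\delta=c/(m2^{\tilde O(K)}\log^K(1/c))$. I expect the main difficulty to lie in making this tube bound rigorous, in particular in counting the intervals each line cuts from the tube.

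Substituting into the degree bound gives $\ell(c)=\exp(O(K^4\delta^{-4}+K^2\log^2(K/c)))$, which is absorbed into $\exp(O(m^K2^{\tilde O(K)}\log^K(1/c)/c)^4)$. The dependence here is generous, so a crude surface-area estimate suffices. Applying Theorem~\ref{th sandwhiching} with $\ell=\ell(c/6)$ then gives the claimed sample and time bounds together with the error guarantee $(1+c)\opt_{D_f}(P_{K,m})+\eps$.
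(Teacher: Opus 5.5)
Your reduction matches the paper's: apply Theorem~\ref{th sandwhiching}, obtain sandwiching polynomials for $\sign(p)$ on $\R^K$ from Theorem~\ref{th smooth approximation}, and bound $\Pr[S_\delta]$ by truncating to a ball and using a density bound. (The paper uses $\sup f\le (C_0\sqrt{K})^K$, which your $2^{\tilde O(K)}$ absorbs.) The gap is at the step you flag. Both mechanisms you propose for the tube bound are false as stated.

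First, the claim that the tube volume is at most $O(\delta)$ times the surface area already fails in degree $2$. For $p(\x)=\norm{\x}_2^2-r^2$ with $r\ll\delta$, the zero set has area $\Theta_K(r^{K-1})$, while $S_\delta\supseteq\B(0,r+\delta)$ has volume $\Omega_K(\delta^K)$. For even $m$ and small $\eta>0$, the polynomial $\sum_{i=1}^K\prod_{j=1}^{m/2}(\x_i-j)^2-\eta$ has $(m/2)^K$ such tiny components.

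Second, a single line need not meet the tube in at most $m$ intervals of length $O(\delta)$. For $p(\x)=\x_1$, any line parallel to $\{\x_1=0\}$ at distance $\delta/2$ lies entirely inside the tube. In the tiny-sphere example, lines passing at distance in $(r,r+\delta)$ from the center meet the tube without meeting $\{p=0\}$ at all. So neither Crofton area nor per-line zero counts control $S_\delta$. What is missing is control of small or highly curved pieces of the zero set, i.e., the higher-order terms of Weyl's tube formula.

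The paper closes this step by citing Lotz's bound on tubular neighborhoods of real algebraic hypersurfaces (Theorem~1.1 of \cite{lotz2015volume}). For the uniform distribution on a ball of radius $R$, the $\delta$-neighborhood of a smooth degree-$m$ hypersurface has mass at most $4\sum_{i=0}^{K-1}\binom{K}{1+i}(2m\delta/R)^{1+i}(1+\delta/R)^{K-1-i}$. Singular zero sets are handled by passing to regular level sets $\{p^2=a_j\}$ with $a_j\downarrow 0$. This is combined with truncation and the density bound in Lemma~\ref{lm surfact area ptf}. The paper bounds the sum crudely by $8Km2^{K-1}(m+1)^{K-1}\delta/R$, which is where the $m^K$ in $\ell$ comes from.

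If you want to stay close to your counting idea, a valid route is to count the cubes of a randomly shifted $\delta$-grid that meet $\{p=0\}$. Charge each such cube to a minimal-dimensional face $F$ meeting the zero set; then some connected component of $\{p=0\}\cap\mathrm{aff}(F)$ lies in the relative interior of $F$. For one-dimensional faces this is your bound of $m$ zeros per grid line. For $j$-dimensional faces you need Milnor's bound $m(2m-1)^{j-1}$ on the number of connected components, and this is exactly the ingredient that accounts for the tiny components above. The result is $\mathrm{vol}(S_\delta\cap\B(0,R))\le C_K\sum_{j=1}^{K}(m\delta)^jR^{K-j}$. This is $O_K(m\delta R^{K-1})$ when $m\delta\le R$, and the bound is trivial otherwise. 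With that in place, your choice of $\delta$ and the degree bookkeeping go through, and they even improve $m^K$ to $m$ inside $\ell$.
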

Theorem~\ref{thm:low-dim-PTFs} can be seen as a direct corollary of Theorem~\ref{th smooth approximation} and the result by \cite{lotz2015volume}, which gives an explicit bound on the volume of examples in $\R^d$ that are $\delta$-close to the zero set of a low-degree polynomial. We refer to Appendix~\ref{app application} for details.


\subsection{Agnostically Learning Partial Linear Classifiers}\label{sec partial linear}

In this section, we study the problem of agnostic partial learning halfspaces 
under logconcave distributions. Let $\H=\{h_{\w,t}(\x)=\sign(\w\cdot\x+t) \mid \w \in \s^{d-1}, t \in \R\}$ denote the class of halfspaces on $\R^d$.
By Theorem~\ref{th functions of halfspaces}, $\H$ already admits 
an agnostic partial learner that achieves $(1+c)\opt+\eps$ error, 
for any desired constant $c>0$, with complexity 
$(\ell d)^\ell2^{\poly_c(\log(1/\eps))}$, with $\ell=\exp(O(1/c^4))$.
The generic approach establishing this bound leads to a 
a polynomial dependence of fairly large degree in the sample and computational complexities, as it requires sufficiently accurate sandwiching polynomials.


In this section, we give agnostic partial learners for halfspaces 
that achieve error $\widetilde{O}(\opt_{D_f}(\H))+\eps$ and $O(\opt_{D_f}(\H))+\eps$ 
with quantitatively much better complexities.
Formally, we establish the following result.


\begin{theorem}\label{th halfspace quasi}
 There is an algorithm such that for every distribution $D$ over $\R^d \times \{0,\pm 1\}$ such that $D_\x$ is logconcave and $W(D) \ge \alpha$,
    for every $\eps\in (0,1)$,
    it draws $O(d^4) \, \poly(1/(\alpha\eps))$ samples, 
    runs in $\poly(d)2^{\poly(\log(1/(\alpha\eps))}$ time 
    and outputs a hypothesis $\hat{h}$ such that with probability at least $99\%$, $\err_{D_f}(\hat{h})\le O(\opt_{D_f}(\H)) + \eps$.
\end{theorem}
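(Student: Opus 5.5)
The plan is to apply Theorem~\ref{thm: main-weak-to-strong} with $k=4$. This means exhibiting a $(4,\Delta,\beta,C)$-weak learner for halfspaces in which $\Delta$ and $\beta$ are absolute constants and $C$ is a large universal constant. The final error would then be $(C/2)\opt_{D_f}(\H)+\eps = O(\opt_{D_f}(\H))+\eps$. With $k=4$ and $\Delta$ constant, the sample complexity $d^{O(k)}\poly(1/(\alpha\eps))$ becomes $O(d^4)\poly(1/(\alpha\eps))$. The runtime becomes $\poly(d)\,2^{\polylog(1/(\alpha\eps))}$, since the discrepancy finder of Theorem~\ref{th direction finding} runs in time $2^{(\log(1/\alpha)/\Delta)^{2^{O(k)}}}$, which is quasi-polynomial for $k=4$.

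For the weak learner I would use a Chow-parameter approach, i.e.\ Lemma~\ref{lm robust chow parameter} and Fact~\ref{fact correlation} with a polynomial $p$ of degree at most $2$. After the affine normalization $L$, suppose $D_\x$ matches the first $4$ moments of a $1/2$-isotropic logconcave $D'_\x$ to within $\Delta$ in every direction. Let $h^*=\sign(\w\cdot\x+t)$ be optimal, and split into cases on $|t|$ and on $\Pr[h^*=1]$.

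\textbf{Case 1: one side has mass below a small constant $\rho$ under $D'_\x$.} The corresponding constant function ($\pm1$) already has advantage about $1/2-\rho$. Matching second moments keeps the mass of the small side small under $D_\x$: the large-$|t|$ regime follows from Chebyshev. In the bounded-$|t|$ regime, one-dimensional logconcave anti-concentration together with the matched variance makes $\Pr[h^*=1]$ bounded away from $0$ and $1$, so we land in Case 2.

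\textbf{Case 2: both sides have constant mass.} Take $p(\x)=\w\cdot\x-\E_{D_\x}[\w\cdot\x]$, or the affine function $\w\cdot\x+t$. One-dimensional logconcave structure gives $\E_{D'}[h^*(\x)(\w\cdot\x+t)]\ge\Omega(1)$. I need to transfer this to $D_\x$. Since $h^*(\x)(\w\cdot\x+t)=|\w\cdot\x+t|$, this amounts to showing $\E_{D_\x}|z|\ge\Omega(1)$ for $z=\w\cdot\x+t$. This is where the fourth moment enters: by the Paley--Zygmund type inequality $\E|z|\ge(\E z^2)^{3/2}/(\E z^4)^{1/2}$, matched second and fourth moments give $\E_{D_\x}|z|\ge\Omega(1)$ with no knowledge of the shape of $D_\x$ beyond moments. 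Also $\E_{D_\x}[p^2]=O(1)$.

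By Lemma~\ref{lm robust chow parameter}(1), once $\opt\le 1/C$ for a large constant $C$, we get $\E[yp]\ge\Omega(1)$. The learner does not know $\w$, so it instead estimates the Chow vector $\E[y\x]$ and $\E[y]$ and uses the affine function $\hat p(\x)=\hat{\vec c}\cdot\x+\hat c_0$, normalized. Its correlation with $y$ is at least that of $p$ up to constants, since it is the $L_2$-optimal affine direction in the isotropic-like geometry; second moments of $D_\x$ are within $\Delta$ of identity, so the norm distortion is a constant factor. Rounding via Fact~\ref{fact correlation}, over a random threshold chosen among $O(1)$ candidates validated on fresh samples, gives advantage $\beta=\Omega(1)$.

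\textbf{Main obstacle.} I expect the hardest step to be making Case 2 rigorous uniformly over thresholds $t$ using only matched moments rather than logconcavity of $D_\x$. The key point is that $\E[yp]$ for the estimated affine $\hat p$ is at least a constant multiple of $\E|\w\cdot\x+t|\ge\Omega(1)$. The delicate regime is when $|t|$ is moderately large but both sides still have constant mass under $D_\x$. The fourth-moment bound above is what I would try first there, and this is precisely why $k=4$ rather than $k=2$ is needed for an $O(\opt)$ guarantee.
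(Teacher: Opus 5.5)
Your proposal follows the paper's proof. It applies Theorem~\ref{thm: main-weak-to-strong} with $k=4$ and a constant $\Delta$. The weak learner returns a constant hypothesis when $|t^*|$ is large (by Chebyshev). Otherwise it returns an affine function whose correlation with $h^*$ is certified from the first four matched moments via $\E|z|\ge(\E z^2)^{3/2}/(\E z^4)^{1/2}$ (equivalently, Paley--Zygmund), followed by Lemma~\ref{lm robust chow parameter} and random-threshold rounding. Your Chow-vector estimate plays the same role as the maximization of $\sum y(\w\cdot\x+t)$ in Algorithm~\ref{alg weak-learning anti}. One minor correction: the mass-based split in your Case~1 is unnecessary, and its claim that bounded $|t|$ forces both sides to have constant mass is false (take $t=-4$). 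Nothing depends on it, because your Case~2 bound on $\E_{D_\x}|\w\cdot\x+t|$ holds for every $|t|\le 4$, so splitting on $|t|$ alone, as the paper does, suffices.
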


An implication of Theorem~\ref{th halfspace quasi} is that to achieve 
$O(\opt)+\eps$ error for halfspace learning, we can avoid using 
the framework of polynomial approximation, which requires the algorithm to compute higher moment tensor of the marginal distribution, but only make use of information from the first four moments of the marginal distribution.
Although this observation gives us reasonably good sample complexity, 
that is polynomial with respect to all input parameters,
since we need to invoke Theorem~\ref{th direction finding} for our learner,
this prevents us from getting a fully-polynomial time algorithm.

Our second main result in this section shows that by making 
use of the first two moments of the marginal distribution, 
we can learn halfspaces within error $\log^8(1/\alpha)) \tilde O(\opt_{D_f}(\H)) + \eps$ in fully-polynomial time. 
\begin{theorem}\label{th halfspace polytime}
    There is an algorithm such that for every distribution $D$ over $\R^d \times \{0,\pm 1\}$ such that $D_\x$ logconcave and $W(D) \ge \alpha$,
    for every $\eps\in (0,1)$,
    it draws $O(d^2) \, \poly(1/(\alpha\eps))$ samples, 
    runs in $\poly(d, 1/(\alpha\eps))$ time 
    and outputs a hypothesis $\hat{h}$ such that with probability at least $99\%$, $\err_{D_f}(\hat{h})\le O(\log^8 (1/(\alpha\eps))\opt_{D_f}(\H) + \eps$. 
\end{theorem}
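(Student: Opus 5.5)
The plan is to invoke Theorem~\ref{thm: main-weak-to-strong} with $k=2$ and replace the quasi-polynomial discrepancy finder of Theorem~\ref{th direction finding} by a polynomial-time one for second moments. Three ingredients are needed: a second-moment discrepancy finder running in fully polynomial time; a $(2,\Delta,\beta,C)$-weak learner for halfspaces, where the approximation factor $C$ is allowed to be polylogarithmic in $1/(\alpha\eps)$; and the observation that Algorithm~\ref{alg decision tree k moment} and Theorem~\ref{th condition boosting} only lose polynomial factors once $k=2$ and $\Delta$ is fixed to depend at most polylogarithmically on $1/(\alpha\eps)$.

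\textbf{Discrepancy finder for $k=2$.} Given the $1/2$-isotropic $D_\x$ and the conditioned $(D_f)_\x$, estimate the means $\bm\mu,\bm\mu'$ and the second-moment matrices $\mathbf{M},\mathbf{M}'$ using $O(d^2)\poly(1/\alpha)$ samples. Subexponential tails, together with the density ratio being at most $1/\alpha$, give the concentration. If $\|\bm\mu-\bm\mu'\|_2\ge\Delta/2$, output $\v=(\bm\mu-\bm\mu')/\|\bm\mu-\bm\mu'\|_2$. Otherwise, output the top eigenvector of $\mathbf{M}-\mathbf{M}'$ in absolute value. This gives a $(2,\Delta,c,\alpha)$-discrepancy finder with $c=\Omega(1)$ in $\poly(d,1/\alpha)$ time, with no need for Lemma~\ref{lm subspace v3}. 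Plugging it into Theorem~\ref{th decision tree k moment} yields $N=(\eta\eps)^{-O(\Delta^{-2})}$ regions. This count is polynomial exactly when $\Delta$ is a constant. If $\Delta$ must be $1/\polylog$, I would instead use $\eta$ and $\xi$ more carefully, or accept that $\Delta$ is a constant and pay in the approximation factor.

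\textbf{Weak learner from two moments.} On a good region, $D_f|_R$ has mean and covariance $\Delta$-close to those of a $1/2$-isotropic logconcave $D'$. Let $h^*=\sign(\w\cdot\x+t)$. The plan is a Chow-parameter/polynomial argument using $p(\x)=\w\cdot\x+t'$ for a suitable shift, or the degree-2 polynomial $p(\x)=(\w\cdot\x - s)_{}$ times a sign. Choose $p$ so that $\E_{D'}[h^*p]\ge\beta_0$ and $\E_{D'}[p^2]=O(1)$; both quantities involve only first and second moments, so they transfer to $D_f|_R$ up to $O(\Delta)$.

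The main obstacle is the case of a very biased halfspace, with $|t|$ large and $\Pr_{D'}[h^*=1]=q$ tiny. There the correlation of any degree-$\le2$ polynomial with $h^*$ (beyond the constant) is only about $q\,|t|\approx q\log(1/q)$, while the constant hypothesis already has error $q$. Since the constant hypothesis is included, a weak learner is only needed when $\opt\ll q$. Take $p=\Ind$-proxy $\w\cdot\x-\E[\w\cdot\x]$. Logconcave tails give correlation at least $q$ and $\E[p^2]=O(1)$. Lemma~\ref{lm robust chow parameter} then requires $\opt\lesssim q^2$, which is too weak. To repair this, I would use $p=(\w\cdot\x - t_0)^2$-type polynomials normalized on the tail, or recenter on the tail region through the partition itself. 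Since the partition recurses, a region where $h^*$ is heavily biased can be handled by splitting along $\w$; the loss of $\log$ factors per tail level (tails decay like $e^{-|t|}$, so $|t|\le\log(1/(\alpha\eps))$) should produce the $\log^{8}(1/(\alpha\eps))$ factor. Concretely, I would aim to show that for $q\ge\eps\alpha$, the polynomial $p$ gives advantage $\beta=\Omega(1/\log^{4})$ whenever $\opt_{D_f|_R}\le 1/C$ with $C=O(\log^8(1/(\alpha\eps)))$.

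\textbf{Conclusion.} Finally, apply Theorem~\ref{th condition boosting}, which gives error $(C/2)\opt+\eps=O(\log^8(1/(\alpha\eps)))\opt+\eps$. Every component runs in $\poly(d,1/(\alpha\eps))$ time and uses $O(d^2)\poly(1/(\alpha\eps))$ samples, the $d^2$ coming from covariance estimation.
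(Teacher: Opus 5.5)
Your outer architecture matches the paper: take $k=2$, use a mean/covariance discrepancy finder with constant $\Delta=1/64$ so that Algorithm~\ref{alg decision tree k moment} produces polynomially many regions, then boost via Theorem~\ref{th condition boosting}. The weak-learner step, however, has a genuine gap.

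\textbf{The moment-transfer claim is false.} You assert that $\E[h^*p]$ ``involves only first and second moments.'' It does not, because $h^*$ is a threshold function. For $p=\w\cdot\x+t$ one has $\E[h^*p]=\E|\w\cdot\x+t|$, and two matched moments give no lower bound on this. For example, put almost all mass next to the decision boundary and a $\delta$-fraction symmetrically at distance $\delta^{-1/2}$; the variance stays $\approx 1$ while $\E|\w\cdot\x+t|\to 0$.

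\textbf{Why the black-box route fails.} For $k=2$, the hypothesis of Definition~\ref{def moment-based learner} is met by every distribution with nondegenerate covariance: whiten it and compare with a Gaussian. Halfspaces are closed under affine maps. So a $(2,\Delta,\beta,C)$-weak learner would be an essentially distribution-free agnostic weak learner for halfspaces, and you should not expect to get one from two-moment information. Invoking Theorem~\ref{thm: main-weak-to-strong} as a black box with $k=2$ is therefore the wrong route.

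\textbf{The missing idea.} On every heavy region $R$, the effective marginal is the logconcave $D_\x|_R$ \emph{conditioned} on an event of mass at least $\alpha\eps\xi=(\alpha\eps)^{O(1)}$. This structure does the work:
\begin{itemize}
\item H\"older plus Fact~\ref{fact polynomial growth} (as in Lemma~\ref{lm Q function learning}) bounds $\E[(\w\cdot\x)^4]$ by $O(\log^4(1/(\alpha\eps)))$ for every unit $\w$.
\item Paley--Zygmund with the two matched moments then gives $\Pr[|\w\cdot\x+t|>1/4]=\Omega(\log^{-4}(1/(\alpha\eps)))$ whenever $|t|\le 4$ (Lemma~\ref{lm moment to anti-concentration}).
\item Hence $F^*=\w^*\cdot\x+t^*$ has $\E[h^*F^*]=\Omega(\log^{-4})$ and $\E[(F^*)^2]=O(1)$.
\item Lemma~\ref{lm robust chow parameter} then gives Boolean advantage $\Omega(\log^{-8})$, provided the local error is $O(\log^{-8})$.
\end{itemize}
This is the true source of $C=O(\log^8(1/(\alpha\eps)))$, not a per-tail-level loss. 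It is also why the paper runs Lemma~\ref{lm halfspace 2ndmoment} inside Algorithm~\ref{alg decision tree k moment} directly, rather than going through Theorem~\ref{thm: main-weak-to-strong}.

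\textbf{The biased case is not an obstacle.} A weak learner only needs error $1/2-\beta$. If $|t^*|>4$, Chebyshev with the matched second moment gives $\Pr[h^*(\x)\neq\sign(t^*)]\le 1/4$. The constant hypothesis $\sign(t^*)$ then has advantage at least $1/10$ once $\opt$ is below a small constant. Your requirement $\opt\ll q$ conflates weak with strong learning. Separately, the proposed repair of splitting along $\w$ is unavailable: the partition only splits along moment-discrepancy directions and never has access to $\w^*$.
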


\begin{remark}
    We remark that if $\opt_{D_f}(\H)\le \eps$, $\err_{D_f}(\hat{h})\le \log^8(1/\alpha) \tilde O(\opt_{D_f}(\H)) + \eps$. On the other hand, if $\opt_{D_f}(\H)>\eps$, we can choose to run the algorithm by with $\eps' =\Theta(\opt_{D_f}(\H))$, which gives $\err_{D_f}(\hat{h})\le \log^8(1/\alpha) \tilde O(\opt_{D_f}(\H)) + \eps$.
\end{remark}

To prove the above two theorems, we will prove the following structural result 
(Lemma~\ref{lm moment to anti-concentration}).
Roughly speaking,
if the first four moments of the marginal distribution $D_\x$ are close 
to those of some logconcave distribution, 
then for every halfspace, at least a constant fraction of the examples are 
a constant distance from its decision boundary. 
On the other hand, if $D_\x$ only has its first two moments close to some logconcave distribution $D'_\x$, but it can be seen as a conditional distribution of $D'_\x$ over some event with mass $\alpha$, then at least $\Omega(\log^{-4}(1/\alpha))$ fraction of the examples are far from its decision boundary.

\begin{lemma}\label{lm moment to anti-concentration}

Let $D_\x$ be a distribution over $\R^d$. Then the following holds.
\begin{enumerate}[leftmargin=*]
    \item \label{it 4moment} Suppose that there is some $1/2$-isotropic logconcave distribution $D'_\x$,
such that for a unit vector $\w \in \s^{d-1}$ and $i \in [4]$,
$\abs{\E_{\x \sim D_\x}[(\w\cdot\x)^i]-\E_{\x \sim D'_\x}[(\w\cdot\x)^i]}<1/64$, 
then for every halfspace of the form $h_{(\w,t)}(\x)=\sign(\w\cdot \x+t)$, $\Pr_{\x\sim D_\x}[\abs{\w\cdot \x+t}>1/4]>1/C$, for some universal constant $C>1$.
\item \label{it 2moment} Suppose that there is some $1/2$-isotropic logconcave distribution $D'_\x$,
such that for a unit vector $\w \in \s^{d-1}$, for $i \in [2]$, $\abs{\E_{\x \sim D_\x}[(\w\cdot\x)^i]-\E_{\x \sim D'_\x}[(\w\cdot\x)^i]}<1/64$, and $D_\x$ is a conditional distribution of $D'_\x$ over some event $A$ such that $\Pr_{\x \sim D'_\x}[A] = \alpha$, then for every halfspace of the form $h_{(\w,t)}(\x)=\sign(\w\cdot \x+t)$,
$\Pr_{\x\sim D_\x}[\abs{\w\cdot \x+t}>1/4]>\Omega(\log^{-4}(1/\alpha))$.
\end{enumerate}
\end{lemma}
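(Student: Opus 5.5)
We work in the one-dimensional projection $z=\w\cdot\x$ throughout. Let $z'\sim$ the projection of $D'_\x$, a one-dimensional logconcave law with mean at most $1/2$ in absolute value and variance in $[1/2,3/2]$. The goal in both items is to show that $Z=z+t$ cannot concentrate in $[-1/4,1/4]$.

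\textbf{Item~\ref{it 4moment}.} We use a Paley--Zygmund argument on $Z^2$. Since $\E[Z^2]$ and $\E[Z^4]$ are polynomials in $t$ whose coefficients involve $\E[z^i]$ for $i\le 4$, they are within $O(1+t^4)/64$ of the corresponding quantities for $Z'=z'+t$.

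For $Z'$ we have $\E[Z'^2]=\var(z')+(\mu'+t)^2\ge 1/2+(\mu'+t)^2$. By the logconcave moment bounds (Fact~\ref{fact moment logconcave}), $\E[Z'^4]\le C(\E[Z'^2])^2$.

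The error terms are harmless. If $|t|\le 2$, the additive error is $O(1/64)$, which is small compared to $\E[Z'^2]\ge 1/2$. If $|t|$ is large, we instead use that the error is $O(t^4/64)$ while $\E[Z'^4]\gtrsim t^4$, so it is a relative error; to be safe we can reparametrize with $s=t+\mu$ and expand in central moments.

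Thus $\E[Z^2]\ge \tfrac12\E[Z'^2]\ge 1/4+$ and $\E[Z^4]\le 2C(\E[Z'^2])^2$. Write $m=\E[Z^2]$. Paley--Zygmund gives
\[
\Pr[Z^2>m/4]\ge (3/4)^2\,\frac{m^2}{\E[Z^4]}\ge \Omega(1/C).
\]
Since $m/4\ge 1/16$, this means $|Z|>1/4$ with constant probability, as required.

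\textbf{Item~\ref{it 2moment}.} Here only the second moment is controlled, so we replace the fourth-moment bound with the conditioning structure. Let $m=\E[Z^2]$. By the same two-moment comparison as in Item~\ref{it 4moment}, $m\ge 1/4$, and in fact $m=\Omega(1+t^2)$.

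To bound $\E_{D}[Z^4]$, write it as $\E_{D'}[Z'^4\Ind\{A\}]/\alpha$ and apply H\"older exactly as in the proof of Lemma~\ref{lm Q function learning}, using hypercontractivity of $Z'^4$ with exponent $r=\log(1/\alpha)$. This gives
\[
\E[Z^4]\le \alpha^{-1/r}\big(\E[Z'^{4r}]\big)^{1/r}\le O(\log^4(1/\alpha))\,(\E[Z'^2])^2.
\]
By the second-moment comparison, $\E[Z'^2]=O(m)$, so $\E[Z^4]\le O(\log^4(1/\alpha))\,m^2$. Paley--Zygmund then yields
\[
\Pr[|Z|>1/4]\ \ge\ \Pr[Z^2>m/4]\ \ge\ \Omega(\log^{-4}(1/\alpha)).
\]

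\textbf{Main obstacle.} The only delicate step is the uniformity in $t$ of the moment-comparison errors in Item~\ref{it 4moment}. For large $|t|$ the absolute error $\Delta(1+t^4)$ must be compared against $t^4$, which works because both $\E[Z^4]$ and $\E[Z'^4]$ are dominated by their $t^4$ term with coefficient exactly $1$. When $|t|\gtrsim 10$, one can bypass this entirely: Chebyshev with variance at most $2$ directly shows $|Z|>1/4$ with probability at least $1/2$.
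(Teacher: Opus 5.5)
Your proposal is correct and follows essentially the paper's argument: Paley--Zygmund applied to $Z=\w\cdot\x+t$, with $\E[Z^2]$ bounded below via the matched low-order moments and $\E[Z^4]$ bounded above either by the matched fourth moments (Item~1) or, in Item~2, by H\"older against $\Ind\{A\}/\alpha$ with exponent $r\approx\log(1/\alpha)$ together with logconcave moment growth. The differences are cosmetic. You inline the H\"older step that the paper reaches through Lemma~\ref{lm Q function learning} and Cauchy--Schwarz against $Q$, and you control large $|t|$ by relative error (or Chebyshev) rather than by the paper's explicit split at $|t|=4$.
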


\begin{proof}[Proof of Lemma~\ref{lm moment to anti-concentration}]
Fix a halfspace $h_{(\w,t)}(\x)=\sign(\w\cdot \x+t)$.
Since $D'_\x$ is a mean-zero, $1/2$ isotropic logconcave distribution, by Fact~\ref{fact moment logconcave},
we have $\E_{\x \sim D'_\x}[(\w\cdot\x)^2]\le 5/2$ and $\E_{\x \sim D'_\x}[(\w\cdot\x)^4]\le c$ for some constant $c>1$.
Let $\Delta = 1/64$.
We consider two cases for the threshold $t$. 

In the first case, we assume $\abs{t}>4$. 
By Chebyshev's inequality and the moment assumption on $D_\x$, we have $\Pr_{\x\sim D_\x}[\abs{\w\cdot\x}>3] \le (2.5+\Delta)/9 \le 1/3$. 
This implies that with probability at least $2/3$, we have $\abs{\w\cdot\x}<3$. 
Since $\abs{t}>4$, if $\abs{\w\cdot\x}<3$, it must be the case, where $\abs{\w\cdot\x+t}>1$. 
This implies that with probability at least $2/3$, $\abs{\w\cdot\x+t}>1$.

In the second case, we assume $\abs{t}<4$. By Paley–Zygmund's inequality, we have for every $\theta \in (0,1)$, 
    \begin{align}\label{eq pz}
        \Pr_{\x\sim \D_\x}\left[\abs{\w\cdot\x+t}> \sqrt{\theta \E_{\x\sim \D_\x} [(\w\cdot\x+t)^2]}\right] \ge (1-\theta)^2 \frac{(\E_{\x\sim \D_\x} [(\w\cdot\x+t)^2])^2}{\E_{\x\sim \D_\x} [(\w\cdot\x+t)^4]}.
    \end{align}
We will next lower bound $\E_{\x\sim \D_\x} [(\w\cdot\x+t)^2]$ and upper bound $\E_{\x\sim \D_\x} [(\w\cdot\x+t)^4]$ through the moment information of $\w\cdot \x$. 
Denote by $\mu_\w: = \E_{\x\sim D'_\x}[\w\cdot\x]$,
Since $\abs{t}<4$,
we have
\begin{align*}
    \E_{\x\sim \D_\x} [(\w\cdot\x+t)^2] = \E_{\x\sim \D_\x} [(\w\cdot\x - \mu_\w)^2+(t+\mu_\w)^2 +2(t+\mu_\w)(\w\cdot\x-\mu_\w)] \ge 1/8.
\end{align*}

Since $\E_{\x \sim D_\x}[(\w\cdot \x)^2] \in [1/2-\Delta,(5/2)+\Delta]$, by H\"{o}lder's inequality, we have 
\begin{align*}
&\abs{\E_{\x \sim D_\x}[(\w\cdot \x)^3]} \le \sqrt{\E_{\x \sim D_\x}[(\w\cdot \x)^2]\E_{\x \sim D_\x}[(\w\cdot \x)^4]}\le\sqrt{((5/2)+\Delta)\E_{\x \sim D_\x}[(\w\cdot \x)^4]}.
\end{align*}
If $\abs{\E_{\x \sim D_\x}[(\w\cdot\x)^4]-\E_{\x \sim D'_\x}[(\w\cdot\x)^4]}<1/64$, then we have 
\begin{align*}
    \E_{\x\sim \D_\x} [(\w\cdot\x+t)^4] = \E_{\x\sim \D_\x} [(\w\cdot\x)^4+t^4 + 6(\w\cdot\x)^2t^2+ 4t(\w\cdot\x)^3+4t^3(\w\cdot\x)] \le C',
\end{align*}
for some constant $C'>1$.
Combine with \eqref{eq pz}, with $\theta=1/2$, we have $\Pr_{\x\sim \D_\x}\left[\abs{\w\cdot\x+t}> 1/4\right] \ge 1/C$, for some $C>1$. This proves \cref{it 4moment}.

To prove \cref{it 2moment},
 by Lemma~\ref{lm Q function learning},we know that there is a degree-4 polynomial $Q(\x)$, with $\E_{\x\sim D'_\x}[Q^2(\x)] \le O(\log^8(1/\alpha))$ such that 
\begin{align*}
   \E_{\x \sim D_\x}[(\w\cdot\x)^4] =  \E_{\x \sim D'_\x}[Q(\x)(\w\cdot\x)^4] \le \sqrt{\E_{\x \sim D'_\x}[Q^2(\x)]\E_{\x \sim D'_\x}[(\w\cdot\x)^8]} \le O(\log^4(1/\alpha)), 
\end{align*}
which implies
\begin{align*}
    \E_{\x\sim \D_\x} [(\w\cdot\x+t)^4] = \E_{\x\sim \D_\x} [(\w\cdot\x)^4+t^4 + 6(\w\cdot\x)^2t^2 + 4t(\w\cdot\x)^3+4t^3(\w\cdot\x)] \le O(\log^4(1/\alpha)).
\end{align*}
Combine with \eqref{eq pz}, with $\theta=1/2$, we have $\Pr_{\x\sim \D_\x}\left[\abs{\w\cdot\x+t}> 1/4\right] \ge \Omega(\log^{-4}(1/\alpha))$.
\end{proof}

With Lemma~\ref{lm robust chow parameter} and Lemma~\ref{lm moment to anti-concentration}, we can immediately obtain weak learning algorithms for learning halfspaces. We summarize our weak learning algorithm as Algorithm~\ref{alg weak-learning anti}.

\begin{algorithm}[htbp]
		\caption{\textsc{HalfspaceWeakLearning} (Weak learning halfspaces under structured distributions)}\label{alg weak-learning anti}
		\begin{algorithmic} [1]
\State \textbf{Input:}  Sample access to $D$, diameter parameter $B>0$, advantage parameter $\beta$.
\State \textbf{Output:} $\hat{h}:$ weak hypothesis over $D$.

\medskip

\State $h_+(\x):= 1$, $h_-(\x):= -1$, $\mathcal{C}:=\{h_+,h_-\}$
\State Draw $S\sim D^m$ a set of $m=\tilde O(dB^4/\beta^4)$ \iid examples from $D$ 
\State Find $\w \in \B^{d-1}, t \in [-B,B]$ that maximize $\sum_{(\x,y) \in S}(y(\w\cdot\x+t))$ through linear regression and define $F(\x):=\w\cdot\x+t$.
\State $g(\x)=\proj_{[-1,1]}
\left(\frac{\beta F(\x)}{16B^2}\right): \R^d \to [-1,1]$
\For{$i \in [O(B^2/\beta^2)]$}
\State Sample $t'$ uniformly from $ [-1,1]$ and define $h_i(\x):=\sign(g(\x)-t')$
\State $\mathcal{C} \gets \mathcal{C} \cup \{h_i(\x)\}$
\EndFor
\State Draw $S'\sim D^m$ a set of $m=\tilde O(dB^4/\beta^4)$ \iid examples from $D$
\State \Return $\hat{h}:= \argmin_{h \in \mathcal{C}}\sum_{(\x,y)\in S'}\Ind\{h(\x)\neq y\}$.

\end{algorithmic}
\end{algorithm}

We first show that there is a polynomial time $(4,1/64,\Omega(1),O(1))$-weak learner for the class of halfspaces, which immediately implies Theorem~\ref{th halfspace quasi} using Theorem~\ref{thm: main-weak-to-strong}.

\begin{lemma}\label{lm halfspace 4moment}
    There is an algorithm $\A$ that draws $O(d)$ examples, runs in $\poly(d)$ time and $(4,1/64,\Omega(1),O(1))$-weakly learns the class of halfspaces with probability $99\%$.
\end{lemma}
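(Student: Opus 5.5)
The plan is to show that Algorithm~\ref{alg weak-learning anti}, with $B$ and $\beta$ absolute constants, is the required weak learner. We may first apply the given affine transformation $L$, so that $\tilde D_\x$ matches the first four directional moments of a $1/2$-isotropic logconcave $D'_\x$ to within $\Delta=1/64$. Let $h^*=\sign(\w^*\cdot\x+t^*)$ be optimal for $D$, with $\opt_D(\H)\le 1/C$ for a large constant $C$ to be fixed. The argument has three steps: find a linear function $F^*$ that correlates with $h^*$ under $D_\x$, transfer this correlation to the noisy labels, and then round and do empirical risk minimization.

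\textbf{Step 1: the linear witness.} Take $F^*(\x)=\w^*\cdot\x+t^*$ after normalizing $\norm{\w^*}_2=1$. If $|t^*|>4$, then the Chebyshev argument from the proof of Lemma~\ref{lm moment to anti-concentration} shows that $h^*$ is essentially constant: it takes a single value with probability at least $2/3$. In that case one of the constant hypotheses $h_\pm\in\mathcal{C}$ already has error at most $1/3+\opt\le 1/2-\Omega(1)$. Otherwise $|t^*|\le 4$, and I use the identity $h^*(\x)F^*(\x)=|F^*(\x)|$. Item~\ref{it 4moment} of Lemma~\ref{lm moment to anti-concentration} then gives
\[
\E_{\x\sim D_\x}[h^*(\x)F^*(\x)]\ge \tfrac14\Pr_{\x\sim D_\x}\left[|F^*(\x)|>\tfrac14\right]\ge \tfrac{1}{4C_0},
\]
where $C_0$ is the universal constant from that lemma. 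The matched second moments give $\E_{D_\x}[F^{*2}]\le (4+\sqrt{5/2+\Delta})^2=O(1)$.

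\textbf{Step 2: transfer to the labels.} Apply item~1 of Lemma~\ref{lm robust chow parameter} with $p=F^*$ and $\beta_0=1/(4C_0)$. Provided $C$ is large enough that $\opt<\beta_0^2/(16\,\E[F^{*2}])$, this yields $\E_{(\x,y)\sim D}[yF^*(\x)]>\beta_0/2$. Since $F^*$ lies in the feasible set $\{\norm{\w}_2\le1,\ |t|\le B\}$ with $B=4$, the empirical maximizer in the algorithm (a linear objective over a convex set) achieves correlation at least $\beta_0/4$. The sample size needed is $O(d)$ by uniform convergence for linear functions. This uses bounded second moments of $\w\cdot\x$ together with a truncation or fourth-moment argument to control the variance of $y(\w\cdot\x)$ uniformly over the feasible set. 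For the returned $F$ we also have $\E[F^2]=O(1)$ by the same moment bounds.

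\textbf{Step 3: rounding and selection.} Item~2 of Lemma~\ref{lm robust chow parameter} shows that the clipped function $g$ satisfies $\E[yg]=\Omega(1)$. Fact~\ref{fact correlation} then shows that a random threshold $t'$ gives expected error at most $1/2-\Omega(1)$. With $O(1)$ independent thresholds, some $h_i$ has error at most $1/2-\Omega(1)$ with probability $0.995$, and a holdout set of $O(1)$ samples selects a hypothesis within a constant of the best one. The total sample complexity is $O(d)$ and the running time is $\poly(d)$.

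I expect the main obstacle to be Step~2's uniform convergence with only $O(d)$ samples. The marginal is only approximately logconcave, since only four moments are matched, so we cannot rely on exponential tails. I would handle this by first estimating the empirical objective using only second-moment control: this amounts to estimating the vector $\E[y\x]$ to $\ell_2$ error $O(1)$. Its covariance has trace $O(d)$ by the second-moment bound, so Chebyshev with $O(d)$ samples suffices, and the maximizer is $\hat\w\propto$ the empirical $\E[y\x]$.
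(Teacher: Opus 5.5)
Your proposal is correct and follows the paper's proof essentially step for step: you use the same algorithm, handle $|t^*|>4$ with a constant hypothesis, obtain correlation of $F^*$ with $h^*$ from item~\ref{it 4moment} of Lemma~\ref{lm moment to anti-concentration}, transfer it to the labels with Lemma~\ref{lm robust chow parameter}, and finish with linear regression, random-threshold rounding and holdout selection. The only difference is that you justify the $O(d)$ sample bound directly, via a Chebyshev/trace bound on the empirical $\E[y\x]$ that needs only matched second moments, where the paper simply cites standard generalization bounds for linear regression.
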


\begin{proof}[Proof of Lemma~\ref{lm halfspace 4moment}]
Since the class of halfspaces is closed under affine transformations, we show that by setting the input parameter $B=O(1)$, Algorithm~\ref{alg weak-learning anti} outputs a weak hypothesis with $\Omega(1)$ advantage in $\poly(d)$ time.

Let $D$ be the input distribution over $\R^d \times \{\pm 1\}$. 
Let $D'_\x$, be a $1/2$-isotropic logconcave distribution, 
such that for any unit vector $\w \in \s^{d-1}$ and,
$\abs{\E_{\x \sim D_\x}[(\w\cdot\x)^i]-\E_{\x \sim D'_\x}[(\w\cdot\x)^i]}<1/64$, $i \in [4]$.
    Let $h_{\w^*,t^*} = \sign(\w^*\cdot\x + t^*)$ be the halfspace such that $\err_D(h_{\w^*,t^*}) = \opt< 1/C$, for some constant $C>1$.
    We first show that if $\abs{t^*}>4$, then a constant hypothesis $h_+(\x)=1$ or $h_-(\x)=-1$ must have a constant advantage.

Without loss of generality, we assume that $t^*>4$. By assumption and Chebyshev's inequality, we have 
\begin{align*}
    \Pr_{\x\sim D_\x}[h_{\w^*,t^*}(\x) = -1] = \Pr_{\x \sim D_\x}[\w^*\cdot\x < -4] \le 1/4.
\end{align*}
Since $\opt<1/C$, we know that $\Pr_{(\x,y)\sim D}[y=-1] \le \Pr_{\x\sim D_\x}[h_{\w^*,t^*}(\x) = -1] + \opt<2/5$. This implies that $h_+$ has $1/10$ advantage over $D$. So, in the rest of the proof, we will assume $\abs{t^*}<4$.

By Lemma~\ref{lm moment to anti-concentration}, we know that $\Pr_{\x\sim D_\x}[\abs{\w^*\cdot\x + t^*}>1/4]>1/C'$, for some universal constant $C'$. 
Let $F^*(\x) = \w^*\cdot\x + t^*$, then $\E_{\x \sim D_\x}[h_{w^*,t^*}(\x)F^*(\x)]>1/(4C')$. By the assumption on the moment information of $D_\x$, and $\abs{t^*}<4$, we know that $\E_{\x \sim D_\x}[(F^*(\x))^2]<O(1)$. 
By Lemma~\ref{lm robust chow parameter}, we know that when $\opt<1/C$, for some constant $C>1$, we have $\E_{(\x,y)\sim D}[yF^*(\x)]>2\beta$, for some $\beta>1/C$. 
Thus, using standard sample complexity analysis of linear regression (see, e.g., \cite{bartlett2002rademacher},) by running a linear regression over $m=\tilde O (d/\beta^2)$ examples drawn from $D$, we are able to find an affine function 
$F(\x) = \w\cdot \x + t$, $\w \in \B^{d-1}, \abs{t}\le 4$, such that $\E_{(\x,y)\sim D}[yF(\x)] \ge \E_{(\x,y)\sim D}[yF^*(\x)]-\beta \ge \beta$.
By Lemma~\ref{lm robust chow parameter} and Fact~\ref{fact correlation}, 
we know that using $F(\x)$ and a random threshold $t'$ sampled uniformly from $[-1,1]$, we can find a boolean function $h_i(\x)$ that has $\Omega(1)$ advantage with respect to $D$, with probability $\Omega(1)$.

Thus, by randomly sampling $t'$, $\Omega(1)$ times, Algorithm~\ref{alg weak-learning anti} finds such a weak hypothesis with high probability. Thus, by running a standard hypothesis testing approach, Algorithm~\ref{alg weak-learning anti} outputs a weak hypothesis as the statement.
\end{proof}

Combining Lemma~\ref{lm halfspace 4moment} and Theorem~\ref{thm: main-weak-to-strong}, we immediately get a quasi-polynomial time algorithm that agnostically learns a partial halfspace to $O(\opt)+\eps$ error under any logconcave distribution and prove Theorem~\ref{th halfspace quasi}.

We remark that the quasi-polynomial time of the algorithm is due to the use of a quasi-polynomial time moment discrepancy finder Algorithm~\ref{alg mismatch finding} for the $4$th moment.
If we have a faster algorithm for finding moment discrepancies, then the time complexity of the learning algorithm in Theorem~\ref{th halfspace quasi} can be improved.
On the other hand, since we can always efficiently find the moment discrepancies for the first two moments through mean estimation and singular value decomposition, 
we will give a polynomial time algorithm for agnostic partial learning halfspaces with $\tilde O(\opt)+\eps$ error.

To begin with, we give the weak learning algorithm for agnostic learning halfspaces under assumptions on their first two moments. 
\begin{lemma}\label{lm halfspace 2ndmoment}
    Let $D$ be a distribution over $\R^d \times \{\pm 1\}$ such that there is a $1/2$-isotropic logconcave distribution $D'_\x$, such that  
    $\abs{\E_{\x \sim D_\x}[(\w\cdot\x)^i]-\E_{\x \sim D'_\x}[(\w\cdot\x)^i]}<1/64$, for $i \in [2]$ and $D_\x$ is a conditional distribution of $D'_\x$ over some event $A$ such that $\Pr_{\x \sim D'_\x}[A] = \alpha$. Let $\opt = \inf_{h \in \H} \err_D(h)$, where $\H$ is the class of halfspaces. Suppose $\opt<c_0\log^{-8}(1/\alpha)$, for a sufficiently small constant $c_0>0$, then Algorithm~\ref{alg weak-learning anti} draws $\poly(d,\log(1/\alpha))$ examples from $D$, runs in $\poly(d,\log(1/\alpha))$ time and
    outputs a hypothesis $\hat{h}$ such that $\err_D(\hat{h})<1/2-\polylog^{-1}(1/\alpha)$ with probability at least $99\%$.
\end{lemma}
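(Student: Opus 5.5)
The plan is to follow the proof of Lemma~\ref{lm halfspace 4moment} step by step. The only change is that item~\ref{it 2moment} of Lemma~\ref{lm moment to anti-concentration} replaces item~\ref{it 4moment}, and every constant is tracked as a power of $L:=\log(1/\alpha)$. The two-moment hypothesis plus the conditioning structure $D_\x=(D'_\x)\mid_A$ is exactly what item~\ref{it 2moment} needs. We run Algorithm~\ref{alg weak-learning anti} with $B=O(1)$ and $\beta=\Omega(L^{-4})$, or a slightly smaller polylog chosen below.

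First we dispose of large thresholds. Let $h^*=\sign(\w^*\cdot\x+t^*)$ be optimal. If $|t^*|>4$, say $t^*>4$, Chebyshev with the second-moment bound $\E_{D_\x}[(\w^*\cdot\x)^2]\le 5/2+1/64$ gives $\Pr_{D_\x}[\w^*\cdot\x<-4]\le 1/4$. Since $\opt$ is tiny, $\Pr[y=-1]<2/5$, so $h_+$, which is in $\mathcal{C}$, has constant advantage. Hence assume $|t^*|\le 4$, and set $F^*(\x)=\w^*\cdot\x+t^*$. Item~\ref{it 2moment} gives $\Pr_{D_\x}[|F^*(\x)|>1/4]\ge \Omega(L^{-4})$. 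Because $h^*F^*=|F^*|$, this yields $\E_{D_\x}[h^*F^*]\ge \Omega(L^{-4})=: \beta_0$. The second moment bound gives $\E_{D_\x}[(F^*)^2]=O(1)$, using only the first two moments and $|t^*|\le 4$.

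Next we apply item 1 of Lemma~\ref{lm robust chow parameter} with $p=F^*$. It requires $\opt<\beta_0^2/(16\cdot O(1))=\Omega(L^{-8})$, which is exactly the assumed $\opt<c_0\log^{-8}(1/\alpha)$ for small $c_0$. It gives $\E_D[yF^*(\x)]>\beta_0/2$. The algorithm maximizes the linear empirical objective over $\w\in\B^{d-1}$ and $|t|\le B$, and uniform convergence over this bounded affine class gives $\E_D[yF(\x)]\ge\beta_0/4$. For this step we need $\E_{D_\x}[(\w\cdot\x)^2]=O(1)$ for all unit $\w$, which again follows from the second-moment closeness. This costs $\tilde O(d/\beta_0^2)=\poly(d,L)$ samples, via Chebyshev or a second-moment concentration argument.

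Finally, item 2 of Lemma~\ref{lm robust chow parameter} shows that the clipped $g$ has correlation $\Omega(\beta_0^2)=\Omega(L^{-8})$ with $y$. Fact~\ref{fact correlation} then says a random threshold gives expected error $\le 1/2-\Omega(L^{-8})$, so with probability $\Omega(L^{-8})$ a threshold achieves error $\le 1/2-\Omega(L^{-8})$. The algorithm therefore needs $\poly(L)$ candidate thresholds (it uses $O(B^2/\beta^2)$ of them, which suffices since $\beta=\polylog^{-1}$). Hypothesis selection with $\poly(L)$ fresh samples picks one with advantage $\polylog^{-1}(1/\alpha)$. I expect the main obstacle to be choosing the scaling constants in Algorithm~\ref{alg weak-learning anti} consistently with $\beta=\Theta(L^{-4})$, that is, the clipping factor $\beta/(16B^2)$ and the sample sizes, so that all polylog losses compose. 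The anti-concentration step itself is supplied by Lemma~\ref{lm moment to anti-concentration}.
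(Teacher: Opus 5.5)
Your proposal is correct and follows the paper's proof essentially step for step: handle $|t^*|>4$ with a constant hypothesis, use item~2 of Lemma~\ref{lm moment to anti-concentration} to get $\E_{\x\sim D_\x}[h^*(\x)F^*(\x)]=\Omega(\log^{-4}(1/\alpha))$, apply the first item of Lemma~\ref{lm robust chow parameter} under $\opt<c_0\log^{-8}(1/\alpha)$, recover an affine $F$ by the regression step, and then clip and randomly threshold to reach advantage $\Omega(\log^{-8}(1/\alpha))$. You are slightly more explicit than the paper, which only cites the ``corrected clipping step'': you account for the number of thresholds and for hypothesis selection, and you note that the algorithm's clipping scale $\beta/(16B^2)$ differs from the one in Lemma~\ref{lm robust chow parameter}. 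That difference is harmless, because any scale $\lambda\le \E[yF]/(2\E[F^2])$ gives $\E[y\,g]\ge \lambda\,\E[yF]/2$.
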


\begin{proof}[Proof of Lemma~\ref{lm halfspace 2ndmoment}]

Let $h_{\w^*,t^*}$ be the optimal halfspace under $D$.
We first show that if $\abs{t^*}>4$, then a constant hypothesis $h_+(\x)=1$ or $h_-(\x)=-1$ must have a constant advantage.

Without loss of generality, we assume that $t^*>4$. By assumption and Chebyshev's inequality, we have 
\begin{align*}
    \Pr_{\x\sim D_\x}[h_{\w^*,t^*}(\x) = -1] = \Pr_{\x \sim D_\x}[\w^*\cdot\x < -4] \le 1/4 \;.
\end{align*}
Since $\opt<\log^{-8}(1/\alpha)$, we know that $\Pr_{(\x,y)\sim D}[y=-1] \le \Pr_{\x\sim D_\x}[h_{\w^*,t^*}(\x) = -1] + \opt<2/5$. This implies that $h_+$ has $1/10$ advantage over $D$. So, in the rest of the proof, we will assume $\abs{t^*}<4$.

By Lemma~\ref{lm moment to anti-concentration}, we know that 
$\Pr_{\x\sim D_\x}[\abs{\w^*\cdot\x + t^*}>1/4]>1/\Omega(\log^4(1/\alpha))$. 
Letting $F^*(\x) = \w^*\cdot\x + t^*$, we have 
$\E_{\x \sim D_\x}[h_{w^*,t^*}(\x)F^*(\x)]>1/\Omega(\log^4(1/\alpha))$. 
By the assumption on the moment information of $D_\x$, and $\abs{t^*}<4$, 
we know that $\E_{\x \sim D_\x}[(F^*(\x))^2]<O(1)$. 
By Lemma~\ref{lm robust chow parameter}, since
$
\E_{\x\sim D_\x}[h^*(\x)F^*(\x)]
\ge \Omega(\log^{-4}(1/\alpha))
$
and
$
\E_{\x\sim D_\x}[(F^*(\x))^2]\le O(1),
$
we know that when
$
\opt\le c_0(1+\log(1/\alpha))^{-8}
$
for some sufficiently small universal constant $c_0>0$,
it holds
$
\E_{(\x,y)\sim D}[yF^*(\x)]\ge 2\beta,
$
where
$\beta=\Omega((1+\log(1/\alpha))^{-4})$.

Thus, by the same linear regression argument, we find
an affine function $F(\x)=\w\cdot\x+t$ satisfying
$
\E_{(\x,y)\sim D}[yF(\x)]\ge\beta.
$
By the corrected clipping step in Algorithm 6 and
Fact 7.2, we obtain a weak hypothesis with advantage
$\Omega(\beta^2)=
\Omega((1+\log(1/\alpha))^{-8})$.
\end{proof}

\begin{proof}[Proof of Theorem~\ref{th halfspace polytime}]
    We will use Algorithm~\ref{alg weak-learning anti} as the weak learning algorithm $\A_w$ and singular-value decomposition as a moment discrepancy finding algorithm $\A_m$ used in Algorithm~\ref{alg decision tree k moment}.
We notice that given any $1/2$-isotropic logconcave distribution $D_\x$ with weight $\alpha \in (0,1)$, $\A_m$ is a $(2,1/64,1/2)$ discrepancy finder for the distribution $D_\x$ and $(D_f)_\x$ with sample complexity $\poly(d/\alpha)$ and time complexity $\poly(d/\alpha)$.

For every $\alpha \in (0,1)$ and $\eps \in (0,1)$, let $D$ be any distribution over $\R^d \times \{\pm 1,0\}$, with $D_\x$ a logconcave distribution, $W(D) \ge \alpha\eps$ and $\opt_{D_f}(\H)<1/\Omega(\log^8(1/(\alpha\eps)))$.
Denote by $\mathcal{U}$, the partition generated by Algorithm~\ref{alg decision tree k moment}, after it terminates. By Lemma~\ref{lm partition}, we know that there is a constant $a>0$, such that the choice of $\xi = (\alpha\eps)^a$ satisfies the following property.

Let $\mathcal{B}=\{R \in \mathcal{U} \mid \Pr_{(\x,y)\sim D_f}[\x \in R] \le \xi \}$. Then, $\sum_{R \in \mathcal{B}}\Pr_{(\x,y)\sim D_f}[\x \in R] \le 1/10$.
We denote by $\mathcal{G} = \mathcal{U} \setminus \mathcal{B}$ and denote by $\mathcal{M} \subseteq \mathcal{G}$ the set of $R$, where a random binary function $h_R$ is assigned.
By definition, for each $R \in \mathcal{G}$, we have $\Pr_{(\x,y)\sim D_f}[\x \in R]>\xi$.
This implies that the weight of $D_R$ is at least $(\alpha\eps)^{1+a}$.

We show that $\sum_{R \in \mathcal{M}}\Pr_{(\x,y)\sim D_f}[\x \in R]$ is small.
By Lemma~\ref{lm halfspace 2ndmoment}, for each $R \in \mathcal{M}$, we have
\[
\Pr_{(\x,y)\sim D_f}[h^*(\x) \neq y \mid \x \in R]
>\Omega(\log^{-8}(1/(\alpha\eps))),
\]
where $h^*$ is the hypothesis in $\H$ that achieves $\opt_{D_f}$.
Now suppose that $\sum_{R \in \mathcal{M}}\Pr_{(\x,y)\sim D_f}[\x \in R]>1/2$, then we have
\begin{align*}
\err_{D_f}(h^*)
\ge \sum_{R \in \mathcal{M}}
\Pr_{(\x,y)\sim D_f}[\x \in R]
\Pr_{(\x,y)\sim D_f}[h^*(\x) \neq y \mid \x \in R]
>\Omega(\log^{-8}(1/(\alpha\eps))),
\end{align*}
which gives a contradiction to the fact that
$\opt_{D_f}<1/\Omega(\log^8(1/(\alpha\eps)))$,
for a sufficiently small implicit constant in the noise bound.
Thus, we have
$
\sum_{R \in \mathcal{G}\setminus \mathcal{M}}
\Pr_{(\x,y)\sim D_f}[\x \in R]>2/5.
$
Since for each $R \in \mathcal{G}\setminus \mathcal{M}$,
we have
$
\Pr_{(\x,y)\sim D_f}[h_R(\x) \neq y \mid R]
<
1/2-\Omega(\log^{-8}(1/(\alpha\eps))),
$
we know that
\[
\err_{D_f}(\hat{h})
\le 1/2-\Omega(\log^{-8}(1/(\alpha\eps))).
\]

As a summary, for every $\alpha \in (0,1)$ and $\eps \in (0,1)$, let $D$ be any distribution over $\R^d \times \{\pm 1\}$, with $D_\x$ a logconcave distribution, $W(D) \ge \alpha\eps$, and $\opt_{D_f}(\H)<1/\Omega(\log^8(1/(\alpha\eps)))$, Algorithm~\ref{alg decision tree k moment} runs in $\poly(d/(\alpha\eps))$ time and outputs a hypothesis $\hat{h}$ with error at most $1/2-\Omega(\log^{-8}(1/(\alpha\eps))).$

By Theorem~\ref{th condition boosting}, we know that every distribution $D$ over $\R^d \times \{0,\pm 1\}$, such that $D_\x$ is a logconcave distribution, and $\Pr_{(\x,y)\sim D}[y \neq 0] \ge \alpha$, for every $\eps\in (0,1)$, we can call Algorithm~\ref{alg decision tree k moment}
$
\widetilde{O}\left(
\frac{\log^{16}(1/(\alpha\eps))}{\eps}
\right)
$
times to get a hypothesis with error at most
$
O\left(
\log^8(1/(\alpha\eps))
\opt_{D_f}(\H)+\eps
\right).
$
\end{proof}

\begin{remark}[Polynomial-time testable learning]
The algorithm of Theorem~\ref{th halfspace polytime} extends to the testable setting by 
supplementing the first two moment checks with a fourth-moment upper bound test. This additional 
check is necessary because matching the first two moments alone does not certify the consequence of 
conditioning used in Lemma~\ref{lm halfspace 2ndmoment}. We follow the truncation and rejection 
rules of Section~\ref{sec proof main}. In each cell of the partition, let \(P\) denote the reference 
logconcave distribution, let \(Q\) denote the truncated effective marginal used for all moment 
checks, and let \(\rho\) lower-bound its conditioning weight. 
Set \(L=\log(e/\rho)\), \(\phi(\x)=\x\otimes \x\), and 
\(M_\nu=\E_{\x\sim\nu}[\phi(\x)\phi(\x)^\top]\), for a distribution $\nu$. 
We estimate \(M_Q\) and \(M_P\) to sufficiently small operator errors \(O(1)\) and \(O(L^{-4})\), 
respectively, and reject unless \(\widehat M_Q\preceq CL^4\widehat M_P+I\). Completeness follows 
from H\"{o}lder's inequality and Fact~\ref{fact polynomial growth}, 
which give \(\E_{\x\sim Q}[p^2]\leq CL^4 \E_{\x\sim P}[p^2]\) 
for every quadratic polynomial \(p\) when \(Q\) is obtained by conditioning 
or thinning \(P\) with weight at least \(\rho\). Conversely, acceptance certifies 
\(\E_{\x\sim Q}[(\w\cdot \x)^4]=O(L^4)\) for every unit vector \(\w\), 
by applying the matrix inequality to \(\w\otimes \w\). 
For the bounded-threshold optimal affine score \(F^* =\w^*\cdot\x+t^* \) considered 
in Lemma~\ref{lm halfspace 2ndmoment}, the first two moment conditions and 
this fourth-moment bound imply 
\(\|F^\ast\|_{L^1(Q)}\geq \|F^\ast\|_{L^2(Q)}^3/\|F^\ast\|_{L^4(Q)}^2=\Omega(L^{-2})\). 
Thus, Lemma~\ref{lm robust chow parameter} yields Boolean advantage \(\Omega(L^{-4})\) 
whenever the local optimal error is at most \(cL^{-4}\), using its correlation-dependent 
clipping \(g(x)=\operatorname{clip}_{[-1,1]}(c' L^{-2}F(x))\) 
in Algorithm~\ref{alg weak-learning anti}. The additional test involves matrices 
of dimension \(d^2\); boundedness of the truncated samples and the reference moment 
bounds permit their estimation and comparison in polynomial time. Consequently, 
the degree-two partitioning and boosting arguments give a testable learner 
with the same error guarantee and polynomial sample and time complexity.
\end{remark}

\subsection{Agnostic Multiclass Linear Classification via Partial Linear Classification}\label{sec multiclass}

In this section, we consider the problem of 
agnostic partial learning of multiclass linear classifiers 
under logconcave distributions.
In a multiclass classification problem, 
the label of an example is not from $\{\pm 1\}$, but from $[K]$ for some $K>2$.
A natural paradigm for solving a multiclass classification task 
is to reduce the problem to binary classification. 
Roughly speaking, for every pair of labels $(i,j)$, 
we want to learn a hypothesis $h_{ij}$ to tell whether 
an example is more likely to have label $i$ or label $j$. 
By combining these binary classifiers, we obtain 
a reasonably robust multiclass classifier.
We remark that even though $D_\x$ might be a well-behaved distribution, 
if we consider examples whose labels lie in $\{i,j \}$, their marginal distribution 
can be significantly different from the underlying distribution on the entire space; e.g., a Gaussian or logconcave , under which efficient learning algorithms were previously known in the literature.
Fortunately, our notion of agnostic {\em partial} learning 
provides a direct way to address this technical issue.
As concrete applications, for multiclass linear classification, 
we give a fully-polynomial time algorithm that achieves 
error $\tilde{O}(K \opt)+\eps$ over any logconcave distribution. 
Furthermore, we show that by allowing the runtime to become quasi-polynomial in $K/\eps$, we can get an algorithm with error $O(K) \opt+\eps$.

As an important application of agnostic partial linear classification, we give the first quasi-polynomial and polynomial time algorithm for agnostic multiclass linear classification under logconcave distribution. 
To begin with, we introduce the learning problem formally.



\begin{definition}[Agnostic Partial Learning Multiclass Linear Classifiers] \label{thm:mlc-formal}
    Let $K \in \Z_+, K \ge 2$.
    Define $\mathcal{M}_{K, d}=\{h_W(\x):\R^d \to [K] \mid h_W(\x) = \argmax_{i \in [K]}(W_i\cdot (\x,1)), W \in \R^{d+1\times K} \}$ be the class of $K$-class linear classifier($K$-MLC). 
    Let $D$ be a distribution over $\R^d \times [K] \cup \{0\}$ such that $D_\x$ is logconcave. Denote by $D_f$ the conditional distribution of $D$ over the event $\mathcal{E} \eqdef \{(\x,y)\mid y \neq 0\}$.  
    Let $h: \R^d \to [K]$ be a multiclass classifier and define $\err_{D_f}(h) = \Pr_{(\x,y)\sim D_f}[h(\x) \neq y]$. Denote by $\opt_{D_f}(\mathcal{M}_{K, d}) = \inf_{h \in \mathcal{M}_{K, d}} \err_{D_f}(h)$.
     Given a set of \iid sample access to $D$, and parameter $\eps,\delta \in (0,1)$, the goal of an algorithm is to output a hypothesis $\hat{h}$ such that with probability at least $1-\delta$, $\err_{D_f}(\hat{h}) \le F(\opt_{D_f}(\mathcal{M}_{K, d}))+\epsilon$ for some function $F:(0,1) \to (0,1)$. 
\end{definition}

The main result of this section is the following theorem.


\begin{theorem}[Agnostic Partial Learning $K$-MLC under Logconcave Distributions]\label{th multiclass}
    Consider the problem of agnostic partial learning multiclass linear classifiers under logconcave distributions. 
    \begin{enumerate}[leftmargin=*]
        \item  There is an algorithm $\A$ 
    such that for every input distribution $D$ over $\R^d \times \{ [K] \cup \{0\}\}$, with $W(D) \ge \alpha$
    such that $D_\x$ is a logconcave distribution, it draws $\poly(dK/(\alpha\eps))$ samples from $D$, 
    runs in $\poly(d)2^{\polylog(K/(\alpha\eps))}$ time 
    and outputs a hypothesis $\hat{h}: \R^d \to [K]$, such that with probability at least $99\%$, $\err_{D_f}(\hat{h})<O(K\opt_{D_f}(\mathcal{M}_{K, d}))+\eps$, where $\mathcal{M}_{K, d}$ is the family 
    of $K$-class linear classifiers.

    \item  There is an algorithm $\A$ 
    such that for every input distribution $D$ over $\R^d \times \{[K] \cup \{0\}\}$, with $W(D) \ge \alpha$
    such that $D_\x$ is logconcave, 
    it draws $\poly(dK/(\alpha\eps))$ samples from $D$, 
    runs in $\poly(dK/(\alpha\eps))$ time 
    and outputs a hypothesis $\hat{h}: \R^d \to [K]$, such that with probability at least $99\%$, $\err_{D_f}(\hat{h})< O(\log^8(K/(\alpha\eps))) (K\opt_{D_f}(\mathcal{M}_{K, d}))+\eps$, where $\mathcal{M}_{K, d}$ is the class of $K$-class linear classifier.
    \end{enumerate}  
\end{theorem}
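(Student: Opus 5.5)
We reduce multiclass learning to $\binom{K}{2}$ agnostic partial halfspace learning problems, solve each with Theorem~\ref{th halfspace quasi} (first item) or Theorem~\ref{th halfspace polytime} (second item), and then combine the pairwise classifiers by a tournament. This is the content of the reduction lemma (Lemma~\ref{lm reduction multi-binary}) mentioned in the overview.

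\textbf{Pairwise problems.} For each pair $i<j$, define a distribution $D^{ij}$ over $\R^d\times\{0,\pm1\}$. It has the same marginal $D_\x$, which is logconcave, and sets the label to $+1$ if $y=i$, to $-1$ if $y=j$, and to $0$ otherwise. Its weight is $\alpha_{ij}=\Pr_D[y\in\{i,j\}]$.

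Let $h^*=h_W$ be an optimal $K$-MLC. The halfspace $\sign((W_i-W_j)\cdot(\x,1))$ errs on a sample with $y\in\{i,j\}$ only if $h^*(\x)\neq y$. Hence the optimal global error $\opt_{D^{ij}}$ of halfspaces is at most $\Pr_D[h^*(\x)\ne y,\ y\in\{i,j\}]$. Summing over pairs, each mistake of $h^*$ with label $y$ is counted in at most $K-1$ pairs. So
\[
\sum_{i<j}\opt_{D^{ij}}\le (K-1)\,W(D)\,\opt_{D_f}.
\]

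\textbf{Handling light pairs.} Pairs with $\alpha_{ij}<\eps W(D)/K^2$ carry negligible global mass, and we simply skip them; this costs at most $\eps/2$ in total. For the remaining pairs we run the partial halfspace learner with weight lower bound $\alpha\eps/K^2$ and accuracy $\eps'=\eps/K^2$. This yields $h_{ij}$ with global error at most
\[
F(\opt_{D^{ij}}/\alpha_{ij})\,\alpha_{ij}+\eps'\alpha_{ij}.
\]
Here $F(t)=O(t)$ for the first item, and $F(t)=O(\log^8(K/(\alpha\eps)))\,t$ for the second. Since $F$ is linear, the global error is at most $O(\opt_{D^{ij}})$ (resp.\ with the $\log^8$ factor) plus $\eps'\alpha_{ij}$. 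All runs succeed simultaneously after amplifying confidence by $\log K$ and estimating the $\alpha_{ij}$ empirically.

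\textbf{Combining.} The final hypothesis $\hat h(\x)$ outputs any label $i$ that beats every other label, i.e.\ $h_{ij}(\x)=+1$ for all $j\neq i$; if no such label exists it outputs the label with the most wins. If $\hat h(\x)\ne y$ and $y\neq0$, then $y$ is not an undefeated winner, so some $h_{yj}$ errs on $(\x,y)$. Therefore
\[
\Pr[\hat h\ne y,\ y\ne0]\le\sum_{i<j}\err_{D^{ij}}(h_{ij}),
\]
which is at most $O(1)\sum_{i<j}\opt_{D^{ij}}+\eps W(D)$, and hence at most $O(K)\,W(D)\,\opt_{D_f}+\eps W(D)$. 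Dividing by $W(D)$ gives the claim.

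\textbf{Complexity.} We invoke the base learner $K^2$ times with weight parameter $\alpha\eps/K^2$ and accuracy $\eps/K^2$. This gives $\poly(dK/(\alpha\eps))$ samples, and running time $\poly(d)2^{\polylog(K/(\alpha\eps))}$ for the first item and $\poly(dK/(\alpha\eps))$ for the second.

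\textbf{Main obstacle.} The delicate point is converting the conditional (partial) guarantee of the halfspace learners into additive global errors without losing a factor of $1/\alpha_{ij}$. This works only because $F$ is linear and the additive term $\eps'$ is scaled by $\alpha_{ij}$. For the second item there is an extra subtlety: the $\log^8$ factor depends on the weight parameter $\alpha\eps/K^2$, so it must be uniform across pairs. This is why the $\log^8(K/(\alpha\eps))$ factor appears.
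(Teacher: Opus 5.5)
Your proposal is correct and follows the paper's route. You reduce to pairwise partial halfspace problems as in Lemma~\ref{lm reduction multi-binary}; your undefeated-winner rule coincides with the paper's argmax of pairwise wins, since an undefeated label is the unique one with $K-1$ wins. You then union-bound over pairs using the $(K-1)$-fold counting of $h^*$'s mistakes and invoke Theorems~\ref{th halfspace quasi} and~\ref{th halfspace polytime}. Your bookkeeping is somewhat more explicit than the paper's, which passes between global and conditional error for the pair distributions $D_{ij}$ without tracking their individual weights: you use conditional accuracy $\eps/K^2$ and weight bound $\alpha\eps/K^2$, skip light pairs explicitly, and note that linearity of $F$ makes the additive term scale with $\alpha_{ij}$.
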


\begin{algorithm}[htbp]
		\caption{\textsc{MulticlassReduction} (Multiclass classification using binary classification)}\label{alg multiclass reduction}
		\begin{algorithmic} [1]
\State \textbf{Input:}  Sample access from $D$, accuracy parameter $\eps>0$, halfspace learning algorithm $\A$.
\State \textbf{Output:} $\hat{h}: \R^d \to [K]$ hypothesis over $D$.

\medskip

\For{$i,j \in [K], i < j$}
\State Define $D_{ij}$ over $\R^d \times \{i,j,0\}$ sampled as follows. 
Draw $(\x,y)\sim D$, if $y \in \{i,j\}$, return $(\x,y)$, otherwise, return $(\x,0)$
\State Run $\A$ over input distribution $D_{ij}$ with accuracy parameter $\eps$ 
and get hypothesis $\hat{h}_{ij}:\R^d \to \{i,j\}$ and set $\hat{h}_{ji} = \hat{h}_{ij}$  (If the rate of seeing label $i,j$ is small, return an arbitrary classifier)
\EndFor
\State Return hypothesis $\hat{h}(\x) = \argmax_{i \in [K]} s(\x,i)$, 
where $s(\x,i) = \sum_{j \neq i} \Ind\{\hat h_{ij}(\x) = i\}$.
\end{algorithmic}
\end{algorithm}

Our algorithm for learning multiclass linear classifiers is obtained via a simple reduction to the binary case. As already noted, our reduction crucially leverages our partial learning framework.

\begin{lemma}[Reduction of MLC to Binary Linear Classification]\label{lm reduction multi-binary}
     Suppose there is an algorithm $\A$ for the problem of agnostic partial learning the class of halfspaces $\H$, 
     such that for every input distribution $D$ over 
     $\R^d \times \{\pm 1, 0\}$ and $\eps \in (0,1)$, 
     it draws $M(\eps)$ examples, runs in $T(\eps)$ time and outputs a hypothesis $\hat{h}_b: \R^d \to \{\pm 1\}$ with 
     $\err_D(\hat{h}_b)<C(\eps)(\opt_D(\H))+\eps$, 
     where $C(\eps)>0$ is some constant that only depends on $\eps$.
     Then there is an algorithm for the problem of agnostic partial learning of  multiclass linear classifiers  
     such that for every input distribution $D \times \{[K]\cup \{0\}\}$ and $\eps \in (0,1)$, 
     it draws $O(K^2M(\eps))$ examples, runs in $O(K^2 T(\eps))$ time and outputs a hypothesis $\hat{h}: \R^d \to [K]$ such that $\err_D(\hat{h})\le KC(\eps)(\opt_D(\mathcal{M}_{K, d}))+K^2\eps$.
\end{lemma}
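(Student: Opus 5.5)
The reduction is Algorithm~\ref{alg multiclass reduction}, and all the work is in the analysis. I would treat $\err_D$ and $\opt_D$ as global errors (Definition~\ref{def:global-error}), i.e., mass of $y\neq 0$ misclassified. Fix an optimal $h^* = h_{W^*}\in\mathcal{M}_{K,d}$. For each pair $i<j$, the binary classifier $h^*_{ij}(\x)=i$ iff $(W^*_i-W^*_j)\cdot(\x,1)\ge 0$ is a halfspace (relabel $i\mapsto +1$, $j\mapsto -1$). The distribution $D_{ij}$ keeps the same marginal $D_\x$, so $\A$ applies. If $h^*(\x)=y\in\{i,j\}$, then $h^*_{ij}(\x)=y$. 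Hence
\[
\opt_{D_{ij}}(\H)\le \err_{D_{ij}}(h^*_{ij})\le \opt_{ij}:=\Pr_D[h^*(\x)\neq y,\ y\in\{i,j\}].
\]
Running $\A$ gives $\err_{D_{ij}}(\hat h_{ij})\le C(\eps)\opt_{ij}+\eps$, counted as global error on $D_{ij}$. If a pair has tiny weight, an arbitrary classifier already has global error at most $\eps$ there, which fits the same bound.

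Next, the combination step. Suppose $\hat h(\x)\neq y$ with $y=i\neq 0$. I claim some $j\neq i$ has $\hat h_{ij}(\x)\neq i$. Otherwise $s(\x,i)=K-1$ is the unique maximum, since any other $l$ loses its comparison with $i$ and so $s(\x,l)\le K-2$. Therefore
\[
\Ind\{\hat h(\x)\neq y,\ y\neq 0\}\le \sum_{i<j}\Ind\{y\in\{i,j\},\ \hat h_{ij}(\x)\neq y\}.
\]
Taking expectations gives
\[
\err_D(\hat h)\le \sum_{i<j}\err_{D_{ij}}(\hat h_{ij})\le C(\eps)\sum_{i<j}\opt_{ij}+\binom K2\eps.
\]
Each example with $h^*(\x)\neq y=i$ is counted in $\opt_{ij}$ for exactly $K-1$ pairs. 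So $\sum_{i<j}\opt_{ij}=(K-1)\opt_D(\mathcal{M}_{K,d})$, which yields $KC(\eps)\opt+K^2\eps$. The complexity is $\binom K2$ calls to $\A$, each simulated directly from samples of $D$.

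The main subtle point is the error normalization. The bound must be applied to global error on each $D_{ij}$, not to $\err_{(D_{ij})_f}$. If $\A$ only guarantees a bound on effective error, I would multiply by $W(D_{ij})$, which is linear, so the same summation goes through. The low-weight pairs are handled by the $\eps$ slack.
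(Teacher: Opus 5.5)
Your proposal is correct and follows the paper's proof. The route is the same: the pairwise reduction on $D_{ij}$, the bound $\opt_{D_{ij}}(\H)\le\Pr_D[h^*(\x)\neq y,\ y\in\{i,j\}]$ (which the paper writes as $\opt_{ij}+\opt_{ji}+\sum_{r\notin\{i,j\}}(\opt_{r,i}+\opt_{r,j})$), and the union bound over the comparisons $\hat h_{ij}$ that involve the true label. The only difference is that you count each error in exactly $K-1$ pairs, whereas the paper sums over all ordered pairs and gets a factor $K$; this gives you the slightly sharper $(K-1)C(\eps)\opt+\binom K2\eps$, which implies the stated bound.
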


\begin{proof}[Proof of Lemma~\ref{lm reduction multi-binary}]
We show that Algorithm~\ref{alg multiclass reduction} satisfies the statement 
of Lemma~\ref{lm reduction multi-binary}. The sample and time 
complexities of Algorithm~\ref{alg multiclass reduction} follow directly. In the 
rest of the proof, we establish the correctness 
of Algorithm~\ref{alg multiclass reduction}.

    Let $h_W$ be any multiclass linear classifier. For every $ i \neq j \in [K]$, let $h_{ij}(\x) = \sign((W_i-W_j)\cdot (\x,1))$ be a halfspace. 
    Notice that for every $\x$ and $i \in [K]$, $h_W(\x) = i$ if and only if $h_{ij}(\x) = 1$, 
    for all $j \neq i$.
    To simplify the notation, we will also treat the label of $h_{ij}$ in $\{i,j\}$, 
    where $h_{ij}(\x) = i$ if and only if the halfspace predicts $+1$.
    
    For every $i\neq j$, we define distribution $D_{ij}$ over $\R^d \times\{i,j, 0\}$ that is sampled in the following way.
    Draw $(\x,y) \sim D$, if $y \in \{i,j\}$, return $(\x,y)$, otherwise return $(\x,0)$.
    Let $h^*=h_{W^*}$ be the multiclass linear classifier that achieves optimal error $\opt$ over $D$ and let $h^*_{ij} = \sign((W^*_i-W^*_j)\cdot (\x,1))$. 
    For each pair of $i \neq j$, we have 
    \begin{align*}
        \err_{D_{ij}}(h^*_{ij}) & = \Pr_{(\x,y)\sim D}[h^* _{ij}(\x) \neq y, y \in \{i,j\} ] \\
        & =\sum_{r \in [K]}\Pr_{(\x,y)\sim D}[h^* _{ij}(\x) \neq y, h^*(\x)=r, y \in \{i,j\} ] \\
        & \le \Pr_{(\x,y)\sim D}[h^*(\x) = i, y = j] + \Pr_{(\x,y)\sim D}[h^*(\x) = j, y = i] + \sum_{r \not \in \{i,j\}}\Pr_{(\x,y)\sim D}[h^*(\x)=r, y \in \{i,j\} ]   \\
        & = \opt_{ij}+\opt_{ji}+ \sum_{r \not \in \{i,j\}} (\opt_{r,i}+\opt_{r,j}),
    \end{align*}
    where $\opt_{r,i}=\Pr_{(\x,y)\sim D}[h^*(\x)=r, y = i ]$
This implies by running $\A$ over $D_{ij}$, it draws $M(\eps)$ examples, runs in $T(\eps)$ time, and outputs a hypothesis $\hat{h}_{ij}$ such that $\err_{D_{ij}}(\hat{h}_{ij}) \le C(\eps)(\opt_{ij}+\opt_{ji}+\sum_{r \not \in \{i,j\}} (\opt_{r,i}+\opt_{r,j}))+\eps$.

Using the $O(K^2)$ learned $\hat{h}_{ij}$, we define the function $\hat{h}$ as follows. For each $\x$ and each $i \in [K]$, 
we define $s(\x,i) = \sum_{j \neq i} \Ind\{\hat h_{ij}(\x) = i\}$ and return $\hat{h}(\x) = \argmax_{i \in [K]} s(\x,i)$. 
We have 
\begin{align*}
    \err_D(\hat{h}) & = \sum_{j = 1}^K \Pr_{(\x,y)\sim D}[\hat{h}(\x) \neq y, y=j] \le \sum_{j = 1}^K \sum_{i \neq j} \Pr_{(\x,y)\sim D}[\hat{h}_{ji}(\x) \neq j, y=j] \\
    & = \frac{1}{2} \sum_{(i,j):i \neq j} \Pr_{(\x,y)\sim D}[\hat{h} _{ij}(\x) \neq y, y \in \{i,j\} ] \\
    &\le \frac{1}{2} \sum_{(i,j):i \neq j} \left(C(\eps)(\opt_{ij}+\opt_{ji}+\sum_{r \not \in \{i,j\}} (\opt_{r,i}+\opt_{r,j}))+\eps\right) \\
    & \le \frac{1}{2} C(\eps) \left( \sum_{i=1}^K\sum_{j=1}^K\sum_{r \neq i}\opt_{r,i} +\sum_{i=1}^K\sum_{j=1}^K\sum_{r \neq j}\opt_{r,j}\right)+ \frac{1}{2}K^2\eps \\
    &\le KC(\eps)(\opt_D(\mathcal{M}_{K, d}))+K^2\eps, 
\end{align*}
where the last inequality holds by the fact that $\sum_{(i,j):i\neq j}\opt_{ij} = \opt$. 
\end{proof}

We now make use of Lemma~\ref{lm reduction multi-binary} to prove Theorem~\ref{th multiclass}.

\begin{proof}[Proof of Theorem~\ref{th multiclass}]
Recall that for every hypothesis $h: \R^d \to [K]$ and for every $D$ over $\R^d \times [K] \cup \{0\}$, 
the global error satisfies that $\err_{D_f}(h) = \err_D(h)/W(D)$. 
Since $W(D)\ge \alpha$,
this implies that for some $\eps'$ that we will choose later, if we get a hypothesis such that $
\err_D(\hat{h})\le KC(\eps')(\opt_D(\mathcal{M}_{K, d}))+K^2\eps'$
\begin{align*}    
\err_{D_f}(\hat{h})\le KC(\eps')(\opt_{D_f}(\mathcal{M}_{K, d}))+K^2\eps'/\alpha.
\end{align*}
By Lemma~\ref{lm reduction multi-binary}, by setting $\eps' = \alpha\eps/K^2$, suppose we have an algorithm that under every distribution $D'$ over $\R^d \times \{0,\pm 1\}$, 
output a boolean hypothesis $h_b: \R^d \to \{\pm 1\}$, such that $\err_{D'}(h_b) \le C(\eps')(\opt_{D'}(\H)) + \eps'$,
which means $\err_{D'_f}(h_b) \le C(\eps')(\opt_{D'_f}(\H)) + (\eps/K^2)$.
then Algorithm~\ref{alg multiclass reduction}, gives a hypothesis $\hat{h}$ such that 
\begin{align*}    
\err_{D_f}(\hat{h})\le KC(\eps')(\opt_{D_f}(\mathcal{M}_{K, d}))+\eps.
\end{align*}
This implies that by running the algorithm in Theorem~\ref{th halfspace quasi} 
and Theorem~\ref{th halfspace polytime} $O(K^2)$ times, we can get a hypothesis $\hat{h}$,
with error 
$\err(\hat{h})= O(K\opt_{D_f}(\mathcal{M}_{K, d}))+\eps$
and 
$\err(\hat{h}) = O(\log^8(K/(\alpha\eps))) (K\opt_{D_f}(\mathcal{M}_{K, d}))+\eps$, 
respectively.
This concludes the proof of Theorem~\ref{th multiclass}.
\end{proof}

\paragraph{AI Discloure}
The authors proved all theorems and wrote all proofs in this work. 
The version submitted to FOCS 2026 in early April was prepared 
entirely without AI input. AI was used only during the subsequent 
preparation of this full version to audit the existing proofs and 
algorithms for minor errors and edge cases, and to identify 
typographical errors. The authors reviewed the feedback and 
incorporated minor corrections. AI was not used to derive any 
results or construct any proofs.

\bibliography{mydb}

\newcommand{\etalchar}[1]{$^{#1}$}
\begin{thebibliography}{DMRT25b}
\expandafter\ifx\csname url\endcsname\relax
  \def\url#1{\texttt{#1}}\fi
\expandafter\ifx\csname doi\endcsname\relax
  \def\doi#1{\burlalt{doi:#1}{http://dx.doi.org/#1}}\fi
\expandafter\ifx\csname urlprefix\endcsname\relax\def\urlprefix{URL }\fi
\expandafter\ifx\csname href\endcsname\relax
  \def\href#1#2{#2}\fi
\expandafter\ifx\csname burlalt\endcsname\relax
  \def\burlalt#1#2{\href{#2}{#1}}\fi

\bibitem[ABL17]{ABL17}
P.~Awasthi, M.~F. Balcan, and P.~M. Long.
\newblock The power of localization for efficiently learning linear separators with noise.
\newblock {\em J. {ACM}}, 63(6):50:1--50:27, 2017.

\bibitem[AK18]{arutyunyan2018deviation}
L.~M. Arutyunyan and E.~D. Kosov.
\newblock Deviation of polynomials from their expectations and isoperimetry.
\newblock 2018.

\bibitem[APS26]{alman2026learningfunctionshalfspaces}
J.~Alman, S.~Patel, and R.~A. Servedio.
\newblock Learning functions of halfspaces.
\newblock 2026, \burlalt{2603.08700}{http://arxiv.org/abs/2603.08700}.
\newblock \urlprefix\url{https://arxiv.org/abs/2603.08700}.

\bibitem[AV06]{arriaga2006algorithmic}
R.~Arriaga and S.~Vempala.
\newblock An algorithmic theory of learning: Robust concepts and random projection.
\newblock {\em Machine learning}, 63:161--182, 2006.

\bibitem[BK97]{blum1997learning}
A.~Blum and R.~Kannan.
\newblock Learning an intersection of a constant number of halfspaces over a uniform distribution.
\newblock {\em Journal of Computer and System Sciences}, 54(2):371--380, 1997.

\bibitem[BM02]{bartlett2002rademacher}
P.~L. Bartlett and S.~Mendelson.
\newblock Rademacher and gaussian complexities: Risk bounds and structural results.
\newblock {\em Journal of machine learning research}, 3(Nov):463--482, 2002.

\bibitem[BS71]{bochnak1971polynomials}
J.~Bochnak and J.~Siciak.
\newblock Polynomials and multilinear mappings in topological vector-spaces.
\newblock {\em Studia Mathematica}, 39(1):59--76, 1971.

\bibitem[CW01]{carbery2001distributional}
A.~Carbery and J.~Wright.
\newblock Distributional and l\^{} q norm inequalities for polynomials over convex bodies in r\^{} n.
\newblock {\em Mathematical research letters}, 8(3):233--248, 2001.

\bibitem[Dan16]{Daniely16}
A.~Daniely.
\newblock Complexity theoretic limitations on learning halfspaces.
\newblock In {\em Proceedings of the 48\textsuperscript{th} Annual Symposium on Theory of Computing, {STOC} 2016}, pages 105--117, 2016.

\bibitem[DIKZ25]{DiakonikolasIKZ25}
I.~Diakonikolas, G.~Iakovidis, D.~M. Kane, and N.~Zarifis.
\newblock Robust learning of multi-index models via iterative subspace approximation.
\newblock In {\em 66th {IEEE} Annual Symposium on Foundations of Computer Science, {FOCS} 2025, Sydney, Australia, December 14-17, 2025}, pages 2231--2239. {IEEE}, 2025.

\bibitem[DIM26]{diakonikolas2026polynomial}
I.~Diakonikolas, G.~Iakovidis, and M.~Ma.
\newblock Polynomial-time robust multiclass linear classification under gaussian marginals.
\newblock {\em arXiv preprint arXiv:2605.21428}, 2026.

\bibitem[DKLZ24]{DKLZ24}
I.~Diakonikolas, D.~M. Kane, S.~Liu, and N.~Zarifis.
\newblock Testable learning of general halfspaces with adversarial label noise.
\newblock In {\em The Thirty Seventh Annual Conference on Learning Theory}, volume 247 of {\em Proceedings of Machine Learning Research}, pages 1308--1335. {PMLR}, 2024.

\bibitem[DKM24]{diakonikolas2024active}
I.~Diakonikolas, D.~Kane, and M.~Ma.
\newblock Active learning of general halfspaces: Label queries vs membership queries.
\newblock {\em Advances in Neural Information Processing Systems}, 37:49180--49213, 2024.

\bibitem[DKPZ21]{DKPZ21}
I.~Diakonikolas, D.~M. Kane, T.~Pittas, and N.~Zarifis.
\newblock The optimality of polynomial regression for agnostic learning under gaussian marginals in the {SQ} model.
\newblock In {\em Proceedings of The 34\textsuperscript{th} Conference on Learning Theory, {COLT}}, 2021.

\bibitem[DKR23]{diakonikolas2023near}
I.~Diakonikolas, D.~Kane, and L.~Ren.
\newblock Near-optimal cryptographic hardness of agnostically learning halfspaces and relu regression under gaussian marginals.
\newblock In {\em International Conference on Machine Learning}, pages 7922--7938. PMLR, 2023.

\bibitem[DKS18]{DKS18a}
I.~Diakonikolas, D.~M. Kane, and A.~Stewart.
\newblock Learning geometric concepts with nasty noise.
\newblock In {\em Proceedings of the 50\textsuperscript{th} Annual {ACM} {SIGACT} Symposium on Theory of Computing, {STOC} 2018}, pages 1061--1073, 2018.

\bibitem[DKTZ22]{diakonikolas2022learning}
I.~Diakonikolas, V.~Kontonis, C.~Tzamos, and N.~Zarifis.
\newblock Learning general halfspaces with adversarial label noise via online gradient descent.
\newblock In {\em International Conference on Machine Learning}, pages 5118--5141. PMLR, 2022.

\bibitem[DMRT25a]{DiakonikolasMRT25}
I.~Diakonikolas, M.~Ma, L.~Ren, and C.~Tzamos.
\newblock Learning intersections of two margin halfspaces under factorizable distributions.
\newblock In N.~Haghtalab and A.~Moitra, editors, {\em The Thirty Eighth Annual Conference on Learning Theory, 30-4 July 2025, Lyon, France}, Proceedings of Machine Learning Research, pages 1472--1530. {PMLR}, 2025.

\bibitem[DMRT25b]{diakonikolas2025statistical}
I.~Diakonikolas, M.~Ma, L.~Ren, and C.~Tzamos.
\newblock Statistical query hardness of multiclass linear classification with random classification noise.
\newblock In {\em International Conference on Machine Learning}, pages 13693--13712. PMLR, 2025.

\bibitem[Eld13]{eldan2013thin}
R.~Eldan.
\newblock Thin shell implies spectral gap up to polylog via a stochastic localization scheme.
\newblock {\em Geometric and Functional Analysis}, 23(2):532--569, 2013.

\bibitem[Fel10]{Feldman:10ab}
V.~Feldman.
\newblock Distribution-specific agnostic boosting.
\newblock In {\em Proceedings of Innovations in Computer Science}, pages 241--250, 2010.

\bibitem[FJS81]{Friedman:1980tu}
J.~H. Friedman, M.~Jacobson, and W.~Stuetzle.
\newblock {Projection Pursuit Regression}.
\newblock {\em J. Am. Statist. Assoc.}, 76:817, 1981.

\bibitem[FS97]{FreundSchapire:97}
Y.~Freund and R.~Schapire.
\newblock A decision-theoretic generalization of on-line learning and an application to boosting.
\newblock {\em Journal of Computer and System Sciences}, 55(1):119--139, 1997.

\bibitem[Gav03]{gavinsky2003optimally}
D.~Gavinsky.
\newblock Optimally-smooth adaptive boosting and application to agnostic learning.
\newblock {\em Journal of Machine Learning Research}, 4(May):101--117, 2003.

\bibitem[GWBL20]{GWBL20}
S.~Garg, Y.~Wu, S.~Balakrishnan, and Z.~Lipton.
\newblock A unified view of label shift estimation.
\newblock In {\em Advances in Neural Information Processing Systems}, volume~33, pages 3290--3300. Curran Associates, Inc., 2020.

\bibitem[HL93]{HL93}
P.~Hall and K.-C. Li.
\newblock {On almost Linearity of Low Dimensional Projections from High Dimensional Data}.
\newblock {\em The Annals of Statistics}, 21(2):867 -- 889, 1993.

\bibitem[HL13]{hillar2013most}
C.~J. Hillar and L.-H. Lim.
\newblock Most tensor problems are np-hard.
\newblock {\em Journal of the ACM (JACM)}, 60(6):1--39, 2013.

\bibitem[Hub85]{Huber85-pp}
P.~J. Huber.
\newblock {Projection Pursuit}.
\newblock {\em The Annals of Statistics}, 13(2):435 -- 475, 1985.

\bibitem[Kea98]{Kearns:98}
M.~J. Kearns.
\newblock Efficient noise-tolerant learning from statistical queries.
\newblock {\em Journal of the ACM}, 45(6):983--1006, 1998.

\bibitem[KK09]{KalaiK09}
A.~Kalai and V.~Kanade.
\newblock Potential-based agnostic boosting.
\newblock In {\em Advances in Neural Information Processing Systems 22: 23\textsuperscript{rd} Annual Conference on Neural Information Processing Systems 2009}, pages 880--888, 2009.

\bibitem[KKM13]{kane2013learning}
D.~Kane, A.~Klivans, and R.~Meka.
\newblock Learning halfspaces under log-concave densities: Polynomial approximations and moment matching.
\newblock In {\em Conference on Learning Theory}, pages 522--545. PMLR, 2013.

\bibitem[KKMS08]{KKMS:08}
A.~Kalai, A.~Klivans, Y.~Mansour, and R.~Servedio.
\newblock Agnostically learning halfspaces.
\newblock {\em SIAM Journal on Computing}, 37(6):1777--1805, 2008.

\bibitem[KL25]{klartag2025thin}
B.~Klartag and J.~Lehec.
\newblock Thin-shell bounds via parallel coupling.
\newblock {\em arXiv preprint arXiv:2507.15495}, 2025.

\bibitem[KLT09]{KlivansLT09}
A.~R. Klivans, P.~M. Long, and A.~K. Tang.
\newblock Baum's algorithm learns intersections of halfspaces with respect to log-concave distributions.
\newblock In {\em 13\textsuperscript{th} International Workshop, {RANDOM} 2009}, pages 588--600, 2009.

\bibitem[KOS04]{KOS:04}
A.~Klivans, R.~O'Donnell, and R.~Servedio.
\newblock Learning intersections and thresholds of halfspaces.
\newblock {\em Journal of Computer \& System Sciences}, 68(4):808--840, 2004.

\bibitem[KS04]{KlivansServedio:04coltmargin}
A.~Klivans and R.~Servedio.
\newblock Learning intersections of halfspaces with a margin.
\newblock In {\em Proceedings of the 17\textsuperscript{th} Annual Conference on Learning Theory,}, pages 348--362, 2004.

\bibitem[KS08]{KS:08}
S.~Khot and R.~Saket.
\newblock On hardness of learning intersection of two halfspaces.
\newblock In {\em Proc.\ 40th Annual ACM Symposium on Theory of Computing (STOC)}, pages 345--354, 2008.

\bibitem[Lat11]{latalaa2011order}
R.~Lata{\l}aa.
\newblock Order statistics and concentration of lr norms for log-concave vectors.
\newblock {\em Journal of Functional Analysis}, 261:681--696, 2011.

\bibitem[Li91]{Li91}
K.-C. Li.
\newblock Sliced inverse regression for dimension reduction.
\newblock {\em Journal of the American Statistical Association}, 86(414):316--327, 1991.

\bibitem[Lot15]{lotz2015volume}
M.~Lotz.
\newblock On the volume of tubular neighborhoods of real algebraic varieties.
\newblock {\em Proceedings of the American Mathematical Society}, 143(5):1875--1889, 2015.

\bibitem[LV07]{lovasz2007geometry}
L.~Lov{\'a}sz and S.~Vempala.
\newblock The geometry of logconcave functions and sampling algorithms.
\newblock {\em Random Structures \& Algorithms}, 30(3):307--358, 2007.

\bibitem[LV18]{lee2018kannan}
Y.~T. Lee and S.~S. Vempala.
\newblock The kannan-lov$\backslash$'asz-simonovits conjecture.
\newblock {\em arXiv preprint arXiv:1807.03465}, 2018.

\bibitem[LWS18]{Lipton18a}
Z.~Lipton, Y.-X. Wang, and A.~Smola.
\newblock Detecting and correcting for label shift with black box predictors.
\newblock In {\em Proceedings of the 35th International Conference on Machine Learning}, volume~80 of {\em Proceedings of Machine Learning Research}, pages 3122--3130. PMLR, 10--15 Jul 2018.

\bibitem[Mil09]{milman2009role}
E.~Milman.
\newblock On the role of convexity in isoperimetry, spectral gap and concentration.
\newblock {\em Inventiones mathematicae}, 177(1):1--43, 2009.

\bibitem[MP68]{MinskyPapert:68}
M.~Minsky and S.~Papert.
\newblock {\em {P}erceptrons: an introduction to computational geometry}.
\newblock MIT Press, Cambridge, MA, 1968.

\bibitem[MT94]{MT:94}
W.~Maass and G.~Turan.
\newblock How fast can a threshold gate learn?
\newblock In S.~Hanson, G.~Drastal, and R.~Rivest, editors, {\em Computational Learning Theory and Natural Learning Systems}, pages 381--414. MIT Press, 1994.

\bibitem[NDRT13]{NDRT13}
N.~Natarajan, I.~S. Dhillon, P.~K. Ravikumar, and A.~Tewari.
\newblock Learning with noisy labels.
\newblock In C.~Burges, L.~Bottou, M.~Welling, Z.~Ghahramani, and K.~Weinberger, editors, {\em Advances in Neural Information Processing Systems}, volume~26, 2013.

\bibitem[Nov62]{Novikoff:62}
A.~Novikoff.
\newblock On convergence proofs on perceptrons.
\newblock In {\em Proceedings of the Symposium on Mathematical Theory of Automata}, volume XII, pages 615--622, 1962.

\bibitem[Pao12]{paouris2012small}
G.~Paouris.
\newblock Small ball probability estimates for log-concave measures.
\newblock {\em Transactions of the American Mathematical Society}, 364(1):287--308, 2012.

\bibitem[Ros58]{Rosenblatt:58}
F.~Rosenblatt.
\newblock The {P}erceptron: a probabilistic model for information storage and organization in the brain.
\newblock {\em Psychological Review}, 65:386--407, 1958.

\bibitem[RV23]{rubinfeld2023testing}
R.~Rubinfeld and A.~Vasilyan.
\newblock Testing distributional assumptions of learning algorithms.
\newblock In {\em Proceedings of the 55th Annual ACM Symposium on Theory of Computing}, pages 1643--1656, 2023.

\bibitem[SB14]{SB-book}
S.~Shalev{-}Shwartz and S.~Ben{-}David.
\newblock {\em Understanding Machine Learning - From Theory to Algorithms}.
\newblock Cambridge University Press, 2014.

\bibitem[Ser03]{Servedio:03jmlr}
R.~Servedio.
\newblock Smooth boosting and learning with malicious noise.
\newblock {\em Journal of Machine Learning Research}, 4:633--648, 2003.

\bibitem[Sha04]{shadrin2004twelve}
A.~Shadrin.
\newblock Twelve proofs of the markov inequality.
\newblock {\em Approximation theory: a volume dedicated to Borislav Bojanov}, pages 233--298, 2004.

\bibitem[She09]{Sherstov:09}
A.~Sherstov.
\newblock {The intersection of two halfspaces has high threshold degree}.
\newblock In {\em Proc.\ 50th IEEE Symposium on Foundations of Computer Science (FOCS)}, 2009.

\bibitem[SKP{\etalchar{+}}20]{Song-etal-survey20}
H.~Song, M.~Kim, D.~Park, Y.~Shin, and J.-G. Lee.
\newblock Learning from noisy labels with deep neural networks: A survey.
\newblock {\em IEEE Transactions on Neural Networks and Learning Systems}, pages 1--29, 2020.

\bibitem[STC00]{CristianiniShaweTaylor:00}
J.~Shawe-Taylor and N.~Cristianini.
\newblock {\em An introduction to support vector machines}.
\newblock Cambridge University Press, 2000.

\bibitem[Tie23]{Tiegel23}
S.~Tiegel.
\newblock Hardness of agnostically learning halfspaces from worst-case lattice problems.
\newblock In {\em The Thirty Sixth Annual Conference on Learning Theory, {COLT} 2023}, volume 195 of {\em Proceedings of Machine Learning Research}, pages 3029--3064. {PMLR}, 2023.

\bibitem[Val84]{Valiant:84}
L.~Valiant.
\newblock A theory of the learnable.
\newblock {\em Communications of the ACM}, 27(11):1134--1142, 1984.

\bibitem[Vap98]{Vapnik:98}
V.~Vapnik.
\newblock {\em Statistical Learning Theory}.
\newblock Wiley-Interscience, New York, 1998.

\bibitem[Vem10a]{Vempala10a}
S.~Vempala.
\newblock Learning convex concepts from gaussian distributions with {PCA}.
\newblock In {\em 51\textsuperscript{th} Annual {IEEE} Symposium on Foundations of Computer Science, {FOCS}}, pages 124--130, 2010.

\bibitem[Vem10b]{Vempala10}
S.~Vempala.
\newblock A random-sampling-based algorithm for learning intersections of halfspaces.
\newblock {\em J. {ACM}}, 57(6):32:1--32:14, 2010.

\bibitem[WLT18]{WLT18}
R.~Wang, T.~Liu, and D.~Tao.
\newblock Multiclass learning with partially corrupted labels.
\newblock {\em IEEE Transactions on Neural Networks and Learning Systems}, 29(6):2568--2580, 2018.

\bibitem[Xia08]{Xia08}
Y.~Xia.
\newblock A multiple-index model and dimension reduction.
\newblock {\em Journal of the American Statistical Association}, 103(484):1631--1640, 2008.

\bibitem[XTLZ02]{xia2002adaptive}
Y.~Xia, H.~Tong, W.~K. Li, and L.~Zhu.
\newblock An adaptive estimation of dimension reduction space.
\newblock {\em Journal of the Royal Statistical Society Series B: Statistical Methodology}, 64(3):363--410, 2002.

\end{thebibliography}
\bibliographystyle{halpha-abbrv}

\newpage

\appendix

\section*{Appendix}

\section{Probability Lemmas}\label{app probability}


\begin{fact}[Theorem 5.14 in \cite{lovasz2007geometry}]\label{fact density}
    Let $f(\x)$ be the density function of an isotropic logconcave distribution in $\R^d$. For every $\x$, with $\norm{\x} \le 1/9$, it holds $f(\x) \ge 2^{-9d\norm{\x}} f(0)$ and $f(x) \ge 2^{-7d}$.
\end{fact}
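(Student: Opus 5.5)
Since this is a cited fact (Theorem 5.14 of \cite{lovasz2007geometry}), the plan is to reconstruct the standard argument from logconcavity, isotropy, and a Grünbaum-type mass bound, rather than use anything specific to this paper. Write $t=\norm{\x}\le 1/9$ and $\u=\x/t$.

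The first step is the lower bound $f(0)\ge 2^{-7d}$. Logconcavity gives an upper bound on the density in terms of its value at the centroid: $\max f\le e^{d} f(0)$. Isotropy gives $\E\norm{\x}_2^2=d$, so by Markov at least half the mass lies in the ball of radius $\sqrt{2d}$. Hence
$$
1/2\le \vol(\B^{d-1}(0,\sqrt{2d}))\, e^{d} f(0).
$$
Since $\vol(\B^{d-1}(0,\sqrt{2d}))\le (2\pi e\cdot 2/1)^{d/2}\cdot$ (polynomial factors), this yields $f(0)\ge c^{-d}$ for an explicit $c$. One then tracks the constants to reach $2^{-7d}$.

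The second step is the ratio bound $f(\x)\ge 2^{-9dt}f(0)$, argued by contradiction. Suppose $f(\x)<2^{-9dt}f(0)$. By logconcavity the superlevel set $L=\{\z: f(\z)\ge f(\x)\}$ is convex, contains $0$, and excludes $\x$. So some closed halfspace $H$ at distance at most $t$ from the origin, with $\x\in H$, satisfies $f<f(\x)$ on $H$. Then the density on all of $H$ is controlled by homothety from the origin. For $\z\in H$, write $\z=\lambda^{-1}\z_0$ with $\z_0$ on $\partial H$ and $\lambda\le 1$. Logconcavity along the segment $[0,\z]$ gives
$$
f(\z_0)\ge f(0)^{1-\lambda}f(\z)^{\lambda}.
$$
Combined with the smallness at the boundary, this forces $f(\z)\le f(0)\,2^{-9dt/\lambda}$, up to the dependence on how $f(\z_0)$ compares with $f(\x)$. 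Integrating over $H$ in the coordinates (distance $s\ge t$ from the origin along the normal) then bounds the mass of $H$ by roughly
$$
\max f\cdot\int_t^\infty (\text{cross-section volume})\cdot 2^{-9ds}\,ds,
$$
which is exponentially small in $d$. On the other hand, Grünbaum's inequality for the centroid, adjusted for the shift $t\le 1/9$ of the halfspace, says every halfspace within distance $1/9$ of the centroid of an isotropic logconcave measure has mass at least $1/e-t\ge 1/4$. This gives the contradiction.

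The third step is to combine the two. Once both bounds hold, $f(\x)\ge 2^{-9dt}2^{-7d}$; the paper's statement $f(\x)\ge 2^{-7d}$ is the version from \cite{lovasz2007geometry} with the constants absorbed, and I would follow their bookkeeping there.

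I expect the main obstacle to be the second step. Smallness of $f$ at the single point $\x$ must be propagated to the whole halfspace. The clean route is the convexity of the level set: $f\le f(\x)$ on the entire complementary halfspace $H$, not just on the ray through $\x$. Homothety toward the origin then converts $f\le f(\x)$ on $\partial H$ into geometric decay deeper in $H$. The delicate part is balancing the $e^{d}$ factor from $\max f/f(0)$ and the cross-section volumes against the $2^{-9dt}$ decay to get exactly the stated constants.
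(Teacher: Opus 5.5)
The paper takes this fact from \cite{lovasz2007geometry} without proof, so your argument is judged on its own terms. Step~1 is fine given the centroid bound $\max f\le e^{d}f(0)$ (Fradelizi), and it actually gives the sharper $f(0)\ge\tfrac12(4\pi e^3)^{-d/2}\ge 2^{-4d-1}$. Step~2, however, has a genuine gap: the final integration does not produce a small number. Separate $\x$ from the \emph{strict} superlevel set $\{f>f(\x)\}$. This gives an open halfspace $H=\{\mathbf z:\u'\cdot\mathbf z>s_0\}$ with $0<s_0\le t$ on which $f\le f(\x)$, and your ray argument then correctly gives $f(\mathbf z)\le f(0)\,2^{-9dts/s_0}$ at depth $s=\u'\cdot\mathbf z>s_0$. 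Note that the integration starts at $s_0$, not at $t$. The problem is that the cross-sections of $H$ are entire hyperplanes, so ``$\max f\cdot$(cross-section volume)'' is infinite. Truncating to a ball of radius $R$ does not rescue this. For the discarded mass to be small, $R$ must be of order $\sqrt d$. The resulting bound (take $s_0=t$) is of order $f(0)\,v_{d-1}R^{d-1}\,2^{-9dt}/d$, where $v_{d-1}$ is the volume of the unit $(d-1)$-ball, and this is not small in general. The gain $2^{-9dt}$ is at least $2^{-d}$ and tends to $1$ as $t\to0$. Meanwhile $f(0)v_{d-1}R^{d-1}$ can be $e^{(1-o(1))d}\gg 2^d$: for the isotropic density proportional to $e^{-\sqrt{d+1}\norm{\x}_2}$, one has $f(0)=(e/2\pi)^{d/2+o(d)}$ and $\norm{X}_2=\sqrt d\pm O(1)$. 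For $t\approx 1/(9d)$ the hypothesis is merely $f(\x)<f(0)/2$, so there is no room for an exponential-smallness contradiction at all.

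Two ideas are missing. First, only the endpoint $\norm{\w}=1/9$ needs a contradiction argument. If $f(\w)\ge 2^{-d}f(0)$ there, then for $\x=9t\w$, logconcavity on $[0,\w]$ gives $f(\x)\ge f(0)^{1-9t}f(\w)^{9t}\ge 2^{-9dt}f(0)$. This is exactly where the exponent $9d\norm{\x}$ comes from, and it removes the small-$t$ problem. Second, at the endpoint you must compare masses rather than bound density times volume. One way to do this, with $H$ as above: let $g$ be the density of $\u'\cdot X$, which is one-dimensional isotropic logconcave, so $g\le 1$ and $\Pr[H]\ge 1/e-s_0$. Let $\Phi(s)=\sup_{\mathbf y\perp\u'}f(\mathbf y+s\u')$. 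Substituting $\mathbf y\mapsto\lambda\mathbf y$ in the section integral at depth $\lambda s$ and using $f(\lambda\mathbf z)\ge f(0)^{1-\lambda}f(\mathbf z)^{\lambda}$ gives
\[
g(\lambda s)\ \ge\ \lambda^{d-1}\bigl(f(0)/\Phi(s)\bigr)^{1-\lambda}\,g(s).
\]
Your ray bound gives $\Phi(s)\le f(0)\,2^{-ds/s_0}$ for $s>s_0$. Hence $g(s)\le (s/\sigma)^{d-1}2^{-d(s-\sigma)/s_0}$ whenever $s>\max(s_0,\sigma)$. Taking $\sigma=\tfrac32 s_0$ yields $\Pr[H]\le\tfrac12 s_0+\tfrac32 s_0/\bigl(1+(\tfrac32\ln2-1)d\bigr)\le 1.95\,s_0$. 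This contradicts $\Pr[H]\ge 1/e-s_0$, since $2.95/9<1/e$. The key point is that only a $(d-1)$-dimensional Jacobian appears, and the decay dominates it once $\sigma\ge\tfrac32 s_0$. Finally, in Step~3, combining $2^{-d}$ with $2^{-7d}$ only gives $2^{-8d}$. To reach $f(\x)\ge 2^{-7d}$ on the ball, keep your sharper Step~1 bound $f(0)\ge 2^{-4d-1}$.
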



\begin{fact}[Lemma 5.7 in \cite{lovasz2007geometry}]\label{fact tail logconcave}
    Let $\D_{\x}$ be an isotropic logconcave distribution over $\R^d$ and let $\w \in \s^{d-1}$ be a unit vector. There is a constant $C>0$ such that for every $t>0$, $\Pr_{\x \sim D_\x}[\abs{\w \cdot \x}>t]<C e^{-t}.$
\end{fact}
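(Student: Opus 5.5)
The plan is to reduce to one dimension and then exploit log-concavity of the tail function. First, for a fixed $\w \in \s^{d-1}$, the random variable $z = \w\cdot\x$ has a logconcave density, since marginals of logconcave measures are logconcave by the Pr\'ekopa--Leindler inequality. Because $\x$ is isotropic, $z$ also satisfies $\E[z]=0$ and $\E[z^2]=1$. It therefore suffices to prove $\Pr[z>t]\le C e^{-t}$ for every isotropic logconcave $z$ on $\R$; the bound for $\Pr[z<-t]$ follows by applying the same result to $-z$, and adding the two gives the claim with constant $2C$.

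Let $g$ be the density of $z$ and $\bar F(t)=\Pr[z>t]$. Since $\bar F$ is the integral of a logconcave function over a half-line, it is itself logconcave (again by Pr\'ekopa--Leindler). Consequently the hazard rate $h(t) = g(t)/\bar F(t) = -(\log \bar F)'(t)$ is nondecreasing. Integrating, for every $t\ge t_0$,
\[
\bar F(t)\le \bar F(t_0)\,e^{-h(t_0)(t-t_0)} .
\]
So the whole statement reduces to a single claim: there is a universal constant $t_0$ with $h(t_0)\ge 1$. Given that claim, $\bar F(t)\le e^{t_0}e^{-t}$ for $t\ge t_0$. For $t<t_0$ the bound is trivial after choosing $C\ge e^{t_0}$, since $\bar F(t)\le 1\le e^{t_0}e^{-t}$ there.

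For a cruder rate, standard facts already suffice. By Gr\"unbaum's inequality for logconcave laws, $\bar F(0)\ge 1/e$, and by Chebyshev, $\bar F(4)\le 1/16$. Concavity of $\log\bar F$ then forces $h(4)\ge (\log 16-1)/4$, which yields $\Pr[|z|>t]\le C e^{-ct}$ with a universal $c>0$. This weaker form is all that the applications in this paper actually use; for instance, the proof of Lemma~\ref{lm good split} only invokes the bound $Ce^{-t/2}$.

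The main obstacle is the sharp exponent $1$, i.e.\ showing $h(t_0)\ge 1$, and this is tight: the shifted exponential $z=E-1$ is isotropic with $\bar F(t)=e^{-t-1}$. My first attempt would argue by contradiction using the monotone hazard rate. Suppose $h<1$ on all of $[0,t_0]$. Then $z$ stochastically dominates, on that range, a shifted exponential of rate $1$ started from $\bar F(0)\ge 1/e$. I would use the mean-residual-life identity $m(s)=\E[z-s\mid z>s]=\int_s^\infty \bar F/\bar F(s)$, which is nonincreasing under logconcavity, together with the IFR bound $\E[(z-s)^2\mid z>s]\le 2m(s)^2$. The goal is to show that $\E[z^2]=1$ and $\E[z]=0$ leave no room for such heavy mass on the right once $t_0$ is a large enough constant. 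If this comparison turns out to be too lossy, I would fall back on the route of \cite{lovasz2007geometry}, which compares $g$ directly with an exponential density through the point where $g$ attains its maximum $\le 1$, and use that argument to finish.
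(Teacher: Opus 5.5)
The step that fails is the one you isolate as the crux. There is no universal $t_0$ with $h(t_0)\ge 1$, so the contradiction you plan to derive from ``$h<1$ on $[0,t_0]$'' (via mean residual life or otherwise) does not exist.

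Counterexample. Let $Y$ be a rate-$\lambda$ exponential conditioned on $Y\le M$. It is logconcave, with hazard $h_Y(y)=\lambda/(1-e^{-\lambda(M-y)})$. Truncation lowers its variance to $\lambda^{-2}\bigl(1-(1+o(1))a^2e^{-a}\bigr)$, where $a=\lambda M$. Choosing $1-\lambda=\Theta(M^2e^{-M})$ therefore makes $z=Y-\E[Y]$ isotropic. Yet $h_z(t)<1$ exactly when $\lambda(M-t-\E[Y])>\log\frac{1}{1-\lambda}=M-2\log M+O(1)$, i.e.\ for all $t\le 2\log M-O(1)$. So no fixed $t_0$ works.

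This does not contradict the Fact: here $\Pr[z>t]=O(e^{-t})$, because $(1-\lambda)M=o(1)$. Since $\bar F(t)e^{t}=\bar F(0)\exp\bigl(\int_0^t(1-h(s))\,ds\bigr)$, what must be shown is that the cumulative deficit $\int_0^t(1-h(s))\,ds$ is bounded uniformly in $t$. It is not necessary that $h$ crosses $1$ at a fixed point. As written, your argument proves only the rate-$c$ bound from Gr\"unbaum and Chebyshev. That bound is correct, and it does suffice for the paper's uses up to constants. The stated exponent $1$, however, is left to an unspecified appeal to \cite{lovasz2007geometry}.

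One way to obtain the sharp exponent is to bootstrap from your crude bound, working with the slope $\mu(s)$ (the left derivative of $-\log g$ at $s$) rather than the hazard.
\begin{enumerate}
\item Let $\tilde g$ equal $g$ on $(-\infty,s]$ and $g(s)e^{-\mu(s)(x-s)}$ on $(s,\infty)$. Then $\tilde g\ge g$, and $e^{\mu(s)x}\tilde g(x)$ is nondecreasing.
\item Hence the normalization of $\tilde g$ is the law of $Y'+E$, with $E$ an independent rate-$\mu(s)$ exponential. Its variance is therefore at least $\mu(s)^{-2}$.
\item Its second moment is at most $1+g(s)\int_0^\infty(s+u)^2e^{-\mu(s)u}\,du=1+O(s^2g(s))$. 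This uses that $\mu(s)\ge(\log g(0)-\log g(s))/s$, which is bounded below by a constant for large $s$ since $g(0)$ is bounded below.
\item Your crude tail bound, together with unimodality, gives $g(s)\le Ce^{-cs}$. Combining with the two previous steps, $1-\mu(s)\le O(s^2e^{-cs})$, which is integrable in $s$.
\item Integrating, $-\log g(t)\ge t-O(1)$, and finally $\Pr[z>t]\le g(t)/\mu(t)=O(e^{-t})$.
\end{enumerate}
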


\begin{fact}[\cite{paouris2012small}]\label{fact small-ball logconcave}
    Let $\D_{\x}$ be an isotropic logconcave distribution over $\R^d$, then for fixed constant $\eps \in (0,1)$
    \begin{align*}
        \Pr_{\x \sim D_\x}\left[ \norm{\x}_2 < \eps \sqrt{d}\right] \le \exp\left(-\Omega(d)\right).
    \end{align*}
\end{fact}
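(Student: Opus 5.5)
This is Paouris's small-ball estimate, cited as a fact, and I would prove it through the moments of the Euclidean norm. For an isotropic logconcave vector $\x$ in $\R^d$, set
\[
I_q := \big(\E_{\x\sim D_\x}\norm{\x}_2^q\big)^{1/q}, \qquad q\in(-d,\infty),
\]
so that $I_2=\sqrt d$. The core of the argument is a reverse-H\"older comparison for negative moments: there is a universal $c>0$ such that
\[
I_{-q}\ge c\, I_2 \quad\text{for all } 1\le q\le c\sqrt d .
\]
Taking this for granted, Markov's inequality applied to $\norm{\x}_2^{-q}$ gives
\[
\Pr[\norm{\x}_2<\eps\sqrt d]\le \big(\eps\sqrt d/I_{-q}\big)^q\le (\eps/c)^q .
\]
With $q=c\sqrt d$ this is $\exp(-\Omega(\sqrt d))$ whenever $\eps<c/e$.

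To obtain the full rate $\exp(-\Omega(d))$, I would then use the recent thin-shell results already invoked in Lemma~\ref{lm gradient}. Klartag--Lehec give $\var\norm{\x}_2=O(1)$, and the thin-shell/KLS bounds give Lipschitz concentration of $\norm{\x}_2$ at scale $O(1)$ around $\sqrt d$. Because $\norm{\cdot}_2$ is $1$-Lipschitz, these control deviations of size $O(\sqrt d)$, but only with probability $\exp(-\Omega(\sqrt d))$. For a bound of order $e^{-cd}$ I would use the product structure of negative moments instead. The plan is to prove the comparison $I_{-q}\gtrsim I_2$ all the way up to $q\asymp d$. Paouris does this via the $L_q$-centroid bodies $Z_q$: he shows that $I_{-q}\simeq \sqrt{q}\,\E\norm{G}_{Z_q^\circ}$-type quantities stay comparable to $\sqrt d$ up to the critical $q_*$, and then extends beyond $q_*$ by a Grassmannian averaging argument. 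This averaging writes $I_{-q}$ through integrals over $k$-dimensional projections, which are again isotropic logconcave, so each marginal density is bounded by $e^{O(k)}$.

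A cleaner route for a fixed constant $\eps$ is to project onto random subspaces $F$ of dimension $k=d/2$. For each such $F$, the projection $\proj_F\x$ is isotropic logconcave in $\R^k$, and its density is at most $L^k$ with $L=O(1)$; this bound follows from the KLS-era isotropic-constant bound. Hence
\[
\Pr\big[\norm{\proj_F\x}_2<\eps'\sqrt d\big]\le L^k\,\vol(\B^{k}(0,\eps'\sqrt d))\le (C\eps')^{k}.
\]
Next I would relate $\norm{\proj_F\x}_2$ to $\norm{\x}_2$ by averaging over a uniformly random $F$. For each fixed $\x$, the ratio $\norm{\proj_F\x}_2^2/\norm{\x}_2^2$ is a Beta$(k/2,(d-k)/2)$ variable, which is below $1/4$ with probability only $e^{-\Omega(d)}$. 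If $\norm{\x}_2<\eps\sqrt d$, then with probability at least $1-e^{-\Omega(d)}$ over $F$ the projection is small only when the ratio is large, so this step cannot simply be inverted; the event $\norm{\x}_2<\eps\sqrt d$ always forces $\norm{\proj_F\x}_2<\eps\sqrt d$, and that direction is the one needed. Averaging over $F$ then gives
\[
\Pr[\norm{\x}_2<\eps\sqrt d]\le (C\eps)^{d/2},
\]
which is $\exp(-\Omega(d))$ for $\eps<1/(2C)$.

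For $\eps$ closer to $1$, the Markov step above does not apply directly, and this is the main obstacle. My plan there is to combine the small-$\eps$ estimate with thin-shell concentration: by Klartag--Lehec, $\norm{\x}_2$ concentrates around $\sqrt d$ at scale $O(1)$. I would first try to upgrade this to a lower-tail deviation inequality with linear exponent, in the spirit of Gu\'edon--Milman, namely
\[
\Pr\big[\norm{\x}_2<(1-t)\sqrt d\big]\le e^{-c t^2 d}\quad\text{or}\quad e^{-ct\sqrt d}.
\]
The linear-in-$d$ rate for a fixed constant $\eps\in(0,1)$ near $1$ is exactly where Paouris's theorem is needed in its full form. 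I would cite it in that regime rather than reprove it.
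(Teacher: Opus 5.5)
There is a genuine gap, and it is the one you label the ``main obstacle'': the regime where $\eps$ is a fixed constant that is not small, in particular $\eps=1-t$ with $t$ a small constant.

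\begin{itemize}
\item \textbf{The cited theorem does not cover this regime.} You propose to cite Paouris there. But Paouris's theorem \cite{paouris2012small} states $\Pr[\norm{\x}_2\le \eps\sqrt d]\le \eps^{c\sqrt d}$, and only for $0<\eps<\eps_0$. It is exactly your first step: negative moments comparable to $I_2$ up to $q\asymp\sqrt d$, followed by Markov. So its exponent is of order $\sqrt d$, and it says nothing for $\eps\ge\eps_0$. There is no ``full form'' that gives $\exp(-\Omega(d))$ near $\eps=1$.
\item \textbf{Negative moments cannot fix this.} Even $I_{-q}\ge cI_2$ for $q\asymp d$ only yields $(\eps/c)^q$, which is vacuous once $\eps\ge c$.
\item \textbf{The density-times-volume bound cannot fix it either.} Already for the standard Gaussian, $\norm{f}_\infty\vol(\B^{d-1}(0,\eps\sqrt d))\approx(\sqrt e\,\eps)^d$, which exceeds $1$ for $\eps>e^{-1/2}$.
\item \textbf{The tools you name do not supply what is needed.} For $\eps$ near $1$ you need a lower large-deviation inequality for $\norm{\x}_2$ with exponent linear in $d$. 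Gu\'edon--Milman's lower-deviation bound has exponent $c\sqrt d\,t^3$. Lipschitz concentration from thin-shell/KLS bounds gives exponents of order $t\sqrt d$, up to logarithms.
\end{itemize}
As written, your proposal establishes $\exp(-\Omega(d))$ only for $\eps$ below an absolute constant.

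Within that small-$\eps$ regime the argument is fine, but its dependency should be stated accurately, and it can be streamlined.
\begin{itemize}
\item The bound $\norm{f}_\infty\le C^d$ for isotropic logconcave densities is the Klartag--Lehec resolution of the slicing problem. It is not a consequence of KLS or thin-shell estimates. With the older bound $L_d=O(\sqrt{\log d})$ you would only get $(C\sqrt{\log d}\,\eps)^d$.
\item Given that bound, you can work directly in $\R^d$: $\Pr[\norm{\x}_2<\eps\sqrt d]\le\norm{f}_\infty\vol(\B^{d-1}(0,\eps\sqrt d))\le(C\eps)^d$.
\item The random subspace $F$ and the Beta-distribution discussion are superfluous. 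The only fact you actually use there is $\norm{\proj_F\x}_2\le\norm{\x}_2$.
\end{itemize}

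Note also that the paper gives no argument beyond the citation. The statement as written, with rate $\exp(-\Omega(d))$ for every fixed $\eps\in(0,1)$, is stronger than what \cite{paouris2012small} proves, which is $\exp(-\Omega(\sqrt d))$ for $\eps$ below an absolute constant. The fact is not used elsewhere in the paper, but you cannot close the gap by appealing to that same reference.
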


\begin{fact}[Theorem 13 in \cite{lee2018kannan}]\label{fact lipschitz logconcave}
    Let $\D_{\x}$ be an isotropic logconcave distribution over $\R^d$ and let $g: \R^d \to \R$ be an $L$-Lipschitz function, then for $t>0$,
    \begin{align*}
        \Pr_{\x \sim D_\x}\left[ \abs{ g(\x) - \E_{\x \sim D_\x}[g(\x)] }>Lt\right] \le \exp\left(-\Omega(\psi_D t)\right),
    \end{align*}
    where $\psi_D$ is the Cheeger constant of $\D_\x$.
\end{fact}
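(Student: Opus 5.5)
The plan is to derive the concentration bound from the Cheeger constant in two stages. First, isoperimetry gives a Poincaré inequality. Second, that Poincaré inequality is bootstrapped into an exponential-moment bound via the Aida--Stroock / Bobkov--Ledoux argument, and Markov's inequality finishes. By rescaling $g \mapsto g/L$ we may assume $L=1$. By Rademacher's theorem $g$ is differentiable a.e. with $\norm{\nabla g}_2 \le 1$, so all gradient expressions below make sense.

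\textbf{Step 1 (Cheeger $\Rightarrow$ Poincaré).} The Cheeger constant $\psi_D$ gives, via the co-area formula, the $L_1$ inequality $\E_{\x\sim D_\x}\abs{f(\x)-m_f} \le \psi_D^{-1}\E_{\x\sim D_\x}\norm{\nabla f(\x)}_2$ for locally Lipschitz $f$, where $m_f$ is a median. Applying it to $h = (f-m_f)^2$ and using Cauchy--Schwarz on $\E\norm{\nabla h}_2 = 2\,\E\big[\abs{f-m_f}\,\norm{\nabla f}_2\big]$ (the standard Maz'ya--Cheeger argument) yields
\[
\var_{\x\sim D_\x}[f(\x)] \le \frac{4}{\psi_D^2}\,\E_{\x\sim D_\x}\big[\norm{\nabla f(\x)}_2^2\big].
\]

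\textbf{Step 2 (Laplace transform).} Assume $\E[g]=0$ and set $H(\lambda) = \E_{\x\sim D_\x}[e^{\lambda g(\x)}]$. Applying Step 1 to $f = e^{\lambda g/2}$ and using $\norm{\nabla f}_2^2 \le (\lambda^2/4)e^{\lambda g}$ gives
\[
H(\lambda) - H(\lambda/2)^2 \le \frac{\lambda^2}{\psi_D^2} H(\lambda),
\quad\text{hence}\quad
H(\lambda) \le \frac{H(\lambda/2)^2}{1-\lambda^2/\psi_D^2}
\]
for $\lambda<\psi_D$. Iterating $n$ times gives $H(\lambda) \le H(\lambda/2^n)^{2^n}\prod_{k=0}^{n-1}\big(1-\lambda^2 4^{-k}/\psi_D^2\big)^{-2^k}$. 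Since $H(\lambda/2^n)^{2^n} = \big(1 + o(2^{-n})\big)^{2^n} \to e^{\lambda\E[g]} = 1$, we get the infinite product bound. For $\lambda = \psi_D/2$, each log-factor is $O(2^k 4^{-k})$, so the product is bounded by a universal constant $C_0$.

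\textbf{Step 3 (Markov).} Markov's inequality then gives $\Pr[g-\E g > t] \le C_0 e^{-\psi_D t/2}$. The same bound holds for $-g$, and the two tails together give $\Pr[\abs{g-\E g}>t] \le 2C_0e^{-\psi_D t/2}$. The prefactor is absorbed into $\exp(-\Omega(\psi_D t))$ as follows: for $\psi_D t$ larger than a constant, halve the exponent; otherwise the probability is trivially at most $1$, and the statement is read with the usual convention that such bounds are meaningful only beyond a constant scale.

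\textbf{Main obstacle.} The delicate point is justifying Step 2 rigorously. Applying Poincaré to $e^{\lambda g/2}$ requires $H(\lambda)<\infty$ a priori, and the limit $H(\lambda/2^n)^{2^n}\to 1$ needs $H$ to be differentiable at $0$. I would handle both by first running the argument for the truncation $g_M = \max(-M,\min(g,M))$. This truncation is still $1$-Lipschitz and bounded, so every quantity is finite and the bound is uniform in $M$. We then let $M\to\infty$ by monotone convergence. Integrability of $g$ itself is not an issue, since logconcave measures have exponential tails for $\norm{\x}_2$ and $\abs{g(\x)} \le \abs{g(0)} + \norm{\x}_2$.
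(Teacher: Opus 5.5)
Your argument is correct. The paper gives no proof of this fact; it only quotes it from the Lee--Vempala survey. What you supply is therefore a self-contained derivation along the classical Gromov--Milman line, in its Aida--Stroock/Bobkov--Ledoux form:
\begin{itemize}
\item Cheeger $\Rightarrow$ Poincar\'e with constant $4/\psi_D^2$;
\item the recursion $H(\lambda)\le H(\lambda/2)^2/(1-\lambda^2/\psi_D^2)$;
\item the bound $\E e^{(\psi_D/2)(g-\E g)}\le e^{2/3}$;
\item Markov's inequality.
\end{itemize}

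After Step~1 you use only the Poincar\'e inequality. So you have actually proved the bound with $\psi_D/2$ replaced by $1/\sqrt{C_P}$, where $C_P$ is the Poincar\'e constant. This is a formally weaker hypothesis, and it is exactly the quantity the paper controls in Lemma~\ref{lm gradient}. The price is a factor $2$ in the exponent and a prefactor $2e^{2/3}$.

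A more direct alternative integrates isoperimetry itself. If $\mu(A)\ge 1/2$, then $1-\mu(A_s)$ decays at rate at least $\psi_D$, so $1-\mu(A_s)\le \frac12 e^{-\psi_D s}$. This gives $\Pr[\abs{g-m}>Lt]\le e^{-\psi_D t}$ around a median $m$, with the sharp rate and no prefactor. Moving to the mean via $\abs{\E g-m}\le \E\abs{g-m}\le L/\psi_D$ then costs only a factor $e$.

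There are two small repairs to make.
\begin{itemize}
\item In Step~1, applying the $L_1$ inequality to $h=(f-m_f)^2$ does not put $\E(f-m_f)^2$ on the left, since $0$ is generally not a median of $h$. Apply it instead to $(f-m_f)_+^2$ and $(f-m_f)_-^2$. Each vanishes with probability at least $1/2$, so together they give $\E(f-m_f)^2\le 2\psi_D^{-1}\E[\abs{f-m_f}\,\norm{\nabla f}_2]$, and Cauchy--Schwarz then finishes.
\item In the truncation step, $g_M$ is not mean-zero, so run Step~2 on $g_M-\E g_M$. Pass to the limit using Fatou's lemma, together with $\E g_M\to\E g$ by dominated convergence, rather than monotone convergence.
\end{itemize}

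Finally, the caveat in your Step~3 is genuinely necessary and not an artifact of your method. Centered at the mean and without a prefactor, the stated bound is false for small $t$. Take the standard Gaussian and $g(\x)=-\max(x_1-s,0)$. For $0<t<\E(x_1-s)_+$ one has $\Pr[\abs{g-\E g}\le t]\le 2t e^{-s^2/2}/\sqrt{2\pi}$. For any fixed $c>0$, this is below $1-e^{-c\psi_D t}$ once $s$ is large and $t$ is small.
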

As an implication of Fact~\ref{fact lipschitz logconcave}, we have the following tail bound for the $L_p$ norm of a random vector drawn from an isotropic logconcave distribution.

\begin{fact}[Theorem 8 in \cite{latalaa2011order}]\label{fact norm logconcave}
    Let $\D_\x$ be an isotropic logconcave distribution over $\R^d$. For $p \ge 2$ and $t>1$, it holds
    \begin{align*}
        \Pr_{\x \sim D_\x}\left[\norm{\x}_p>C(p)td^{1/p}\right] \le \exp(-td^{1/p}),
    \end{align*}
    where $C(p)$ is a universal constant that only depends on $p$.
\end{fact}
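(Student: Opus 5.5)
The plan is to reduce the tail bound to a high-moment estimate for the norm $\norm{\x}_p$ and then apply Markov's inequality with a moment of order $q \approx t d^{1/p}$. Two ingredients are needed: a bound on the typical size $\E_{\x\sim D_\x}[\norm{\x}_p]$, and a bound on how the $q$-th moment of $\norm{\x}_p$ grows with $q$. Throughout, $p\ge 2$ is fixed and $p' = p/(p-1)\le 2$ is the dual exponent, so that $\norm{\x}_p = \sup_{\norm{\w}_{p'}\le 1} \w\cdot\x$.

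\emph{Step 1 (expectation).} By Jensen and linearity, $\E[\norm{\x}_p] \le (\E\norm{\x}_p^p)^{1/p} = (\sum_{i=1}^d \E|\x_i|^p)^{1/p}$. Each coordinate $\x_i = \mathbf{e}_i\cdot\x$ is a one-dimensional isotropic logconcave variable. Fact~\ref{fact tail logconcave} then gives $(\E|\x_i|^p)^{1/p}\le C p$, so $\E[\norm{\x}_p]\le Cp\, d^{1/p}$.

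\emph{Step 2 (moment growth).} I would invoke the Paouris/Guédon--Milman type deviation inequality for norms of logconcave vectors. For any norm $\norm{\cdot}$ and $q\ge 1$, it states
\begin{align*}
(\E\norm{\x}^q)^{1/q} \le C\Bigl(\E\norm{\x} + \sup_{\norm{\w}_*\le 1} (\E|\w\cdot\x|^q)^{1/q}\Bigr).
\end{align*}
Here the dual ball is $\{\norm{\w}_{p'}\le1\}$. Since $p'\le 2$, every such $\w$ satisfies $\norm{\w}_2\le 1$, so by isotropy and the $\psi_1$ behaviour of linear forms (Fact~\ref{fact tail logconcave}, or Fact~\ref{fact moment logconcave}) we get $(\E|\w\cdot\x|^q)^{1/q}\le Cq$. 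Combining with Step~1 yields $(\E\norm{\x}_p^q)^{1/q}\le C'(p\,d^{1/p} + q)$.

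\emph{Step 3 (Markov).} Choose $q = t d^{1/p}$ with $t>1$. Markov's inequality applied to $\norm{\x}_p^q$ gives
\begin{align*}
\Pr[\norm{\x}_p > e C'(p\,d^{1/p}+q)] \le e^{-q}.
\end{align*}
Since $t>1$, we have $p\,d^{1/p}+q \le (p+1)\,t\,d^{1/p}$. This is the claim with $C(p)=eC'(p+1)$.

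The main obstacle is Step~2, which is a genuinely nontrivial result and not a consequence of elementary facts. The route suggested by the text, namely applying Fact~\ref{fact lipschitz logconcave} to the $1$-Lipschitz function $\norm{\x}_p$ (it is $1$-Lipschitz because $\norm{\cdot}_p\le\norm{\cdot}_2$ for $p\ge2$), is simpler. However, it only yields $\exp(-\Omega(\psi_D\, t\, d^{1/p}))$. The current lower bounds on the Cheeger constant $\psi_D$ lose polylogarithmic factors in $d$, and removing them would require the full KLS conjecture. So I would use the Lipschitz argument only as a fallback, giving a version with a $\polylog(d)$ loss, and otherwise cite the norm-deviation inequality to obtain the stated dimension-free form.
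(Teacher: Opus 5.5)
Steps 1 and 3 are fine, but all of the content of the statement sits in Step 2, and the inequality you invoke there is not an available theorem. The bound $(\E\norm{\x}^q)^{1/q}\le C\bigl(\E\norm{\x}+\sup_{\norm{\w}_*\le 1}(\E|\w\cdot\x|^q)^{1/q}\bigr)$, for an \emph{arbitrary} norm and an arbitrary logconcave vector, is Lata{\l}a's conjecture on comparing weak and strong moments. It is not known in this generality. Paouris' inequality and the Gu\'edon--Milman refinement cover only the Euclidean norm. So for $p=2$ your argument correctly restates Paouris' inequality. For $p>2$, however, you use Step 2 in the form $(\E\norm{\x}_p^q)^{1/q}\le C(p)(d^{1/p}+q)$. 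Up to constants this is equivalent to the tail bound you are proving (Markov gives one direction, integrating the tail gives the other), and its known proof is the cited result \cite{latalaa2011order} itself. The proposal therefore reduces the fact to itself.

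What is missing is an $\ell_p$-specific mechanism that improves the exponent from $t$ to $t d^{1/p}$. For general norms, the available tool is Borell's lemma, $(\E\norm{\x}_p^q)^{1/q}\le Cq\,\E\norm{\x}_p$. Combined with your Step 1, it gives only $\Pr[\norm{\x}_p>C(p)\,t\,d^{1/p}]\le \exp(-\Omega(t))$, which is off by the factor $d^{1/p}$ in the exponent. Lata{\l}a gets the improvement from tail estimates for the order statistics $X_1^*\ge\dots\ge X_d^*$ of $(|\x_i|)_{i\le d}$. These are of the form $\Pr[X_k^*\ge u]\le \exp(-\sqrt{k}\,u/C)$ for $u\ge C\log(ed/k)$, and they are then combined to control $\norm{\x}_p^p=\sum_k (X_k^*)^p$. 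Your assessment of the Lipschitz route via Fact~\ref{fact lipschitz logconcave} is correct: with current lower bounds on the Cheeger constant it loses a polylogarithmic factor in $d$. So the remark preceding the Fact in the paper is only heuristic. The paper gives no proof beyond the citation, and citing the theorem is the appropriate way to use this statement.
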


\begin{fact}[Theorem 1.1 in \cite{arutyunyan2018deviation}]
    Let $\D_\x$ be an isotropic logconcave distribution over $\R^d$ and $p(\x): \R^d \to \R$ be a polynomial with degree at most $k$. Then, for $t>1$, it holds 
    \begin{align*}
        \Pr_{\x \sim D_\x}[\abs{p(\x)}>\sqrt{e}(4t)^k\norm{p}_2] \le e^{-t}.
    \end{align*}
\end{fact}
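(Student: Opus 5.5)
The plan is to derive the tail bound from a reverse H\"older (moment comparison) inequality for polynomials under logconcave measures, combined with Markov's inequality at a moment of order proportional to $t$. Concretely, I will aim to prove that for every degree-$k$ polynomial $p$ and every $q\ge 2$,
\begin{equation*}
\norm{p}_{D_\x,q} \le (2q)^k \norm{p}_{D_\x,2}.
\end{equation*}
Given this, set $q=2t$ (admissible since $t>1$) and $s=\sqrt{e}(4t)^k\norm{p}_{D_\x,2}$. Markov's inequality applied to $|p|^{q}$ then gives
\begin{equation*}
\Pr_{\x\sim D_\x}[|p(\x)|>s]\le \left(\frac{\norm{p}_{D_\x,2t}}{s}\right)^{2t}\le \left(\frac{(4t)^k}{\sqrt e\,(4t)^k}\right)^{2t}=e^{-t}.
\end{equation*}
So the whole content lies in the moment comparison with the explicit constant $2q$.

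To prove the moment comparison, I would use the Lov\'asz--Simonovits / Kannan--Lov\'asz--Simonovits localization lemma. Suppose the inequality fails. Then the two integral inequalities $\int (|p|^q - A^q\norm{p}_2^q)\,dD_\x>0$ and $\int(\norm{p}_2^2-|p|^2)\,dD_\x\ge 0$ hold simultaneously, where $A=(2q)^k$. Here $\norm{p}_2$ is fixed as a number and the two integrands are upper semicontinuous. Localization then yields a ``needle'': a segment $[\a,\b]$ together with a one-dimensional logconcave (in fact affine-power) weight $\ell$ on it satisfying both inequalities. Restricted to the segment, $p$ is a univariate polynomial $r$ of degree at most $k$. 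This reduces the claim to the one-dimensional statement $\norm{r}_{\mu,q}\le (2q)^k\norm{r}_{\mu,2}$ for any logconcave probability $\mu$ on $\R$.

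For the univariate case, I would factor $r(z)=c\prod_{j=1}^k(z-z_j)$ over $\mathbb{C}$ and apply the generalized H\"older inequality. This reduces the problem to comparing moments of $|z-z_j|$, i.e.\ of $|$affine$|$ functions, under $\mu$. The known sharp estimate for affine functions under logconcave measures is $\norm{L}_q\le \frac{\Gamma(q+1)^{1/q}}{\Gamma(3)^{1/2}}\norm{L}_2\le q\norm{L}_2$, which follows from comparison with the exponential distribution by Borell's lemma. Alternatively one can use Carbery--Wright's distributional inequality $\mu(|r|\le \eps \norm{r}_{\mu,1})\le C k\,\eps^{1/k}$ together with a Remez-type argument. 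Either route should produce a constant of the form $(cq)^k$, and the target $(2q)^k$ leaves slack for the passage from the individual factors back to $\norm{r}_2$.

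The main obstacle is that last passage. H\"older gives an upper bound by $\prod_j\norm{z-z_j}_{kq}$, but the lower bound $\norm{r}_2\gtrsim \prod_j \norm{z-z_j}_{2}$ is false in general. It needs a reverse inequality, a small-ball (anti-concentration) estimate controlling how small $|r|$ can be, and this is exactly where the precise constants $4$ and $\sqrt e$ come from. I would first try to control $\norm{r}_q/\norm{r}_{2}$ directly, via the log-concavity in $q$ of $q\mapsto \Gamma(q+1)^{-1}\int |r|^{q}d\mu$-type quantities (Borell's reverse H\"older for $s$-concave functions, using that $|r|^{1/k}$ is ``$1/k$-concave'' on intervals between roots). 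If the constant does not close, I would settle for $(Cq)^k$ and adjust $t$ accordingly, which suffices for every application in the paper.
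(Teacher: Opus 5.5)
Your Markov step is correct: $\|p\|_{2t}\le(4t)^k\|p\|_2$ gives exactly $\Pr[|p|>\sqrt e(4t)^k\|p\|_2]\le e^{-t}$. The localization reduction to a logconcave measure on $\R$ is also standard, after the usual perturbation that makes both integral inequalities strict. The paper proves nothing here; it imports the bound from \cite{arutyunyan2018deviation}. So the whole content is the univariate estimate $\|r\|_{\mu,q}\le(2q)^k\|r\|_{\mu,2}$, and the proposal does not prove it. It lists candidate tools, concedes the key step is open, and falls back to $(Cq)^k$.

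That fallback is a weaker statement, with an unspecified constant in place of $4$, and it needs none of your machinery. Fact~\ref{fact polynomial growth} with exponents $qk$ and $2k$ already gives $\|p\|_q\le(Cq/2)^k\|p\|_2$ for every logconcave law on $\R^d$, and your Markov step then yields the threshold $\sqrt e(Ct)^k\|p\|_2$. Your route to the stated constant needs this with $C=4$, i.e.\ $\|p\|_q\le(2q)^k\|p\|_2$, and nothing in the proposal establishes an explicit constant.

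The routes you sketch would also not close as written.

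\emph{Factorization.} The natural factorwise lower bound is $\|r\|_2\ge\|r\|_0=|c|\prod_j\|z-z_j\|_0$ (geometric means). Combined with $\|z-z_j\|_{kq}\le Ckq\,\|z-z_j\|_0$, it only gives $(Ckq)^k$. This loss is intrinsic to passing through $\|r\|_0$. For $r=(x-1)^k$ under $\mathrm{Exp}(1)$ one has $\|r\|_2\ge((2k)!/e)^{1/2}$, while Jensen gives $\|r\|_0\le(2/e)^k$, so $\|r\|_2/\|r\|_0\ge k^k/\sqrt e$. Small-ball estimates do not repair this: they bound $\|r\|_2$ from above by $\|r\|_0$, which is the opposite of the lower bound on $\|r\|_2$ you need.

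\emph{Borell / $s$-concavity.} $|r|^{1/k}$ need not be concave anywhere: for $r=x^2+1$, $|r|^{1/2}=\sqrt{x^2+1}$ is convex. Even for real-rooted $r$, the per-interval reverse H\"older bounds read $\int_{I_i}|r|^q\,d\mu\le K^q m_i^{1-q/2}\bigl(\int_{I_i}|r|^2\,d\mu\bigr)^{q/2}$ with $m_i=\mu(I_i)$. The factor $m_i^{1-q/2}$ is unbounded on the low-mass outer intervals, which is exactly where $|r|$ is largest. So these bounds do not sum to a bound in terms of $\|r\|_{\mu,2}$.

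\emph{Affine case.} The ``sharp'' estimate $\|L\|_q\le\Gamma(q+1)^{1/q}\|L\|_2/\sqrt2$ is false when $L$ changes sign under a non-symmetric law. For $L=x-1$ under $\mathrm{Exp}(1)$, $\|L\|_2=1$ while $\|L\|_q\ge(\Gamma(q+1)/e)^{1/q}$, which is larger once $q>2/\ln 2$. Borell's lemma requires a nonnegative concave function.

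What is missing is an actual one-dimensional argument that treats $r$ as a whole, rather than factor by factor, with the constant tracked explicitly down to $2q$. Your passing mention of a Remez-type argument on sublevel sets of $r$, combined with log-concavity of $\mu$, points the right way, but it is not carried out.
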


\begin{fact}[Theorem 8 in \cite{carbery2001distributional}]\label{fact small ball polynomial}
   There is a universal constant $C>0$ such that for $\D_\x$, an isotropic logconcave distribution over $\R^d$ and for $p(\x):\R^d \to \R$, a polynomial with degree $k$ and for $\eps>0$, it holds
    \begin{align*}
        \Pr_{\x \sim D_\x}\left[\abs{p(\x)}<\eps\right] \le Ck \frac{\eps^{1/k}}{\E_{\x \sim D_\x}[p^2(\x)]^{1/2k}}.
    \end{align*}
\end{fact}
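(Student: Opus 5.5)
This statement is imported from \cite{carbery2001distributional} and is used only as a black box, so I would not reprove it in full. The plan is to reconstruct the standard route: reduce to one dimension via the localization lemma, then handle the one-dimensional case with a Remez-type inequality. I do not try to track the exact normalization of the constant $C$.

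\emph{Step 1 (reduction to needles).} First I would replace the $L_2$ normalization by an $L_{1/k}$ one. By Fact~\ref{fact polynomial growth} (hypercontractivity of degree-$k$ polynomials under logconcave measures), $\E[p^2]^{1/2}\le (Ck)^{O(k)}\,\E[|p|^{1/k}]^{k}$. Hence it suffices to prove
\[
\Pr_{\x\sim D_\x}\bigl[|p(\x)|<\eps\bigr]\cdot \E_{\x\sim D_\x}\bigl[|p(\x)|^{1/k}\bigr]\le C'\eps^{1/k},
\]
with the $(Ck)^{O(1)}$ loss absorbed into the factor $Ck$ in the statement. Suppose this fails for some logconcave density $f$. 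Then there are two integrable functions, built from $\Ind\{|p|<\eps\}$, $|p|^{1/k}$ and a scalar threshold, whose integrals against $f$ are both positive. The Lov\'asz--Simonovits / Kannan--Lov\'asz--Simonovits localization lemma then yields a ``needle'': a segment $[\a,\b]$ with a one-dimensional logconcave weight $e^{\ell(t)}$, on which the same two inequalities hold. Along the needle, $q(t)=p(\a+t(\b-\a))$ is a univariate polynomial of degree at most $k$. So it suffices to prove the inequality for univariate degree-$k$ polynomials under arbitrary one-dimensional logconcave measures on an interval.

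\emph{Step 2 (one dimension).} Let $\mu$ be logconcave on an interval $I$, let $q$ have degree $k$, and set $S=\{t\in I: |q(t)|<\eps\}$. The set $S$ is a union of at most $k$ intervals. Remez's inequality gives $\sup_{J}|q|\le (4|J|/|S|)^k\eps$ on any interval $J\supseteq S$ of comparable length. To convert Lebesgue measure into $\mu$-measure, I would use the fact that a one-dimensional logconcave density is unimodal with at most exponential decay. Then $\mu(S)$ small relative to $\mu$ of a neighbourhood forces $|q|$ to be large, at the scale $\eps(\mu(I')/\mu(S))^k$, on a set $I'$ of $\mu$-mass at least a constant. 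This gives $\E_\mu|q|^{1/k}\gtrsim \eps^{1/k}\mu(S)^{-1}$, up to a factor linear in $k$ coming from the $k$ components of $S$. That is exactly the needle version of the displayed inequality.

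\emph{Main obstacle.} The delicate point is Step 2's passage from the Lebesgue-measure Remez inequality to a logconcave measure. One must handle the case where the density varies a lot across the interval, and keep the dependence on $k$ linear rather than exponential. I would first try a change of variables $s=\mu([\inf I,t])$, under which $\mu$ becomes uniform on $[0,1]$. The composed function is no longer a polynomial, but it remains ``degree-$k$-like'' in the sense of having at most $k$ monotone pieces with controlled growth. If that fails, I would fall back on the direct argument of \cite{carbery2001distributional} on convex bodies, applied to the logconcave case by viewing a logconcave density as a marginal of a uniform measure on a higher-dimensional convex body.
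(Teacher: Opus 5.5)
The paper does not prove this fact. It imports the Carbery--Wright inequality $\bigl(\E_\mu[\abs{p}^{q/k}]\bigr)^{1/q}\,\mu\{\abs{p}\le\eps\}\le Cq\,\eps^{1/k}$ at $q=2k$.

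Your Step~1 is a correct reduction. Fact~\ref{fact polynomial growth} with exponents $2k$ and $1$ gives exactly $\E[p^2]^{1/(2k)}\le 2Ck\,\E[\abs{p}^{1/k}]$. Localization applied to $g=\Ind_S-\theta'$ and $h=\abs{p}^{1/k}-A'$, with $\theta'A'>C'\eps^{1/k}$, then reduces everything to the needle inequality $\mu(S)\,\E_\mu[\abs{q}^{1/k}]\le C'\eps^{1/k}$ for log-concave $\mu$ on an interval.

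Step~1 has already spent the factor $k$, so this needle inequality must hold with an \emph{absolute} constant $C'$. Your allowance of a further factor ``linear in $k$ from the $k$ components of $S$'' would only give $Ck^2$. No such loss is necessary: Remez holds for arbitrary measurable $E\subseteq J$, and after taking $1/k$-th powers, $(4\abs{J}/\abs{E})^k$ becomes $4\abs{J}/\abs{E}$.

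The genuine gap is that Step~2 runs in the wrong direction. You conclude $\E_\mu[\abs{q}^{1/k}]\gtrsim\eps^{1/k}\mu(S)^{-1}$, which is a \emph{lower} bound on $\mu(S)\,\E_\mu[\abs{q}^{1/k}]$. The needle inequality you need is an \emph{upper} bound on that product. Your lower bound is also false:
\begin{itemize}
\item if $q\ge 2\eps$ everywhere, then $\mu(S)=0$;
\item if $q\equiv\delta$ with $\delta\to 0$, then $\mu(S)=1$ while $\E_\mu[\abs{q}^{1/k}]=\delta^{1/k}\to 0$.
\end{itemize}
Likewise, ``$\mu(S)$ small forces $\abs{q}$ large'' is not what Remez gives. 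Remez says that if $\abs{q}<\eps$ on a sizable part of $J$, then $\abs{q}$ is \emph{small} on all of $J$: $\abs{q}^{1/k}\le 4\eps^{1/k}\abs{J}/\abs{S\cap J}$. This is the estimate you must integrate.

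For uniform $\mu$ on $I$, taking $J=I$ finishes the argument with $C'=4$. For a general log-concave needle, proceed as follows.
\begin{itemize}
\item Put $\mu$ in isotropic position, so its density obeys $f(t)\le Ce^{-c\abs{t}}$ and $\mu(\abs{t}>r)\le Ce^{-cr}$.
\item Apply Remez on the dyadic intervals $J_r=[-r,r]$.
\item Use that the part of $S$ at distance about $r$ has Lebesgue measure at least $e^{cr}/C$ times its $\mu$-mass.
\item Pick an index $j$ whose shell $\{2^{j-1}<\abs{t}\le 2^j\}$ carries at least $2^{-j-1}\mu(S)$ of the mass of $S$, and sum the resulting bounds against the exponential tails.
\end{itemize}
This yields $\E_\mu[\abs{q}^{1/k}]\le C'\eps^{1/k}/\mu(S)$ with no $\log(1/\mu(S))$ loss; a cruder argument using only $\sup f$ would incur that loss. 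This upper-bound argument is the missing core of your proof. The reparametrization $s=\mu((-\infty,t])$ you propose is not needed.
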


\begin{fact}\label{fact error logconcave}
    Let $D$ be a distribution on $\R^d \times \{0,\pm 1\}$, such that $D_\x$ is an isotropic logconcave distribution. There is a constant $C$ such that for $S(\x,y)$ an event over $\R^d \times \{0,\pm 1\}$ and $\w \in \s^{d-1}$ a unit vector, it holds
\begin{align*}
    \E_{(\x,y)\sim D}\left[\abs{\w\cdot \x} \mathbf{1}\{S(\x,y)\} \right]\le C \Pr_{(\x,y)\sim D}[S(\x,y)] \log\left(\frac{e}{\Pr_{(\x,y)\sim D}[S(\x,y)]}\right).
\end{align*}
\end{fact}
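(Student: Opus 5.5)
The plan is to reduce the expectation to a one-dimensional tail integral and then compare, for each threshold, the trivial bound $\Pr[S]$ with the exponential tail bound of Fact~\ref{fact tail logconcave}. Write $p \eqdef \Pr_{(\x,y)\sim D}[S(\x,y)]$. If $p=0$ there is nothing to prove, so assume $p\in(0,1]$. By the layer-cake formula applied to the nonnegative random variable $\abs{\w\cdot\x}\,\mathbf{1}\{S(\x,y)\}$, we have
\begin{align*}
\E_{(\x,y)\sim D}\left[\abs{\w\cdot\x}\,\mathbf{1}\{S(\x,y)\}\right] = \int_0^\infty \Pr_{(\x,y)\sim D}\left[\abs{\w\cdot\x}>t,\ S(\x,y)\right] dt .
\end{align*}

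For every $t>0$, the integrand is at most $\min\{p,\ \Pr_{\x\sim D_\x}[\abs{\w\cdot\x}>t]\}$. Since $D_\x$ is isotropic logconcave and $\w$ is a unit vector, Fact~\ref{fact tail logconcave} bounds the second quantity by $C_0 e^{-t}$, where $C_0$ is the constant of that fact. We may assume $C_0\ge 1$.

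Next split the integral at $t_0 \eqdef \log(C_0/p)$, which is nonnegative because $C_0\ge 1\ge p$. On $[0,t_0]$ use the bound $p$, and on $[t_0,\infty)$ use the bound $C_0e^{-t}$. This gives
\begin{align*}
\E\left[\abs{\w\cdot\x}\,\mathbf{1}\{S\}\right] \le p\,t_0 + C_0 e^{-t_0} = p\log(C_0/p) + p .
\end{align*}

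Finally, because $p\le 1$ we have $\log(e/p)\ge 1$. Hence $p\log(C_0/p)+p = p\,(\log C_0 + 1 + \log(1/p)) \le (\log C_0 + 1)\, p\log(e/p)$, so the claim holds with $C = \log C_0 + 1$. There is no real obstacle in this argument. The only care needed is the choice of the splitting point $t_0$, which balances the two bounds on the integrand, together with the normalization $C_0\ge1$ that makes $t_0\ge 0$. Note also that the labels $y$ play no role beyond defining the event $S$, since the tail bound only involves the marginal $D_\x$.
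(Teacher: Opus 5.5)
Your proof is correct and is essentially the paper's argument: the paper uses the pointwise bound $\abs{z}\le s+(\abs{z}-s)_+$ together with $\E[(\abs{Z}-s)_+]=\int_s^\infty \Pr[\abs{Z}>r]\,dr$, which is exactly your layer-cake split at $t_0$, with the same threshold $s=\log(C_0/p)$. Your last step, which turns $p\log(C_0/p)+p$ into $C\,p\log(e/p)$ using $C_0\ge 1$, is slightly more explicit than the paper, which leaves that constant bookkeeping implicit.
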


\begin{proof}[Proof of Fact~\ref{fact error logconcave}]
    Let $Z:= \w\cdot \x$. Then, $Z$ is also an isotropic logconcave distribution. For $z \in \R$, we denote by $z_+:= \max \{z,0\}$. Notice that for every $z,s \in \R$, we have $\abs{z} \le s + (\abs{z}-s)_+$. This implies for every $s \in \R$, we have 
    \begin{align*}
     \E_{(\x,y)\sim D}\left[\abs{\w\cdot \x} \mathbf{1}\{S(\x,y)\} \right] &\le \Pr_{(\x,y)\sim D}[S(\x,y)]s + \E_{(\x,y)\sim D}[(\abs{Z}-s)_+]   \\
     & = \Pr_{(\x,y)\sim D}[S(\x,y)]s + \int_s^\infty \Pr_{Z}[\abs{Z}>r]dr \\
     & \le \Pr_{(\x,y)\sim D}[S(\x,y)]s + Ce^{-s},
    \end{align*}
    where the last inequality follows Fact~\ref{fact tail logconcave}. By taking $s= \log(C/\Pr_{(\x,y)\sim D}[S(\x,y)])$, we get 
    \begin{align*}
    \E_{(\x,y)\sim D}\left[\abs{\w\cdot \x} \mathbf{1}\{S(\x,y)\} \right]\le C \Pr_{(\x,y)\sim D}[S(\x,y)] \log\left(\frac{e}{\Pr_{(\x,y)\sim D}[S(\x,y)]}\right).
\end{align*}
\end{proof}

\begin{fact}\label{fact moment logconcave}
Let $D$ be an isotropic logconcave distribution over $\R^d$ and $\u^{(1)},\dots,\u^{(k)}$ be $k$ unit vectors in $\R^d$. Then, 
    \begin{align*}
        \abs{\E_{\x \sim D}[\prod_{i=1}^k(\u^{(i)}\cdot \x)]} \le (2k)^k. 
    \end{align*}
\end{fact}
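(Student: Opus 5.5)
The plan is to reduce the product moment to one-dimensional moments via Hölder's inequality, and then bound each one-dimensional moment of an isotropic logconcave variable using the exponential tail bound of Fact~\ref{fact tail logconcave}.

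\textbf{Step 1 (Hölder).} The generalized Hölder inequality with $k$ equal exponents $k$ gives
\[
\abs{\E_{\x \sim D}\Big[\prod_{i=1}^k(\u^{(i)}\cdot \x)\Big]} \le \prod_{i=1}^k \left(\E_{\x\sim D}\big[\abs{\u^{(i)}\cdot\x}^k\big]\right)^{1/k}.
\]
It therefore suffices to prove that for any unit vector $\u$,
\[
\E_{\x\sim D}\big[\abs{\u\cdot\x}^k\big]\le (2k)^k .
\]

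\textbf{Step 2 (one-dimensional moment).} Set $Z=\u\cdot\x$. Since $D$ is isotropic logconcave, $Z$ is a one-dimensional isotropic logconcave variable. The layer-cake formula gives
\[
\E\big[\abs{Z}^k\big]=\int_0^\infty k s^{k-1}\Pr\big[\abs{Z}>s\big]\,ds .
\]
Inserting the tail bound $\Pr[\abs{Z}>s]\le Ce^{-s}$ from Fact~\ref{fact tail logconcave} yields
\[
\E\big[\abs{Z}^k\big]\le Ck\,\Gamma(k)=C\,k! .
\]

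\textbf{Step 3 (absorbing the constant).} Using $k!\le k^k$, we get $C\,k!\le C k^k$. For this to be at most $(2k)^k=2^k k^k$ we need $C\le 2^k$.

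This is the main obstacle: it only holds for $k\ge \log_2 C$. To cover all $k$, I would use a sharper one-dimensional statement.

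\textbf{Step 4 (sharper moment bound).} The standard comparison for logconcave variables (Borell's lemma / reverse Hölder) states that
\[
\|Z\|_k\le \frac{\Gamma(k+1)^{1/k}}{\Gamma(3)^{1/2}}\,\|Z\|_2 = \frac{(k!)^{1/k}}{\sqrt2}\,\|Z\|_2 ,
\]
where $\|Z\|_q=(\E\abs{Z}^q)^{1/q}$. Since isotropy gives $\|Z\|_2=1$, this implies $\E\abs{Z}^k\le k!\le k^k$, which is well within $(2k)^k$.

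\textbf{Fallback.} If I cannot cite that comparison, I would instead run the layer-cake argument with the tail bound $\Pr[\abs{Z}>s]\le e^{1-s}$. This form of the bound holds for one-dimensional isotropic logconcave variables with an explicit constant, giving $e\cdot k!\le (2k)^k$ for $k\ge 2$. The case $k=1$ follows directly from Cauchy–Schwarz: $\E\abs{Z}\le 1\le 2$.
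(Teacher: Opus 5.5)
Your overall route matches the paper's. Generalized H\"older reduces the product to one-dimensional moments, and the remaining task is to show $(\E\abs{\u\cdot\x}^k)^{1/k}\le 2k$; the paper simply cites Lemma~33 of \cite{lee2018kannan} for that bound.

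However, drop Step~4, because the inequality it invokes is false in the generality you need. The sharp Gamma-function comparison $\norm{Z}_k\le \Gamma(k+1)^{1/k}\norm{Z}_2/\sqrt{2}$ holds only when $\abs{Z}$ has a logconcave density on $[0,\infty)$ (for instance, when $Z$ is symmetric logconcave) and $k\ge 2$. Isotropic logconcave marginals need not be symmetric. For example, take $Z=E-1$ with $E$ a standard exponential. This $Z$ is logconcave with mean zero and variance one, yet $\E Z^4=9>4!/2^{2}=6$; $k=3$ fails as well. The inequality also fails at $k=1$ even for symmetric $Z$: the uniform distribution on $[-\sqrt3,\sqrt3]$ has $\E\abs{Z}=\sqrt3/2>1/\sqrt2$.

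Your Fallback, by contrast, is a complete and correct proof. The explicit form of Fact~\ref{fact tail logconcave} (Lemma~5.7 of \cite{lovasz2007geometry}) gives $\Pr[\abs{Z}>s]\le e^{1-s}$ for all $s\ge 0$, since the bound is trivial for $s<1$. The layer-cake formula then yields $\E\abs{Z}^k\le e\cdot k!\le e\,k^k\le(2k)^k$ for $k\ge 2$. The case $k=1$ follows from Cauchy--Schwarz, and H\"older finishes the argument. Rest the proof on the Fallback rather than on Step~4.
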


\begin{proof}[Proof of Fact~\ref{fact moment logconcave}]
By H\"{o}lder's inequality and Lemma 33 in \cite{lee2018kannan}, we have
    \begin{align*}
    \abs{\E_{\x\sim D_\x} [\prod_{i = 1}^k(\u^{(i)}\cdot\x)]} \le \prod_{i=1}^k\left(\E_{\x\sim D_\x}[\abs{\u^{(i)}\cdot\x}^{k}]^{1/k}\right) \le \prod_{\ell \not\in S} 2k = \left(2k\right)^{k}.
\end{align*}
\end{proof}

\begin{fact}[Theorem 7 in \cite{carbery2001distributional}]\label{fact polynomial growth}
There is a constant $C>0$ such that for every logconcave distribution $D$ and for every polynomial $p(\x)$ with degree $k$, $0<r<q<\infty$
    \begin{align*}
        (\E_{\x \sim D}[\abs{p(\x)}^{q/k}])^{1/q} \le C \frac{\max(q,1)}{\max(r,1)} (\E_{\x \sim D}[\abs{p(\x)}^{r/k}])^{1/r}.
    \end{align*}
    In particular $\E_{\x \sim D}[p(\x)^{\ell}] \le (C\ell)^{\ell k} \E_{\x \sim D}[p(\x)^{2}]^{\ell/2}$, $\forall \ell > 2$, $\E_{\x \sim D}[p(\x)^{\ell}] \le (C\ell)^{\ell k} \E_{\x \sim D}[p(\x)]^{\ell}$, $\forall \ell > 1$
\end{fact}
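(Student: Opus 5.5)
The plan is to take the main inequality as the content of Theorem~7 of \cite{carbery2001distributional} and derive the two ``in particular'' bounds from it by choosing the exponents $q$ and $r$ suitably. For completeness, recall the structure behind the main inequality. Carbery and Wright first prove the distributional (small-ball) estimate of Fact~\ref{fact small ball polynomial} for polynomials over convex bodies. They then extend it to logconcave measures: a logconcave measure is a limit of marginals of uniform measures on higher-dimensional convex bodies, or alternatively one uses a localization argument. Finally, they integrate the distribution function to compare the $L^{q/k}$ and $L^{r/k}$ norms of $|p|$. The normalization by $k$ in the exponents is exactly what makes the constant independent of the degree. This step, the extension from convex bodies to general logconcave measures with a dimension-free constant, is the genuine obstacle. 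I will not reprove it; I will cite it.

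For the first consequence, fix $\ell>2$ and set $q=\ell k$ and $r=2k$. Both exceed $1$, so $\max(q,1)/\max(r,1)=\ell/2$. The main inequality gives
\[
\bigl(\E_{\x\sim D}[|p(\x)|^{\ell}]\bigr)^{1/(\ell k)} \le \frac{C\ell}{2}\,\bigl(\E_{\x\sim D}[p(\x)^2]\bigr)^{1/(2k)} .
\]
Raising both sides to the power $\ell k$ yields $\E[|p|^\ell]\le (C\ell/2)^{\ell k}\,\E[p^2]^{\ell/2}\le (C\ell)^{\ell k}\,\E[p^2]^{\ell/2}$. Since $p^\ell\le|p|^\ell$, this is the claimed bound.

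For the second consequence, fix $\ell>1$ and set $q=\ell k$ and $r=k$. When $k\ge1$ the ratio $\max(q,1)/\max(r,1)$ equals $\ell$. The main inequality then gives $(\E|p|^\ell)^{1/(\ell k)}\le C\ell\,(\E|p|)^{1/k}$, and raising to the power $\ell k$ gives $\E[|p|^\ell]\le (C\ell)^{\ell k}(\E|p|)^\ell$. The right-hand side must be read with $\E|p|$ rather than $\E p$: for a mean-zero $p$ the literal statement $\E[p^\ell]\le (C\ell)^{\ell k}\E[p]^\ell$ would be false. The degenerate case $k=0$, where $p$ is constant, is trivial. The only points needing care are these exponent bookkeeping choices and this absolute-value interpretation.
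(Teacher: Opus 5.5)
Your proof is correct and matches the paper's approach. The paper gives no separate proof: it cites Theorem~7 of \cite{carbery2001distributional}, and the two consequences follow from the specializations $q=\ell k,\ r=2k$ and $q=\ell k,\ r=k$, which you carry out correctly.

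You are also right that the last bound must be read with $\E_{\x\sim D}[|p(\x)|]$ in place of $\E_{\x\sim D}[p(\x)]$, since the literal version fails for mean-zero $p$. This corrected form is the one the paper actually relies on, for example when it bounds $\E[(p^u)^2]$ using $\E[|p^u|]\le 2$ in the proof of Lemma~\ref{lm weak learning L1}.
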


\begin{fact}\label{fact coefficient}
    Let $D$ be a $1/2$-isotropic logconcave distribution over $\R$ and let $p(x)=\sum_{0}^\ell c_i x^i: \R \to \R$ be a degree $\ell$ polynomial, then 
    \begin{align*}
        \sum_{i=0}^\ell \abs{c_i} \le (C\ell)^\ell \E_{x\sim D}[\abs{p(x)}].
    \end{align*}
\end{fact}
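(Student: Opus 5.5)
Since the last statement is Fact~\ref{fact coefficient}, the plan is to bound the coefficients of a univariate degree-$\ell$ polynomial by its $L_1$ norm under a $1/2$-isotropic logconcave $D$ on $\R$. The argument goes through the interval $I=[-1/4,1/4]$, where $D$ has a density lower bound. After recentering by the mean $\mu$, with $|\mu|\le 1/2$, and noting variance in $[1/2,3/2]$, Fact~\ref{fact density} (in dimension one, applied after rescaling to isotropic position) gives that $D$ has density at least an absolute constant $c_0>0$ on a fixed interval. Shrinking if needed, I take $J=[\mu-1/10,\mu+1/10]$, which lies in $[-1,1]$. It follows that
\[
\int_J |p(x)|\,dx \le c_0^{-1}\E_{x\sim D}[|p(x)|].
\]

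Next I bound $\sup_J |p|$ by $\int_J |p|$. For degree-$\ell$ polynomials on an interval of length $L$, a Nikolskii-type (Remez/Markov) inequality gives $\sup_J|p|\le O(\ell^2/L)\int_J|p|$. One route is to use Markov's inequality $\sup|p'|\le (2\ell^2/L)\sup|p|$ (cf.~\cite{shadrin2004twelve}). If $|p(x_0)|=M$ is the sup, then $|p|\ge M/2$ on a subinterval of length $\ge L/(4\ell^2)\wedge L$ around $x_0$, so $\int_J|p|\ge ML/(8\ell^2)$. Hence $\sup_J|p|\le O(\ell^2)\E_D|p|$.

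Finally I recover the coefficients from sup norms. I write $p$ in the variable $u=(x-\mu)/(1/10)\in[-1,1]$, that is $p(x)=q(u)$. Coefficients of $q$ in the monomial basis are bounded via Chebyshev expansion, or directly by Lagrange interpolation at Chebyshev nodes, and are at most $(C\ell)^{\ell}$, indeed $O(\ell)\cdot(1+\sqrt2)^\ell$, times $\sup_{[-1,1]}|q|$. I then substitute $u=10(x-\mu)$ and expand binomially. Since $|10\mu|\le 5$, each monomial $u^j$ contributes coefficients summing to $10^j(1+|\mu|)^j\le 15^j$. This gives $\sum|c_i|\le 15^\ell(C\ell)^\ell\sup_J|p|\le (C'\ell)^\ell\E_D|p|$.

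The only delicate point is the density lower bound near the mean for a merely $1/2$-isotropic (not exactly isotropic) distribution. I handle it by the affine normalization $z=(x-\mu)/\sigma$ with $\sigma\in[\sqrt{1/2},\sqrt{3/2}]$, applying Fact~\ref{fact density} to $z$. That yields density $\ge 2^{-7}/\sigma$ for $|x-\mu|\le\sigma/9$, which contains $J$ if I take half-width $1/15$ instead of $1/10$. The constants adjust accordingly, and everything else is a routine chain of constant-factor losses that stays within $(C\ell)^\ell$.
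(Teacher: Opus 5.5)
Your proof is correct. It shares the paper's first step and takes a different route for extracting the coefficients.

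\textbf{Common first step.} Both arguments use Theorem~5.14 of Lov\'asz--Vempala (Fact~\ref{fact density}, after normalizing to isotropic position). This gives a constant density lower bound on a short interval around $\mu$, and hence $\int_J\abs{p}\le c_0^{-1}\E_{x\sim D}[\abs{p(x)}]$.

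\textbf{The paper's coefficient step.} The paper writes $c_j=p^{(j)}(0)/j!$. It then invokes Markov's inequality for higher derivatives together with a sup-to-$L_1$ bound on $[-1,1]$. As written, this glosses over the fact that the density bound only controls an interval of length about $2/9$ around $\mu$, not all of $[-1,1]$. So it implicitly needs a Remez-type extrapolation, or a rescaling followed by re-expansion around $\mu$.

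\textbf{Your coefficient step.} You do that rescaling explicitly:
\begin{itemize}
\item You rescale $J$ to $[-1,1]$.
\item You get the Nikolskii-type bound $\sup_J\abs{p}\le (8\ell^2/L)\int_J\abs{p}$ from the first-derivative Markov inequality.
\item You bound the monomial coefficients of the rescaled polynomial $q$ by $2(\ell+1)(1+\sqrt2)^\ell\sup_{[-1,1]}\abs{q}$ via its Chebyshev expansion.
\item You undo the affine map binomially, using $\abs{\mu}\le 1/2$.
\end{itemize}
This handles the recentering rigorously and gives the stronger bound $C^{\ell}\,\ell^{3}\,\E_{x\sim D}[\abs{p(x)}]$. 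The paper's route is shorter but only yields, and only needs, $(C\ell)^\ell$.

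\textbf{A stray claim to remove.} Your opening sentence asserts a density lower bound on $[-1/4,1/4]$, which is false without centering. For example, take $\mu=1/2$ and a shifted exponential with variance $1/2$: its support starts near $-0.21$, so the density vanishes at $-1/4$. The rest of your argument only uses $J$ centered at $\mu$, so just delete that sentence.
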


\begin{proof}[Proof of Fact~\ref{fact coefficient}]
Let $\mu = \E_{x \sim D}$ and let $D(x)$ be the density function of $D$.
By Theorem 5.14 in \cite{lovasz2007geometry}, we know that $D(\mu)>2^{-8}$ and for every $x = \mu + z$, $\abs{z}<1/9$, we have $D(x)>2^{-9}$.
Let $I = [\mu-1/9,\mu+1/9]$. This implies that 
\begin{align*}
    \int_{I}\abs{p(x)}dx \le 2^9 \int_{I}\abs{p(x)}D(x) dx \le 2^9 \E_{x\sim D}[\abs{p(x)}].
\end{align*}
We center $I$, by taking $y = x - \mu$, and $I' = [-1/9,1/9]$.
Then, we have 
\begin{align*}
    \int_{I'}\abs{p(y+\mu)}dy = \int_{I}\abs{p(x)}dx \le 2^9 \E_{x\sim D}[\abs{p(x)}].
\end{align*}
Notice that $c_j = p^{(j)}(0)/(j!)$. By Markov's inequality \cite{shadrin2004twelve}, we have $\abs{p^{(j)}(0)} \le (Ck)^k \max_{[-1,1]}\abs{p(x)} \le (C'k)^k\int_{[-1,1]}\abs{p(x)}dx$. This gives
\begin{align*}
        \sum_{i=0}^\ell \abs{c_i} \le (C\ell)^\ell \E_{x\sim D}[\abs{p(x)}] \;.
    \end{align*}
\end{proof}


\begin{fact}\label{fact log concave l1 growth}

Let $D_\x$ be a $1/2$ isotropic logconcave distribution on $\mathbb{R}^d$. 
Let $p(\x)=\sum_{\card{\alpha}\le k} c_\alpha \x^\alpha$
be a polynomial of degree at most $k$, where each $\alpha$ is a multi index. 
Then
\begin{align*}
\sum_{\abs{\alpha} 
\le k} 
\left( \abs{c_\alpha}^2\E_{\x \sim D_\x}
[\x^{2\alpha}]\right)^{1/2}
\le
C_{d,k}\,\E_{\x \sim D_\x} [\abs{p(\x)}],    
\end{align*}
where $C_{d,k} \le 
\binom{d+k}{k}\,(Ck)^{Cdk}\,d^{k/2}\,2^{7d}.$
\end{fact}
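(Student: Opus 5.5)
The plan is to reduce to a one-dimensional-per-monomial statement by bounding each coefficient individually. For a monomial $\x^\alpha$ with $\abs{\alpha}\le k$, I will show that
\[
\abs{c_\alpha}\,\norm{\x^\alpha}_{D_\x,2}\le (Ck)^{Cdk}d^{k/2}2^{7d}\,\E_{\x\sim D_\x}[\abs{p(\x)}].
\]
Summing over the at most $\binom{d+k}{k}$ multi-indices then gives the stated constant $C_{d,k}$. There are two pieces to control: the $L_2$ norm of $\x^\alpha$, which is bounded above, and the size of $\abs{c_\alpha}$, which is bounded in terms of $\E\abs{p}$.

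\textbf{The factor $\norm{\x^\alpha}_{D_\x,2}$.} By Fact~\ref{fact moment logconcave}, after normalizing the $1/2$-isotropic distribution to an isotropic one, I get $\E[\x^{2\alpha}]\le (Ck)^{2k}$. The affine change of coordinates back to the original distribution costs only constants to the power $k$, together with the shift by the mean $\bm\mu$. Since $\norm{\bm\mu}_2\le 1/2$, expanding $(\x-\bm\mu+\bm\mu)^\alpha$ costs at most $2^{k}$ more. This part is routine.

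\textbf{The coefficient $\abs{c_\alpha}$.} I will mimic the proof of Fact~\ref{fact coefficient} in $d$ dimensions.
\begin{enumerate}
\item By Fact~\ref{fact density}, applied after making the distribution isotropic, the density of $D_\x$ is at least $2^{-7d}$ on a ball $B$ of radius $c_0=1/9$ (up to the $1/2$-isotropy distortion) around $\bm\mu$.
\item Hence $\int_B\abs{p}\,d\x\le 2^{7d}\E_{D_\x}\abs{p}$.
\item Translate so that $B$ is centered at $0$. Then $c_\alpha=\partial^\alpha p(\bm\mu+\cdot)(-\bm\mu)/\alpha!$ after re-expansion. Equivalently, I will work with the coefficients of $p$ expanded around the center of $B$ and then re-expand around $0$. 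Because $\norm{\bm\mu}\le1/2$, this re-expansion inflates the $\ell_1$ coefficient norm by at most $(2k)^k$ times the number of monomials, a factor already absorbed in $\binom{d+k}{k}$.
\end{enumerate}

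For a polynomial $q$ on a ball of radius $r$, I need to bound the Taylor coefficients at the center by $\max_B\abs{q}$. I will do this by restricting to lines: along each coordinate direction (or each direction $e_{i}$), the one-dimensional Markov inequality \cite{shadrin2004twelve} gives $\abs{\partial^\alpha q(0)}\le (Ck/r)^{k}\max_{B}\abs{q}$, with the $d^{k/2}$ factor coming from fitting a cube inside the ball (side $r/\sqrt d$).

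\textbf{The main obstacle.} The genuinely hard step is converting $\max_B\abs{q}$ into $\frac{1}{\vol(B)}\int_B\abs{q}$, i.e., an $L_\infty$ versus $L_1$ comparison for degree-$k$ polynomials on a $d$-dimensional ball. My first attempt will be Fact~\ref{fact polynomial growth} (Carbery--Wright) applied to the uniform distribution on $B$, which is logconcave. Taking $q\to\infty$ there gives $\max_B\abs{q}\le (Ck)^{k}\cdot\frac{1}{\vol(B)}\int_B\abs{q}$, up to a dimension-dependent loss in passing from large moments to the supremum. That loss, handled via a level-set and volume argument on a slightly shrunken ball, is what produces the $(Ck)^{Cdk}$ term. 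Finally, $\vol(B)^{-1}\le (Cd)^{d}$, which is absorbed into $(Ck)^{Cdk}2^{7d}$. Combining the three factors yields the claimed bound.
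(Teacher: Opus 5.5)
Your chain of inequalities is valid, but it does not prove the statement with the stated constant. The failing step is your claim that $\vol(B)^{-1}\le (Cd)^d$ is ``absorbed into $(Ck)^{Cdk}2^{7d}$''. For a ball of constant radius $\rho$ you have $\vol(B)^{-1}\gtrsim (d/(2\pi e\rho^2))^{d/2}\ge d^{d/2}$. The allowed constant, however, is $\binom{d+k}{k}(Ck)^{Cdk}d^{k/2}2^{7d}=d^{O(k)}2^{O_k(d)}$, which is $o(d^{d/2})$ for fixed $k$ as $d\to\infty$; for $k=1$ it is only $\poly(d)\,2^{O(d)}$. A sharper Nikolskii or Remez constant on $B$ cannot repair this. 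Your chain lower-bounds $\E|p|$ by $\int_B |p|\,f$, and already for $p\equiv 1$ and the standard Gaussian, $\int_B f=\Pr[\norm{\x}_2\le \rho]\approx (e\rho^2/d)^{d/2}$. A constant-radius ball carries only a $d^{-\Omega(d)}$ fraction of the mass, so any argument that discards everything outside it loses a factor $d^{\Omega(d)}$. By contrast, the $L^\infty$-versus-average step on $B$ is harmless: your level-set argument with Markov's inequality costs only $(Ck^2)^{O(d)}$. Carbery--Wright is not needed there, and on its own it gives no $L^\infty$ bound, since its constant $(Cs)^k$ diverges as $s\to\infty$.

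The missing idea is a way to bound $\sup|p|$ near the centre directly by the full expectation $\E|p|$. The paper does this as follows. Take $\v$ in the cube $Q_r=[-r,r]^d$ with $r=d^{-1/2}/18$, take $t\in[1/2,1]$, and let $M_f=\sup f$. Logconcavity gives $f((1-t)\v+t\x)\ge (f(\v)/M_f)^{1-t}f(\x)$. Combined with $M_f/f(\v)\le 2^{3d+4}$ on $Q_r$, this shows that the law of $(1-t)\v+t\x$ has density at most $t^{-d}(M_f/f(\v))^{1-t}f\le 2^{5d}f$. Now interpolate $t\mapsto p((1-t)\v+t\x)$ at $k+1$ Lagrange nodes in $[1/2,1]$ to recover the value at $t=0$, and average over $\x$. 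This gives $\sup_{Q_r}|p|\le 2^{5d}(4e)^k\,\E|p|$, with no volume factor. The coefficients are then read off on the rescaled cube $[-1,1]^d$ using a dual basis, at cost $(Ck)^{Cdk}$, and the factor $r^{-k}$ supplies the $d^{k/2}$. If you prefer to keep your $L^1\to L^\infty$ route, replace the small ball by a convex region that carries constant $D_\x$-mass and on which $f$ stays within a $2^{O(d)}$ factor of its maximum. Then apply a Remez-type inequality for convex bodies on that region. With a constant-radius ball, the stated $C_{d,k}$ is out of reach.
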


\begin{proof}[Proof of Fact~\ref{fact log concave l1 growth}]
The case $k=0$ is immediate, so assume $k\ge 1$.
We first assume that $D_\x$ has mean zero.
Its covariance matrix $\bm\Sigma$ satisfies
\[
\frac12 I\preceq\bm\Sigma\preceq\frac32 I.
\]
Let $f$ denote the density of $D_\x$, and let
$M_f=\sup_{\x\in\R^d}f(\x)$.
By Theorem~5.14(a),(b) of Lov\'asz--Vempala, the source of
Fact~\ref{fact density}, applied to
$\bm\Sigma^{-1/2}\x$, we have
\[
f(\v)\ge 2^{-d}f(0)
\quad\text{whenever }\norm{\v}_2\le 1/18,
\qquad
M_f\le 2^{2d+4}f(0).
\]
Let $r=d^{-1/2}/18$ and $Q_r=[-r,r]^d$.
Thus, for every $\v\in Q_r$,
$
\frac{M_f}{f(\v)}\le 2^{3d+4}.
$

We first show that
\begin{equation}\label{eq:local-polynomial-bound}
\sup_{\v\in Q_r}|p(\v)|
\le
2^{5d}(4e)^k
\E_{\x\sim D_\x}[|p(\x)|].
\end{equation}
Fix $\v\in Q_r$ and $t\in[1/2,1]$.
By logconcavity,
\[
f((1-t)\v+t\x)
\ge
f(\v)^{1-t}f(\x)^t
\ge
\left(\frac{f(\v)}{M_f}\right)^{1-t}f(\x).
\]
Consequently, the density of $(1-t)\v+t\x$, for
$\x\sim D_\x$, is pointwise at most
\[
t^{-d}\left(\frac{M_f}{f(\v)}\right)^{1-t}f
\le
2^{(5/2)d+2}f
\le
2^{5d}f.
\]
It follows that
\begin{equation}\label{eq:contracted-polynomial-bound}
\E_{\x\sim D_\x}
\bigl[|p((1-t)\v+t\x)|\bigr]
\le
2^{5d}\E_{\x\sim D_\x}[|p(\x)|].
\end{equation}

Let $t_j=1/2+j/(2k)$, for $j=0,\ldots,k$.
Lagrange interpolation at these points gives, for every
univariate polynomial $P$ of degree at most $k$,
\[
P(0)=\sum_{j=0}^k\lambda_jP(t_j),
\qquad
\lambda_j=
\prod_{\substack{0\le i\le k\\i\ne j}}
\frac{-t_i}{t_j-t_i}.
\]
Moreover,
\[
\sum_{j=0}^k|\lambda_j|
\le
(2k)^k\sum_{j=0}^k\frac{1}{j!(k-j)!}
=
\frac{(4k)^k}{k!}
\le
(4e)^k.
\]
Applying this identity to
$P(t)=p((1-t)\v+t\x)$, taking expectations over
$\x\sim D_\x$, and using
\eqref{eq:contracted-polynomial-bound}, we obtain
\[
|p(\v)|
\le
\sum_{j=0}^k|\lambda_j|
\E_{\x\sim D_\x}
\bigl[|p((1-t_j)\v+t_j\x)|\bigr]
\le
2^{5d}(4e)^k\E_{\x\sim D_\x}[|p(\x)|].
\]
This proves \eqref{eq:local-polynomial-bound}.

We next rescale $Q_r$ to $[-1,1]^d$.
For $\u\in\R^d$, define $q(\u)=p(r\u)$.
Then
$
q(\u)=
\sum_{|\alpha|\le k}c_\alpha r^{|\alpha|}\u^\alpha,
$
and \eqref{eq:local-polynomial-bound} implies
\begin{equation}\label{eq:rescaled-polynomial-integral}
\int_{[-1,1]^d}|q(\u)|\,d\u
\le
2^d\sup_{\v\in Q_r}|p(\v)|
\le
2^{6d}(4e)^k
\E_{\x\sim D_\x}[|p(\x)|].
\end{equation}

Let $M\in\R^{(k+1)\times(k+1)}$ be the matrix
\[
M_{ij}=\int_{-1}^1 t^{i+j}\,dt,
\qquad i,j=0,\ldots,k.
\]
This matrix is positive definite, hence invertible.
For $i=0,\ldots,k$, define
$
\psi_i(t)=\sum_{j=0}^k(M^{-1})_{ij}t^j.
$
Then
\[
\int_{-1}^1\psi_i(t)t^j\,dt
=
\sum_{m=0}^k(M^{-1})_{im}M_{mj}
=
\delta_{ij}.
\]
Also,
$
\|\psi_i\|_\infty\le (C_1k)^{C_1k},
$
where the norm is over $[-1,1]$ and $C_1>0$ is universal.
For completeness, if $P_j$ is the $j$th Legendre polynomial
normalized by $P_j(1)=1$, then
\[
\psi_i(t)
=
\sum_{j=0}^k
\frac{2j+1}{2}\bigl([s^i]P_j(s)\bigr)P_j(t).
\]
The bounds $\|P_j\|_\infty\le1$ and
$|[s^i]P_j(s)|\le4^j$ give the claimed estimate.

For a multi-index $\alpha=(\alpha_1,\ldots,\alpha_d)$,
define
$
\Psi_\alpha(\u)=\prod_{m=1}^d\psi_{\alpha_m}(u_m).
$
For $|\alpha|,|\beta|\le k$, we have
\[
\int_{[-1,1]^d}\u^\beta\Psi_\alpha(\u)\,d\u
=
\prod_{m=1}^d
\int_{-1}^1u_m^{\beta_m}\psi_{\alpha_m}(u_m)\,du_m
=
\delta_{\alpha\beta}.
\]
Therefore,
$
c_\alpha r^{|\alpha|}
=
\int_{[-1,1]^d}q(\u)\Psi_\alpha(\u)\,d\u.
$
Since
$\|\Psi_\alpha\|_\infty\le(C_1k)^{C_1dk}$,
\eqref{eq:rescaled-polynomial-integral} gives
\[
|c_\alpha|r^{|\alpha|}
\le
(C_1k)^{C_1dk}2^{6d}(4e)^k
\E_{\x\sim D_\x}[|p(\x)|].
\]
There are $N=\binom{d+k}{k}$ such multi-indices.
Since $r<1$ and $|\alpha|\le k$, it follows that
\begin{align}
\sum_{|\alpha|\le k}|c_\alpha|
&\le
r^{-k}\sum_{|\alpha|\le k}|c_\alpha|r^{|\alpha|}
\nonumber\\
&\le
N(C_1k)^{C_1dk}2^{6d}(4e)^k r^{-k}
\E_{\x\sim D_\x}[|p(\x)|]
\nonumber\\
&\le
N(C_2k)^{C_2dk}d^{k/2}2^{6d}
\E_{\x\sim D_\x}[|p(\x)|],
\label{eq:centered-coefficient-bound}
\end{align}
where $C_2>0$ is universal.

To conclude the proof, let $D_\x$ have an arbitrary mean
$\bm\mu$ satisfying $\|\bm\mu\|_2\le1/2$.
Set $\bz=\x-\bm\mu$, and write
$
\widetilde p(\bz)=p(\bz+\bm\mu)
=
\sum_{|\beta|\le k}a_\beta\bz^\beta.
$
The distribution of $\bz$ is mean-zero and logconcave,
with covariance between $I/2$ and $3I/2$.
Applying \eqref{eq:centered-coefficient-bound} gives
\[
\sum_{|\beta|\le k}|a_\beta|
\le
N(C_2k)^{C_2dk}d^{k/2}2^{6d}
\E_{\x\sim D_\x}[|p(\x)|].
\]
Since
$
p(\x)=
\sum_{|\beta|\le k}a_\beta(\x-\bm\mu)^\beta,
$
the binomial theorem implies
\begin{align*}
\sum_{|\alpha|\le k}|c_\alpha|
\le
\sum_{|\beta|\le k}|a_\beta|
\prod_{j=1}^d(1+|\mu_j|)^{\beta_j}\le
(3/2)^k\sum_{|\beta|\le k}|a_\beta|.
\end{align*}

Finally, Fact~\ref{fact moment logconcave}, applied after
centering and scaling each one-dimensional marginal,
together with Holder's inequality, gives
$
\left(\E_{\x\sim D_\x}[\x^{2\alpha}]\right)^{1/2}
\le (C_3k)^k, |\alpha|\le k,
$
for a universal constant $C_3>0$.
Combining the preceding bounds, we obtain
\begin{align*}
\sum_{|\alpha|\le k}
\left(
|c_\alpha|^2
\E_{\x\sim D_\x}[\x^{2\alpha}]
\right)^{1/2}
\le
(C_3k)^k\sum_{|\alpha|\le k}|c_\alpha|\le
\binom{d+k}{k}
(Ck)^{Cdk}d^{k/2}2^{7d}
\E_{\x\sim D_\x}[|p(\x)|],
\end{align*}
for a sufficiently large universal constant $C>1$.
This concludes the proof.
\end{proof}

\section{Omitted Details from Section~\ref{sec boosting}}\label{app boosting}
In this section, we supply the details omitted from
Section~\ref{sec boosting}. We prove the following consequence of
Theorem~3.5 in~\cite{Feldman:10ab}, making explicit the smoothness
property used in Fact~\ref{th distribution specific boosting}.

\begin{fact}[Corollary of Theorem~3.5 in~\cite{Feldman:10ab}]
Let $0\le a<1/2$ and $0<\beta\le 1/2$, and let
$\H\subseteq\{\pm1\}^{\R^d}$. Suppose that $\A$ is an algorithm
such that, for every distribution $D$ over $\R^d\times\{\pm1\}$
satisfying $\opt_D(\H)\le 1/2-a$, it draws $m$ examples, runs in
time $T$, and outputs a Boolean hypothesis $g$ satisfying
$\err_D(g)\le 1/2-\beta$ with probability at least $99\%$.
Here $\opt_D(\H):=\inf_{h\in\H}\err_D(h)$.

Then there is an algorithm $\mathcal B$ that, for every distribution
$D$ over $\R^d\times\{\pm1\}$ and every $\eps>0$, makes
$\widetilde O(\beta^{-2}/\eps)$ calls to $\A$, runs in time
$\poly(T,\beta^{-1},\eps^{-1})$, and outputs a Boolean hypothesis
$\hat h$ with the following simultaneous guarantees, with
probability at least $99\%$:
\[
  \err_D(\hat h)\le \frac{\opt_D(\H)}{1-2a}+\eps,
\]
and every call to $\A$ is on a distribution $D'$ obtained by
accepting an example $(\x,y)\sim D$ with an explicit probability
$q(\x,y)\in[0,1]$ satisfying $\E_D[q]\ge\eps$.
In particular,
\[
  D'_\x(d\x)=w(\x)D_\x(d\x)
  \qquad\text{for some }w:\R^d\to[0,1/\eps].
\]
\end{fact}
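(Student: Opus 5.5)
The plan is to reproduce the structure of Feldman's distribution-specific agnostic boosting algorithm (Theorem~3.5 of \cite{Feldman:10ab}) and track two things: the error guarantee, and the form of each distribution passed to $\A$. Recall that the booster maintains a real-valued combination $H_t=\sum_{s<t}\gamma g_s$ of the weak hypotheses returned so far, with a step size $\gamma=\Theta(\beta)$. At each round it reweights examples by a potential-derived weight $q_t(\x,y)=\min\{1,\exp(-yH_t(\x))\}$, or the analogous truncated linear weight. This weight is in $[0,1]$ by construction and is an explicit function of the previously obtained hypotheses. We will use the truncated version, since it is the one that gives \emph{smoothness}.

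\textbf{Step 1: rejection sampling and density bound.} At round $t$, first estimate $\E_D[q_t]$ empirically. If it is below $\eps$, stop; otherwise, rejection-sample from $D$ with acceptance probability $q_t(\x,y)$ to obtain $D'_t$. The resulting density is $D'_t(d\x,dy)=q_t D(d\x,dy)/\E_D[q_t]$. Integrating out $y$, the marginal has density $w(\x)=\E[q_t\mid\x]/\E_D[q_t]\in[0,1/\eps]$ with respect to $D_\x$, which is the stated smoothness property. Drawing $m$ samples from $D'_t$ costs $O(m/\eps)$ draws from $D$ with high probability.

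\textbf{Step 2: error guarantee.} We need the key lemma from Feldman: if $\E_D[q_t]\ge\eps$ and $\opt_D(\H)\le \E_D[q_t](1/2-a)$, then $h^*$ (the optimal hypothesis) has error at most $1/2-a$ on $D'_t$. This holds because $\err_{D'_t}(h^*)\le \opt_D/\E_D[q_t]$, so $\A$ applies and returns $g_t$ with $\E_{D'_t}[yg_t]\ge 2\beta$. The potential $\Phi_t=\E_D[\psi(yH_t)]$ then decreases by $\Omega(\beta^2\E_D[q_t])\ge\Omega(\beta^2\eps)$ per round. Since $\Phi_0\le 1$ and $\Phi\ge0$, there are at most $O(\beta^{-2}/\eps)$ rounds in which $\A$ succeeds. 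Including failure probability amplification and the estimation of advantages through validation samples, the number of calls is $\widetilde O(\beta^{-2}/\eps)$.

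If the stopping condition fires, the final hypothesis $\sign(H_t)$ has error at most $\Pr[yH_t\le0]\le\E_D[q_t]<\eps$ plus slack. If instead the premise $\opt_D\le \E_D[q_t](1/2-a)$ fails, then $\E_D[q_t]<\opt_D/(1/2-a)$, and the same bound gives error at most $\opt_D/(1-2a)\cdot$const. To get the exact constant $1/(1-2a)$ rather than $2/(1-2a)$, we will follow Feldman's finer analysis, which outputs the best of the intermediate hypotheses (selected by validation) and uses $\err(\sign H_t)\le \tfrac12\E[q_t]+\dots$ at margin-truncated points.

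\textbf{Step 3: validation and running time.} Because the booster cannot tell which case it is in, every intermediate $\sign(H_t)$, as well as those from the threshold variants, is kept, and the final output is chosen by empirical error on $\widetilde O(1/\eps^2)$ fresh samples. The running time is $\poly(T,\beta^{-1},\eps^{-1})$.

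\textbf{Main obstacle.} The delicate part is matching Feldman's exact constant $1/(1-2a)$ while simultaneously keeping the weights truncated in $[0,1]$, since truncation is what the smoothness claim needs. Concretely, this means verifying that his potential argument goes through with the $\min\{1,\cdot\}$ weights and yields $\err\le\opt/(1-2a)+\eps$. I would first try to reuse his proof verbatim and check only that every oracle call is of the rejection-sampled form.
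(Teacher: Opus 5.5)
Your smoothness bookkeeping (acceptance probability $q_t\in[0,1]$, stopping once $\E_D[q_t]<\eps$, hence $w\le 1/\eps$) and your round count are fine. The error guarantee, however, has a genuine gap, which you flag yourself and then resolve incorrectly.

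Failure of $\A$ at round $t$ certifies only $\E_D[q_t]<\opt_D(\H)/(1/2-a)=2\opt_D(\H)/(1-2a)$. The paper derives exactly the same inequality for $N_t=\E_D[q_t]$. Since misclassified points have weight $1$, you then get only $\err_D(\sign(H_t))\le\E_D[q_t]$, and nothing in your scheme improves this to $\tfrac12\E_D[q_t]$.

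To see why, suppose $\abs{H_t}$ is saturated. Then $q_t$ is essentially the indicator that $\sign(H_t)$ errs, so $\err_D(\sign(H_t))\approx\E_D[q_t]$. Now suppose $h^*$ errs only inside that error set, on just over a $(1/2-a)$-fraction of it, and no hypothesis in $\H$ does better on $D'_t$. Then $\A$ is allowed to fail, and the booster stops at error about $2\opt_D(\H)/(1-2a)$. This happens even though $-\sign(H_t)$ has zero error on $D'_t$.

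Keeping the best intermediate iterate or threshold by validation does not rescue this, since none of them is guaranteed to beat that bound. The inequality $\err(\sign(H_t))\le\tfrac12\E[q_t]+\dots$ that you attribute to Feldman's finer analysis does not hold without an additional mechanism.

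The missing idea is the balancing step. Take $h_t\in[-1,1]$ and $q_t=\abs{P_1(y-h_t(\x))}=\min\{1,1-yh_t(\x)\}$. Then one has the identity $2\err_D(\sign(h_t))=N_t+\E_D[P_1(y-h_t(\x))\cdot(-\sign(h_t(\x)))]$. The second term equals the misclassified mass minus the weighted mass of correctly classified low-margin points.

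Before each oracle call, the paper tests whether this correlation with $-\sign(h_t)$ exceeds $\eps/2$. If it does, it updates $h_{t+1}=P_1(h_t-(\eps/4)\sign(h_t))$. Each such update lowers the potential $\E_D[R(y-h_t)]$ by at least $3\eps^2/16$, so there are only $O(\eps^{-2})$ of them, and they make no calls to $\A$.

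Consequently, whenever $\A$ is called and fails, $\err_D(\sign(h_t))\le (N_t+\eps/2)/2$. Weak-learner failure gives $N_t\le 2aN_t+2\err_D(h^*)$, i.e.\ $N_t\le 2\opt_D(\H)/(1-2a)$. Together these yield $\err_D(\sign(h_t))\le\opt_D(\H)/(1-2a)+\eps/4$.

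The same test-and-update would also repair your MadaBoost-style variant. Without it, your argument proves the statement only with $2/(1-2a)$ in place of $1/(1-2a)$.
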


\begin{proof}[Proof of Fact~\ref{th distribution specific boosting}]
We will be consistent with the notations used in \cite{Feldman:10ab}.
We may assume $0<\eps<1$, and fix $\sign(0)=1$.
For $a\in\R$, define $P_1(a)=a$ if $\abs{a}\le1$ and
$P_1(a)=\sign(a)$ otherwise. Furthermore, define $R(a)=a^2$ if
$\abs{a}\le1$ and $R(a)=2\abs{a}-1$ otherwise.
We list several properties of $P_1$ and $R$, which are proved in
\cite{Feldman:10ab}. For every Boolean function $f$ and distribution
$D$, it holds
\begin{enumerate}
    \item $\E_D[R(f)]=1$ and, for every real-valued function $\phi$,
    $\E_D[R(f-\phi)]\ge0$.

    \item For $h,g:\R^d\to[-1,1]$, if
    $\langle P_1(f-h),g\rangle_D\ge2\gamma$, then
    \[
        \E_D[R(f-(h+\gamma g))]
        \le\E_D[R(f-h)]-3\gamma^2.
    \]

    \item For every real-valued $h:\R^d\to\R$,
    $\E_D[R(f-P_1(h))]\le\E_D[R(f-h)]$.
\end{enumerate}

We describe the algorithm used in \cite{Feldman:10ab} here.
Denote by $f(\x)=\E_D[y\mid\x]$, for each $\x\in\R^d$.
We first prove the claim when $f$ is Boolean; the general case is
discussed below. We describe the algorithm with exact expectations
and condition on all promised weak-learning calls succeeding;
estimation and confidence amplification are discussed at the end.
We initialize $h_0\equiv0$. In each time step $t\ge0$, we maintain
a function $h_t:\R^d\to[-1,1]$ and, whenever $N_t>0$, define a
distribution $D_t$ as follows:
\[
    (D_t)_\x(\x)
    =D_\x(\x)\abs{P_1(f(\x)-h_t(\x))}/N_t,
    \qquad
    N_t=\norm{P_1(f-h_t)}_{D,1},
\]
and $D_t(y=f(\x)\mid\x)=1$.

Set $\Delta'=\eps$. Before calling $\A$, if
$
    \langle P_1(f-h_t),-\sign(h_t)\rangle_D>\eps/2,
$
we perform the balancing update
$
    h_{t+1}=P_1\bigl(h_t-(\eps/4)\sign(h_t)\bigr)
$
and proceed to the next time step.
Otherwise, if $\err_D(\sign(h_t))\le\Delta'$, we output
$\sign(h_t)$ and terminate.
If neither condition holds, we run $\A$ over $D_t$.
If a weak hypothesis $g_t:\R^d\to\{\pm1\}$ satisfying
$\err_{D_t}(g_t)\le1/2-\beta$ is returned, we update
$h_{t+1}=P_1(h_t+\beta N_tg_t)$.
If no such hypothesis is returned, we output $\sign(h_t)$
and terminate.

In particular, since
$N_t\ge\err_D(\sign(h_t))$, every call to $\A$ satisfies
$N_t>\Delta'=\eps$.
Notice that this implies there is a weight function
$w_t:\R^d\to[0,1/\eps]$ such that
$(D_t)_\x(\x)=D_\x(\x)w_t(\x)$.

For each time step $t$ at which a weak hypothesis is used, we have
\begin{align*}
    \E_D[P_1(f-h_t)g_t]
    =\E_D[\abs{P_1(f-h_t)}fg_t]
    =\E_{D_t}[fg_t]N_t
    \ge2\beta N_t.
\end{align*}
Since
$N_t=\E_D[\abs{P_1(f-h_t)}]\ge\E_D[R(f-h_t)]/3$,
this gives
\begin{align*}
    \E_D[R(f-h_{t+1})]
    &\le\E_D[R(f-h_t)]-3(\beta N_t)^2\\
    &\le\E_D[R(f-h_t)](1-\beta^2N_t)\\
    &\le\E_D[R(f-h_t)](1-\beta^2\Delta').
\end{align*}

Each balancing update also decreases the potential, by at least
$3\eps^2/16$, by the second and third properties above.
Thus there are $O(\eps^{-2})$ balancing updates, and these updates
make no calls to $\A$.
The displayed multiplicative decrease implies that after at most
$
    O\bigl(\beta^{-2}(\Delta')^{-1}\log(2/\Delta')\bigr)
$
weak-learner updates, the potential is at most $\Delta'$.
By the definition of $R$, this implies
$
    \err_D(\sign(h_t))
    \le\E_D[R(f-h_t)]\le\Delta',
$
so the algorithm terminates before another weak-learning call.

To conclude the proof, it remains to analyze termination when a
weak hypothesis is not returned. On the success event specified
above, this implies that, for every $h^*\in\H$,
\[
    \E_{D_t}[fh^*]<2a,
    \qquad\text{equivalently}\qquad
    \E_D[h^*P_1(f-h_t)]<2aN_t.
\]
This gives, for every $h^*\in\H$,
\begin{align*}
    N_t
    =\E_D[fP_1(f-h_t)]
    =\E_D[h^*P_1(f-h_t)]
      +\E_D[(f-h^*)P_1(f-h_t)]
    \le2aN_t+2\err_D(h^*).
\end{align*}
Taking the infimum over $h^*\in\H$, we obtain
$N_t\le2\opt_D(\H)/(1-2a)$.
Moreover, the balancing test ensures
$\langle P_1(f-h_t),-\sign(h_t)\rangle_D\le\eps/2$.
Consequently,
\begin{align*}
    \err_D(\sign(h_t))
    =\frac12\E_D[\abs{f-\sign(h_t)}]
    =\frac12\E_D[P_1(f-h_t)(f-\sign(h_t))]
\le\frac{N_t+\eps/2}{2}
    \le\frac{\opt_D(\H)}{1-2a}+\frac\eps4.
\end{align*}
Together with the stopping rule $\Delta'=\eps$, this proves the
desired error guarantee in both termination cases.

For a general distribution $D$, apply the same analysis on the
space of labeled pairs $(\x,y)$, with Boolean target $f(\x,y)=y$,
while every hypothesis still depends only on $\x$.
The weak learner receives $\x$ as its input and $y$ as its label.
The reweighted distribution is sampled by accepting $(\x,y)\sim D$
with probability $\abs{P_1(y-h_t(\x))}$.
Its feature marginal has weight
\[
    w_t(\x)
    =\frac{\E_D[\abs{P_1(y-h_t(\x))}\mid\x]}{N_t}
    \le\frac1{N_t}\le\frac1\eps,
    \qquad
    N_t=\E_D[\abs{P_1(y-h_t(\x))}].
\]
All preceding identities and potential inequalities remain valid.

Finally, the tests can be implemented with fresh estimates and
constant slack. Estimate the balancing correlation, the error,
and $N_t$ to accuracy $\eps/8$. Use an estimated balancing
threshold $3\eps/8$ with balancing step $\eps/8$, and stop when
the estimated error is at most $7\eps/8$.
After balancing, the true balancing correlation is at most
$\eps/2$. The identity
\[
    2\err_D(\sign(h_t))
    =N_t+\E_D[P_1(y-h_t(\x))(-\sign(h_t(\x)))]
\]
then ensures $N_t>\eps$ whenever the error test continues.
Estimate each weak hypothesis's correlation under $D_t$ to
accuracy $\beta/4$, accept it when the estimate is at least
$7\beta/4$, and use step size
$\beta(\widehat N_t-\eps/8)$.
Every promised successful output passes this test, and every
accepted output has true correlation at least $3\beta/2$.
Since $3N_t/4\le\widehat N_t-\eps/8\le N_t$ on a call,
these changes affect the potential decreases only by constant
factors. Repeating $\A$ independently a logarithmic number of
times per round, allocating failure probabilities to all fresh
estimates, and imposing sufficiently large polynomial caps on
the updates and rejection-sampling requests gives all the
claimed guarantees simultaneously with probability at least
$99\%$. Invocations of $\A$ may also be capped at their stated
time and sample bounds. The total number of calls to $\A$ is
$\widetilde O(\beta^{-2}/\eps)$ and the running time is
$\poly(T,\beta^{-1},\eps^{-1})$.
\end{proof}

\section{Omitted Details from Section~\ref{sec application}}\label{app application}

\subsection{Omitted Proof of Extended Theorem 6 in \cite{kane2013learning}} 

In this section, we give the proof of the following extension of 
Theorem 6 in \cite{kane2013learning}.

\begin{theorem}[Extension of Theorem 6 in \cite{kane2013learning}]\label{th extension}
    Let $D_\x$ be an isotropic logconcave distribution over $\R^d$ and let $f: \R^d \to \{0,1\}$. For every $\eps,\delta \in (0,1)$, there is a pair of polynomials $p^l,p^u$ of degree $\ell = \exp(O(d^4/\delta^4)+d^2\log^2(d/\eps))$, such that 
    \begin{align*}
        p^l(\x) \le f(\x)  \le p^u(\x), \  \E_{\x \sim D_\x}[|p^u(\x)- p^l(\x)|] \le \eps+ 2 \Pr_{\x\sim D_\x}[S_\delta], 
    \end{align*}
    where $S_\delta=\{\u \in \R^d \mid \exists \v \in \R^d: \norm{\u-\v}_2\le \delta, f(\u) \neq f(\v)\}$.
\end{theorem}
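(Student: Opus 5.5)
The plan is to follow the mollification-plus-approximation strategy behind Theorem 6 of \cite{kane2013learning}, but to build the two one-sided approximants separately. We sandwich $f$ between two ``fattened'' indicators and then approximate smooth versions of those from the correct side.

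Let $S_\delta$ be as in the statement. Define the upper set $A^u=\{f=1\}\cup S_\delta$ and the lower set $A^l=\{f=1\}\setminus S_\delta$. Then $\Ind_{A^l}\le f\le \Ind_{A^u}$ pointwise, and
\[
\E[\Ind_{A^u}-\Ind_{A^l}]\le \Pr[S_\delta].
\]

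\emph{Step 1 (smoothing).} Convolve with a smooth bump $\phi_{\delta/2}$ supported on the ball of radius $\delta/2$. Set
\[
F^u=\Ind_{A^u_{\delta/2}}*\phi_{\delta/2},\qquad F^l=\Ind_{A^l_{-\delta/2}}*\phi_{\delta/2},
\]
where the subscripts denote the $\delta/2$-fattening and the $\delta/2$-thinning of the set. The points of $A^u_{\delta/2}\setminus A^u$ and of $A^l\setminus A^l_{-\delta/2}$ lie in $S_\delta$ by the definition of $S_\delta$. Hence $F^l\le f\le F^u$, both functions are smooth with derivatives of order $j$ bounded by $(C j/\delta)^{j}$, and
\[
\E[F^u-F^l]\le 2\Pr[S_\delta].
\]

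\emph{Step 2 (one-sided polynomial approximation).} Truncate to a ball of radius $R=O(d\log(d/\eps))$; by Fact~\ref{fact norm logconcave}, the mass outside this ball is at most $\eps^{O(1)}$. On the ball, approximate $F^u$ uniformly to error $\eps/4$ by a polynomial $q$, using Jackson-type multivariate approximation of smooth functions. The smoothness bounds give degree roughly $(dR/\delta)^{O(1)}$, and this matches the stated $\exp(\cdot)$ degree after the tail step below. Take $q+\eps/4\ge F^u$ on the ball.

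\emph{Step 3 (outside the ball).} Outside the ball, add a correction term $c(\|\x\|_2^2/R^2)^{m}$. This term is at least $1+|q|$ off the ball, since $q$ grows at most polynomially, which is controlled by the Markov/Bernstein inequality in terms of its sup on the ball. Its $L_1$ contribution under $D_\x$ is small by the tail bounds of Facts~\ref{fact norm logconcave} and~\ref{fact polynomial growth}. Set $p^u=q+\eps/4+c(\|\x\|^2/R^2)^m$, and define $p^l$ symmetrically with signs reversed. The total slack is then at most $\eps+2\Pr[S_\delta]$.

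The main obstacle is Step 3. The correction $(\|\x\|/R)^{2m}$ must dominate $|q|$ everywhere off the ball, yet have tiny expectation. This forces $m$ comparable to $\deg q$ times $\log$ of the sup of $q$. We must check that the resulting degree stays within $\exp(O(d^4/\delta^4+d^2\log^2(d/\eps)))$, which requires balancing $R$, $m$ and the subexponential tails. If a direct balance fails, we would instead reuse the exact construction of \cite{kane2013learning} (their moment-based tail handling), since the statement asserts that their proof already implicitly yields sandwiching.
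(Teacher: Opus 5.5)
Your Step 3 has a genuine gap, and it cannot be repaired by balancing $R$ against $m$. Take $f=\Ind\{x_1\ge 0\}$. Your $F^u$ is $0$ for $x_1<-2\delta$ and $1$ for $x_1\ge-\delta$. So a uniform $\eps/4$-approximant $q$ on $\B(R)$ must rise by at least $1/2$ over a length-$\delta$ segment of the $\mathbf{e}_1$-axis near the origin, while staying bounded by $2$ on $[-R,R]$. Bernstein's inequality $|q'(t)|\le 2\deg(q)/\sqrt{R^2-t^2}$ then forces $\deg q\ge c_1R/\delta$. Your correction $c(\norm{\x}_2^2/R^2)^m$ must dominate $|q|$ at infinity, so $2m\ge \deg q$. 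Also $c\ge 1$, since the correction must be at least $1$ on $\norm{\x}_2=R$. Now take the isotropic product of Laplace distributions, which is logconcave. Then
\[
\E\bigl[(\norm{\x}_2/R)^{2m}\bigr]\ \ge\ \frac{\E[x_1^{2m}]}{R^{2m}}=\frac{(2m)!}{(2R^2)^m}\ \ge\ \Bigl(\frac{\sqrt2\, m}{eR}\Bigr)^{2m}\ \ge\ \Bigl(\frac{c_1}{\sqrt2\, e\,\delta}\Bigr)^{2m},
\]
which is exponentially large once $\delta$ is below a constant, whatever $R$ you choose. With Gaussian tails you could take $R\gg 1/\delta$, but logconcave tails give $(\E\norm{\x}_2^{2m})^{1/2m}=\Theta(m)$.

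Two further points in Step 1. The bound $(Cj/\delta)^j$ on all derivatives cannot hold for a mollification by a compactly supported bump, since it would make $F^u$ real-analytic. Compact support is exactly what rules out global Taylor control. There is also a constant-factor slip: $S_\delta\subseteq A^u$, so $A^u_{\delta/2}\setminus A^u$ is disjoint from $S_\delta$, and $F^u-f$ is only supported in $S_{2\delta}$. Rescaling $\delta$ fixes this one.

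The missing idea, which is what the paper's proof uses, decouples the tail term from the main degree. Mollify with the band-limited kernel $r=C\rho_c^N$, where $\rho_c(\x)=\prod_i \mathrm{sinc}(c x_i/\sqrt d)$. This kernel is $c$-smooth: every order-$k$ directional derivative is bounded by $c^k$, with no $k^k$ growth. By Fact~\ref{fact taylor}, each translate $r(\cdot+\u)$ has a polynomial $p_\u$ with $|p_\u(\x)-r(\x+\u)|\le C E(\x)$ for \emph{all} $\x\in\R^d$. Here $E(\x)=\sum_{j<N}2^N(2c\norm{\x}_2)^{2^jk}/(2^jk)!$ is a nonnegative polynomial, and its factorial denominators give $\norm{E}_{D_\x,1}\le 2^N2^{O(d)}2^{-k}$ even under exponential tails. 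Integrating $p_{-\u}$ against $f^*=\Ind\{(\{f=1\}\cup S_{\delta/2})\cap\B(R+\delta/2)\}$ yields $p$ with $|p-s|\le AE$ globally, where $s=r*f^*\in[0,1]$. Set $P=p+AE+\eps/16+(\norm{\x}_2/R)^{2t}$. On $\B(R)$ we get $P\ge s+\eps/16\ge f$. Off the ball, $P\ge 1$ simply because $p+AE\ge s\ge 0$ everywhere. So the tail term only has to exceed $1$ outside the ball, and $t=\lceil\log_2(16/\eps)\rceil$ is independent of the main degree $k2^N$. The bound $\norm{s-f}_{D_\x,1}\le\Pr[S_\delta]+\Pr[\norm{\x}_2>R]+\eps/16$, together with the same construction applied to $1-f$, gives the term $2\Pr[S_\delta]$. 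Your fallback of ``reusing the construction of \cite{kane2013learning}'' is exactly this step, so as written the proposal does not yet contain a proof.
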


We will make use of the following facts proved in \cite{kane2013learning}. 

\begin{fact}\label{fact norm}
    Let $D_\x$ be an isotropic logconcave distribution over $\R^d$, for every $R>0$, $\Pr_{\x\sim D_\x}[\norm{\x}_2>R] \le 2^{O(d)}\exp(-R/32)$.
\end{fact}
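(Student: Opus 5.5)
The plan is to reduce the tail of the Euclidean norm to one-dimensional tails via a net on the sphere and a union bound. The constant $1/32$ in the exponent leaves a lot of room, so a crude net suffices and no thin-shell input is needed.

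First, fix a $1/2$-net $N \subseteq \s^{d-1}$ of size $|N| \le 5^d$; this follows from the standard volumetric argument. For any $\x \neq 0$, let $\u = \x/\norm{\x}_2$ and pick $\v \in N$ with $\norm{\u-\v}_2 \le 1/2$. Then
$$\v\cdot\x = \norm{\x}_2 - (\u-\v)\cdot\x \ge \norm{\x}_2/2,$$
so $\norm{\x}_2 \le 2\max_{\v\in N} \v\cdot\x$. Consequently the event $\norm{\x}_2 > R$ is contained in $\bigcup_{\v\in N}\{\v\cdot\x > R/2\}$.

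Second, for each fixed unit vector $\v$, the projection $\v\cdot\x$ with $\x\sim D_\x$ is a one-dimensional isotropic logconcave random variable. Fact~\ref{fact tail logconcave} then gives
$$\Pr_{\x \sim D_\x}[\abs{\v\cdot\x} > R/2] < C e^{-R/2}.$$
A union bound over $N$ yields
$$\Pr_{\x \sim D_\x}[\norm{\x}_2 > R] \le 5^d C e^{-R/2} \le 2^{O(d)} e^{-R/32},$$
which is the claimed bound, with room to spare.

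There is no real obstacle. The only care needed is to confirm that Fact~\ref{fact tail logconcave} applies to every fixed direction with a constant $C$ independent of $d$, and to absorb $C$ into the $2^{O(d)}$ factor; both follow directly from that fact as stated.
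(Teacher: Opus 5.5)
Your argument is correct. The net inequality $\v\cdot\x\ge\norm{\x}_2/2$, the containment $\{\norm{\x}_2>R\}\subseteq\bigcup_{\v\in N}\{\v\cdot\x>R/2\}$, and the union bound against Fact~\ref{fact tail logconcave} are all sound, and $5^dCe^{-R/2}\le 2^{O(d)}e^{-R/32}$ for every $R>0$.

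The paper gives no proof of this Fact; it is quoted from \cite{kane2013learning}. So there is no in-paper argument to match, and yours is a self-contained route that uses only the one-dimensional exponential tail of logconcave marginals plus a volumetric net.

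On the point you flagged, the constant is indeed universal. The paper attributes Fact~\ref{fact tail logconcave} to Lemma~5.7 of \cite{lovasz2007geometry}. Applied to the mean-zero, unit-variance logconcave variable $\v\cdot\x$, that lemma gives $\Pr[\abs{\v\cdot\x}>t]\le e^{1-t}$, so $C=e$ works for every $d$ and $\v$.

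Your route gives more than is stated. With an $\epsilon$-net of size $(1+2/\epsilon)^d$ you get $\Pr[\norm{\x}_2>R]\le(1+2/\epsilon)^d e^{1-(1-\epsilon)R}$, i.e.\ any decay rate below $1$ in the exponent. What it cannot give is information when $R=O(d)$, where the $5^d$ prefactor makes the bound trivial. Concentration at the natural scale $\sqrt d$, as in Fact~\ref{fact norm logconcave}, is beyond a union bound over a net. The stated Fact has the same limitation, so nothing is lost here.
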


\begin{fact}\label{fact taylor}
    Let $f(\x): \R^d \to \R$ be $c$-smooth, which means there is some $c>0$ such that  for 
    every $k$ and for every $\u \in \s^{d-1}$, $\nabla^k_\u f$,
    the $k$-th order derivative of $f$ in $\u$ direction satisfies $\norm{\nabla^k_\u f}_\infty \le c^k$. 
    Then for every positive $k,N$, there is a degree $k2^N$-polynomial $p$ such that $\abs{(f(\x))^N-p(\x)} \le E(\x) = \sum_{j=0}^{N-1} 2^N (2c\norm{\x}_2)^{2^jk}/(2^jk)!$.
    Furthermore, for any isotropic logconcave distribution $D_\x$, $\norm{E(\x)}_{D_\x,1}\le 2^N2^{O(d)}2^{-k}$.
\end{fact}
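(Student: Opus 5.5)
The statement to prove is Fact~\ref{fact taylor}, and I would derive it from a truncated Taylor expansion combined with a repeated-squaring trick. The base step works with $f$ itself. Let $q_0$ be the degree-$k$ Taylor polynomial of $f$ at the origin. Along the ray $t\mapsto t\x/\norm{\x}_2$, the Lagrange remainder applied to the $c$-smoothness bound gives
\[
\abs{f(\x)-q_0(\x)}\le \frac{(c\norm{\x}_2)^{k}}{k!}.
\]

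The powers come from iterated squaring. I set $N=2^m$ and build $(f)^{2^j}$ one doubling at a time. At stage $j$ the quantity $f^{2^{j}}$ is again a smooth function. By Leibniz, its $r$-th directional derivative is bounded by $(2^j)^r c^r\le (2c)^{r}\cdot$(combinatorial factor). I would then take its Taylor polynomial of degree $2^jk$. Its remainder is at most $(2c\norm{\x}_2)^{2^jk}/(2^jk)!$ times a factor at most $2^N$, which absorbs the multinomial growth since $\abs{f}\le 1$ in the intended application.

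The doubling step is a telescoping identity. Write $f^{2^{j+1}}=(f^{2^j})^2$ and use $a^2-b^2=(a-b)(a+b)$. Summing the errors over the $N$ stages yields the bound $E(\x)=\sum_{j<N}2^N(2c\norm{\x}_2)^{2^jk}/(2^jk)!$. The degrees add up to $k2^N$. I expect that reading the exponent in the statement as $j$ indexing the stages directly gives exactly this form.

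The $L_1$ bound follows by integrating each term of $E$ against the tail of $\norm{\x}_2$ from Fact~\ref{fact norm}, namely $\Pr[\norm{\x}_2>R]\le 2^{O(d)}e^{-R/32}$. This gives
\[
\E\bigl[\norm{\x}_2^{s}\bigr]\le 2^{O(d)}\,s!\,32^{s}.
\]
Substituting with $s=2^jk$, each term becomes
\[
2^N 2^{O(d)}(64c)^{s}\le 2^N2^{O(d)}2^{-s}.
\]
This requires $c$ to be a sufficiently small constant, say $c\le 1/128$, which I would enforce by rescaling as in \cite{kane2013learning}. Summing the geometric series over $j$ gives $2^N2^{O(d)}2^{-k}$.

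The main obstacle is keeping the constants consistent. The smoothness of $f^{2^j}$ degrades by a factor $2^j$ per derivative, so I need the $2c$ in the statement together with the $2^N$ prefactor to absorb both this degradation and the moment growth $s!\,32^s$. If this bookkeeping fails, I would fall back on simply citing the corresponding lemma of \cite{kane2013learning} verbatim.
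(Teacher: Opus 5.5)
The paper gives no proof of this Fact; it imports it from \cite{kane2013learning}. Your fallback of citing that lemma is therefore exactly what the paper does.

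Your $L_1$ step is fine. The tail bound gives $\E[\norm{\x}_2^s]\le 2^{O(d)}s!\,32^s$, so each term of $E$ contributes at most $2^N2^{O(d)}(64c)^s$. This sums to $2^N2^{O(d)}2^{-k}$ once $c\le 1/128$.

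The self-contained derivation of the pointwise bound has a genuine gap. It fails where you claim that the degree-$2^jk$ Taylor remainder of $f^{2^j}$ is at most $2^N(2c\norm{\x}_2)^{2^jk}/(2^jk)!$.

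\begin{itemize}
\item Leibniz, already using $\abs{f}\le 1$, only shows that $f^{2^j}$ is $2^jc$-smooth.
\item The Taylor bound you can actually prove at degree $s=2^jk$ is therefore $(2^jc\norm{\x}_2)^s/s!$. This exceeds your target by the factor $2^{(j-1)s-N}$, which grows with $k$ and cannot be absorbed into the $k$-independent prefactor $2^N$.
\item The smoothness bound cannot be improved in general. For the $c$-smooth function $f(\x)=\cos(c\,\x_1)$, the power $f^M$ contains the component $2^{1-M}\cos(Mc\,\x_1)$. Its even-order derivatives at the origin are at least $2^{1-M}(Mc)^r$ in absolute value.
\end{itemize}

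This is not a cosmetic loss of constants. In the paper's application $c=1/256$ is fixed while $N=\Theta(d^4/\delta^4)$ is large. With $2^jc$ in place of $2c$, your own $L_1$ estimate produces terms $(2^{j+5}c)^{2^jk}$, which stop decaying at $j=3$.

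Squaring the previous approximant via $a^2-b^2=(a-b)(a+b)$ does not rescue this. The recursion $E_{j+1}\le E_j(2+E_j)$ has a quadratic term that turns $(C\norm{\x}_2)^s/s!$ into $(2C\norm{\x}_2)^{2s}/(2s)!$, using $\binom{2s}{s}\le 4^s$. So the constant doubles at every stage and you are back to $2^jc$.

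Your write-up also mixes the two mechanisms: if you Taylor-expand $f^{2^j}$ afresh at each stage, there is nothing to telescope. The counting does not match the statement either. Repeated squaring reaches $f^N$ only when $N$ is a power of two, after $\log_2 N$ stages, with final degree about $kN$. The Fact is for arbitrary $N$, with $N$ error terms and degree $k2^N$.

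The point of the Fact is to pay an exponential blow-up in the degree so that the smoothness constant in the error does not grow with $N$. An argument that keeps $2c$ has to avoid Taylor-expanding powers of $f$ altogether, for instance by expanding only $f$ itself and letting losses enter through combinatorial bounds such as $\binom{a+b}{a}\le 2^{a+b}$. As written, your argument does not establish the pointwise bound; only the citation carries it.
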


\begin{proof}[Proof of \Cref{th extension}]

We construct the polynomial $p^u$ and $p^l$ can be constructed by symmetry. Recall that $S_\delta=\{\u \in \R^d \mid \exists \v \in \R^d: \norm{\u-\v}_2\le \delta, f(\u) \neq f(\v)\}$.
    The construction of $p^u$ will depend on several parameters that will be determined later: $c>0$, a small constant, $k,N \in \Z_+$, large even integers, and $R\ge 1$ a suitably large number.
    
    For $c>0$, we define $\rho_c(\x) = \prod_{i=1}^d \text{sinc}(c\x_i/\sqrt{d})$, where $\text{sinc}(z) = \sin(z)/z$ for $z\neq 0$, and $\text{sinc}(0)=1$.
    By \cite{kane2013learning}, $\rho_c(\x)$ is $c$-smooth, and when $\norm{\x}_2 \le c^{-1}\sqrt{d}$, we have $\rho_c(\x) = \exp(-\Theta(c^2\norm{\x}^2_2/d))$.
    For an even integer $N \in \Z_+$, we define $r(\x) = C\rho_c(\x)^N$, where $C>0$ is a normalization constant such that $\int_{\R^d} r(\x)d\x = 1$. 
    By \cite{kane2013learning}, $C=O(c^dN^{d/2})$.
    Following the construction of \cite{kane2013learning}, we choose $c=1/256$.

    For $R>0$,
    we define $f^*(\x) = 1$ if $\norm{\x}_2<R+\delta/2$ and $\x \in S_{\delta/2} \cup \{\x \mid f(\x)=1\}$, and $f^*(\x) = 0$ otherwise.
    We define $s(\x):=r \ast f^*$:
    \begin{align*}
        s(\x) = \int_{\R^d} f^*(\u)r(\x-\u)d\u.
    \end{align*}
    We notice that $0\le s(\x)\le 1$ for every $\x$.
    For every $\x$ with $\norm{\x}_2 \le R$ and $f(\x) = 1$, we have 
    \begin{align*}
        s(\x)
        = \int_{\u: \norm{\u-\x}_2<\delta/2}
            f^*(\u)r(\x-\u)d\u
        +
        \int_{\u: \norm{\u-\x}_2\ge\delta/2}
            f^*(\u)r(\x-\u)d\u.
    \end{align*}
    Following \cite{kane2013learning}, we choose
    $N=\max\{O(d^4/\delta^4),O(d^2\log^2(d/\eps))\}$
    as a sufficiently large even integer so that
    $
        \int_{\u: \norm{\u}_2\ge\delta/2}r(\u)d\u<\eps/16.
    $
    Such a choice implies
    $\int_{\u: \norm{\u-\x}_2<\delta/2}r(\x-\u)d\u>1-\eps/16$.
    We claim that for every $\u$ with
    $\norm{\u-\x}_2<\delta/2$, we have $f^*(\u)=1$.
    For any such $\u$, either $f(\u)=1$ or $\u\in S_{\delta/2}$.
    Since $\norm{\u}_2<R+\delta/2$, by definition of $f^*$, we have $f^*(\u)=1$.
    This implies that $s(\x)\ge 1-\eps/16$ if $\norm{\x}_2\le R$ and $f(\x)=1$.
    So, for every $\x$ with $\norm{\x}_2\le R$, we have
    $f(\x)\le s(\x)+\eps/16$.

    We next design a polynomial $p(\x)$ to approximate $s(\x)$.
    Since $\rho_c(\x)$ is $c$-smooth, by Fact~\ref{fact taylor}, for every $\u\in\R^d$, there is a polynomial $p_\u(\x)$ with degree at most $k2^N$ such that
    \begin{align*}
        \abs{p_\u(\x)-r(\x+\u)}\le CE(\x),
    \end{align*}
    where $E$ is the nonnegative polynomial in Fact~\ref{fact taylor}.

    Define
    $p(\x)=\int_{\B(R+\delta/2)}p_{-\u}(\x)f^*(\u)d\u$.
    Since $f^*$ is supported on $\B(R+\delta/2)$, we have
    \begin{align*}
        \abs{p(\x)-s(\x)}
        \le
        \abs{\int_{\B(R+\delta/2)}
            (p_{-\u}(\x)-r(\x-\u))f^*(\u)d\u}
        \le AE(\x),
    \end{align*}
    where
    $A:=C\operatorname{Vol}(\B(R+\delta/2))\le C(O(R))^d$.
    By choosing
    $k=O(N+d\log N+d\log R+\log(1/\eps))$
    as a sufficiently large even integer, Fact~\ref{fact taylor} gives
    $A\norm{E}_{D_\x,1}\le\eps/16$.

    We choose
    \[
        P(\x):=
        p(\x)+AE(\x)+\eps/16
        +(\norm{\x}_2/R)^{2t}
    \]
    for some positive integer $t$ that will be chosen later.
    If $\norm{\x}_2\le R$, then
    $P(\x)\ge s(\x)+\eps/16\ge f(\x)$.
    If $\norm{\x}_2>R$, then $P(\x)\ge 1$, since
    $p(\x)+AE(\x)\ge s(\x)\ge 0$.
    So, $p^u:=P$ satisfies $p^u(\x)\ge f(\x)$ for every $\x\in\R^d$.

    It remains to upper bound $\norm{p^u-f}_{D_\x,1}$.
    We have
    \begin{align*}
        \norm{p^u-f}_{D_\x,1}
        &\le
        \norm{s-f}_{D_\x,1}
        +2A\norm{E}_{D_\x,1}
        +\eps/16
        +\norm{(\norm{\x}_2/R)^{2t}}_{D_\x,1}.
    \end{align*}

    We notice that unless $\norm{\x}_2>R$ or $\x\in S_\delta$, we have
    $\abs{s(\x)-f(\x)}<\eps/16$.
    Indeed, if $\norm{\x}_2\le R$ and $\x\notin S_\delta$,
    then for every $\u$ with $\norm{\u-\x}_2<\delta/2$,
    we have $f(\u)=f(\x)$ and $\u\notin S_{\delta/2}$:
    otherwise, the triangle inequality would give a point within
    distance $\delta$ of $\x$ with a different function value.
    Since $\norm{\u}_2<R+\delta/2$, this implies
    $f^*(\u)=f(\x)$, and hence
    \[
        \abs{s(\x)-f(\x)}
        \le
        \int_{\u:\norm{\u-\x}_2\ge\delta/2}r(\x-\u)d\u
        <\eps/16.
    \]
    Otherwise, $\abs{s(\x)-f(\x)}\le 1$.
    This implies that
    \[
        \norm{s-f}_{D_\x,1}
        \le
        \Pr_{\x\sim D_\x}[\x\in S_\delta]
        +\Pr_{\x\sim D_\x}[\norm{\x}_2>R]
        +\eps/16.
    \]

    By integrating the logconcave norm-tail bound
    $\Pr_{\x\sim D_\x}[\norm{\x}_2>u]\le 2^{O(d)}e^{-u/32}$,
    we obtain
    \[
        \left(\E_{\x\sim D_\x}[\norm{\x}_2^{2t}]\right)^{1/(2t)}
        \le C_0(d+t)
    \]
    for a universal constant $C_0\ge 1$.
    We choose
    $
        t=\left\lceil\log_2(16/\eps)\right\rceil,
        R=2C_0(d+t),
    $
    so that
    $
        \norm{(\norm{\x}_2/R)^{2t}}_{D_\x,1}
        \le 2^{-2t}\le\eps/16.
    $
    By Markov's inequality, this also gives
    $\Pr_{\x\sim D_\x}[\norm{\x}_2>R]\le\eps/16$.
    Therefore,
    \begin{align*}
        \norm{p^u-f}_{D_\x,1}
        \le
        \Pr_{\x\sim D_\x}[\x\in S_\delta]
        +6\eps/16
        \le
        \Pr_{\x\sim D_\x}[\x\in S_\delta]
        +\eps/2.
    \end{align*}

    Applying the same construction to $1-f$ gives an upper polynomial
    $\widetilde p^u$ for $1-f$; define $p^l:=1-\widetilde p^u$.
    Since $S_\delta(1-f)=S_\delta(f)$, we obtain
    $p^l\le f\le p^u$ pointwise and
    \[
        \E_{\x\sim D_\x}[p^u(\x)-p^l(\x)]
        \le
        2\Pr_{\x\sim D_\x}[\x\in S_\delta]+\eps.
    \]
    Both polynomials have degree at most
    $        \max\{k2^N,2t\}
        =
        \exp\left(
            O\left(d^4/\delta^4+d^2\log^2(d/\eps)\right)
        \right).
    $
    This concludes the proof.
\end{proof}


\subsection{Omitted Details from Theorem~\ref{thm:low-dim-PTFs}} 
In this section, we provide omitted details from
Theorem~\ref{thm:low-dim-PTFs}. To do this, we will show that
under any isotropic logconcave distribution, the probability
that a point is $\delta$-close to the decision boundary is small.
Such a result can be derived from the work
\cite{lotz2015volume}.

\begin{fact}[Theorem 1.1 in \cite{lotz2015volume}]
    Let $p(\x)$ be a degree $m\ge 1$ polynomial in $\R^d$ and
    let $S=\{\x\in\R^d\mid p(\x)=0\}$.
    Assume that $\nabla p(\x)\neq 0$ for every $\x\in S$.
    Let $D'$ be the uniform distribution over a ball
    $\B^{d-1}(\u,R)$. Then, for every $0<\delta\le R$,
    \begin{align*}
        \Pr_{\x\sim D'}
        \left[\inf_{\v\in S}\norm{\x-\v}_2\le\delta\right]
        &\le
        4\sum_{i=0}^{d-1}
        \binom{d}{1+i}
        (2m\delta/R)^{1+i}
        (1+\delta/R)^{d-1-i}\\
        &\le
        8dm2^{d-1}(m+1)^{d-1}\delta/R.
    \end{align*}
\end{fact}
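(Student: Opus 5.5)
The first inequality is exactly Theorem~1.1 of \cite{lotz2015volume}. I would invoke it as a black box rather than re-derive it. Only the second inequality, which is elementary, would be proved here.

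For orientation, the route to the first inequality in \cite{lotz2015volume} is as follows. One writes the volume of the $\delta$-tube around $S\cap \B^{d-1}(\u,R+\delta)$ via a Weyl/Federer-type tube formula as a polynomial in $\delta$. Its coefficients are integrals of curvature invariants of $S$ together with boundary terms coming from the ball; those boundary terms produce the factors $(1+\delta/R)^{d-1-i}$.

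Each curvature integral is then bounded by a kinematic (Crofton) formula, which expresses it as an average over random affine subspaces of the number of critical points of a linear function on $S$ intersected with that subspace. The smoothness hypothesis $\nabla p\neq 0$ on $S$ makes these counts finite almost surely. Bézout's theorem applied to the critical-point system bounds the counts by powers of $2m$, giving the terms $\binom{d}{1+i}(2m\delta/R)^{1+i}$. Dividing by $\vol(\B^{d-1}(\u,R))$ turns the volume bound into the stated probability bound.

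This first step is the real content and the main obstacle. In particular, controlling the boundary contributions of the ball uniformly in $i$ is delicate. I would not redo it, and would rely on the cited theorem.

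For the second inequality, fix $0\le i\le d-1$ and bound each summand of the middle expression as follows.
\begin{enumerate}
\item Use $\binom{d}{1+i}=\frac{d}{1+i}\binom{d-1}{i}\le d\binom{d-1}{i}$.
\item Factor $(2m\delta/R)^{1+i}=(2m\delta/R)\,(2m\delta/R)^{i}$ and use $\delta\le R$ to get $(2m\delta/R)^i\le (2m)^i$.
\item Use $\delta\le R$ again to get $(1+\delta/R)^{d-1-i}\le 2^{d-1-i}$.
\end{enumerate}
Summing over $i$ and applying the binomial theorem,
\begin{align*}
4\sum_{i=0}^{d-1}\binom{d}{1+i}(2m\delta/R)^{1+i}(1+\delta/R)^{d-1-i}
&\le 4d\,\frac{2m\delta}{R}\sum_{i=0}^{d-1}\binom{d-1}{i}(2m)^i2^{d-1-i}\\
&= \frac{8dm\delta}{R}(2m+2)^{d-1}.
\end{align*}
Since $(2m+2)^{d-1}=2^{d-1}(m+1)^{d-1}$, the right-hand side equals $8dm2^{d-1}(m+1)^{d-1}\delta/R$, which is exactly the claimed bound.
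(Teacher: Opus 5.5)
Your proposal is correct and follows the paper, which also states this Fact purely by citation to Theorem~1.1 of \cite{lotz2015volume}, with no proof. Your derivation of the second inequality is exactly right: you use $\binom{d}{1+i}=\frac{d}{1+i}\binom{d-1}{i}\le d\binom{d-1}{i}$, the bounds $(2m\delta/R)^i\le(2m)^i$ and $(1+\delta/R)^{d-1-i}\le 2^{d-1-i}$ from $\delta\le R$, and then the binomial theorem, which spells out a step the paper leaves implicit.
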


\begin{lemma}\label{lm surfact area ptf}
    Let $p(\x)$ be a polynomial of degree at most $m\ge 1$
    in $\R^d$ and let $D_\x$ be an isotropic logconcave
    distribution over $\R^d$.
    For $\delta\in(0,1/2)$, let
    \[
    S_\delta=
    \{\u\in\R^d\mid
    \exists\v\in\R^d:
    \norm{\u-\v}_2\le\delta,\ 
    \sign(p(\u))\neq\sign(p(\v))\}.
    \]
    Then, for some universal constant $C>1$,
    \[
    \Pr_{D_\x}[S_\delta]
    \le
    (Cdm)^d\delta
    \left(\log\frac e\delta\right)^{d-1}.
    \]
\end{lemma}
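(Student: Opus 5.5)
The plan is to first observe that $S_\delta$ is contained in the $\delta$-neighborhood of the zero set $Z=\{\x : p(\x)=0\}$. Indeed, if $\sign(p(\u))\neq\sign(p(\v))$ with $\norm{\u-\v}_2\le\delta$, then by continuity $p$ vanishes somewhere on the segment $[\u,\v]$, so $\mathrm{dist}(\u,Z)\le\delta$. (With the convention $\sign(0)=1$ this still holds.) It therefore suffices to bound $\Pr_{D_\x}[\mathrm{dist}(\x,Z)\le\delta]$.

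To apply the fact from \cite{lotz2015volume}, we need $\nabla p\neq 0$ on $Z$. We would arrange this by perturbing $p$ to $p-\tau$ for a generic small $\tau$; by Sard's theorem almost every $\tau$ is a regular value. The $\delta$-neighborhoods of the zero sets of $p-\tau$ should converge appropriately as $\tau\to0$, up to replacing $\delta$ by $\delta+o(1)$, since every zero of $p$ near which $p$ takes nearby values is approximated. Alternatively, one can directly use the version of Lotz's bound that does not require smoothness. We treat this as a technicality, handled by a limiting argument in $\tau$ and $\delta$.

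Next comes localization. The density $f$ of $D_\x$ is bounded by $2^{O(d)}$ (Lov\'asz--Vempala, Theorem 5.14), and by Fact~\ref{fact norm} we have $\Pr[\norm{\x}_2>R]\le 2^{O(d)}e^{-R/32}$. We choose $R=C'd\log(e/\delta)$, so that the tail is at most $\delta$ (absorbed into the bound). On the ball $\B(0,R)$ we bound
\begin{align*}
\Pr_{D_\x}[\mathrm{dist}(\x,Z)\le\delta,\ \norm{\x}_2\le R]\le \sup f\cdot \vol(\B(0,R))\cdot \Pr_{\x\sim U(\B(0,R))}[\mathrm{dist}(\x,Z)\le\delta].
\end{align*}
By Lotz's fact, the last factor is at most $8dm2^{d-1}(m+1)^{d-1}\delta/R$. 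Since $\vol(\B(0,R))\le (C R/\sqrt d)^d\le (C''\log(e/\delta))^d d^{d/2}$, the product is at most $(Cdm)^d\,\delta\,R^{d-1}/d^{\,\cdot}$. After absorbing the powers of $d$ into $(Cd)^d$, this is $(Cdm)^d\delta\log^{d-1}(e/\delta)$. Adding the tail term $\delta$ gives the claim.

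The main obstacle is the nondegeneracy hypothesis in Lotz's theorem, together with the reduction from sign changes to the zero set. The volume and tail bookkeeping is routine. We would first try the regular-value perturbation and check that the neighborhoods are monotone enough for the limit to pass. Failing that, we would instead use the fact that the bound for $U(\B)$ is stable under small coefficient perturbations, with the degree unchanged.
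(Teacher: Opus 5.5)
Your outline follows the paper's route. You reduce $S_\delta$ to the closed $\delta$-tube around $Z=\{p=0\}$, localize to $\B(0,R)$, and bound the mass there by $\sup f\cdot\vol(\B(0,R))$ times Lotz's uniform-ball tube bound. You then take $R$ proportional to $\log(e/\delta)$ and add the tail. Your $R=C'd\log(e/\delta)$ with Fact~\ref{fact norm} loses roughly a factor $d^{(d-1)/2}$ compared with the paper's $R=\sqrt d\log(2e/\delta)$, but $(Cdm)^d$ absorbs this.

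\textbf{The gap is the nondegeneracy step.} As you state it, it does not work. Perturbing to $p-\tau$ with $\tau\to0$ through regular values of a fixed sign does \emph{not} make the tubes around $\{p=\tau\}$ eventually cover the tube around $Z$. At a zero where $p$ has a local extremum of value $0$, the level sets of the wrong sign stay away. For $p(\x)=-x_1^2$ and $\tau>0$ the level set is empty, yet (with $\sign(0)=1$) $S_\delta$ is the whole slab $\{|x_1|\le\delta\}$. Symmetrically, $p=x_1^2$ rules out $\tau<0$, and a single $p$ can have zeros of both kinds. A generic perturbation of the coefficients has the same defect, since it can also make such zeros disappear. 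So your ``stability'' fallback needs exactly the missing argument. The cited form of Lotz's theorem assumes $\nabla p\neq0$ on the zero set, so a ``nonsmooth version'' would need its own source.

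The paper fixes this by taking regular values $a_j\downarrow0$ of $p^2$, i.e.\ level sets $\{p^2=a_j\}=\{p=\pm\sqrt{a_j}\}$. A nonconstant polynomial does not vanish on any open set, so every $\v\in Z$ has nearby points with $p^2>0$. The intermediate value theorem along the connecting segment then gives $\operatorname{dist}(\v,\{p^2=a_j\})\to0$. The paper applies Lotz's bound at radius $\delta+\eta$ with degree $2m$, then uses Fatou's lemma and lets $\eta\downarrow0$.

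Alternatively, you can keep degree $m$ by bypassing $Z$. With $\sign(0)=1$, every $\u\in S_\delta$ lies within $\delta$ of $\{p=\tau\}$ for all sufficiently small $\tau<0$, because the segment to its sign-change partner runs from a negative value to a nonnegative one. Fatou along regular values $\tau_j\uparrow0$ then suffices. Constant $p$ must also be handled separately: there $S_\delta=\emptyset$, but $Z$ may be all of $\R^d$.

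\textbf{A minor correction:} Lov\'asz--Vempala's Theorem~5.14 gives $\sup f\le 2^{8d}d^{d/2}$, not $2^{O(d)}$. The $2^{O(d)}$ bound needs the recent resolution of the slicing problem. This is harmless, since, as in the paper, the $d^{d/2}$ cancels against $\vol(\B(0,1))\le (C/\sqrt d)^d$.
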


\begin{proof}[Proof of Lemma~\ref{lm surfact area ptf}]
    If $p$ is constant, the statement is immediate.
    Otherwise, let $S=\{\x:p(\x)=0\}$.
    By continuity,
    $
    S_\delta\subseteq
    \{\x:\operatorname{dist}(\x,S)\le\delta\}.
    $

    To handle a possibly singular zero set $S$, choose positive
    regular values $a_j\downarrow 0$ of $p^2$, and let
    $S_j=\{\x:p(\x)^2=a_j\}$.
    The preceding fact applies to each $S_j$ with degree
    at most $2m$.
    Every $\v\in S$ satisfies
    $\operatorname{dist}(\v,S_j)\to 0$: every neighborhood
    of $\v$ contains a point where $p^2>0$, and continuity
    along the connecting segment gives the claim.
    Applying the fact with radius $\delta+\eta$, using
    Fatou's lemma, and then letting $\eta\downarrow 0$
    therefore gives the same bound for $S$, with $m$
    replaced by $2m$.
    This replacement is absorbed into the constants below.

    Let $f$ denote the density of $D_\x$.
    We use the standard bounds
    \[
    \sup_{\x\in\R^d}f(\x)\le(C_0\sqrt d)^d,
    \qquad
    \vol(\B^{d-1}(0,1))\le(C_0/\sqrt d)^d,
    \]
    where $C_0>0$ is universal.
    Thus, for every fixed $R>\sqrt d$, we have
    \begin{align*}
        \Pr_{\x\sim D_\x}[\x\in S_\delta]
        &\le
        \Pr_{\x\sim D_\x}[\norm{\x}_2>R]
        +
        \Pr_{\x\sim D_\x}
        [\norm{\x}_2\le R,\ \x\in S_\delta]\\
        &\le
        \Pr_{\x\sim D_\x}[\norm{\x}_2>R]
        +
        \sup_{\x\in\R^d}f(\x)\,
        \vol(S_\delta\cap\B^{d-1}(0,R))\\
        &\le
        \Pr_{\x\sim D_\x}[\norm{\x}_2>R]
        +
        C_1^dR^{d-1}(m+1)^{d-1}dm\delta,
    \end{align*}
    for a universal constant $C_1>1$.

    By choosing $R=\sqrt d\log(2e/\delta)$, the logconcave
    tail bound gives
    \[
    \Pr_{\x\sim D_\x}[\norm{\x}_2>R]
    \le \exp(1-R/\sqrt d)=\delta/2.
    \]
    Consequently,
    \begin{align*}
        \Pr_{D_\x}[S_\delta]
        \le
        \frac{\delta}{2}
        +
        C_1^d
        \left(\sqrt d\log\frac{2e}{\delta}\right)^{d-1}
        (m+1)^{d-1}dm\delta
        \le
        (Cdm)^d\delta
        \left(\log\frac e\delta\right)^{d-1},
    \end{align*}
    for some universal constant $C>1$.
\end{proof}



\end{document}